\newcommand{\Var}{\operatorname{Var}}
\numberwithin{equation}{section}
\declaretheorem[thmbox=M,name=Theorem,numberwithin=section]{theo}
\declaretheorem[name=Proposition,thmbox=M,numberwithin=section]{proposition}
\declaretheorem[name=Lemma,thmbox=S,numberwithin=section]{lem}
\declaretheorem[name=Corollary,thmbox=M,numberwithin=section]{cor}
\declaretheorem[name=Hypothesis,thmbox=S,numberwithin=section]{hyp}
\theoremstyle{definition}
\newtheorem{definition}{Definition}[section]
\newtheorem{remark}{Remark}[section]
\theoremstyle{plain}
\renewenvironment{proof}[1][\proofname]{\par
	\pushQED{\qed}%
	\normalfont \topsep6\p@\@plus6\p@\relax
	\trivlist
	\item[\hskip\labelsep
	\sffamily\bfseries
	#1\@addpunct{.}]\ignorespaces
}{%
	\popQED\endtrivlist\@endpefalse
}
\newcommand{\jump}{{\vskip 0.3cm \noindent }}
\renewcommand{\and}{\mbox{ and }}
\newcommand{\N}{\mathbb{N}}
\newcommand{\R}{\mathbb{R}}
\newcommand{\C}{\mathbb{C}}
\newcommand{\dd}{\mathrm d}
\newcommand{\Ind}[1]{\1_{(#1)}}
\newcommand{\E}{\mathbb{E}}
\renewcommand{\Var}{\mathbb{V}}
\def\as{{ \mathrm{a.s.}  }}
\def\eas{\stackrel{{\as }}{=}}
\def\ed{\stackrel{{\mathcal{D}}}{=}}
\def\toed{\overset{\mathcal{D}}{\Rightarrow}}
\def\iid{\stackrel{{\mathrm{i.i.d}}}{\sim}}
\newcommand{\vect}[1]{\ensuremath{\boldsymbol{\mathbf{#1}}}}
\newcommand{\rdmvect}[1]{\ensuremath{\bm{#1}}}
\newcommand{\mat}[1]{\ensuremath{\boldsymbol{\mathbf{#1}}}}
\newcommand{\rdmmat}[1]{\ensuremath{\bm{#1}}}
\newcommand{\Tr}{\ensuremath{\mathrm{Tr}}}
\newcommand{\normop}[1]{\ensuremath{ \|#1 \|_{\mathrm{op}} }}
\newcommand\given[1][]{\:#1\vert\:}
\DeclareMathOperator{\1}{\mathbbm{1}}
\newcommand{\cH}{\mathcal{H}}
\renewcommand{\leq}{\leqslant}
\renewcommand{\geq}{\geqslant}
\renewcommand{\epsilon}{\varepsilon}
\def\pP{{\mathbb P}}
\def\R{{\mathbb R}}
\def\prior{{\pi}}
\def\pout{{ p_{\mathrm{out}} }}
\def\dprior{{\dd \pi}}
\def\dpriortensor{{\dprior^{\otimes N}}}
\def\priortensor{{\prior^{\otimes N}}}
\def\dpriork{{\dd \pi_{k_F}}}
\def\kf{{k_F}}
\begin{document}
	\title{Fundamental limits of Non-Linear Low-Rank Matrix Estimation}
	
	\author{
		Pierre Mergny\thanks{IdePHICS laboratory, \'Ecole F\'ed\'erale Polytechnique de Lausanne, Switzerland. Email: \texttt{pierre.mergny@epfl.ch}},
		Justin Ko\thanks{UMPA, ENS Lyon and CNRS, France and Department of Statistics and Actuarial Science, University of Waterloo, Canada. Email: \texttt{justin.ko@uwaterloo.ca}},
		Florent Krzakala\thanks{IdePHICS laboratory, \'Ecole F\'ed\'erale Polytechnique de Lausanne, Switzerland. Email: \texttt{florent.krzakala@epfl.ch}}
		,
		Lenka Zdeborov\'a\thanks{SPOC laboratory,  \'Ecole F\'ed\'erale Polytechnique de Lausanne, Switzerland. Email: \texttt{lenka.zdeborova@epfl.ch}} }

	\date{}
	
	\maketitle
	
	\begin{abstract}%
		We consider the task of estimating a low-rank matrix from non-linear and noisy observations. We prove a strong universality result showing that Bayes-optimal performances are characterized by an equivalent Gaussian model with an effective prior, whose parameters are entirely determined by an expansion of the non-linear function. 
		In particular, we show that to reconstruct the signal accurately, one requires a signal-to-noise ratio growing as \(N^{\frac 12 (1-1/k_F)}\), where \(k_F\) is the first non-zero Fisher information coefficient of the function. 
		We provide asymptotic characterization for the minimal achievable mean squared error (MMSE) and an approximate message-passing algorithm that reaches the MMSE under conditions analogous to the linear version of the problem. 
		We also provide asymptotic errors achieved by methods such as principal component analysis combined with Bayesian denoising, and compare them with Bayes-optimal MMSE. 
		%
	\end{abstract}

	\section{Introduction and Related Work}
	
	Learning a signal from noisy, high-dimensional observations is a ubiquitous problem across statistics, probability, and machine learning. In particular, the task learning a vector $\bf x$ from the noisy, possibly non-linear, observation of the matrix $\bf x \bf x^\top$ has applications ranging from sparse PCA \cite{zou2018selective} to community detection \cite{Blockmodel}, sub-matrix localization \cite{hajek2017information}, or matrix completion \cite{candes2012exact}. Many variants of this problem have been studied in the high-dimensional limit (e.g. see \cite{deshpande2014information,DBLP:journals/corr/DeshpandeAM15,dia2016mutual,lesieur2017constrained,barbier2018rank,lelarge2017fundamental,10.1214/19-AOS1826,mourratrank1,montanari2022equivalence}). The topic has been of particular interest from the random matrix theory point of view, in connection to the Wigner spiked model \cite{johnstone2009consistency}, that has sparked many notable works \cite{donoho1995adapting,peche2014deformed,BBP}.
	
	The existing literature focused  on the following model of non-linear low-rank matrix estimation: 
	Assume ${\bf x} \in {\mathbb R}^N$ to be the unknown target vectors sampled from a prior, we are then given the noisy observation $\rdmmat{Y}$, a $(N \times N)$ matrix generated by applying a nonlinear stochastic function to each component of the rank one matrix $\bf x \bf x^\top$:
	\begin{equation}
		\label{eq:initial_model}
		Y_{ij} \sim\pout \left( \cdot \left|\frac{1}{\sqrt N} x_ix_j\right)\right.
	\end{equation}
	It was established in \cite{krzakala2016mutual} that as long as the probabilistic channel $p_{\rm out}$ has a non-zero Fisher information, that is $\Delta^{-1} = \mathbb E[\partial_w p_{\rm out}(Y|w)|_{w=0}] > 0$, then Gaussian universality  implies that the non-linear problem is equivalent to the spiked Wigner model: 
	\begin{equation}
		\rdmmat{Y}^{(G)} =  \frac{\vect{x} {\vect{x}^{\top}}}{\sqrt{N}} + \sqrt{\Delta} \rdmmat{G},
	\end{equation}
	where $\rdmmat{G}$ is a Gaussian symmetric matrix, component iid each with zero mean and unit variance.
	Indeed, the mutual information of the two problems \cite{DBLP:journals/corr/DeshpandeAM15,krzakala2016mutual}, as well as their minimal mean square errors \cite{guionnet2023estimating} are asymptotically equal. This appealing {\it universality} enables a rigorous mapping to problems such as community detection or sub-graph localization and has been one of the motivations behind the many studies of low-rank matrix factorization.

	
	In this work, we consider the non-linear low-rank matrix estimation problem for cases for which the probabilistic channel $p_{\rm out}$ has {\it zero Fisher information}, that is when $\mathbb E[\partial_w p_{\rm out}(Y|w)|_{w=0}]=0$. This happens, for instance, for even function  (in $w$) $p_{\rm out}$ \cite{guionnet2023spectral}. One then needs to consider the signal to be stronger so that there is {\it enough} information in the matrix $\rdmmat{Y}$ to reconstruct the target. Eq.~\eqref{eq:initial_model} must be modified in the form
	\begin{equation}
		\label{eq:notation_intro}
		Y_{ij} \sim  p_{\rm out}\left(\cdot \given \frac{\gamma(N)}{\sqrt{N}}
		x_i x_j \right)\, ,
	\end{equation}
	with a suitable additional $N$-dependent function $\gamma(N)$. 
	As we shall see, to solve the problem one needs to look at a signal-to-noise ratio growing as \(\gamma(N) \propto N^{\frac 12 (1-1/k_F)}\), where \(k_F\) is given as the first non-zero Fisher information coefficient of the function when expending in powers, that is the first non-zero order $k$ where 
	$\mathbb E[\partial^k_w p_{\rm out}(Y|w)|_{w=0}] \neq 0$.

	Problems of the form (\ref{eq:notation_intro}) have motivations beyond low-rank factorization. In theoretical machine learning, it can be interpreted as a kernel applied to spiked models. Such a situation appears in the studies of gradient descent in neural networks. In this case, the low-rank perturbation originates from the early steps of training with gradient descent \cite{ba2022high,damian2022neural,dandi2023learning}, and the Fisher exponent plays the role of the information exponents of \cite{ben2022high}.  A special case of the problem we study was considered by \cite{9517881}, who studied the information-theoretic aspects of recovering a low-rank signal $x$ from unsigned observations of the matrix, corresponding to $f(x) = |x|$. Recently, \cite{guionnet2023estimating} and \cite{feldman2023spectral} studied spectral methods for such problems. The analysis of the information-theoretically optimal and efficient algorithmic performance for the setting (\ref{eq:notation_intro}), 
	was, however, lacking. This is what we establish in the present paper.

	\paragraph{Main Contributions ---}
	We provide a unified approach to study non-linear spiked matrices. The key contributions stem from universality (Theorem~\ref{th:informal_rank1decompositionFisher}), which states that at a proper scaling for the signal, there exists an explicit entrywise transformation of the observed data, which is completely determined by the structure of the non-linear channel, into a Fisher matrix $\rdmmat{S}_{\kf}=  \rdmmat{Y}_{\kf}/\Delta_{k_F}$ defined in \eqref{eq:fisherscores}. This Fisher  matrix then exhibits a spiked structure (up to some lower-order terms)
	\begin{align}
		\rdmmat{Y}_{k_F}  =  \frac{\vect{x}^{k_F}\left( \vect{x}^{k_F}  \right)^{\top}}{\sqrt{N}}   +\sqrt{ \Delta_{k_F}  } \rdmmat{G}  \, , 
	\end{align}
	where the effective signal-to-noise ratio is given by the critical Fisher coefficient \eqref{eq:fisherscores} and the effective signal is the $\kf$ Hadamard power of the original signal. From this transformation and subsequent decomposition, we have effectively reduced the non-linear spiked matrices, to a {\it classical} spiked Wigner matrix model. The latter has been thoroughly analyzed \cite{deshpande2014information,dia2016mutual,barbier2018rank,10.1214/19-AOS1826,lelarge2017fundamental,10.1214/19-AOS1826,guionnet2023estimating}. 
	
	This underlying spiked structure of the Fisher matrix implies that the Fisher matrix exhibits analogous information theoretical thresholds as the classical spiked Wigner models, allowing us to recover algorithmic and statistical guarantees for this large class of inference problems. Another key consequence is that known spectral algorithms for the spiked Wigner models such as PCA or linearized AMP can be applied to the Fisher matrix to recover the signal. 
	
	\vspace{3mm}
	\noindent From the {\bf information theoretic perspective} we provide the following results: 
	\begin{enumerate}[noitemsep,leftmargin=1em,wide=1pt]
		\item We prove a strong universality result showing that the mutual information / free entropy of non-linear low-rank matrix estimation corresponds to the one of a Gaussian spiked model with a noise level given by the $k$th order Fisher information coefficients. (Theorem~\ref{theo:Gauss_Approximation_Free_Energy}, Theorem~\ref{cor:limiting_value_Free_Energy}), so that the effective signal-to-noise ratio scale as  $N^{(1-1/k_F)/2}$.  This generalizes the previous (limited) universality at $k_F=1$, and provides a rigorous mapping to the original spiked Wigner model 		(\ref{eq:notation_intro}).
		\item We further extend the universality for large deviation rates to conclude that the corresponding Minimal Mean Square Error (MMSE) (for the signal $x^\kf$ where $\kf$ is the critical Fisher coefficient) is also universal, and thus can be computed explicitly in terms of the MMSE for the Gaussian equivalent model. (Theorem~\ref{theo:Gauss_Universality}). 
		\item By using the associated spiked matrix problem characterization, we also provide a fixed point characterization / first-order condition for the optimal overlaps between the $x^\kf$ and a sample from the posterior. This give a fixed point condition of the limiting overlaps, MMSE, of the weak recovery threshold and of the conditions for its existence (see Corollary~\ref{cor:fixedpoint} and Corollary~\ref{cor:recovery}). 
	\end{enumerate}
	We also look to the problem from the {\bf algorithmic perspective}, and provide the following results:
	\begin{enumerate}[noitemsep,leftmargin=1em,wide=1pt]      
		\item We establish that the standard approximate message-passing algorithm (AMP) for Wigner spiked models applied to the Fisher matrix achieves the MMSE under conditions on the prior distribution of the target vector known for the linear version (see Theorem~\ref{theo:AMP_performance}).  
		\item 
		Finally, we also study the performance of a simpler, off-the-shelve method and, in particular, spectral algorithm thanks to a {\it spectral universality} theorem  (Theorem~\ref{th:informal_rank1decompositionFisher}). In particular, we show the Fisher matrix is an optimal method for PCA (Corollary~\ref{theo:informal_PCA_optimality}), and displays a transition for the MSE which we characterize (Prop.~\ref{cor:FPCA_MSE_performance}). We further demonstrated that optimal denoising of the top eigenvector (Corollary~\ref{cor:denoised_FPCA_performance}) yields
		near AMP performances.
	\end{enumerate}
	\begin{figure*}[!t]
		\vskip 0.2in
		\begin{center}
			\includegraphics[width=0.49\linewidth]{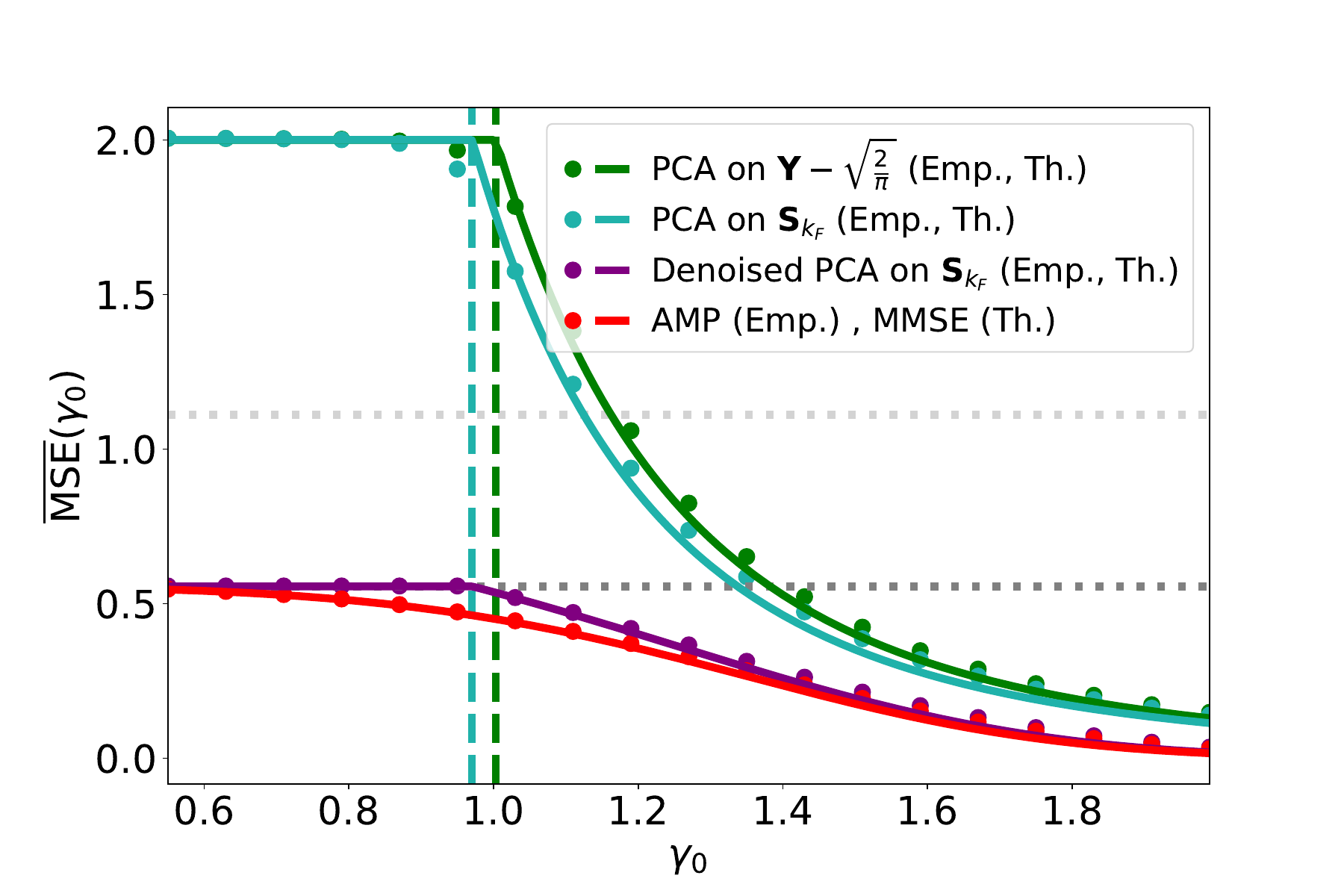}
			\includegraphics[width=0.49\linewidth]{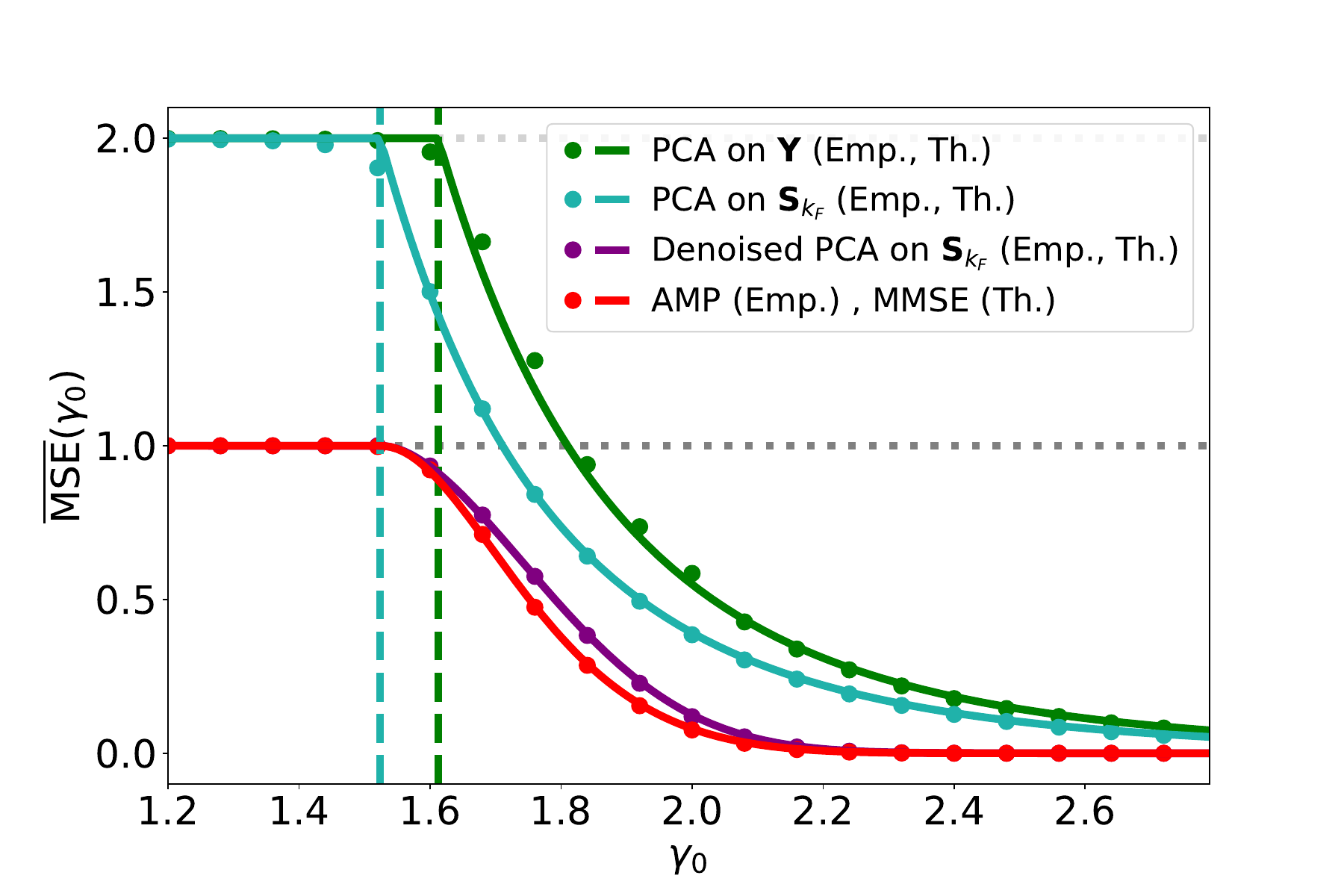}
			\caption{\textbf{Matrix mean squared error for different estimators, for the model of Eq.~\eqref{eq:example_kf2} (left) and the model of Eq.~\eqref{eq:example_kf3} (right), as a function of the signal-to-noise parameter $\gamma_0$}. 
				Both figures illustrate the optimal performance
				achieved by the Approximate Message Passing (red). This is compared to the spectral method on the data matrix $\rdmmat{Y}$ (green); the spectral method on the Fisher matrix $\rdmmat{S}_\kf$ (cyan); and Gaussian denoising of the top eigenvector of $\rdmmat{S}_\kf$ (purple).  
				Note that the last method is very close to optimality. The empirical points (illustrated by dots in both figures) are obtained by doing an average of  $30$ samples with $N=4000$. The phase transitions of the spectral methods are marked with vertical lines. For $\kf=2$ (left panel), there is no phase transition for the MMSE and AMP, since $m_{\kf=2} \neq 0$.  
			}
			\label{fig:MSE}
		\end{center}
		\vskip -0.2in
	\end{figure*}
	These contributions are illustrated in Fig. \ref{fig:MSE}. On {\it the left panel}, we show what happens with the problem where	$Y_{ij} 
	=
	| G_{ij} +\gamma(N)/\sqrt{N} x_i x_j|$ with  $\gamma(N) = \gamma_0 N^{1/4}$, for which $k_F=2$, see Eq.~\eqref{eq:example_kf2}. The MMSE is shown, with the results of the AMP algorithm (red) that reaches it efficiently for all values of $\gamma_0$. The naive spectral estimators display a sharp (green)  suboptimal threshold compared to those based on the Fisher matrix (blue and purple). Note how the denoised PCA of the Fisher matrix (purple) very closely approaches the MMSE at a low computational cost. 
	
	On {\it the right panel}, we present another example where $Y_{ij} =
	Z_{ij} + f_0 (\gamma(N)/\sqrt{N} x_i x_j )$  with $ \gamma(N) = \gamma_0 N^{1/3} $ and $f_0(w) := w - \tanh(w) + \frac{2}{15} w^5 $ and $Z_{ij}$ follows a Student distribution, see Eq.~\eqref{eq:example_kf3}. In this case, $k_F=3$, and the MMSE now displays a sharp phase transition at $\gamma_0=1.52$. As before, the PCA on the Fisher matrix is found to be a better estimator than the naive PCA on the observed matrix, while the denoised PCA of the Fisher matrix is nearly achieving the MMSE. 
	
	\paragraph{Further Related works ---} The low-rank matrix factorization has been connected to many other models ranging from planted cliques \cite{deshpande2015finding,barak2019nearly}, synchronization \cite{javanmard2016phase,perry2018message}, to spin glass models \cite{panchenko2012sherrington,zdeborova2016statistical}, and has been used as a theoretical test bed for many techniques from sun of squares \cite{hopkins2016fast} to spectral methods \cite{perry2018optimality}. The extension to tensor is also very popular \cite{hopkins2015tensor,lesieur2017statistical,arous2019landscape}, and we expect that a similar generalization as the one we present should appear in this case too.
	
	The connection between AMP and the algorithmic hardness of the factorization task has been discussed in a series of works, e.g. \cite{lesieur2015mmse,zdeborova2016statistical,celentano2020estimation,montanari2022equivalence}). It is tightly connected to the behavior of the large deviation free energy, also called the Parisi-Franz potential in statistical physics \cite{franz1995recipes}, see, e.g. \cite{guionnet2023estimating,bandeira2022franz}.
	
	Spectral methods such as PCA are very popular tools for solving such problems. The use of the Fisher matrix, in particular, was discussed in \cite{lesieur2015mmse,lesieur2017constrained} and \cite{perry2018optimality}.  Recently, \cite{guionnet2023estimating} and \cite{feldman2023spectral} studied matrix with the same scaling as in our problem.
	
	\section{Model and Assumptions}
	\label{sec:Model_and_Assumptions}
	\subsection{Notations}
	We use the notations $f \equiv f(a)$ whenever we omit the dependency in the variable $a$ for the function $f$. Equality in law and convergence in law are denoted respectively by $X \ed Y$ and $X \toed Y$. In informal theorems, we use the notation $ A \approx B$ to denote an "approximate equality" between $A,B$ and we will always provide the reference of the precise meaning of this statement. 
	
	We denote by $\mathcal{P}_c(\mathbb{R})$ the set of probability measures with compact support. For $k \in \mathbb{N}_*$ and $P \in \mathcal{P}_c(\mathbb{R})$, we denote by $P_k \in \mathcal{P}_c(\mathbb{R})$ the pushforward of $P$ by the map $x \mapsto x^k$ (thus if $x \sim P$ then $x^k \sim P_k$). We denote by $\pi$ be the \emph{prior distribution} and by $\pout(.|w)$ the \emph{density of a conditional distribution}. We will frequently use the notation $Y_0 \sim \pout(.|0)$ whenever we condition on having no signal ($w=0$), in contrast with   $Y \sim \pout(.|w)$ whenever one has a priori $w \neq 0$.   For $k \in \mathbb{N}_*$ and $P \in \mathcal{P}_c(\mathbb{R})$, we denote by $m_k(P_0):= \mathbb{E}_{x \sim P_0}  \{ x^k \}$ the $k$-th moment of $P_0$. We use $m_k \equiv m_k(\pi)$ whenever the distribution $P_0 = \pi $ is clear from the context. For $\vect{x}$ the \emph{signal vector} and $\beta \in (0,1/2)$, we will frequently use the notations $\mat{W} :=  N^{\beta-1/2} \vect{x} \vect{x}^\top$ and  $ \mat{W}_\kf :=  N^{-1/2} \vect{x}^\kf (\vect{x}^\kf)^\top$ where $\kf \in \mathbb{N}_*$ is defined in the next section. 
	
	Throughout the paper, we will frequently differentiate (random) vectors depending on the noisy data $\rdmmat{Y}$ by a bold italic font (e.g. $\rdmvect{v}$) to vectors that are independent of  $\rdmmat{Y}$ conditioned on the signal $\vect{x}$ denoted with a bold regular font (e.g. $\vect{v}$). We take the same conventions for matrices (e.g. $\rdmmat{A}$  and $\mat{A}$ for respectively a random matrix that is not independent of $\rdmmat{Y}$ and one that is independent of $\rdmmat{Y}$ conditioned on $\vect{x}$). For $k \in \mathbb{N}$ and $\vect{v} \in \mathbb{R}^N$, we denote by $\vect{v}^k := (v_1^k,\dots,v_N^k)$, the entrywise power of $\vect{v}$. The scalar product is denoted by $\langle \vect{u}, \vect{v} \rangle := \sum_{i} u_i v_i $ and the L2-norm by $\| \vect{v} \| := \sqrt{\langle \vect{v} , \vect{v} \rangle}$. 
	
	For $f:\mathbb{R} \to \mathbb{R}$ we denote with a bold symbol  $\vect{f}:\mathbb{R}^N \to \mathbb{R}^N$ the \emph{vectorized} version of $f$ with respect to its argument, that is for $\vect{a} =(a_1,\dots,a_N) \in \mathbb{R}^N$, $\vect{f}(\vect{a}) := (f(a_1),\dots, f(a_N)) \in \mathbb{R}^N$. For $\vect{h}:\mathbb{R}^N \to \mathbb{R}^N$ and differentiable entrywise, we denote by $(\nabla_{\vect{a}} \cdot \vect{h} )(\vect{a}) := \sum_{i=1}^N h'_i(a_i)$ the divergence operator applied to $ \vect{h}$. 
	
	\subsection{Structure of the Model}
	\label{sec:Model}
	Under Hypothesis postponed to Sec.~\ref{sec:Assumptions}, we assume one observes the rank-one matrix of the signal through the noisy channel given by $\pout$, that is, one observes the symmetric matrix $	\rdmmat{Y} = (Y_{ij})_{1 \leq i,j \leq N}$ such that for $1 \leq i \leq j \leq N$, the  entry $Y_{ij}$ is distributed according to
	\begin{align}
		\label{eq:notation_rank1_mat}
		Y_{ij} \sim p_{\text{out}}(\cdot | W_{ij})
		\quad \mbox{where} \quad  	\rdmmat{W} 
		&=
		(W_{ij})_{1 \leq i,j \leq N} = \frac{N^{\beta}}{\sqrt{N}}\, 
		\vect{x} {\vect{x}^{\top}} \quad \mbox{with} \quad   \vect{x} \sim \prior^{\otimes N}(\cdot) \, ,
	\end{align}
	from some $\beta \in [0,1/2)$. 
	
	\subsection{Definitions}
	For $P_0 \in \mathcal{P}_c(\mathbb{R})$,  we introduce 
	the \emph{Gaussian denoising function (with prior $P_0$)} as  $\eta_{P_0}: (a,b) \in (\mathbb{R}_+,\mathbb{R}) \mapsto \eta_{P_0}(a,b) \in \mathbb{R}$ by:
	\begin{align}\label{eq:Gauss_denoise}
		\eta_{P_0}(a,b) := \frac{ \E_{x \sim P_0}  \left[ x \, \mathrm{e}^{ -a x^2/2 + bx}  \right] }{  \E_{x \sim P_0}  \left[\mathrm{e}^{ -a x^2/2 + bx}  \right]  }  \, .
	\end{align}
	We will also frequently use its vectorized version for its second argument $\vect{\eta}_{P_0}(a,\vect{b}) \in \mathbb{R}^N$ for $\vect{b} \in \mathbb{R}^N$. For $P_0 \in \mathcal{P}_c(\mathbb{R})$,  we define the \emph{State-Evolution map} $\mathsf{SE}_{P_0} :  (\mathbb{R}_+ , \mathbb{R}_+) \mapsto \mathbb{R}_+$  by 
	\begin{align}
		\label{eq:def_State_Evolution_map}
		\mathsf{SE}_{P_0}(q , \Delta) := \E_{\Tilde{x} \sim P_0,G \sim \mathsf{N}(0,1)} \, \left[ \eta \left( \frac{q}{\Delta}, \frac{q}{\Delta} \tilde{x} + \sqrt{\frac{q}{\Delta}} G \right) \tilde{x} \right]\,.
	\end{align}
	This function is known to be strictly increasing for its first argument (see for instance 
	Thm.~3.10 of \cite{feng2022unifying}).Recalling $\pout$ is the density of the conditional distribution of the output given the signal  in Eq.~\eqref{eq:notation_rank1_mat} we define the \emph{log-likelihood function} as 
	\begin{align}
		\label{eq:def_log_conditional_distribution}
		g(y,w) 
		:=
		\log p_{\text{out}}( Y_{ij} =y | W_{ij} = w) \, .
	\end{align}
	The function $g(y,w)$ plays a fundamental role in the study of the asymptotic inference of the rank-one inference. We postpone the assumptions on this function to Sec.~\ref{sec:Assumptions} and assume for now that $g$  and its derivatives are sufficiently smooth and bounded such that the following quantities are well-defined, at least for $k$ up to some value $\kf$ defined below.  We use the  shorthand notation $g^{(k)}(y,0) \equiv \partial_w^k g(y,w)|_{w=0}$. 
	\vskip 0.2cm
	\noindent The \emph{$k$-th order Fisher matrix} $\rdmmat{S}_k$ and the $k$-th \emph{Fisher coefficient} $\Delta_k$ are defined as:
	\begin{align}
		\label{eq:fisherscores}
		\rdmmat{S}_k 
		&:=
		\left( S_{ij,k} = \frac{ g^{(k)}(Y_{ij},0)}{k!} \right)_{1 \leq i,j \leq N} \quad \mbox{and} \quad  \frac{1}{\Delta_k} 
		:=
		\E_{Y_{ij}|0} \left[ \left( \frac{ g^{(k)}(Y_{ij},0)}{k!}\right)^2 \right] \, ,
	\end{align}
	and \emph{the critical Fisher index} $\kf$ as 
	\begin{align}
		\label{eq:def_kf}
		k_F
		:=
		\inf \left\{  k \in \mathbb{N}  \quad \mbox{such that} \quad  \frac{1}{\Delta_k} \neq 0 \right\}
		\, .
	\end{align}
	Note that from the definition Eq.~\eqref{eq:fisherscores} of $\Delta_k$,  for $k < \kf$, the condition $(\Delta_k)^{-1} = 0$ is equivalent to $ g^{(k)}(Y_0,0) \eas 0$ for $Y_0 \sim \pout(\cdot | 0)$ . 
	Eventually, we  introduce the \emph{relevant scaling} as: 
	\begin{align}
		\label{eq:criticalscaling}
		\beta_{\mathrm{cr}} = \frac{1}{2} \left( 1 - \frac{1}{\kf}\right) \, .
	\end{align}

	\subsection{Assumptions}
	\label{sec:Assumptions}
	We assume the prior distribution $\pi$ and the conditional distribution $\pout$ to satisfy the following two assumptions.  
	\begin{hyp}[The Prior Signal is Bounded]
		\label{hyp:prior}
		$\pi \in \mathcal{P}_c(\mathbb{R})$ and is independent of $N$. 
	\end{hyp}
	\begin{hyp}[Smooth conditions for the log-likelihood distribution]
		\label{hyp:bayes}
		There exists $K \in \N$ such that $g(y,.) \in C^{2K + 1}$, the quantity $\E_{Y_{ij} | 0} | g^{(k)}(Y_{ij},0) |$ are all bounded $k \in \{K + 1, \dots, 2K \}$,  $\kf \leq K$  and the quantities $g^{(2 \kf)}(y,0)$, $g^{(2 \kf+1)}(y,0)$ are also uniformly bounded.  
	\end{hyp}
	These assumptions provide a rich framework and are convenient for our analysis. We expect however that one can be able to refine the sets of assumptions by, for example, only assuming $\pi_\kf$ to be sub-Gaussian or with a  stretch-exponential decay as in the spectral analysis of \cite{guionnet2023spectral} and we leave this problem for further investigation. 
	
	\subsection{Examples}
	\label{sec:examples}
	\textbf{The (Artificial) $\kf$-Gaussian Additive Channel --- } let us fix a $\kf \in \mathbb{N}$ and $\Delta >0$ and consider the model given by:
	\begin{align}
		\label{eq:def_Gauss_channel}
		Y_{ij} 
		& \ed \sqrt{\Delta} G_{ij} + \frac{1}{\sqrt{N}} x_i^{\kf} x_j^{\kf}  \qquad \mbox{where }  G_{ij} \iid \mathsf{N}(0,1) \quad \mbox{and }   \vect{x} \sim \prior^{\otimes N}(\cdot) \, .
	\end{align}
	By construction, the dependency in the exponent $\kf$ - which corresponds to the critical Fisher index here, hence the notation - is artificial for this model since if one re-defines $\vect{\tilde{x}}:= \vect{x}^{\kf}  \sim \prior_\kf^{\otimes N}$, one ends up with the standard Gaussian additive case for this new signal $\vect{\tilde{x}}$. The Fisher matrix for this model is   the spiked matrix $\rdmmat{S}_\kf =  \rdmmat{Y}/\Delta$, and we have $\Delta_\kf = \Delta$. 
	\jump
	While the dependence in $\kf$ in the previous example seems a tad artificial, our main results (see next section) show that both from the information-theoretic point-of-view and the algorithmic point-of-view, there is a universal mapping between  \emph{non-trivial cases} with critical index $\kf$ and Fisher coefficient $\Delta_\kf$ and this $\kf$-Gaussian additive channel with $\Delta = \Delta_\kf$.
	\vskip 0.2cm
	\noindent \textbf{Example with $\kf=2$ --- } As an illustrative example of this class, consider:
	\begin{align}
		\label{eq:example_kf2}
		Y_{ij} 
		& \ed 
		\bigg | G_{ij} + \frac{\gamma(N)}{\sqrt{N}} x_i x_j  \bigg|  \quad \mbox{with } \gamma(N) = \gamma_0 N^{1/4} \quad \mbox{and } \,  G_{ij} \iid \mathsf{N}(0,1) \, .
	\end{align}
	Since $\pout(y,w) = \pout(y,-w)$, we have $1/\Delta_{k=1} =0$ and one can check that $\kf =2$ for this model. The expression of $\gamma(N)$ in Eq.~\eqref{eq:example_kf2} has been chosen such that one is precisely in the relevant scaling $\beta = \beta_\mathrm{cr}$. The Fisher matrix for this model is given by $S_{\kf,ij} = \frac{\gamma_0^2}{2} (Y_{ij}-1)^2$ and $\frac{1}{\Delta_\kf} = \frac{\gamma_0^4}{2}$. For numerical simulations, as illustrated in Fig.~\ref{fig:MSE} and Fig.~\ref{fig:Eigengap}, we pick $\pi = (\delta_{a} +  \delta_b)/2$ with $a= \sqrt{2 -\sqrt{2}}/2$, $b= -\sqrt{2 +\sqrt{2}}/2$. Note that  $|a| \neq |b|$ to avoid the degenerate case where $\pi_\kf$ is a Dirac mass at one point.
	\vskip 0.2cm
	\noindent   \textbf{Example with $\kf=3$ --- }  We consider 
	\begin{align}
		\label{eq:example_kf3}
		Y_{ij} 
		& \ed 
		Z_{ij} + f_0 \bigg(\frac{\gamma(N)}{\sqrt{N}} x_i x_j \bigg)   \quad \mbox{with } \gamma(N) = \gamma_0 N^{1/3} \, , 
	\end{align}
	where $f_0(w) := w - \tanh(w) + \frac{2}{15} w^5  \underset{w \to 0}{\sim} w^3$ such that $\kf =3$ and $Z_{ij} \iid \mathsf{Student}(\nu)$. The Fisher matrix and coefficient for this model are $S_{\kf,ij} =  \gamma_0^3  \frac{1+\nu}{3} \frac{Y_{ij}}{Y_{ij}^2+\nu}$ and $\frac{1}{\Delta_\kf}=   \gamma_0^6 \frac{ (1+\nu)}{9 (3+\nu)}$. For numerical simulations (Fig.~\ref{fig:MSE} and Fig.~\ref{fig:Eigengap}), we have chosen $\nu =4.1$ and $\pi = (\delta_{-1} + \delta_1)/2$.

	\begin{figure*}[t]
		\vskip 0.2in
		\begin{center}
			\includegraphics[width=0.5\linewidth]{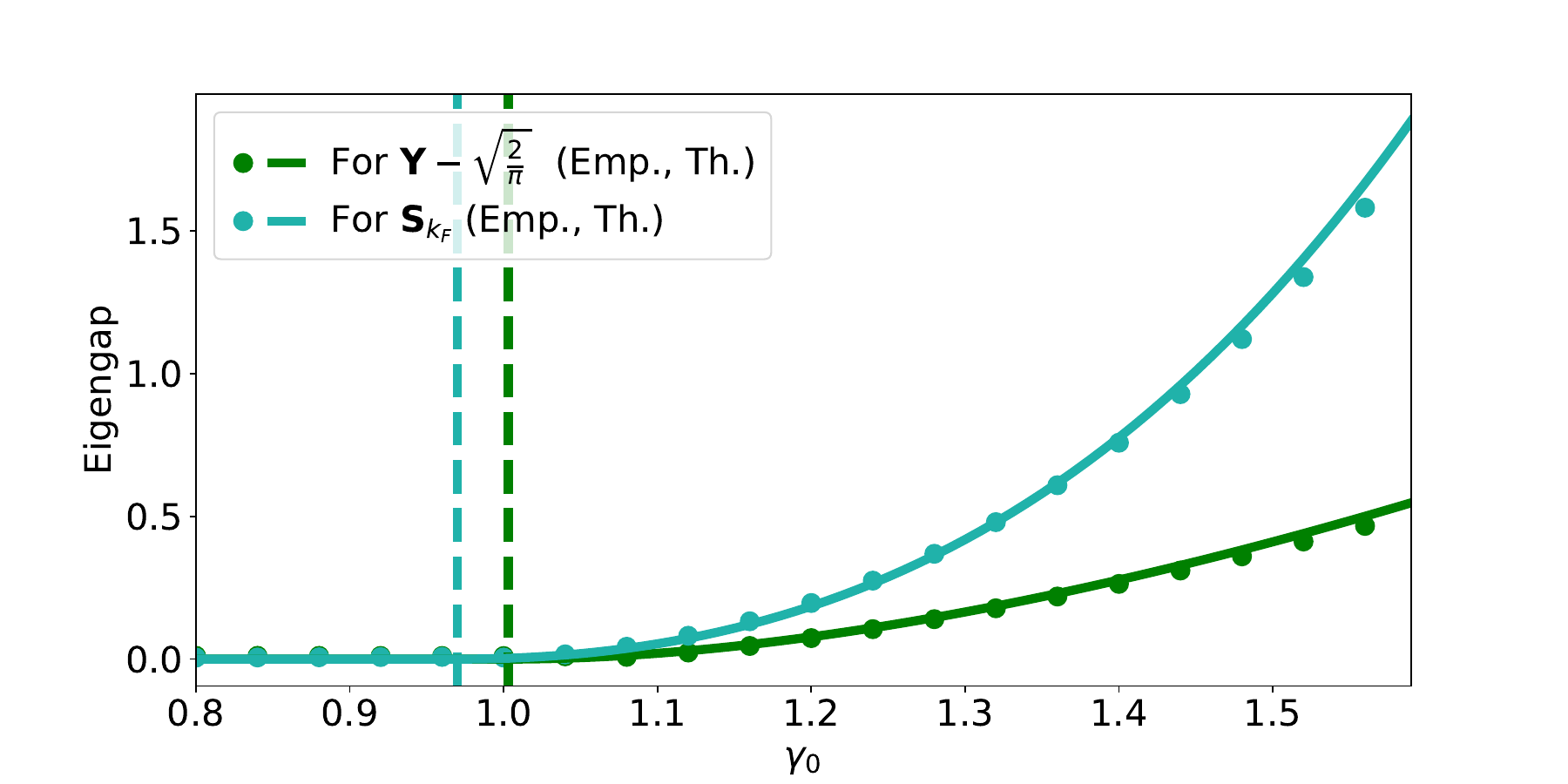}
			\includegraphics[width=0.49\linewidth]{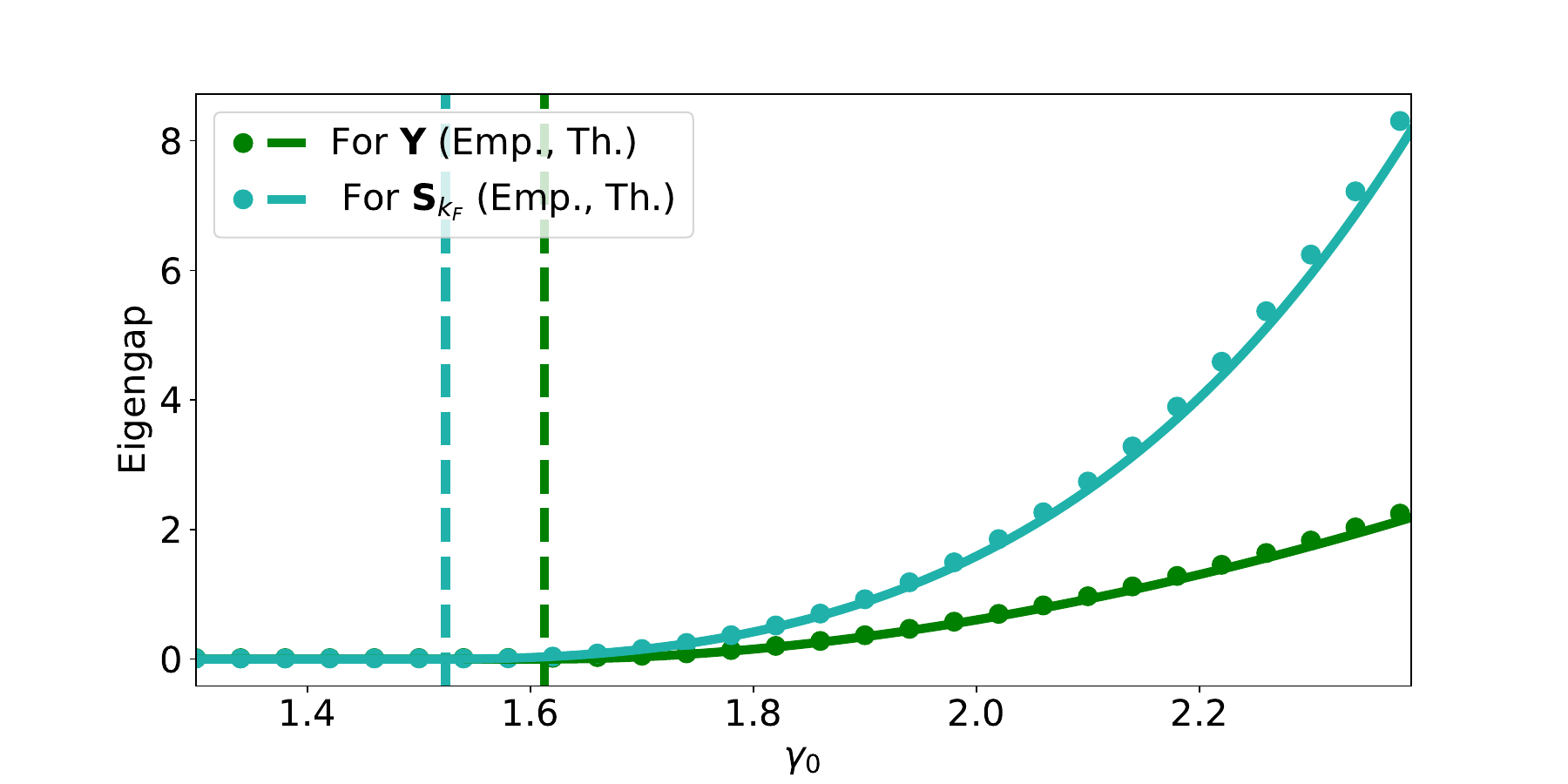}
			\caption{\textbf{Difference between the largest and second largest eigenvalue (Eigengap) of $\rdmmat{Y}$ and $\rdmmat{S}_\kf$, as a function of the parameter $\gamma_0$ for the model of Eq.~\eqref{eq:example_kf2} (left) and the model of Eq.~\eqref{eq:example_kf3} (right).} The threshold for the appearance of an outlier, that is the minimal value of $\gamma_0$ such that one has a positive value of the eigengap, is lower for the Fisher matrix than the one of the output matrix $\rdmmat{Y}$ in both models.  Each empirical point (dot) corresponds to one sample with $N=2000$.  In the left figure, the output matrix is shifted by a constant to remove the non-informative Perron-Frobenius mode.  }
			\label{fig:Eigengap}
		\end{center}
		\vskip -0.2in
	\end{figure*}

	\section{Main results}
	\label{sec:Main_results}

	\subsection{Decomposition of the Fisher Matrix and Its Top Eigenvector}
	\label{sec:Spike_decomposition_Fisher}
	
	Our first series of results concerns the spectral properties of the Fisher matrix $\rdmmat{S}_{k_F}$. This matrix appears as a first-order approximation of the log-likelihood of the original model defined in  Eq.~\eqref{eq:notation_rank1_mat}.
	
	\begin{theo}[Informal - Universal Spiked Decomposition of the Fisher matrix]
		\label{th:informal_rank1decompositionFisher}
		Under Hyp.~\ref{hyp:prior} and \ref{hyp:bayes} with the relevant scaling $\beta = \beta_{\mathrm{cr}}$ in Eq.~\eqref{eq:notation_rank1_mat}, we have: 
		\begin{align}
			\frac{\rdmmat{S}_{k_F}}{\sqrt{N}}  \approx   \frac{\rdmmat{Z}}{\sqrt{N}} +\frac{1}{\Delta_{k_F}}  \frac{\vect{x}^{k_F}}{\sqrt{N}} \left( \frac{\vect{x}^{k_F}}{\sqrt{N}}  \right)^{\top}  \, , 
		\end{align}
		where $\rdmmat{Z} = (Z_{ij})_{ 1 \leq i,j \leq N}$ is a Wigner matrix with elements of variance $1/\Delta_{k_F}$.
	\end{theo}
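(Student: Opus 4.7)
The strategy is to couple the observations to an i.i.d.\ "null" sample, Taylor expand the Fisher score $h(y) := g^{(k_F)}(y,0)/k_F!$ in the signal parameter $W_{ij} = N^{-1/(2k_F)} x_i x_j$, and isolate the leading rank-one contribution. Specifically, one couples $Y_{ij}$ to $Y_{ij}^{(0)} \sim \pout(\cdot\mid 0)$ i.i.d.\ via the channel's inverse CDF and writes
\[
\rdmmat{S}_{k_F} \;=\; \rdmmat{Z} \;+\; \rdmmat{R} \;+\; \tilde{\rdmmat{Z}},
\]
with $Z_{ij} := h(Y_{ij}^{(0)})$ the base Wigner-type noise, $R_{ij} := \mathbb{E}[h(Y_{ij}) - h(Y_{ij}^{(0)}) \mid \vect{x}]$ the deterministic signal, and $\tilde{Z}_{ij} := h(Y_{ij}) - h(Y_{ij}^{(0)}) - R_{ij}$ the conditionally mean-zero residual. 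The aim is to show that $\rdmmat{Z}$ is a Wigner matrix with entrywise variance $1/\Delta_{k_F}$, that $\rdmmat{R}/\sqrt{N}$ is asymptotically the stated rank-one signal, and that $\tilde{\rdmmat{Z}}/\sqrt{N}$ is operator-norm negligible.

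For $\rdmmat{Z}$: differentiating the identity $\mathbb{E}_{Y_0}[e^{\phi(Y_0, w)}] \equiv 1$ in $w$ (a Bartlett-type argument), together with the fact that $g^{(k)}(Y_0,0)\eas 0$ for $k < k_F$ (which is precisely the content of $1/\Delta_k = 0$ below the critical index), forces $\mathbb{E}[h(Y_0)] = 0$; by definition $\mathbb{E}[h(Y_0)^2] = 1/\Delta_{k_F}$, and Hyp.~\ref{hyp:bayes} supplies the moment control needed for Wigner's semicircle law to apply with variance $1/\Delta_{k_F}$. For $\rdmmat{R}$: since $\phi(y,w) = h(y)\, w^{k_F} + \sum_{\ell \geq 1} \alpha_\ell(y)\, w^{k_F+\ell}$, expanding the exponential in $R_{ij} = \mathbb{E}_{Y_0}[h(Y_0)(e^{\phi(Y_0, W_{ij})} - 1)]$ yields
\[
R_{ij} \;=\; \frac{W_{ij}^{k_F}}{\Delta_{k_F}} \;+\; \sum_{\ell \geq 1} c_\ell\, W_{ij}^{k_F+\ell} \;=\; \frac{x_i^{k_F} x_j^{k_F}}{\sqrt{N}\,\Delta_{k_F}} \;+\; \text{rank-one corrections},
\]
where the Taylor remainder is controlled uniformly by the bounds on $g^{(2k_F)}, g^{(2k_F+1)}$ in Hyp.~\ref{hyp:bayes} together with the bounded prior Hyp.~\ref{hyp:prior}.

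For the operator-norm bookkeeping, each correction $c_\ell\, N^{-(k_F+\ell)/(2k_F)}\,\vect{x}^{k_F+\ell}(\vect{x}^{k_F+\ell})^\top$ is rank one with operator norm $\leq c_\ell\, \|\vect{x}^{k_F+\ell}\|^2\, N^{-(k_F+\ell)/(2k_F)} = O(N^{(k_F-\ell)/(2k_F)})$; after dividing by $\sqrt{N}$ this is $O(N^{-\ell/(2k_F)}) \to 0$ for $\ell \geq 1$. The fluctuation $\tilde Z_{ij}$ has, conditional on $\vect{x}$, independent off-diagonal entries of variance $O(W_{ij}^2) = O(N^{-1/k_F})$ (obtained by differentiating the coupling $Y_{ij} = F_{W_{ij}}^{-1}(U_{ij})$ at $W_{ij}=0$), so a Bai--Yin / moment-method bound gives $\|\tilde{\rdmmat{Z}}\|_{\rm op} = O(N^{1/2 - 1/(2k_F)}) = o(\sqrt{N})$. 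The main technical obstacle is precisely the passage from entrywise Taylor estimates to operator-norm bounds: the rank-one Hadamard structure inherited from $W_{ij} = N^{-1/(2k_F)} x_i x_j$ is essential, since a generic matrix with entries of size $N^{-(k_F+1)/(2k_F)}$ would have Frobenius norm $\Theta(N^{1/2 - 1/(2k_F)})$ --- too large to yield operator norm $o(\sqrt{N})$ without exploiting the Hadamard-product structure that collapses every correction to a rank-one Hadamard power of $\vect{x}\vect{x}^\top$.
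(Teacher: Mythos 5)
Your decomposition differs from the paper's in one crucial place, and that is where the gap lies. The paper centers the score at its \emph{conditional mean}, writing $S_{ij,\kf}=\bigl(S_{ij,\kf}-\E[S_{ij,\kf}\mid W_{ij}]\bigr)+\E[S_{ij,\kf}\mid W_{ij}]$; the mean term gives the spike via the approximate Fisher identities, the centered term is the noise with conditional variance $1/\Delta_{\kf}+O(N^{-1/2})$ (again a pure $w$-expansion), and the final replacement of this conditionally distributed noise by the null-distributed Wigner matrix $\rdmmat{Z}$ is done \emph{in law}, through smooth test functions and a generalized Stein lemma. You instead insist on a \emph{pathwise} quantile coupling $Y_{ij}=F_{W_{ij}}^{-1}(U_{ij})$, take $Z_{ij}=h(Y_{ij}^{(0)})$ exactly null, and declare the residual $h(Y_{ij})-h(Y_{ij}^{(0)})-R_{ij}$ to have conditional variance $O(W_{ij}^2)$ ``by differentiating the coupling at $W_{ij}=0$''. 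That estimate is not available under Hyp.~\ref{hyp:bayes}: the hypotheses control only $w$-derivatives of $g(y,\cdot)$ at fixed $y$ (plus moments at $w=0$); they say nothing about the regularity of $y\mapsto g^{(\kf)}(y,0)$, nor about positivity or smoothness of the null density needed for $w\mapsto F_w^{-1}(u)$ to be differentiable with controlled derivative. For a fixed admissible channel whose score oscillates rapidly in $y$ (or whose null density degenerates in the tails), the quantile shift — itself only of order $|W_{ij}|^{\kf}$ in typical regions, and much larger where the density is small — can move $h$ by order one with non-negligible probability, so the coupled difference does not have variance $O(W_{ij}^2)$, and your Bai--Yin step (which itself needs fourth-moment or boundedness control of $h(Y_0)$ that is not assumed) loses the rate $O(N^{1/2-1/(2\kf)})$ you need. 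This is exactly the obstruction the paper's weak-convergence route is designed to avoid: it never compares $h(Y_{ij})$ and $h(Y_{ij}^{(0)})$ on the same probability space.

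Two smaller points. First, your mean computation $R_{ij}=W_{ij}^{\kf}/\Delta_{\kf}+\sum_{\ell\ge1}c_\ell W_{ij}^{\kf+\ell}$ and the rank-one bookkeeping of the polynomial corrections are fine and agree with the paper's approximate Fisher identities (Lemma~\ref{lem:ho_identity}). Second, your closing claim that the Hadamard rank-one structure is \emph{essential} because an entrywise bound $O(N^{-(\kf+1)/(2\kf)})$ gives Frobenius norm $\Theta(N^{1/2-1/(2\kf)})$, ``too large'', is an arithmetic slip: $N^{1/2-1/(2\kf)}=o(\sqrt{N})$, and the paper controls its non-rank-one remainder by precisely this crude bound $\normop{\rdmmat{E}}\le\|\rdmmat{E}\|_F$. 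So the Taylor remainder is not the hard part; the hard part is the law-versus-coupling issue above, which your proposal does not resolve.
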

	
	\begin{proof}
		Through the higher order Fisher identities (App.~\ref{sec:higherorderfisher}), we can Taylor expand with respect to the spike and prove a version of the rank~1 equivalence in \cite{guionnet2023spectral} for the Fisher matrix. The proof and rigorous statement of this result is given in App.~\ref{sec:Proof_Spiked_Fisher}. 
	\end{proof}
	\noindent  The structure of the top eigenvector of $\frac{\rdmmat{S}_{k_F}}{\sqrt{N}}$ is explicit, and admits its decomposition into a part that is correlated to the signal vector, and an independent Gaussian noisy part. If we introduce the \emph{spectral overlap}
	\begin{align}
		\label{eq:def_limoverlap_PCA_Fisher}
		\mathrm{q}_{0} \equiv \mathrm{q}_{0}(\Delta_{k_F})
		:= 
		\begin{cases}
			0 & \mbox{if} \quad \Delta_{k_F} \geq (m_{2 k_F})^2 ,\\
			\sqrt{ 1 - \frac{\Delta_{k_F} }{(m_{2 k_F})^2 }}  &\mbox{if} \quad \Delta_{k_F} < (m_{2 k_F})^2 .
		\end{cases}
	\end{align}
	then we have the following result; 
	\begin{theo}[Informal - Universal Top Eigenvector Decomposition of the Fisher Matrix]\label{theo:informal_eigvector_decomp_Fisher}
		Under the same setting of Thm.~\ref{th:informal_rank1decompositionFisher}, if $\rdmvect{v}_1$ is the top eigenvector of the Fisher matrix $\rdmmat{S}_{\kf}$, we have:
		\begin{align}
			\sqrt{N} \rdmvect{v}_1 \approx 
			\mathrm{q}_0 \, \bigg( \frac{\vect{x}^{\kf}}{\sqrt{m_{2 \kf}} } \bigg) +  \sqrt{1- \mathrm{q}_0 ^2} \,  \vect{g} \, ,
		\end{align}
		with $ \vect{g}$ a standard Gaussian vector independant of $\vect{x}^{\kf}$, and $q_0$ is defined by Eq.~\eqref{eq:def_limoverlap_PCA_Fisher}.
	\end{theo}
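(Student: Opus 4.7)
The plan is to reduce to the classical BBP analysis of a spiked Wigner matrix via Theorem~\ref{th:informal_rank1decompositionFisher}, and then extract the Gaussian structure of the orthogonal component from the rotational invariance of the Wigner noise.

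By Theorem~\ref{th:informal_rank1decompositionFisher}, $\rdmmat{S}_{k_F}/\sqrt{N}$ is, up to a controlled error, equal to $M := \rdmmat{Z}/\sqrt{N} + \Delta_{k_F}^{-1}\vect{x}^{k_F}(\vect{x}^{k_F})^{\top}/N$. The SLLN applied to the bounded i.i.d.\ variables $\{x_i^{2k_F}\}$ (Hyp.~\ref{hyp:prior}) gives $\|\vect{x}^{k_F}\|^2/N \to m_{2k_F}$ almost surely; with $\hat{u} := \vect{x}^{k_F}/\|\vect{x}^{k_F}\|$ we rewrite $M = \rdmmat{Z}/\sqrt{N} + \lambda_N\,\hat{u}\hat{u}^{\top}$ where $\lambda_N \to m_{2k_F}/\Delta_{k_F}$. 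After rescaling so that the noise has unit variance, the effective signal-to-noise ratio is $\theta_N := \lambda_N\sqrt{\Delta_{k_F}} \to m_{2k_F}/\sqrt{\Delta_{k_F}}$, and the BBP threshold $\theta=1$ translates exactly into the condition $\Delta_{k_F}=m_{2k_F}^2$ appearing in the definition~\eqref{eq:def_limoverlap_PCA_Fisher} of $\mathrm{q}_0$.

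The classical BBP theorem (Benaych-Georges--Nadakuditi), extended to general Wigner noise via eigenvector universality (Knowles--Yin, Bloemendal et al.), then yields $|\langle\rdmvect{v}_1,\hat{u}\rangle|^2 \to 1 - \theta^{-2} = \mathrm{q}_0^2$ above threshold and vanishing overlap below. Writing $\rdmvect{v}_1 = \langle\rdmvect{v}_1,\hat{u}\rangle\,\hat{u} + \|P_{\hat{u}^\perp}\rdmvect{v}_1\|\,\hat{w}$ with $\hat{w}\in\hat{u}^\perp$ a unit vector, one exploits that, conditionally on $\vect{x}$, the Gaussian representative of $\rdmmat{Z}$ is invariant under orthogonal conjugations fixing $\hat{u}$; hence $\hat{w}$ is uniform on the unit sphere of $\hat{u}^\perp$, which in distribution equals $P_{\hat{u}^\perp}\vect{g}/\|P_{\hat{u}^\perp}\vect{g}\|$ for a standard Gaussian $\vect{g}$ independent of $\vect{x}$. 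Combined with $\sqrt{N}\hat{u} = \vect{x}^{k_F}/\sqrt{m_{2k_F}} + o(\sqrt{N})$ in norm and $\|P_{\hat{u}^\perp}\vect{g}\|^2/N\to 1$, this yields the stated decomposition (measured, say, in the normalized $L^2$-distance $N^{-1/2}\|\cdot\|$).

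Two points require care and are the main obstacles. First, transferring the eigenvector statement from the approximant $M$ back to $\rdmmat{S}_{k_F}/\sqrt{N}$ requires a quantitative (operator-norm) form of Theorem~\ref{th:informal_rank1decompositionFisher} together with a Davis--Kahan/resolvent stability argument, using the macroscopic spectral gap above threshold (and uniform delocalization below). Second, and more delicate, is upgrading the Gaussianity of the orthogonal component from the Gaussian representative of $\rdmmat{Z}$ to the actual (non-Gaussian) Wigner matrix produced by the reduction in Theorem~\ref{th:informal_rank1decompositionFisher}, which requires invoking eigenvector universality for deformed Wigner matrices with a \emph{random} delocalized spike direction, as most off-the-shelf results are stated for deterministic spikes.
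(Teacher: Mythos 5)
Your plan reaches the right limit objects (the overlap $\mathrm{q}_0$ via the BBP condition $\Delta_{\kf}<m_{2\kf}^2$, and a Gaussian orthogonal component), but it takes a genuinely different route from the paper, and the two points you flag as delicate are precisely where the paper's proof (App.~E, Thm.~\ref{theo:eigvector_decomp_Fisher}) proceeds otherwise. The paper never introduces a Gaussian surrogate for $\rdmmat{Z}$: from the eigenvector equation it writes $\rdmvect{v}_1\propto \mat{G}_{\rdmmat{W}+\rdmmat{E}}(\lambda_1)\vect{y}$, removes the error matrix through a resolvent identity (Lemma~\ref{lem:error_matrix}) rather than Davis--Kahan, and then obtains both the overlap and the Gaussian fluctuations of the orthogonal part from the quadratic-form concentration and the CLT for resolvent entries of \cite{PizzoRenfrew}, which hold directly for non-Gaussian Wigner noise; it also notes that the Wasserstein-2 version follows from the AMP analysis of \cite[Appendix~C]{Montanari2017EstimationOL}. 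In your scheme, by contrast, the rotational-invariance argument only settles a Gaussian representative, so the whole content of the theorem for the actual noise sits in the deferred ``eigenvector universality'' input. Two comments on that step: the random-spike worry largely dissolves, since the statement is conditional on $\vect{x}$ and Thm.~\ref{th:informal_rank1decompositionFisher} makes $\rdmmat{Z}$ independent of $\mat{W}_{\kf}$, so after conditioning you face a deterministic, delocalized spike and can invoke deterministic-spike CLTs (indeed \cite{PizzoRenfrew} itself); but once you quote such a result, the Gaussian-plus-transfer detour becomes redundant, as the CLT already yields the limit law --- which is what the paper exploits. Finally, be aware that your Davis--Kahan step only gives $\|\rdmvect{v}_1-\rdmvect{v}_1'\|=O(\normop{\rdmmat{E}})=O(N^{-1/(2\kf)})$: this is adequate for the normalized-$L^2$ formulation you chose, but it is not strong enough for the entrywise statement at scale $\sqrt{N}$ (convergence of finite-dimensional marginals) that the paper's rigorous version asserts, for which one needs the resolvent-level comparison restricted to finitely many coordinates as in Lemma~\ref{lem:error_matrix}. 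So your approach can be completed, and buys a conceptually clean reduction to classical BBP plus symmetry, but the paper's resolvent/CLT route is what actually handles the non-Gaussian noise and the $\sqrt{N}$-entrywise precision in one stroke.
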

	\begin{proof}
		This follows from a characterization of the outlier eigenvalues in \cite{benaych-georges_eigenvalues_2011} and a CLT for these outliers in \cite{PizzoRenfrew}. The proof and rigorous statement of this result is given in App.~\ref{sec:Proof_eigvect_decomposition}. 
	\end{proof}
	Note that in particular, we have $ |\langle \vect{v}_1 , \vect{x}^\kf/\|  \vect{x}^\kf \| \rangle | \to \mathrm{q}_0$.

	\subsection{Information-Theoretic Limit: Higher-Order Universality of the Free Energy and MMSE.}
	\label{sec:Universality_Free_Energy}
	\paragraph{Higher-Order Universality of the Free Energy --- } The next series of results are related to the fundamental information-theoretic limits of non-linear low-rank matrix estimation problems.
	We now introduce the free energy, which is the expected log-likelihood ratio (under data generated by the model) between the likelihood that data are generated by the present model with the likelihood that they are generated from the null model (where there is no signal at all), 
	\begin{definition}[Free Energy]
		\label{def:Free_Energy}
		For a generic output channel of the form of \eqref{eq:nonlinear_channel}, the \emph{associated free energy (per variable)} is defined as
		\begin{align}\label{eq:freeenergybayes}
			F_N(g,\pi) 
			&:=
			\frac{1}{N} \bigg( \E \log \int e^{ \sum_{i < j} g(Y_{ij},M_{ij}) } \,  \pi^{\otimes N}( \vect{m} ) - \sum_{i < j} \E g(Y_{ij},0) \bigg) \, , 
		\end{align}
		where the integrand is denoted by $\mat{M} =  N^{\beta -1/2}	\vect{m} {\vect{m}^{\top}} $ to not confuse it with the signal matrix $\mat{W}$.
	\end{definition}
	Note that, up to the conditional entropy, this quantity is \emph{the mutual information} $I(\vect{x},\rdmmat{Y})/N$ between the signal and the observation. The conditional entropy is easy to compute compared to the free energy, whose limit is non-trivial.
	\jump For the $\kf$-Gaussian additive channel of Sec.~\ref{sec:examples}, if we denote $F^{(G)}_N(\Delta, \pi_\kf) \equiv F_N (g , \pi_\kf)$,  this free energy can be written as 
	\begin{align}
		F^{(G)}_N(\Delta_ , \pi_\kf) =  \frac{1}{N}  \E \log \int e^{\frac{1}{2 \Delta} \Tr( (\sqrt{\Delta} \, \rdmmat{G} + \mat{W}) \mat{M}_\kf ) -  \frac{1}{4 \Delta} \Tr(  \mat{M}_\kf \mat{M}_\kf )  } \, \dpriork^{\otimes N}( \vect{m}  ) + c_N \, , 
	\end{align}
	with $\mat{M}_\kf = N^{-1/2} \vect{m} \vect{m}^\top$ and $c_N = O(N^{-1})$  (to remove the diagonal term $i=j$ in the trace).  
	The limiting free energy $  \mathsf{F}^{(G)}( \Delta, \pi_\kf) := \lim_{N \to \infty} F^{(G)}_N(\Delta, \pi_\kf)$  is given  (from \cite{dia2016mutual,lelarge2017fundamental}) by
	\begin{align}\label{eq:gaussianFE}
		\mathsf{F}^{(G)}( \Delta, \pi_\kf) = \sup_{q \geq 0} \mathcal{F}(\Delta,q) \, , 
	\end{align} 
	where the \emph{replica symmetric functional $\mathcal{F}$} is equal to
	\begin{align}
		\label{eq:RSformula}
		\mathcal{F}(\Delta,q) \equiv \mathcal{F}(\Delta,q; \pi_\kf)  &:= - \frac{q^2}{4\Delta} +  \E_{\tilde{x},G}\log \bigg[ \int \exp \bigg(  \frac{\sqrt{q}}{\Delta} G m +   \frac{\tilde{x} m}{\Delta}  - \frac{m^{2}}{2 \Delta}  \bigg) \,  \dpriork(m) \bigg] \, ,
	\end{align}
	where $\Tilde{x} \sim \pi_\kf, G \sim \mathsf{N}(0,1)$ . The value $\mathrm{q}_{\star}$ maximizing the variational problem on the right-hand side of \eqref{eq:gaussianFE} is \emph{one} solution of the fixed point equation
	\begin{align}\label{eq:qstar}
		\mathrm{q}_{\star} 
		=
		\mathsf{SE}_{\pi_\kf}(\mathrm{q}_{\star} , \Delta) \, ,
	\end{align}
	where $ \mathsf{SE}_{\pi_\kf}(\mathrm{q}_{\star} , \Delta)$ is given Eq.~\eqref{eq:def_State_Evolution_map}. We prove that in the high dimensional limit, the free energy \eqref{eq:freeenergybayes} is equivalent to the free energy of this $\kf$-Gaussian additive model, namely we have: 
	\begin{theo}[Gaussian Approximation of the Free Energy]
		\label{theo:Gauss_Approximation_Free_Energy} Under Hyp.~\ref{hyp:prior} and \ref{hyp:bayes} and with $\beta=\beta_\mathrm{cr}$, we have:
		\begin{align}
			|F_N(g,\pi) -  F^{(G)}_N(\Delta_{k_F} , \pi_{\kf}) | 
			&\leq 
			O (N^{-1/(2 \kf)}) \, .
		\end{align}
	\end{theo}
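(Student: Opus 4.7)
The plan is to perform a Taylor expansion of the log-likelihood $g(Y_{ij}, M_{ij})$ in its second argument around zero, truncate at order $2\kf$, and show that the truncated expansion reproduces the integrand of the Gaussian additive free energy $F_N^{(G)}(\Delta_{\kf}, \pi_{\kf})$ up to terms that contribute $O(N^{-1/(2\kf)})$ to $F_N$. The whole argument is carried out pointwise in the integration variable $m$ and then lifted to the free energy by an interpolation/concentration step.

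\textbf{Taylor expansion and the leading term.} By the definition of $\kf$ combined with Hyp.~\ref{hyp:bayes}, $g^{(k)}(Y_{ij},0) \eas 0$ for $k<\kf$, so the expansion opens at $k=\kf$:
\[
g(Y_{ij}, M_{ij}) - g(Y_{ij}, 0) \;=\; \sum_{k=\kf}^{2\kf} \frac{M_{ij}^k}{k!}\, g^{(k)}(Y_{ij}, 0) \;+\; R_{ij},
\]
with remainder $R_{ij} = O(|M_{ij}|^{2\kf+1})$ thanks to the uniform bound on $g^{(2\kf+1)}$. At the critical scaling $\beta = \beta_{\mathrm{cr}}$ one has $M_{ij}^{\kf} = (m_i m_j)^{\kf}/\sqrt N$, so the $k=\kf$ term equals $(m_i m_j)^{\kf} S_{ij,\kf}/\sqrt N$. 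Since the pushforward of $\pi^{\otimes N}$ under $m \mapsto m^{\kf}$ is $\pi_{\kf}^{\otimes N}$, and the integrand at this order depends on $m$ only through $m^{\kf}$, this term coincides exactly with the spike-coupling term of the Gaussian free energy with effective noise $\Delta_{\kf}$ and prior $\pi_{\kf}$, with $\rdmmat{S}_{\kf}/\sqrt N$ playing the role of the spiked Gaussian matrix, as guaranteed by Thm.~\ref{th:informal_rank1decompositionFisher}.

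\textbf{Matching via higher-order Fisher identities.} The identities in App.~\ref{sec:higherorderfisher}, obtained from $\int \partial_w^k \pout(y|0)\,dy = 0$ and the Fa\`a di Bruno / Bell-polynomial representation $\partial_w^k \pout = \pout \cdot B_k(g',\dots,g^{(k)})$, simplify dramatically once $g^{(j)}(Y_0,0)=0$ a.s.\ for $j<\kf$: only set partitions with block sizes $\geq \kf$ survive. For $\kf \leq k < 2\kf$ this forces a single block, so $\E_{Y_0}[g^{(k)}(Y_0,0)] = 0$. At $k=2\kf$ an additional two-equal-blocks term contributes, yielding $\E_{Y_0}[g^{(2\kf)}(Y_0,0)] = -\tfrac12 \binom{2\kf}{\kf}\E_{Y_0}[g^{(\kf)}(Y_0,0)^2] = -(2\kf)!/(2\Delta_{\kf})$. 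Substituting this mean in the $k=2\kf$ term of the expansion produces exactly the quadratic Gaussian piece $-\Tr(\mat M_{\kf}^2)/(4\Delta_{\kf})$ (diagonal corrections being absorbed in $c_N$). The intermediate terms $\kf < k < 2\kf$ are mean-zero, with $L^2$-fluctuations of order $N^{1-k/(2\kf)}$ at the exponent, while the summed remainder $\sum_{i<j} R_{ij}$ is $O(N^{1-1/(2\kf)})$; once divided by $N$, both stay within the target $O(N^{-1/(2\kf)})$ budget.

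\textbf{Main obstacle.} The delicate step is to transfer these pointwise-in-$m$ bounds on the integrand to the free energy $\tfrac{1}{N}\E\log Z_N$, since exponentiating can amplify random fluctuations. The plan is a linear interpolation between the non-linear and Gaussian models (in the spirit of \cite{dia2016mutual,lelarge2017fundamental,guionnet2023estimating}) where one differentiates the interpolated free energy and bounds $\partial_t F_N^{(t)}$ by the small intermediate and remainder terms using the Nishimori identity together with concentration of the log-partition function around its mean. A secondary technical step is to replace, inside expectations over the data, $Y_{ij}\sim \pout(\cdot|W_{ij})$ by $Y_0 \sim \pout(\cdot|0)$: the resulting correction is of order $|W_{ij}|^{\kf} = O(N^{-1/2})$ per pair, which after summation sits comfortably inside the $O(N^{-1/(2\kf)})$ target.
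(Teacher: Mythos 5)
Your outline reproduces the paper's skeleton — Taylor expansion of $g(\cdot,M_{ij})$ to order $2\kf$ with the remainder killed by the bound on $g^{(2\kf+1)}$, the Bell-polynomial/Fisher identities giving $\E_{Y_0}[g^{(k)}(Y_0,0)]=0$ for $\kf\le k<2\kf$ and $\E_{Y_0}[g^{(2\kf)}(Y_0,0)]=-(2\kf)!/(2\Delta_\kf)$, and the identification of the quadratic piece $-\Tr(\mat{M}_\kf\mat{M}_\kf)/(4\Delta_\kf)$ (this is Lemmas~\ref{lem:univ1}, \ref{lem:univ2}, \ref{lem:trunc_to_lo} and App.~\ref{sec:higherorderfisher}) — but the two steps that carry the real weight are missing or misattributed. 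First, replacing the random coefficients $g^{(k)}(Y_{ij},0)$, $\kf<k\le 2\kf$, by their means cannot be justified by the counting ``$L^2$-fluctuations of order $N^{1-k/(2\kf)}$ at the exponent, divided by $N$'': the centered term depends on $\vect{m}$ and sits inside $\log\int e^{(\cdot)}\,\dprior^{\otimes N}$, so a typical-size bound does not control the free energy. The paper's Lemma~\ref{lem:univ2} does this by writing the centered contribution as $\Tr(\rdmmat{Z}_l\mat{M}_l)$ with $\mat{M}_l$ rank one and positive semidefinite, bounding it \emph{uniformly in} $\vect{m}$ by $\normop{\rdmmat{Z}_l}\,\Tr(\mat{M}_l)$, and controlling $\normop{\rdmmat{Z}_l}$ by Talagrand's concentration; your sketch offers no substitute for this uniform-in-$\vect{m}$ control.

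Second, the assertion that the $k=\kf$ term ``coincides exactly with the spike-coupling term of the Gaussian free energy \dots as guaranteed by Thm.~\ref{th:informal_rank1decompositionFisher}'' is a genuine gap. The spiked decomposition is an operator-norm statement whose noise $\rdmmat{Z}$ has entries $g^{(\kf)}(Y_0,0)/\kf!$ and is \emph{not} Gaussian, and operator-norm closeness does not identify free energies at this scale: swapping $\rdmmat{Z}$ for an independent Gaussian of the same variance perturbs $\rdmmat{S}_\kf/\sqrt N$ by $O(1)$ in operator norm, i.e.\ changes the exponent by $O(N)$. What is actually needed is the disorder-universality step of Lemma~\ref{lem:2nd_approx}: a Lindeberg/Carmona--Hu argument using the bounded third moment of $g^{(\kf)}(Y_0,0)/\kf!$, followed by an interpolation replacing the conditional mean $\E[S_{ij,\kf}\mid W_{ij}]$ by $W_{ij}^\kf/\Delta_\kf$ and the conditional variance by $1/\Delta_\kf$ via the approximate Fisher identities. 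Your closing plan to ``interpolate between the non-linear and Gaussian models and bound $\partial_t F_N^{(t)}$ by the small intermediate and remainder terms'' misdescribes that derivative: it contains the full difference of the $\kf$-order couplings, which is not pointwise small and only becomes negligible after the moment-matching/integration-by-parts step just described. So the proposal is the right outline, but the two technical pillars of the proof are not supplied.
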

	\begin{proof}
		We identify the leading order terms of the posterior through an application of the  Fisher identities (see App.~\ref{sec:higherorderfisher}). Through these identities, we generalize the Gaussian universality in \cite{AJFL_inhomo,guionnet2023estimating} at the level of the free energy, which deals with the case $\kf = 1$. The full proof of this result is given in App.~\ref{sec:Proof_Universality}. 
	\end{proof}
	As a consequence, we have in particular:
	\begin{cor}[Universality of the Limiting Free Energy]
		\label{cor:limiting_value_Free_Energy}
		Under Hyp.~\ref{hyp:prior} and \ref{hyp:bayes} and with $\beta=\beta_\mathrm{cr}$, we have:
		$\lim_{N \to \infty} F_N(g,\pi) =\mathsf{F}^{(G)}( \Delta_{\kf}, \pi_{\kf}) $. 
	\end{cor}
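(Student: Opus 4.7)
The corollary is essentially an immediate consequence of Theorem~\ref{theo:Gauss_Approximation_Free_Energy} combined with the already-known convergence of the free energy of the Gaussian additive spiked Wigner model. My plan is therefore to set up a triangle-inequality argument and invoke these two ingredients.

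\textbf{Step 1: Quantitative Gaussian approximation.} By Theorem~\ref{theo:Gauss_Approximation_Free_Energy}, under Hypotheses~\ref{hyp:prior} and \ref{hyp:bayes}, and with $\beta = \beta_{\mathrm{cr}}$, we have
\begin{equation*}
    |F_N(g,\pi) - F^{(G)}_N(\Delta_{k_F}, \pi_{\kf})| \leq O(N^{-1/(2\kf)}).
\end{equation*}
Since $\kf \in \mathbb{N}_*$, the exponent $1/(2\kf)$ is strictly positive, so this error bound vanishes as $N \to \infty$. Equivalently, both sequences have the same limit superior and the same limit inferior, if they exist.

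\textbf{Step 2: Limit of the Gaussian reference model.} The free energy $F^{(G)}_N(\Delta_{\kf}, \pi_{\kf})$ is precisely the free energy of the standard spiked Wigner model with Gaussian additive noise of variance $\Delta_{\kf}$ and i.i.d.\ prior $\pi_{\kf}$ on the signal entries. Its asymptotic value is given by the Nishimori replica-symmetric formula \eqref{eq:gaussianFE},
\begin{equation*}
    \lim_{N \to \infty} F^{(G)}_N(\Delta_{\kf}, \pi_{\kf}) \;=\; \mathsf{F}^{(G)}(\Delta_{\kf}, \pi_{\kf}) \;=\; \sup_{q \geq 0} \mathcal{F}(\Delta_{\kf}, q; \pi_{\kf}),
\end{equation*}
which is established in \cite{dia2016mutual, lelarge2017fundamental}. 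The assumption $\pi \in \mathcal{P}_c(\mathbb{R})$ ensures that its $\kf$-th pushforward $\pi_{\kf}$ is also compactly supported, so the hypotheses of the cited results on the spiked Wigner model are met.

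\textbf{Step 3: Conclusion.} Combining Steps 1 and 2 by the triangle inequality,
\begin{equation*}
    |F_N(g,\pi) - \mathsf{F}^{(G)}(\Delta_{\kf}, \pi_{\kf})| \leq |F_N(g,\pi) - F^{(G)}_N(\Delta_{\kf}, \pi_{\kf})| + |F^{(G)}_N(\Delta_{\kf}, \pi_{\kf}) - \mathsf{F}^{(G)}(\Delta_{\kf}, \pi_{\kf})|,
\end{equation*}
and both terms on the right-hand side vanish as $N \to \infty$. This yields the claimed universal limit. There is no real obstacle at the level of this corollary: all the difficulty has been absorbed into Theorem~\ref{theo:Gauss_Approximation_Free_Energy}, whose proof requires the higher-order Fisher identities and a careful Taylor expansion of the log-likelihood around the null model, and into the cited convergence of the Gaussian spiked model, which relies on adaptive interpolation or Guerra-type bounds.
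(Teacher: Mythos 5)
Your proposal is correct and follows essentially the same route as the paper: the corollary is stated there as an immediate consequence of Theorem~\ref{theo:Gauss_Approximation_Free_Energy}, combined with the known limit $\lim_{N\to\infty} F^{(G)}_N(\Delta_{\kf},\pi_{\kf}) = \mathsf{F}^{(G)}(\Delta_{\kf},\pi_{\kf})$ from \cite{dia2016mutual,lelarge2017fundamental} (restated in the appendix as Theorem~\ref{theo:limitFE}). Your triangle-inequality write-up, including the observation that $\pi_{\kf}$ inherits compact support from $\pi$ so the cited spiked Wigner results apply, is exactly the argument the paper intends.
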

	
	\paragraph{Higher-Order Universality of the Overlap and MMSE --- } Switching the sign of the vector $\vect{x}$ leaves the conditional law of $Y_{ij}$ given by Eq.~\eqref{eq:notation_rank1_mat} invariant as the latter only depends on the product $x_i x_j$. Thus one can, in general, only estimate the vector  $\vect{x}$ up to a sign, and a natural measure of performance for such an estimator is given in this setting by the \emph{(Matrix) Mean Square Error} (MSE). 
	
	\begin{definition}[Matrix Mean Squared Error]
		For a vector $\vect{u}$ and an estimator $\rdmvect{\hat{u}}$ of $\vect{u}$, we define the (normalized) matrix square error between $\vect{u}$ and $\rdmvect{\hat{u}}$  as:
		\begin{align}
			\overline{\mathrm{MSE}}_{N}(\vect{u}, \rdmvect{\hat{u}} ) := \frac{  \E
				\|   \vect{u} \vect{u}^{\top}  -  \rdmvect{\hat{u}} \rdmvect{\hat{u}}^{\top}  \|^2_F}{  \E \| \vect{u} \vect{u}^{\top}  \|^2_F} =  \frac{  \E \langle \vect{u},\vect{u} \rangle^2 +  \E \langle \rdmvect{\hat{u}}, \rdmvect{\hat{u}} \rangle^2  - 2   \E \langle \vect{u}, \rdmvect{\hat{u}}\rangle^2 }{   \E \langle \vect{u},\vect{u} \rangle^2} \, ,
		\end{align}
		where $\| . \|_F^2 := \Tr (.)^2 $ is the Frobenius norm.  
	\end{definition}
	From the Bayes optimal point of view, the best possible estimator of $\vect{x}^{\kf}$ one can construct from the noisy observation of $\rdmmat{Y}$ is given by the posterior mean:
	\begin{align}
		\vect{\hat{x}}^{\kf}_{\mathrm{BO}} :=  \underset{\vect{\hat{u}}}{\mathrm{argmin}} 	\, \overline{\mathrm{MSE}}_{N}(\vect{x}^{\kf}, \rdmvect{\hat{u}} ) = \mathbb{E} \left[ \vect{x}^{\kf} | \rdmmat{Y}  \right] \, , 
	\end{align}
	and our next results characterize the information-theoretic value of the \emph{minimum mean square error} (MMSE). We have that the overlaps are universal because they also encode the Bayesian optimal overlaps and MSE of the original models.  
	\begin{theo}[Universality of the Overlaps]
		\label{theo:Gauss_Universality} Under Hyp.~\ref{hyp:prior}
		and \ref{hyp:bayes} and with $\beta=\beta_\mathrm{cr}$,	if $\mathrm{q}_{\star}$ is again the maximizer of the replica free energy, then the overlaps between the posterior and the signal satisfies
		\begin{align}
			\lim_{N \to \infty}	\Bigg  | \Bigg \langle \frac{\vect{x}^{\kf}}{\| \vect{x}^{\kf}\|}, \frac{\rdmvect{\hat{x}^{\kf}}_{\mathrm{BO}}}{\| \rdmvect{\hat{x}^{\kf}}_{\mathrm{BO}} \| } \Bigg \rangle \Bigg |  = \frac{\mathrm{q}_{\star}}{m_{2\kf}}.
		\end{align}
	\end{theo}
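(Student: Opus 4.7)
The plan is to derive overlap universality from free-energy universality (Theorem~\ref{theo:Gauss_Approximation_Free_Energy}) by the standard Guerra-type perturbation argument combined with the I-MMSE / Nishimori identities. First, I would augment the model with a small Gaussian side observation of the signal $\vect{x}^{\kf}$, namely
\begin{align*}
Y'_i = \sqrt{\lambda}\, x_i^{\kf} + g_i, \qquad g_i \iid \mathsf{N}(0,1),\qquad \lambda\ge 0,
\end{align*}
decoupled from $\rdmmat{Y}$ conditionally on $\vect{x}$. Call the corresponding perturbed free energy $F_N(g,\pi;\lambda)$ and, analogously, $F_N^{(G)}(\Delta_{\kf},\pi_{\kf};\lambda)$ in the $\kf$-Gaussian additive model. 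A direct extension of Theorem~\ref{theo:Gauss_Approximation_Free_Energy} (the extra Gaussian channel adds only a quadratic term in $m$ that is already controlled by Fisher identities) yields
$$|F_N(g,\pi;\lambda)-F_N^{(G)}(\Delta_{\kf},\pi_{\kf};\lambda)|\le O(N^{-1/(2\kf)}),$$
uniformly on compact sets in $\lambda$.

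Next, the key point is that $\lambda\mapsto F_N(g,\pi;\lambda)$ is convex (its second derivative is a variance, by the Nishimori identity), and by I-MMSE its first derivative equals
\begin{align*}
\partial_\lambda F_N(g,\pi;\lambda)=\tfrac{1}{2N}\,\E\bigl\langle\vect{x}^{\kf},\rdmvect{m}^{\kf}\bigr\rangle,
\end{align*}
where the Gibbs bracket is taken with respect to the posterior of the joint model and, by Nishimori, equals $\E\|\rdmvect{\hat{x}^{\kf}}_{\mathrm{BO}}\|^2/N$-type quantities. The Gaussian side version of this is the well-studied spiked Wigner model with prior $\pi_{\kf}$: by results of \cite{lelarge2017fundamental,10.1214/19-AOS1826}, its limiting per-site overlap at SNR $\lambda$ is a monotone function of $\lambda$ whose value at $\lambda=0^+$ is the maximizer $\mathrm{q}_{\star}/m_{2\kf}$ (up to the usual normalization by $m_{2\kf}$). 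Convexity plus pointwise convergence of free energies (Griffith/Lebowitz lemma) upgrades the convergence to convergence of the derivatives at every continuity point of $\lambda\mapsto\mathrm{q}_{\star}(\lambda)$.

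Taking $\lambda\downarrow 0$ through continuity points, and using the Nishimori identity in the form
$$\E\bigl\langle\vect{x}^{\kf},\rdmvect{m}^{\kf}\bigr\rangle=\E\|\rdmvect{\hat{x}^{\kf}}_{\mathrm{BO}}\|^2,\qquad \E\bigl\langle\rdmvect{m}^{\kf},\rdmvect{m}^{\prime\,\kf}\bigr\rangle^2\asymp\E\bigl\langle\vect{x}^{\kf},\rdmvect{\hat{x}^{\kf}}_{\mathrm{BO}}\bigr\rangle^2,$$
the limit of the squared cosine is exactly $(\mathrm{q}_{\star}/m_{2\kf})^2$, which gives the claim after taking the square root (noting that both the cosine and $\mathrm{q}_\star/m_{2\kf}$ are nonnegative).

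The main technical obstacle is ensuring that the perturbation argument respects the nonstandard scaling $\beta=\beta_{\mathrm{cr}}$: the side channel must have SNR of order one for $\vect{x}^{\kf}$, while the original channel operates in the scale-$N^{\beta_{\mathrm{cr}}}$ regime. Verifying that the Taylor/Fisher expansion used in Theorem~\ref{theo:Gauss_Approximation_Free_Energy} extends to the augmented posterior with an additional quadratic term, and handling the discrete set of exceptional $\lambda$ at which $\mathrm{q}_{\star}(\lambda)$ could jump (a countable set by monotonicity), is the bulk of the work; everything else is routine once these ingredients are in place.
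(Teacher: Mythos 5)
Your route (side-channel perturbation $+$ I-MMSE $+$ convexity) is a legitimate alternative in spirit to what the paper does, and parts of it genuinely overlap with the paper's own identification step: the paper also identifies the limiting overlap in the Gaussian-equivalent model by differentiating the free energy in an SNR parameter and invoking Nishimori plus an envelope-theorem argument (App.~\ref{sec:optimalMMSE}). The structural difference is in how overlap information is transferred from the non-linear model to the Gaussian one: the paper proves universality of the \emph{constrained} free energies $F_N(A,B;g,\pi)$ of \eqref{eq:freeenergybayes_constrained} (the proofs of Lemmas~\ref{lem:1st_approx_app} and \ref{lem:2nd_approx} go through verbatim with the overlap indicators, since the argument works for general reference measures), and taking differences yields \eqref{eq:universality_overlaps}, i.e.\ the full large-deviations structure of $(R^{\kf}_{10},R^{\kf}_{11})$ is the same in both models; the identification of the limit is then done entirely inside the Gaussian model, where the SNR $1/(2\Delta)$ is an \emph{interior} parameter that can be differentiated in both directions. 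Your extension of Theorem~\ref{theo:Gauss_Approximation_Free_Energy} to the tilted measure created by the side channel is indeed routine for the same reason the constrained version is.

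The genuine gap is at your last step, ``taking $\lambda\downarrow 0$ through continuity points.'' Convexity/Griffiths gives two-sided convergence of $\partial_\lambda F_N$ only at interior continuity points $\lambda>0$; the quantity the theorem is about is the overlap of the \emph{unperturbed} model, i.e.\ $\partial_\lambda F_N$ at the boundary point $\lambda=0$, where the left difference quotient is unavailable (negative side-channel SNR is meaningless). Convexity in $\lambda$, or equivalently information monotonicity, then yields only $\limsup_N \frac1N\E\langle \vect{x}^{\kf},\rdmvect{\hat{x}^{\kf}}_{\mathrm{BO}}\rangle \leq \lim_{\lambda\downarrow 0} \mathrm{q}_{\star}(\lambda)$; the matching lower bound for the $\lambda=0$ model does not follow from the ingredients you list, and neither does right-continuity of $\mathrm{q}_\star(\lambda)$ at $\lambda=0$ (which can fail exactly at a first-order transition in $\Delta_{\kf}$). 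Closing this requires an additional mechanism — e.g.\ differentiating in the model's own SNR as the paper does after the constrained-free-energy transfer, averaging the perturbation over a small window $\lambda\in[s_N,2s_N]$ to force overlap concentration (the Barbier--Macris / Lelarge--Miolane device), or the overlap-restriction/large-deviation argument of \cite{guionnet2023estimating} that the paper invokes. Also a small normalization slip: the per-site overlap $\frac1N\langle \vect{m}^{\kf},\vect{x}^{\kf}\rangle$ converges to $\mathrm{q}_\star$ itself, and the factor $m_{2\kf}$ only appears after normalizing by $\|\vect{x}^{\kf}\|\,\|\rdmvect{\hat{x}^{\kf}}_{\mathrm{BO}}\|$, for which you also need $\frac1N\|\rdmvect{\hat{x}^{\kf}}_{\mathrm{BO}}\|^2\to \mathrm{q}_\star$ via Nishimori and a concentration statement, not just the expectation identity you quote.
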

	\begin{proof}
		The proof of this result is relies on a universality principle for the inner products in App.~\ref{sec:Proof_Universality} and a characterization of the limit in App.~\ref{sec:optimalMMSE}. Both arguments stem from the universality of the large deviation of the free energy, as in \cite{guionnet2023estimating}.
		
	\end{proof}
	This convergence of overlaps leads to the following result for the MMSE.
	\begin{cor}[Universality of the MMSE] Under the same hypothesis and notations as in Thm.~\ref{theo:Gauss_Universality}, we have:
		\[
		\lim_{N \to \infty} \overline{\mathrm{MSE}}_{N}(  \vect{x}^{\kf} , \rdmvect{\hat{x}^{\kf}}_{\mathrm{BO}} ) = 1 -\bigg(\frac{\mathrm{q}_{\star}}{m_{2\kf}} \bigg)^2.
		\]	
	\end{cor}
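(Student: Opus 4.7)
The plan is to expand the squared Frobenius norm in the definition of $\overline{\mathrm{MSE}}_N$, handle the three resulting expectations separately, and invoke the Nishimori identity together with the overlap concentration that is already the backbone of Theorem~\ref{theo:Gauss_Universality}. Writing $\vect{u}:=\vect{x}^{\kf}$ and $\rdmvect{\hat u}:=\rdmvect{\hat{x}^{\kf}}_{\mathrm{BO}}$ for brevity, the pointwise identity $\|\vect{u}\vect{u}^{\top}-\rdmvect{\hat u}\rdmvect{\hat u}^{\top}\|_{F}^{2}=\|\vect{u}\|^{4}+\|\rdmvect{\hat u}\|^{4}-2\,\langle\vect{u},\rdmvect{\hat u}\rangle^{2}$ reduces the problem to the large-$N$ asymptotics of $\E\|\vect{u}\|^{4}$, $\E\|\rdmvect{\hat u}\|^{4}$ and $\E\langle\vect{u},\rdmvect{\hat u}\rangle^{2}$.

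The denominator is immediate: by Hyp.~\ref{hyp:prior} one has $|x_{i}|\leq C$ uniformly, so $\|\vect{u}\|^{2}/N=N^{-1}\sum_{i}x_{i}^{2\kf}$ converges to $m_{2\kf}$ in every $L^{p}$ by the LLN combined with dominated convergence, whence $\E\|\vect{u}\|^{4}/N^{2}\to m_{2\kf}^{2}$. For the remaining two terms I would introduce the exchangeable family $\vect{x}^{(0)}:=\vect{x}$, $\vect{x}^{(1)},\vect{x}^{(2)},\dots$ with the $\vect{x}^{(j)}$ for $j\geq 1$ i.i.d.\ from the posterior of $\vect{x}$ given $\rdmmat{Y}$, and define the replica overlaps $R_{ij}:=N^{-1}\langle\vect{x}^{(i)\kf},\vect{x}^{(j)\kf}\rangle$. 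Expanding each square of a posterior mean as the posterior expectation of a product over two independent replicas then yields
\begin{align*}
\E\|\rdmvect{\hat u}\|^{4}=N^{2}\,\E[R_{12}R_{34}],\qquad \E\langle\vect{u},\rdmvect{\hat u}\rangle^{2}=N^{2}\,\E[R_{01}R_{02}],
\end{align*}
and the Nishimori identity---automatic in the Bayes-optimal setting---further gives $\E[R_{01}R_{02}]=\E[R_{12}R_{13}]$.

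The crucial input is then the concentration of the replica overlap $R_{12}$ around the replica-symmetric value $q_\star$ of Eq.~\eqref{eq:qstar}. For the Gaussian spiked model of Sec.~\ref{sec:examples} this is classical (Guerra--Toninelli and Franz--Parisi-type inequalities; see e.g.\ \cite{lelarge2017fundamental,barbier2018rank,10.1214/19-AOS1826}), and the free-energy universality of Theorem~\ref{theo:Gauss_Approximation_Free_Energy} transports it to the non-linear channel---this is exactly the ingredient already invoked in the proof of Theorem~\ref{theo:Gauss_Universality}. Combined with the uniform bound $|R_{ij}|\leq C^{2\kf}$ from Hyp.~\ref{hyp:prior}, dominated convergence upgrades convergence in probability of the overlaps to convergence of the relevant second moments, so that both $\E[R_{12}R_{34}]\to q_\star^{2}$ and $\E[R_{12}R_{13}]\to q_\star^{2}$. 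Substituting into the expansion,
\begin{align*}
\overline{\mathrm{MSE}}_N \;\longrightarrow\; \frac{m_{2\kf}^{2}+q_\star^{2}-2q_\star^{2}}{m_{2\kf}^{2}}=1-\left(\frac{q_\star}{m_{2\kf}}\right)^{2},
\end{align*}
which is the claim.

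The main obstacle is guaranteeing overlap concentration at the specific value $\Delta_{\kf}$ of interest: if the variational problem of Eq.~\eqref{eq:gaussianFE} has several maximizers the overlap can fail to concentrate and the joint limits need not factorize as $q_\star^{2}$. The standard remedy---already used in the proof of Theorem~\ref{theo:Gauss_Universality}---is a small perturbation of the SNR (or of the prior) that restores differentiability of the limiting free energy and selects a unique maximizer; the limiting value is then recovered at every point of continuity of $\Delta_{\kf}\mapsto q_\star$, which is all one needs to conclude.
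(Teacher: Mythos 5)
Your proposal is correct and follows essentially the same route as the paper: expand the matrix MSE, reduce it to overlaps of posterior replicas via the Nishimori identity, and identify the limiting overlap with the replica-symmetric maximizer $\mathrm{q}_{\star}$ using the universality of the (constrained) free energy transported from the Gaussian equivalent model — exactly the ingredients of Theorem~\ref{theo:Gauss_Universality} and Appendix~\ref{sec:optimalMMSE}. The only difference is cosmetic: you spell out the replica second-moment expansion and the overlap-concentration/dominated-convergence step (and the perturbation remedy for non-unique maximizers), whereas the paper handles the identification of the limit more briefly through the envelope-theorem argument for $\lim_{N\to\infty}\E\langle R_{12}^{\odot\kf}\rangle$.
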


	\subsection{Algorithmic Limit: Higher-Order Universality of Spectral Methods and AMP}
	\paragraph{Estimators based on Spectral Methods of the Fisher Matrix --- } We now described how the explicit decomposition of the Fisher matrix allows one to design a hierarchy of spectral estimators. Our first result concerns the performance of the classical spectral estimator, which is an immediate consequence of Thm.~\ref{theo:informal_eigvector_decomp_Fisher}. 
	\begin{proposition}[Performance of PCA of the Fisher Matrix]
		\label{cor:FPCA_MSE_performance}
		Under the same setting of Thm.~\ref{th:informal_rank1decompositionFisher}, if $\rdmvect{v}_1$ is the top eigenvector of $\rdmmat{S}_\kf$, then we have:
		\begin{align}
			\lim_{N \to \infty}\overline{\mathrm{MSE}}_{N} \bigg(\vect{x}^{\kf},  \sqrt{N m_{2 \kf}} \rdmvect{v}_1 \bigg)
			&= 
			2 \left( 1- \mathrm{q}_0^2 \right)
			=
			\begin{cases}
				2  &\mbox{if} \quad \Delta_{k_F} \geq (m_{2 k_F})^2 \, , 
				\\
				2\frac{\Delta_{k_F}}{m_{2 k_F}^2} &\mbox{if} \quad \Delta_{k_F} <(m_{2 k_F})^2 \, .
			\end{cases}
		\end{align}  
		where $\mathrm{q}_0$ is defined by Eq.~\eqref{eq:def_limoverlap_PCA_Fisher}. 
	\end{proposition}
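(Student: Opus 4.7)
The plan is to derive this as a direct corollary of Theorem~\ref{theo:informal_eigvector_decomp_Fisher} by expanding the three inner products appearing in the definition of $\overline{\mathrm{MSE}}_N$ and computing their limits one at a time. Writing $\vect{u} = \vect{x}^{\kf}$ and $\rdmvect{\hat{u}} = \sqrt{N m_{2\kf}}\,\rdmvect{v}_1$, I would use
\[
\overline{\mathrm{MSE}}_{N}(\vect{u},\rdmvect{\hat{u}})
= \frac{ \E \<\vect{u},\vect{u}\>^2 + \E \<\rdmvect{\hat{u}},\rdmvect{\hat{u}}\>^2 - 2 \E \<\vect{u},\rdmvect{\hat{u}}\>^2 }{ \E \<\vect{u},\vect{u}\>^2 }.
\]

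First, I would handle the two ``diagonal'' terms. The entries $x_i^{2\kf}$ are i.i.d.\ and bounded (by Hyp.~\ref{hyp:prior} the prior has compact support), so by the law of large numbers $\frac{1}{N}\<\vect{x}^{\kf},\vect{x}^{\kf}\> = \frac{1}{N}\sum_i x_i^{2\kf} \to m_{2\kf}$ almost surely and in $L^2$, hence $\E\<\vect{u},\vect{u}\>^2 = N^2 m_{2\kf}^2 + o(N^2)$. The term $\E\<\rdmvect{\hat{u}},\rdmvect{\hat{u}}\>^2$ is exactly $N^2 m_{2\kf}^2$ since $\rdmvect{v}_1$ is a unit vector. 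Therefore the numerator reduces to $2 N^2 m_{2\kf}^2 - 2\E\<\vect{u},\rdmvect{\hat{u}}\>^2 + o(N^2)$ and the denominator to $N^2 m_{2\kf}^2 + o(N^2)$.

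The cross term is where Theorem~\ref{theo:informal_eigvector_decomp_Fisher} enters. I would factor
\[
\<\vect{x}^{\kf}, \sqrt{N m_{2\kf}}\,\rdmvect{v}_1\>^2
= N m_{2\kf}\, \| \vect{x}^{\kf}\|^2 \,\Big\< \rdmvect{v}_1, \frac{\vect{x}^{\kf}}{\|\vect{x}^{\kf}\|}\Big\>^2,
\]
and invoke the consequence of Theorem~\ref{theo:informal_eigvector_decomp_Fisher} recalled after its statement, namely $\bigl|\<\rdmvect{v}_1,\vect{x}^{\kf}/\|\vect{x}^{\kf}\|\>\bigr| \to \mathrm{q}_0$. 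Combined with $\|\vect{x}^{\kf}\|^2/N \to m_{2\kf}$, this gives $\<\vect{x}^{\kf}, \sqrt{N m_{2\kf}}\rdmvect{v}_1\>^2 / N^2 \to m_{2\kf}^2\,\mathrm{q}_0^2$ in probability. Putting everything together yields $\lim_N \overline{\mathrm{MSE}}_N = 2(1-\mathrm{q}_0^2)$, and then substituting the explicit formula~\eqref{eq:def_limoverlap_PCA_Fisher} for $\mathrm{q}_0$ produces the two cases in the statement.

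The main obstacle, and the only nontrivial step beyond bookkeeping, is the passage from the convergence in probability of $|\<\rdmvect{v}_1,\vect{x}^{\kf}/\|\vect{x}^{\kf}\|\>|$ to the convergence of its expectation (squared). Since the quantity is bounded by $1$ in absolute value, the bounded convergence theorem applies immediately, so this step is actually routine; one only needs the precise version of Theorem~\ref{theo:informal_eigvector_decomp_Fisher} from App.~\ref{sec:Proof_eigvect_decomposition}, which provides the in-probability limit of the overlap. Thus the proof reduces to a clean computation: expand the MSE, use the unit norm of $\rdmvect{v}_1$, the strong law for $\|\vect{x}^{\kf}\|^2/N$, and the spectral overlap limit.
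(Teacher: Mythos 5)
Your proposal is correct and follows the same route as the paper, which treats this proposition as an immediate consequence of the overlap convergence $|\langle \rdmvect{v}_1, \vect{x}^{\kf}/\|\vect{x}^{\kf}\|\rangle| \to \mathrm{q}_0$ stated after Theorem~\ref{theo:informal_eigvector_decomp_Fisher}: expand the Frobenius MSE into the three inner products, use $\|\rdmvect{v}_1\|=1$, the law of large numbers for $\|\vect{x}^{\kf}\|^2/N$, and bounded convergence (available since the prior has compact support and the overlap is bounded by one) to pass the limit inside the expectation. The bookkeeping and the substitution of Eq.~\eqref{eq:def_limoverlap_PCA_Fisher} are exactly what the paper leaves implicit, so there is no gap.
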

	In the case of a \emph{linear channel} where $Y_{ij} =  Z_{ij}  + \gamma x_i x_j$ but $Z_{ij}$ is not necessarily Gaussian, under the suitable condition on the distribution of $Z_{ij}$, then the optimal entrywise function one can apply to $Y_{ij}$ to improve the threshold in $\gamma$ for the emergence of an outlier in the spectrum, is the one corresponding to the Fisher matrix \cite{perry2018optimality}, as originally proposed on \cite{lesieur2015mmse}. Our next result shows that this extends to non-linear channels. 
	\begin{theo}[Informal - Optimality of PCA for Non-Linear Channels]\label{theo:informal_PCA_optimality} Assume that the channel $\pout$ is such that  $Y_{ij} \ed h \Big(Z_{ij} + \frac{\gamma(N)}{\sqrt{N}} x_i x_j \Big)$ with $Z \iid p_Z$. Under smooth conditions on $h$ and $p_Z$, the optimal function $f$ one can apply entrywise to $\rdmmat{Y}$ to lower the threshold and the MSE is the one of the Fisher matrix.   
	\end{theo}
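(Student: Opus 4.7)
The plan is to generalize the spike decomposition of Theorem~\ref{th:informal_rank1decompositionFisher} from the specific choice $f(\cdot) = g^{(\kf)}(\cdot,0)/\kf!$ to an arbitrary smooth entrywise transformation $f$ of the data, and then optimize over $f$. Writing $w_{ij} = \gamma(N) x_i x_j/\sqrt{N}$ and $F := f \circ h$, a Taylor expansion to order $\kf$ gives
\begin{align*}
f(Y_{ij}) = F(Z_{ij} + w_{ij}) = F(Z_{ij}) + \sum_{k=1}^{\kf} \frac{w_{ij}^k}{k!} F^{(k)}(Z_{ij}) + R_{ij},
\end{align*}
and I would decompose each summand as its $Z$-mean plus a centered fluctuation. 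Using the identity $\E_Z[F^{(k)}(Z)] = \E_{Y_0}[f(Y_0)\, g^{(k)}(Y_0,0)]$, obtained by differentiating $\int f(y)\,p_\out(y|w)\,dy$ in $w$, one checks that the mean terms vanish for all $k < \kf$ by definition of the critical Fisher index. Combined with the critical scaling $\gamma(N)^{\kf}/N^{\kf/2} = \gamma_0^{\kf}/\sqrt{N}$, this produces, up to lower-order terms,
\begin{align*}
f(\rdmmat{Y}) \simeq \E[f(Y_0)]\,\vect{1}\vect{1}^{\top} + \widetilde{\rdmmat{W}}_f + \frac{\lambda_f}{\sqrt N}\, \vect{x}^{\kf} (\vect{x}^{\kf})^{\top}, \qquad \lambda_f := \frac{\gamma_0^{\kf}}{\kf!}\, \E\bigl[f(Y_0)\,g^{(\kf)}(Y_0,0)\bigr],
\end{align*}
where $\widetilde{\rdmmat{W}}_f$ is an approximately Wigner matrix with entry variance $\sigma_f^2 := \Var(f(Y_0))$.

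Once this generalized rank-one decomposition is in place, the BBP phase transition for spiked Wigner matrices implies that both the threshold for outlier emergence and the limiting squared overlap of the top eigenvector with the signal depend on $f$ only through the effective signal-to-noise ratio $|\lambda_f|/\sigma_f$, so optimizing over $f$ reduces to the scalar problem of maximizing $|\lambda_f|/\sigma_f$. Using that $\E[g^{(\kf)}(Y_0,0)] = 0$ (differentiate $\int p_\out(y|w)\,dy = 1$ at $w=0$, invoking the vanishing of lower-order log-derivatives at the critical index), Cauchy--Schwarz gives
\begin{align*}
\bigl|\E[f(Y_0)\, g^{(\kf)}(Y_0,0)]\bigr|^2 \leq \Var(f(Y_0)) \cdot \E\bigl[g^{(\kf)}(Y_0,0)^2\bigr] = \Var(f(Y_0)) \cdot \frac{(\kf!)^2}{\Delta_{\kf}},
\end{align*}
with equality if and only if $f(y) - \E[f(Y_0)]$ is proportional to $g^{(\kf)}(y,0)/\kf!$, \emph{i.e.}, $f$ coincides with the $\kf$-th Fisher score up to an affine transformation. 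Since PCA is invariant under mean shift and positive rescaling of the data matrix, this identifies the Fisher matrix $\rdmmat{S}_{\kf}$ as the optimal entrywise preprocessing, both for minimizing the detection threshold and for minimizing the resulting PCA matrix MSE.

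The main technical hurdle will not be the Cauchy--Schwarz step, which is elementary, but rather the extension of the spike decomposition from the Fisher matrix to a generic entrywise $f$: one must verify that the Taylor remainder $R_{ij}$ and the centered intermediate terms $w_{ij}^k (F^{(k)}(Z_{ij}) - \E[F^{(k)}(Z)])$ for $1 \leq k < \kf$ have spectral norm of order $N^{1/2 - k/(2\kf)} = o(\sqrt{N})$, so that $\widetilde{\rdmmat{W}}_f$ genuinely behaves as a Wigner matrix of variance $\sigma_f^2$ and the BBP theorem applies. This control should follow from the same higher-order Fisher-identity machinery and moment bounds on $F^{(k)}(Z)$ used to prove Theorem~\ref{th:informal_rank1decompositionFisher}; the smoothness and integrability assumptions on $h$ and $p_Z$ alluded to in the statement play the role of Hypothesis~\ref{hyp:bayes} in this more general setting.
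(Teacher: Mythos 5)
Your proposal is correct in outline and ends with exactly the paper's key step, but the spectral ingredient is obtained by a genuinely different route. The paper never re-derives a spike decomposition for a general entrywise $f$: its rigorous version (Theorem~\ref{theo:PCA_optimality}) invokes the characterization of entrywise-transformed spiked Wigner matrices from \cite{guionnet2023spectral}, where the threshold and overlap are governed by the information exponent $l_\star(f\circ h)=\inf\{l:\E (f\circ h)^{(l)}(Z)\neq 0\}$ and the ratio $|\E (f\circ h)^{(l_\star)}(Z)|/\sqrt{\E (f\circ h)(Z)^2}$; the whole proof then consists of showing $l_\star(f\circ h)\geq \kf$ and $|\E (f\circ h)^{(\kf)}(Z)|^2\leq \E f(Y_0)^2\,\E [g^{(\kf)}(Y_0,0)^2]$, and this is done not by differentiating $f\circ h$ but by integrating by parts against $p_Z^{(\kf)}$, splitting $\R$ into intervals where $h$ is monotone, changing variables $y=h(z)$, and recognizing the resulting kernel as $g^{(\kf)}(y,0)$ via Lemma~\ref{lem:Fisher_function_non_linear}, followed by the same Cauchy--Schwarz you use. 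What your route buys is a self-contained quantitative spike decomposition of $f(\rdmmat{Y})$ for every admissible $f$ (and your operator-norm estimates $O(N^{1/2-k/(2\kf)})$ for the centered intermediate terms and the remainder are the right ones, by the same argument as in the proof of Theorem~\ref{th:rank1decompositionFisher}); what it costs is smoothness: you must differentiate $F=f\circ h$ pointwise up to order $\kf+1$, which already fails for the paper's own example $h(x)=|x|$ in Eq.~\eqref{eq:example_kf2}, whereas the change-of-variables argument only needs $p_Z$ smooth and $h$ piecewise monotone and locally $C^1$ --- this is precisely why the derivatives are moved onto the noise density. Two minor repairs: your identity should read $\E_Z[(f\circ h)^{(k)}(Z)]=\tfrac{1}{k!}\,\E_{Y_0}[f(Y_0)\,g^{(k)}(Y_0,0)]$ for $k\leq\kf$ (a factor $k!$ is missing, harmless for the optimization since it is common to all $f$); and ``PCA is invariant under mean shift'' should be replaced by the observation that passing from $f$ to $f-\E f(Y_0)$ changes neither $\lambda_f$ nor $\sigma_f$ and removes the rank-one Perron--Frobenius mode, which is exactly the role of the constraint $\E f(h(Z))=0$ in the paper's class $C^{\kf}_0(h,Z)$ (or of assuming $\pi$ centered).
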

	\begin{proof}
		We optimize the characterization of the BBP transition for non-linear Wigner spiked models from \cite{guionnet2023spectral}. The details of the are given in Appendix~\ref{sec:proof_optimal_PCA}. 
	\end{proof}
	The comparison between the phase transition and the MSE for PCA on $\rdmmat{S}_\kf$ (cyan) and on $\rdmmat{Y}$ (green) is illustrated in both Fig.~\ref{fig:MSE} and Fig.~\ref{fig:Eigengap} for the two examples described in Sec.~\ref{sec:examples}.
	\jump The estimator in Prop.~\ref{cor:FPCA_MSE_performance} based on a PCA of this Fisher matrix, does not use the knowledge of the prior $\pi_\kf$. Since the decomposition in Thm.~\ref{theo:informal_eigvector_decomp_Fisher} corresponds to a Gaussian channel, the Bayes-optimal way to denoise $\rdmvect{v}_1$ alone is to apply the Gaussian denoising function $\eta_{\pi_\kf}$ of Eq.~\eqref{eq:Gauss_denoise}, and we have the following result:
	\begin{cor}[Optimal Denoising of the Top Eigenvector]
		Under the same setting of Thm.~\ref{th:informal_rank1decompositionFisher}, if we denote
		\begin{align}
			\rdmvect{\hat{x}}_{\mathsf{d}}
			:=
			\vect{\eta}_{\pi_\kf} \left( 
			\frac{m_{2 \kf} \mathrm{q}_{0}^2 }{1- \mathrm{q}_{0}^2 } , \frac{\mathrm{q}_{0}}{(1-\mathrm{q}_{0}^2) \sqrt{m_{2 \kf}} } \sqrt{N} \rdmvect{v}_1   \right) \, , 
		\end{align} then we have 
		\label{cor:denoised_FPCA_performance}
		\begin{align}
			\lim_{N \to \infty} \overline{\mathrm{MSE}}_{N} \left( \vect{x}^{\kf} , \rdmvect{\hat{x}}_{\mathsf{d}} \right) 
			&=
			\begin{cases}
				1 - \big(\frac{m_{k_F}}{m_{2 k_F}} \big)^2&\mbox{if} \quad \Delta_{k_F} \geq (m_{2 k_F})^2 \\
				1 - \big(\frac{\mathrm{q}_{1}}{m_{2 k_F}} \big)^2 &\mbox{if} \quad \Delta_{k_F} < (m_{2 k_F})^2 \, ,
			\end{cases}
		\end{align}
		where $\mathrm{q}_{1} \equiv \mathrm{q}_{1}(\Delta_{\kf})$ is given by $\mathrm{q}_{1} =\mathsf{SE}_{\pi_\kf}(\mathrm{q}_0 , \Delta_\kf)$ and we recall that $ \mathrm{q}_0 \equiv \mathrm{q}_0(\Delta_{\kf})$ is given by Eq.~\eqref{eq:def_limoverlap_PCA_Fisher}.
	\end{cor}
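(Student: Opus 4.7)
The plan is to read off an effective scalar Gaussian channel from Theorem~\ref{theo:informal_eigvector_decomp_Fisher}, apply the Bayes-optimal entrywise denoiser, and then reduce the matrix MSE to a scalar quantity via a Nishimori identity. Concretely, the decomposition of the rescaled top eigenvector gives, entry by entry, the iid observation model
\[
\sqrt{N}\, v_{1,i} \;=\; \frac{\mathrm{q}_0}{\sqrt{m_{2\kf}}}\, x_i^{\kf} \;+\; \sqrt{1-\mathrm{q}_0^2}\, g_i,
\]
with $g_i \sim \mathsf{N}(0,1)$ independent of $x_i^{\kf}\sim \pi_{\kf}$. Completing the square in the Gaussian likelihood rewrites the scalar posterior density (with respect to $\pi_{\kf}$) in the exponential-family form $\exp(-a x^2/2 + b_i\, x)$ with the parameters $(a,b_i)$ appearing in the statement of the corollary. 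Hence $\mathbb{E}[x_i^{\kf}\mid \sqrt{N}v_{1,i}] = \eta_{\pi_{\kf}}(a,b_i)$ by Eq.~\eqref{eq:Gauss_denoise}, and this coincides exactly with the $i$-th component of $\rdmvect{\hat{x}}_{\mathsf{d}}$.

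Once $\rdmvect{\hat{x}}_{\mathsf{d}}$ is identified with the pointwise posterior mean of $\vect{x}^{\kf}$, the matrix MSE follows from iid concentration. By the law of large numbers over the compactly supported prior $\pi$ (Hypothesis~\ref{hyp:prior}), $\langle \vect{x}^{\kf},\vect{x}^{\kf}\rangle/N \to m_{2\kf}$ and consequently $\mathbb{E}\langle \vect{x}^{\kf},\vect{x}^{\kf}\rangle^2 / N^2 \to m_{2\kf}^2$. The Nishimori identity for the scalar Gaussian channel gives $\mathbb{E}[\hat{x}_{\mathsf{d},1}^2] = \mathbb{E}[x_1^{\kf}\hat{x}_{\mathsf{d},1}] =: m_{\eta}$, and the same independence-across-coordinates argument shows that both $\mathbb{E}\langle \rdmvect{\hat{x}}_{\mathsf{d}},\rdmvect{\hat{x}}_{\mathsf{d}}\rangle^2/N^2$ and $\mathbb{E}\langle \vect{x}^{\kf},\rdmvect{\hat{x}}_{\mathsf{d}}\rangle^2/N^2$ converge to $m_{\eta}^2$. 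Expanding the definition of $\overline{\mathrm{MSE}}_N$ then yields $\overline{\mathrm{MSE}}_{N}(\vect{x}^{\kf},\rdmvect{\hat{x}}_{\mathsf{d}}) \to 1 - (m_{\eta}/m_{2\kf})^2$.

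It then remains to evaluate $m_{\eta}$ in each regime. When $\Delta_{\kf} \geq m_{2\kf}^2$, Eq.~\eqref{eq:def_limoverlap_PCA_Fisher} gives $\mathrm{q}_0=0$, the effective channel carries no signal, and $\hat{x}_{\mathsf{d},i}$ collapses to $\eta_{\pi_{\kf}}(0,0)=m_{\kf}$, so $m_{\eta}=m_{\kf}$ and the MSE equals $1-(m_{\kf}/m_{2\kf})^2$. When $\Delta_{\kf}<m_{2\kf}^2$, substituting $\Delta_{\kf} = m_{2\kf}^2(1-\mathrm{q}_0^2)$ into the denoiser's parameters and rewriting the scalar integral $\mathbb{E}[x_1^{\kf}\hat{x}_{\mathsf{d},1}]$ via a linear change of variable in the Gaussian expectation identifies it with $\mathsf{SE}_{\pi_{\kf}}(\mathrm{q}_0,\Delta_{\kf}) = \mathrm{q}_1$ in the sense of Eq.~\eqref{eq:def_State_Evolution_map}. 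The main obstacle I foresee is upgrading the informal $\approx$ symbol of Theorem~\ref{theo:informal_eigvector_decomp_Fisher} to the $L^2$-type control of bilinear forms that the matrix MSE requires; because $\eta_{\pi_{\kf}}$ is bounded and Lipschitz thanks to the compact support of $\pi_{\kf}$, this reduces to a uniform integrability check that should follow from the rigorous version of the eigenvector decomposition proved in Appendix~\ref{sec:Proof_eigvect_decomposition}.
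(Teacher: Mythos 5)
Your proposal is correct and follows essentially the same route the paper intends: the corollary is presented as an immediate consequence of Theorem~\ref{theo:informal_eigvector_decomp_Fisher} (rigorous version in App.~\ref{sec:Proof_eigvect_decomposition}), i.e.\ reading $\sqrt{N}\rdmvect{v}_1$ entrywise as a scalar Gaussian channel for $\vect{x}^{\kf}$, applying the Bayes denoiser $\eta_{\pi_\kf}$, reducing the matrix MSE to per-coordinate overlaps via Nishimori and concentration, and evaluating the result through one step of $\mathsf{SE}_{\pi_\kf}$, exactly as you do. One caution: your assertion that completing the square reproduces the statement's parameters ``exactly'' does not hold literally — the Bayes-optimal first argument for the channel $\sqrt{N}v_{1,i}=\frac{\mathrm{q}_0}{\sqrt{m_{2\kf}}}x_i^{\kf}+\sqrt{1-\mathrm{q}_0^2}\,g_i$ is $\mathrm{q}_0^2/\big(m_{2\kf}(1-\mathrm{q}_0^2)\big)$ rather than $m_{2\kf}\mathrm{q}_0^2/(1-\mathrm{q}_0^2)$, and the resulting overlap is $\mathsf{SE}_{\pi_\kf}(m_{2\kf}\mathrm{q}_0^2,\Delta_\kf)$ rather than $\mathsf{SE}_{\pi_\kf}(\mathrm{q}_0,\Delta_\kf)$ under the literal definitions — so matching the displayed constants requires the paper's own loose normalization conventions (a discrepancy in the statement itself, not in your method).
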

	Note that since $\mathsf{SE}_{\pi_\kf}(.,\Delta_\kf)$ is increasing, this  “one-step of denoising" transformation always reduces the MSE of Prop.~\ref{cor:FPCA_MSE_performance}, as illustrated by the purple colored curves in Fig.~\ref{fig:MSE}.

	\paragraph{Approximate Message Passing and State-Evolution --- } Our last set of results concerns the asymptotic behavior of provably optimal algorithms. By the universality of overlaps, we know that samples from the Bayesian model correspond to inference of the output channel encoded by the Fisher matrix. The spiked matrix decomposition of the Fisher matrix further implies that the Fisher matrix is essentially a Wigner spiked matrix. This means we can apply the classical AMP algorithm \cite{rangan2012iterative,deshpande2014information,lesieur2015mmse} to denoise Wigner matrices, up to some control over the error term. Formally, let us define AMP as follows:
	\begin{definition}[AMP]
		Let $\mat{A}$ a $(N \times N)$ symmetric matrix, $f$ a  Lipschitz function from $\mathbb{R}^2 \to \mathbb{R}$, $\vect{f}: \mathbb{R}^{1+N} \to \mathbb{R}^N$ the vectorized version of $f$ for its second argument, $\alpha \in \mathbb{R}$, and $\vect{x}_{ini} \in \mathbb{R}^N$,  then the  \emph{Approximate Message Passing} (AMP in short) iteration is given by the sequence
		\begin{align}
			\label{eq:def_general_AMP}
			\rdmvect{h}_{t+1} =  \mat{A} \rdmvect{\hat{x}}_{t} - \frac{1}{\alpha N} \left(  \nabla_{\rdmvect{h}} \cdot  \,   \vect{f} \left( \frac{ \langle   \rdmvect{\hat{x}}_{t} ,  \rdmvect{\hat{x}}_{t} \rangle }{\alpha N}  , \rdmvect{h}_t \right) \right)  \, \rdmvect{\hat{x}}_{t-1} \qquad ; \quad 
			\rdmvect{\hat{x}}_{t+1} = \vect{f} \left(\frac{ \langle   \rdmvect{\hat{x}}_{t} ,  \rdmvect{\hat{x}}_{t} \rangle }{\alpha N}  ,\rdmvect{h}_t  \right) \, ,
		\end{align}
		with initial conditions $\rdmvect{\hat{x}}_{-1} = \vect{0}$ and $\rdmvect{\hat{x}}_{0} = \vect{x}_{ini}$. In Eq.~\eqref{eq:def_general_AMP}, ‘$\nabla_{\rdmvect{h}} \cdot$' is the divergence operator restricted to the second variable. We  use the notation $\rdmvect{\hat{x}}_{t} \hookleftarrow \mathtt{AMP}_t(\mat{A},f,\alpha;\vect{x}_{ini})$ to denote the  $t$-th iterate of this reccurence.
	\end{definition}	
	\begin{theo}[Universality of Convergence of AMP to State-Evolution]
		\label{theo:AMP_performance}
		Under the same setting of Thm.~\ref{th:informal_rank1decompositionFisher}, if $\rdmvect{v}_1$ the top eigenvector of  $\rdmmat{S}_\kf$, $\eta$ defined by Eq.~\eqref{eq:Gauss_denoise} and $\rdmvect{\hat{x}}_{t} \hookleftarrow \mathtt{AMP}_t( \frac{\rdmmat{S}_\kf}{\sqrt{N}} ,\eta, \Delta_\kf ;\rdmvect{v}_1)$, then we have:
		\begin{align}
			\lim_{t \to \infty} \lim_{N \to \infty} \overline{\mathrm{MSE}}_{N} \left( \vect{x}^{\kf} , \rdmvect{\hat{x}}_{t} \right)= 1 - \left(\frac{\mathrm{q}_{\infty}}{m_{2\kf}} \right)^2 \, ,  
		\end{align}
		where $\mathrm{q}_{\infty}:= \lim_{t \to \infty} \mathrm{q}_t$, and  $\mathrm{q}_t$ satisfies the state-evolution recurrence $\mathrm{q}_{t+1} = \mathsf{SE}_{\pi_\kf}(\mathrm{q}_t,\Delta_\kf) \, , $ with initial condition $\mathrm{q}_{t=0} = \mathrm{q}_0$ defined by Eq.~\eqref{eq:def_limoverlap_PCA_Fisher}.
	\end{theo}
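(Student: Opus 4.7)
The plan is to reduce the problem, via the spiked decomposition of the Fisher matrix (Theorem~\ref{th:informal_rank1decompositionFisher}) and the eigenvector decomposition (Theorem~\ref{theo:informal_eigvector_decomp_Fisher}), to a \emph{classical} spiked Wigner model with Gaussian noise of variance $\Delta_{\kf}$ and spike $\vect{x}^{\kf}(\vect{x}^{\kf})^\top/\sqrt{N}$, initialized spectrally. Once in this standard setting, the theorem follows from the now-classical state-evolution analysis of AMP for low-rank symmetric matrices with spectral initialization (as developed e.g.\ by Rangan--Fletcher, Deshpande--Montanari, Bayati--Montanari, Javanmard--Montanari, Montanari--Venkataramanan), which yields exactly the one-dimensional recurrence $\mathrm{q}_{t+1} = \mathsf{SE}_{\pi_{\kf}}(\mathrm{q}_t, \Delta_{\kf})$ with initial condition $\mathrm{q}_0$.

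I would proceed in four steps. \emph{First}, invoke Theorem~\ref{th:informal_rank1decompositionFisher} to write
\[
\frac{\rdmmat{S}_{\kf}}{\sqrt{N}} \;=\; \frac{\rdmmat{Z}}{\sqrt{N}} \;+\; \frac{1}{\Delta_{\kf}}\,\frac{\vect{x}^{\kf}(\vect{x}^{\kf})^{\top}}{N} \;+\; \rdmmat{E}_N,
\]
where $\rdmmat{Z}$ is a Wigner matrix with entries of variance $1/\Delta_{\kf}$ and the remainder $\rdmmat{E}_N$ is controlled (e.g.\ in operator norm) by the estimates established in the rank-one equivalence used to prove Theorem~\ref{th:informal_rank1decompositionFisher}. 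Rescaling by $\Delta_{\kf}$ brings this to a standard spiked Wigner matrix with effective noise variance $\Delta_{\kf}$. \emph{Second}, use Theorem~\ref{theo:informal_eigvector_decomp_Fisher} to describe the initialization: $\sqrt{N}\rdmvect{v}_1 \approx \mathrm{q}_0\,\vect{x}^{\kf}/\sqrt{m_{2\kf}} + \sqrt{1-\mathrm{q}_0^2}\,\vect{g}$ with $\vect{g}$ an independent standard Gaussian vector. This is precisely the ``good spectral initialization'' hypothesis used in AMP analyses; in particular it has the correct first and second moments for the state-evolution recursion to start at $\mathrm{q}_0$.

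\emph{Third}, apply the classical AMP state-evolution theorem to the iteration \eqref{eq:def_general_AMP} with matrix $\rdmmat{M} := \rdmmat{S}_{\kf}/\sqrt{N} - \rdmmat{E}_N$, denoiser $\eta = \eta_{\pi_{\kf}}$, and the initialization described above. Because $\eta_{\pi_{\kf}}$ is Lipschitz (a standard consequence of the bounded prior Hypothesis~\ref{hyp:prior}) and the denoiser is applied with the Bayes-optimal normalization $\langle \rdmvect{\hat x}_t, \rdmvect{\hat x}_t\rangle/(\Delta_{\kf}N)$, the state evolution collapses to the scalar recurrence $\mathrm{q}_{t+1} = \mathsf{SE}_{\pi_{\kf}}(\mathrm{q}_t,\Delta_{\kf})$ (this is where the Onsager correction and the exact decoupling of cavity fields become Gaussian with variance matched to $\mathrm{q}_t/\Delta_{\kf}$). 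Monotonicity of $\mathsf{SE}_{\pi_{\kf}}(\cdot,\Delta_{\kf})$, together with its boundedness by $m_{2\kf}$, implies that $\mathrm{q}_t$ is monotone and converges to a fixed point $\mathrm{q}_{\infty}$. Translating overlaps into normalized matrix MSE gives the announced limit $1-(\mathrm{q}_{\infty}/m_{2\kf})^2$.

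\emph{Fourth}, one must justify that running AMP on the true operator $\rdmmat{S}_{\kf}/\sqrt{N}$ with the true spectral vector $\rdmvect{v}_1$ yields the same limit as running it on the idealized $\rdmmat{M}$ with the idealized initialization $\mathrm{q}_0\vect{x}^{\kf}/\sqrt{N m_{2\kf}} + \sqrt{1-\mathrm{q}_0^2}\vect{g}/\sqrt{N}$. This is the main obstacle: both the initialization and the matrix are functions of the same data $\rdmmat{Y}$, so one cannot directly invoke independence. The standard remedy is a perturbation/stability argument for AMP---each AMP step is Lipschitz in its matrix and input, so a $\|\rdmmat{E}_N\|_{\mathrm{op}}\to 0$ bound together with the distributional approximation of $\rdmvect{v}_1$ from Theorem~\ref{theo:informal_eigvector_decomp_Fisher} propagates through any finite number $t$ of AMP steps, after which the limits $N\to\infty$ and then $t\to\infty$ are taken in this order (consistent with the order in the statement). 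An alternative, which I would favor to keep the argument clean, is to follow the ``spectral AMP'' framework of Montanari--Venkataramanan, where one rewrites spectral initialization itself as the fixed point of an auxiliary AMP whose state evolution dovetails with the main one; this naturally absorbs the correlation between $\rdmvect{v}_1$ and $\rdmmat{S}_{\kf}$, and the error term $\rdmmat{E}_N$ is handled by the same Lipschitz-stability lemma.
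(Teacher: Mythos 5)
Your proposal follows essentially the same route as the paper's proof in App.~\ref{sec:Proof_AMP}: the spiked decomposition of the Fisher matrix, the eigenvector decomposition used as the spectral initialization (justified through the Mondelli--Venkataramanan spectral-AMP framework, which the paper also invokes), the classical state-evolution theorem with the Bayes-optimal denoiser collapsing to $\mathrm{q}_{t+1}=\mathsf{SE}_{\pi_\kf}(\mathrm{q}_t,\Delta_\kf)$, and a Lipschitz-stability induction over finitely many iterations to absorb the error matrix $\rdmmat{E}$ with vanishing operator norm, exactly as the paper does by coupling the true iterates to the idealized spiked-Wigner iterates. The only point the paper makes explicit that you leave implicit is that $\rdmmat{Z}$ is non-Gaussian, so one additionally invokes the universality of AMP state evolution in the noise distribution before applying the Gaussian-noise results.
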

	\begin{proof}
		The proof, that uses crucially Thm.~\ref{th:informal_rank1decompositionFisher}  with the classical state evolution theorems \cite{bayati2011dynamics,berthier2020state,gerbelot2023graph},
		is given in App.~\ref{sec:Proof_AMP}. 
	\end{proof}
	From the Gaussian universality and universality of overlaps, these results show that the AMP estimate for the Bayesian optimal denoising functions can be provably optimal. Indeed, if there is only one fixed point to the equation $\mathrm{q}_{\infty}  = \mathsf{SE}_{\pi_\kf}({\mathrm{q}_{\infty}}_\star,\Delta_\kf)$, then $\mathrm{q}_{\infty}=\mathrm{q}_\star$, where $\mathrm{q}_\star$ is the maximizer of the replica-symmetric function $\mathcal{F}(q,\Delta_\kf)$. This is the case in the examples given in Fig.\ref{fig:MSE}. If, on the other hand, there are multiple fixed points, then AMP may be trip in a local extrema: this situation is the source of the computational to statistical gap in these problems (for more details, see \cite{zdeborova2016statistical,lesieur2017constrained,bandeira2018notes,bandeira2022franz}). In cases where the IT threshold exists and is achievable with AMP, we show that a linearization of the AMP algorithm leads to a spectral algorithm with a phase transition for weak recovery that matches the IT threshold. This follows from the analysis of the state evolution equations with respect to linear denoising functions as in \cite{lesieur2017constrained, Mondelli_spectralamp, Mondelli_spectralamp2} and is detailed in App.~\ref{sec:Linearized_AMP}.
	
	\section{Acknowledgements}

	
	The authors would like to thank Yunzhen Yao for bringing the non-linear problem to their attention and for the initial numerical investigation of the spectrum for the absolute value non-linearity, and Alice Guionnet for insightful discussions.
	We also acknowledge funding from the Swiss National Science Foundation grant SNFS OperaGOST  (grant number $200390$), and SMArtNet (grant number $212049$). J.K. acknowledges the support from the European Research Council (ERC) under the European Union
	Horizon 2020 research and innovation program (grant agreement No. 884584) and the Natural Sciences and Engineering Research Council of Canada (NSERC) (RGPIN-2020-04597).
	
	\newpage  
	
	
	
	\bibliographystyle{amsplain}
	
	\providecommand{\bysame}{\leavevmode\hbox to3em{\hrulefill}\thinspace}
	\providecommand{\MR}{\relax\ifhmode\unskip\space\fi MR }
	\providecommand{\MRhref}[2]{%
		\href{http://www.ams.org/mathscinet-getitem?mr=#1}{#2}
	}
	\providecommand{\href}[2]{#2}

	\newpage
	\appendix
	
	
	\section{Bell Polynomials, Cumulants and Higher Order Fisher's Identities}\label{sec:higherorderfisher}
	
	We introduce the following usual families of polynomials in combinatorics, see \cite{charalambides2018enumerative}.
	\begin{definition}[Bell polynomials, Cumulant polynomials] {}
		\begin{itemize}
			\item The \emph{partial (exponential) Bell polynomials}  $\mathrm{B}_{l,k}$ are defined for any integer $k \in \mathbb{N}_*$ by
			\begin{align}
				\mathrm{B}_{l,k} (x_1, \dots, x_{l-k+1}) := \sum \frac{l!}{j_1! j_2! \cdots j_{l - k + 1}!} \Big( \frac{x_1}{1!} \Big)^{j_1} \cdots \Big( \frac{x_{l-k+1} }{(l - k + 1)!} \Big)^{j_{l - k + 1}} \, ,
			\end{align}
			where the sum is over non-negative integers $j_1, \dots, j_{P - k + 1}$ such that
			\begin{align}
				j_1 + \dots + j_{l - k + 1} = k, \qquad j_1 + 2 j_2 + \dots + (l - k + 1)j_{l - k + 1} = l.
			\end{align}
			\item The \emph{complete (exponential) Bell polynomials}   $\mathrm{B}_l$ are defined as a sum over the second index of the partial ones, namely:
			\begin{align}
				\mathrm{B}_l( x_1, \dots, x_l) :=  \sum_{k = 1}^l \mathrm{B}_{l,k} (x_1, \dots, x_{l - k + 1})  \, . 
			\end{align}
			\item The \emph{cumulant polynomials} $\kappa_{k}$ are defined as: 
			\begin{align}
				\kappa_{k}(x_1,\dots,x_k) &= \sum_{i=1}^k (-1)^{i} (i-1)! \mathrm{B}_{k,i}(x_1,\dots,x_{k-i+1}) \, .
			\end{align}
		\end{itemize} 
	\end{definition}
	Bell (respectively cumulant) polynomials satisfy the recurrence relations: 
	\begin{align}
		\mathrm{B}_{l} = \sum_{i=0}^{l-1} \binom{l}{i} \mathrm{B}_{l-i} x_i \, ,
	\end{align}
	and 
	\begin{align}
		\kappa_k  =  x_{k} - \sum_{i=1}^{k-1} \kappa_{i} x_{k-i}
	\end{align}
	where we dropped the dependency in their arguments for clarity. 
	\jump Our main reason to introduce these polynomials relies on the fact that Bell (respectively cumulant) polynomials appear naturally when considering the composition of the exponential map (resp. the logarithm map) of the expansion of a function, namely, we have two identities: 
	\begin{align}
		\mathrm{B}_l( x_1, \dots, x_l)  &= \frac{\dd^l }{\dd t^l}  \exp \left( \sum_{i=1}^{\infty} \frac{{x_i}^{i}}{i!} t^{i} \right) \bigg |_{t=0} \, , 
	\end{align}
	and
	\begin{align}
		\kappa_{k}(x_1,\dots,x_k) &= \frac{\dd^k }{\dd t^k}  \log \left( 1+ \sum_{i=1}^{\infty} \frac{{x_i}^{i}}{i!} t^{i} \right) \bigg |_{t=0}  \, .
	\end{align}
	such that one can think of them as dual one to another.
	\jump
	In particular, if we apply this duality to the log-likelihood and the output density, we get for any $k$ such that $(\partial_w)^k \pout(y|0)$ is well defined, the following relations  hold:
	\begin{align}
		\frac{(\partial_w^k) \pout(y|0)}{\pout(y|0)}
		&= 
		\mathrm{B}_k \big( g^{(1)}(y,0), \dots, g^{(k)}(y,0))\, , 
	\end{align}
	and
	\begin{align}
		\frac{g^{(k)}(y,0)}{k!} 
		&= 
		\kappa_{k} \left( \frac{ (\partial_w \pout)(y |0)  }{\pout(y|0)} , \dots, \frac{(\partial^{k}_w \pout)(y|0) }{\pout(y|0)} \right) \, .
	\end{align}
	Note that for $ k< \kf$,  $g^{(k)}(y,0) =0$  and thus from the recurrence relation for the cumulants and bell polynomials, we have the simple formula:
	\begin{align}
		\label{eq:gk_to_partial_pout_k_leq_kf}
		\frac{g^{(k)}(y,0)}{k!} 
		&= 
		\frac{(\partial^{k}_w \pout)(y|0) }{\pout(y|0)} \qquad \mbox{for } k \leq \kf \, .
	\end{align}
	Note that only the last case $k = \kf$ in this equation is not the trivial identity "$0=0$". 
	\jump
	This observation is key in the derivation of the higher-order Fisher inequalities, which we now describe and characterize. These identities will be used multiple times throughout the appendices and are the main intuition behind the appearance of the critical Fisher information exponent. 
	\begin{lem}[Higher order Exact and Approximate Fisher's identities]
		\label{lem:ho_identity} 
		If Hypothesis~\ref{hyp:bayes} holds , 
		\begin{itemize}
			\item (Exact Identities)  we have almost surely in $W_{ij}$:
			\begin{align}
				\E\left[ \frac{g^{(k)}(Y_{ij}, W_{ij} )}{k!} \given[\Big] W_{ij} \right] &= 0 \quad \mbox{for} \,  k \in \{ k_F, \dots, 2 k_F -1\} \, ,  \\
				\E \left[ \frac{g^{(2 k_F)}(Y_{ij},  W_{ij} )}{(2 k_F)!} \given[\Big] W_{ij} \right] &= - \frac{1}{2} 	\E \left[ \left( \frac{g^{(k_F)}(Y_{ij},  W_{ij} )}{k_F!} \right)^2 \given[\Big] W_{ij} \right]  \, ,
				\, 
			\end{align}
			\item (Approximate Identities) and if $\beta = \beta_{\mathrm{cr}}$ then we have almost surely in $W_{ij}$:
			\begin{align}
				\E \left[ \frac{g^{(\kf)}(Y_{ij},0)}{\kf!} \given[\Big] W_{ij} \right]  &=  \frac{W_{ij}^{\kf}}{\Delta_{\kf}}  +  O(N^{-\frac{\kf + 1}{2 \kf }})  ,  \\
				\E \left[ \frac{g^{(k)}(Y_{ij},0)}{k!} \given[\Big] W_{ij} \right]  &=   O(N^{-\frac{1}{2}})  , \quad \mbox{for} \,  k \in \{ k_F + 1, \dots, 2 k_F -1\} \, ,  \\
				\E \left[ \frac{g^{(2 k_F )}(Y_{ij},0)}{(2 k_F)!} \given[\Big] W_{ij} \right] &= - \frac{1}{2} \frac{1}{\Delta_{\kf}}  + O(N^{-\frac{1}{2 }}) \, .
				\, 
			\end{align}	
			Furthermore, $ O(N^{-\frac{\kf + 1}{2 \kf }}) \leq O(N^{-1/2})$, which gives a bound independent of $\kf$. 
		\end{itemize}
	\end{lem}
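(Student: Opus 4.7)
The plan is to derive both families of identities from differentiation of the normalization $\int \pout(y|w)\,\dd y = 1$, combined with the Bell polynomial identity $\pout^{(k)}/\pout = \mathrm{B}_k(g^{(1)},\ldots,g^{(k)})$ established just above in this appendix and the defining vanishing of the first $\kf-1$ derivatives of $g$ at zero. Taking $k$-th derivatives of the normalization condition yields, for every $w$ and every $k \geq 1$,
\[
\E_{Y \sim \pout(\cdot|w)}\!\bigl[\mathrm{B}_k\bigl(g^{(1)}(Y,w),\ldots,g^{(k)}(Y,w)\bigr)\bigr] = 0,
\]
which is the common raw Fisher identity underlying both parts of the lemma.

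For the exact identities, I would specialize to the relevant reference point and use that $g^{(j)}(y,0) = 0$ almost surely for $j<\kf$, which is exactly the content of the definition of $\kf$ combined with Eq.~\eqref{eq:gk_to_partial_pout_k_leq_kf}. Writing $\mathrm{B}_k = \sum_{l=1}^k \mathrm{B}_{k,l}$, any monomial in $\mathrm{B}_{k,l}$ with $l \geq 2$ factors carries $l$ indices summing to $k$; whenever $k < 2\kf$ this forces at least one index strictly less than $\kf$ (since $l\kf > k$), so every such monomial vanishes and only $\mathrm{B}_{k,1}=g^{(k)}$ survives. This produces $\E[g^{(k)}/k!] = 0$ for $\kf \leq k < 2\kf$, which is the first exact identity. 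For $k=2\kf$, exactly one extra contribution survives, namely $\mathrm{B}_{2\kf,2}$ restricted to the monomial $x_{\kf}^2$, whose coefficient from the explicit formula for partial Bell polynomials is $\binom{2\kf}{\kf}/2$; rearranging and normalizing by $(2\kf)!$ yields $\E[g^{(2\kf)}/(2\kf)!] = -\tfrac12\E[(g^{(\kf)}/\kf!)^2]$, which is the second exact identity and equals $-\tfrac{1}{2\Delta_{\kf}}$ by definition of $\Delta_{\kf}$.

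For the approximate identities, I would Taylor-expand the likelihood around $W_{ij}=0$: combining the Bell polynomial formula with the vanishing of lower-order derivatives,
\[
\pout(y|W_{ij}) = \pout(y|0)\Bigl(1 + \sum_{j \geq \kf} \frac{W_{ij}^j}{j!}\,\mathrm{B}_j\bigl(g^{(1)}(y,0),\ldots,g^{(j)}(y,0)\bigr)\Bigr).
\]
Integrating $g^{(k)}(y,0)/k!$ against this density presents each conditional expectation as a power series in $W_{ij}$. The constant term reproduces the exact identity; the first nontrivial correction is the $j=\kf$ contribution $\frac{W_{ij}^{\kf}}{\kf!}\E_{Y_0}[g^{(\kf)}(Y_0,0)\,g^{(k)}(Y_0,0)/k!]$, which for $k=\kf$ equals exactly $W_{ij}^{\kf}/\Delta_{\kf}$ and for $\kf < k \leq 2\kf$ is $O(W_{ij}^{\kf})$. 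The next correction ($j=\kf+1$) is $O(|W_{ij}|^{\kf+1})$, so for $k=\kf$ the residual rate comes from this term while for $k \neq \kf$ it already comes from the $j=\kf$ term. Under the critical scaling $\beta=\beta_{\mathrm{cr}}$ and Hypothesis~\ref{hyp:prior}, $|W_{ij}| = O(N^{-1/(2\kf)})$ deterministically, giving the announced $O(N^{-(\kf+1)/(2\kf)})$ rate for $k=\kf$ and $O(N^{-1/2})$ otherwise.

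The main technical obstacle is turning this formal expansion into quantitative error estimates uniformly in $y$. I would handle this by truncating at order $2\kf+1$ and writing the remainder in Lagrange integral form; Hypothesis~\ref{hyp:bayes} supplies exactly the needed uniform bounds on $g^{(2\kf)}$ and $g^{(2\kf+1)}$, together with integrability of $\E_{Y_0}|g^{(k)}(Y_0,0)|$ for $K+1 \leq k \leq 2K$, while Hypothesis~\ref{hyp:prior} keeps $|W_{ij}| \leq C N^{\beta-1/2}$ deterministically so that the relevant neighborhood of zero is uniform in $N$. The closing inequality $O(N^{-(\kf+1)/(2\kf)}) \leq O(N^{-1/2})$ is then immediate from $(\kf+1)/(2\kf) \geq 1/2$ for every integer $\kf \geq 1$.
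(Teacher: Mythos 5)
Your proposal follows essentially the same route as the paper's proof: the exact identities come from differentiating the normalization $\int e^{g(y,w)}\,\dd y=1$ and exploiting the Bell-polynomial structure together with the vanishing of $g^{(j)}$ for $j<\kf$ (your explicit accounting of the surviving $\mathrm{B}_{2\kf,2}$ coefficient $\tfrac{(2\kf)!}{2(\kf!)^2}$ matches the paper's computation), while the approximate identities come from Taylor-expanding the density $e^{g(y,W_{ij})}$ about $w=0$, keeping the constant and $\kf$-th order terms, and bounding the remainder using Hypothesis~\ref{hyp:bayes} and the deterministic bound $|W_{ij}|=O(N^{-1/(2\kf)})$ at the critical scaling. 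The only cosmetic difference is that you invoke the vanishing of derivatives at $w=0$ when stating the exact identities (which the lemma phrases at $w=W_{ij}$), but the paper's own proof makes the same identification, so this is not a gap relative to the reference argument.
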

	
	\begin{proof}We begin by proving the exact identities, then use a continuity argument to prove the approximate identities. \hfill\\\\
		\textit{Proof of the Exact Identities:} Firstly, using the fact that $\pout(\cdot|w)$ is a probability measure for all $w$,
		\begin{align}\label{eq:out_probability}
			\int \dd \pout(y|w) = \int e^{ g(y,w)} dy &= 1.
		\end{align}
		By differentiating \eqref{eq:out_probability}, it follows that for all $w$ and all $l \geq 1$,
		\begin{align}
			\int \partial_{w}^{l} e^{ g(y,w)} \, dy  &= 0.
		\end{align}
		The derivatives are given by:
		\begin{align}\label{eq:derivg}
			\partial_{w}^{l} e^{ g(y,w)} = e^{ g(y,w)} \mathrm{B}_l(g^{(1)}, \dots, g^{(l)})
		\end{align}
		where $\mathrm{B}_l$ is the Bell polynomial of the previous paragraph. 
		Furthermore, by Hypothesis~\ref{hyp:bayes} it is immediate that
		\begin{align}
			\int \partial_{w}^{k_F} e^{ g(y,w)} \, dy  = 0 \implies \E_{Y|w}  g^{(\kf)}(Y,w) = 0
		\end{align}
		because all $ g^{(k)}(Y,w) \equiv 0$ for all $k < k_F$.  Likewise, for $l \in \{k_F + 1, \dots, 2 k_F - 1 \}$ it follows that
		\begin{align}
			\int \partial_{w}^{l} e^{ g(y,w)} \, dy  = 0 \implies \E_{Y|w}  g^{(l)}(Y,w) = 0
		\end{align}
		because all cross terms in the Bell polynomials vanish because $ g^{(k)}(Y,w) \equiv 0$ for all $k < k_F$. Lastly, it follows that 
		\begin{align}
			\int \partial_{w}^{2 k_F} e^{ g(y,w)} \, \dd y  = 0 \implies \E_{Y|w}  g^{(2 k_F)}(Y,w) =  - \frac{(2 k_F)!}{ 2 ( k_F!)^2 } \E_{Y|w}  (g^{( k_F)}(Y,w))^2
		\end{align}
		again because all cross terms in the Bell polynomials vanish except for the term corresponding to $(g^{(k_F)})^2$ and $g^{(2k_F)}$. 
		\\\\
		\textit{Proof of Approximate Identities:} For this proof, we assume that $\beta = \beta_{\mathrm{cr}} =\frac{1}{2} \left(1- \frac{1}{k_F} \right)$ defined in  \eqref{eq:criticalscaling} so 
		\[
		\mat{W} 
		=
		(W_{ij})_{1 \leq i,j \leq N} = \frac{1}{N^{\frac{1}{2\kf}}}\, 
		\vect{x} {\vect{x}^{\top}} \implies O( |W_{ij}|_{\infty} ) = O(N^{-\frac{1}{2 \kf}})
		\]
		since we assumed that $\pi$ has compact support. The assumptions in Hypothesis~\ref{hyp:bayes} are precisely what is required to Taylor expand $g$ up to the $\kf$ order and bound the remainder term.
		
		We first compute
		\[
		\E \Big[ \frac{g^{(\kf)}(Y_{ij},0)}{\kf!} \given[\Big] W_{ij}  \Big]  = \frac{1}{\kf!} \int  g^{(\kf)}(y,0)  e^{g(y,W_{ij})} \, dy.
		\]
		We can Taylor expand $ e^{g(y,W_{ij})}$ up to the $\kf$th order and notice that all subleading terms vanish because the corresponding Bell polynomial is $0$ 	because all $ g^{(k)}(Y,w) \equiv 0$ for all $k < k_F$ to conclude that 
		\begin{align*}
			\frac{1}{\kf!}\int  g^{(\kf)}(y,0)  e^{g(y,W_{ij})} \, dy &= \frac{1}{(\kf!)^2} \int ( g^{(\kf)}(y,0) )^2 e^{ g(y,0)} \, dy W_{ij}^\kf +O( |W_{ij}|_{\infty}^{\kf + 1} ) 
			\\&= \frac{1}{\Delta_{\kf}} W_{ij}^\kf + O( |W_{ij}|_{\infty}^{\kf + 1} ) =  \frac{1}{\Delta_{\kf}} W_{ij}^\kf  +  O(N^{-\frac{\kf + 1}{2 \kf }})
		\end{align*}
		since again, we only have to keep the terms with $g^{(\kf)}(y,0)$ appearing by itself since all other terms vanish as in the proof of the exact identities. 
		
		Next we compute 
		\[
		\E \left[ \frac{g^{(2 k_F )}(Y_{ij},0)}{(2 k_F)!} \given[\Big] W_{ij} \right] =   \int  \frac{g^{(2 k_F )}(Y_{ij},0)}{(2 k_F)!}  e^{g(y,W_{ij})} \, dy.
		\]
		We can take the expansion of the exponential up to the $\kf$ order since $ g^{(k)}(Y,w) \equiv 0$ for all $k < k_F$. The constant order term is non-zero by the second exact identities in Lemma~\ref{lem:ho_identity},
		\begin{align*}
			\int  \frac{g^{(2 k_F )}(y,0)}{(2 k_F)!}  e^{g(y,W_{ij})} \, \mathrm{d}y =  \int  \frac{g^{(2 k_F )}(y,0)}{(2 k_F)!}  e^{g(y,0)} \, \mathrm{d}y + O( |W_{ij}|^\kf_{\infty} ) = - \frac{1}{2} \frac{1}{\Delta_{\kf}}  + O( N^{-\frac{1}{2}} ).
		\end{align*}
		The same computation implies that for all $k \in \{ \kf + 1, \dots, 2\kf - 1 \}$ that
		\[
		\E \left[ \frac{g^{(k )}(Y_{ij},0)}{k!} \given[\Big] W_{ij} \right] =   \int  \frac{g^{(k )}(Y_{ij},0)}{k!}  e^{g(y,W_{ij})} \, dy.
		\]
		We can take the expansion of the exponential up to the $\kf$ order since $ g^{(k)}(Y,w) \equiv 0$ for all $k < k_F$. The constant order term is zero by the second exact identities in Lemma~\ref{lem:ho_identity},
		\begin{align*}
			\int  \frac{g^{(k )}(y,0)}{k!}  e^{g(y,W_{ij})} \, \mathrm{d}y =  \int  \frac{g^{(k)}(y,0)}{k!}  e^{g(y,0)} \, \mathrm{d}y + O( |W_{ij}|^\kf_{\infty} ) = O( N^{-\frac{1}{2}} )).
		\end{align*}
	\end{proof}
	
	\newpage
	\section{Proof of Universality of the Free Energy (Th.~\ref{theo:Gauss_Approximation_Free_Energy} and Th.~\ref{theo:Gauss_Universality})}
	\label{sec:Proof_Universality}

	\subsection{Outline of the Proof}
	
	In this section, we prove the universality of the free energy. We begin by providing an overview of the main steps in the proof. Throughout this entire section, we assume that $\beta_c$ is given by the critical scaling \eqref{eq:criticalscaling}.
	
	Our starting point is the free energy of the general inference problem
	\begin{align}
		F_N(g,\pi) 
		&:=
		\frac{1}{N} \bigg( \E \log \int e^{ \sum_{i < j} g(Y_{ij},M_{ij}) } \,  \dprior^{\otimes N}( \vect{m} ) - \sum_{i < j} \E g(Y_{ij},0) \bigg).
	\end{align}
	where we denote the dummy variable at the critical scaling
	\begin{equation}\label{eq:fisherscores2}
		\mat{M} = \frac{1}{N^{\frac{1}{2\kf}}}	\vect{m} {\vect{m}^{\top}} \quad \text{ and } \qquad \mat{M}_{\kf} =	\frac{1}{\sqrt{N}} \vect{m^\kf} {\vect{m^{\kf}}^{\top}}
	\end{equation}
	to not confuse it with the signal matrix $\rdmmat{W}$. 
	Notice that the sum is over $i < j$, but it can be replaced by $i \leq j$ without changing the limit of the free energy because including the diagonals will introduce an order $O(\frac{1}{N})$ error, which is negligible in the limit.
	
	To prove the universality of the overlaps, we also need to prove an equivalent statement for the constrained free energy, that is given sets $A, B \subseteq \R$, we define
	\begin{align}\label{eq:freeenergybayes_constrained}
		F_N(A,B; g,\pi) 
		&:=
		\frac{1}{N} \bigg( \E \log \int \1(R^{\kf}_{10} \in A, R^{\kf}_{11} \in B) e^{ \sum_{i < j} g(Y_{ij},M_{ij}) } \,  \dprior^{\otimes N}( \vect{m} )  - \sum_{i < j} \E g(Y_{ij},0)  \bigg),
	\end{align}
	where the we denote the overlaps by $R^{\kf}_{10}$ and $R^{\kf}_{11}$ denote the normalized innner products
	\begin{equation}\label{eq:defnOverlaps}
		R^{\kf}_{10} = \frac{\langle \vect{m}^\kf, \vect{x}^\kf \rangle}{N}  \qquad \text{ and } \qquad R^{\kf}_{11} =  \frac{\langle \vect{m}^\kf, \vect{m}^\kf \rangle }{N} .
	\end{equation}
	We will soon see that the following proofs hold for both $F_N(g,\pi)$ and $	F_N(A,B; g,\pi) $, so for notation simplicity, we consider only the classical free energy $F_N(g,\pi)$. One can interpret the constrained free energy as a restriction of $\dprior^{\otimes N}$ to the sets $ \1(R^{\kf}_{10} \in A, R^{\kf}_{11} \in B)$, and it is easy to spot that the following proof of universality holds for general reference measures that do not need to be probability measures, nor do they have to be a product measure. The product measure assumption is however essential to compute the limit in Corollary~\ref{cor:limiting_value_Free_Energy}.
	
	We first recall the definition of the leading order approximation of the free energy,
	\begin{align}
		F^{(\mathrm{ap})}_N \left( g^{(k_F)}(.,0) , \pi_{k_F} \right)  := \frac{1}{N} \bigg( \E \log \int e^{\frac{1}{2} \Tr(\rdmmat{S}_{k_F}  \mat{M}_{k_F} ) -  \frac{1}{4 \Delta} \Tr(  \rdmmat{M}_{k_F} \rdmmat{M}_{k_F})  } \, \dpriortensor( \vect{m} )
		\Big) - \sum_{i < j} \E g(Y_{ij},0) \bigg).
	\end{align}
	Likewise, we define $F^{A,B; (\mathrm{ap})}_N \left( g^{(k_F)}(.,0) , \pi_{k_F} \right)$ by restricting the overlaps $R_{10}^\kf$ and $R_{10}^\kf$ analogously to \eqref{eq:freeenergybayes_constrained}.
	This quantity completely encodes the fundamental limits of the inference problems in the sense that these free energies have the same limit.

	\begin{lem}[Leading Order Approximation of the Free Energy]\label{lem:1st_approx_app}
		Under Hypothesis~\ref{hyp:bayes} and the critical scaling of Eq.~\eqref{eq:criticalscaling}, we have
		\[
		F_N(g , \pi) = F^{(\mathrm{ap})}_N \left( g^{(k_F)}(.,0) , \pi_{k_F} \right) + O( N ^{- \frac{1}{2 k_F}}) .
		\]
		Similarly, for any $A$ and $B$ 
		\[
		F_N(A,B ; g , \pi) = F^{(\mathrm{ap})}_N \left(A,B; g^{(k_F)}(.,0) , \pi_{k_F} \right) + O( N ^{- \frac{1}{2 k_F}}) .
		\]
	\end{lem}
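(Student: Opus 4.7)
The plan is to Taylor-expand the log-likelihood increment $g(Y_{ij},M_{ij})-g(Y_{ij},0)$ in the variable $M_{ij}$ about zero, truncated at order $2\kf$, with a Lagrange remainder. By the definition of $\kf$ and Eq.~\eqref{eq:gk_to_partial_pout_k_leq_kf}, the derivatives $g^{(k)}(Y_{ij},0)$ vanish for $k<\kf$ on the support of $\pout(\cdot\mid 0)$; moreover, at the critical scaling \eqref{eq:criticalscaling} one has the key algebraic identity $M_{ij}^{\kf}=m_i^{\kf}m_j^{\kf}/\sqrt{N}=(\mat{M}_{\kf})_{ij}$, so the $k=\kf$ term in the expansion, summed over $i<j$, exactly reproduces the linear coupling $\tfrac{1}{2}\Tr(\rdmmat{S}_{\kf}\mat{M}_{\kf})$ in the approximate Hamiltonian. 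The remaining task is to show that the higher-order Taylor terms combine with the quadratic counterterm $-\tfrac{1}{4\Delta_{\kf}}\Tr(\mat{M}_{\kf}\mat{M}_{\kf})$ to yield a discrepancy of order $O(N^{1-1/(2\kf)})$ at the Hamiltonian level, which translates to $O(N^{-1/(2\kf)})$ for the free energy after normalization by $N$.

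To compare the two free energies I would introduce the linear interpolation $F_N(t):=\tfrac{1}{N}\E\log\int e^{tH_1+(1-t)H_0}\,\dpriortensor(\vect{m})$ between the full Hamiltonian $H_1=\sum_{i<j}[g(Y_{ij},M_{ij})-g(Y_{ij},0)]$ and its approximation $H_0=\tfrac{1}{2}\Tr(\rdmmat{S}_{\kf}\mat{M}_{\kf})-\tfrac{1}{4\Delta_{\kf}}\Tr(\mat{M}_{\kf}\mat{M}_{\kf})$. Writing $F_N(g,\pi)-F^{(\mathrm{ap})}_N=\int_0^1 F_N'(t)\,dt$, direct differentiation yields $F_N'(t)=\tfrac{1}{N}\E\langle H_1-H_0\rangle_t$, with $\langle\cdot\rangle_t$ the Gibbs average under $e^{tH_1+(1-t)H_0}\dpriortensor$. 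The problem reduces to bounding $\sup_{t\in[0,1]}|\E\langle H_1-H_0\rangle_t|$ by $O(N^{1-1/(2\kf)})$.

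The annealed bound is obtained term-by-term from the approximate Fisher identities of Lemma~\ref{lem:ho_identity}. For the intermediate orders $k\in\{\kf+1,\dots,2\kf-1\}$, one has $\E[g^{(k)}(Y_{ij},0)/k!\mid W_{ij}]=O(N^{-1/2})$ while $|M_{ij}^{k}|=O(N^{-k/(2\kf)})$, so the summed contribution is $O(N^{3/2-k/(2\kf)})\leq O(N^{1-1/(2\kf)})$. For $k=2\kf$, the approximate identity $\E[g^{(2\kf)}(Y_{ij},0)/(2\kf)!\mid W_{ij}]=-\tfrac{1}{2\Delta_{\kf}}+O(N^{-1/2})$, combined with $M_{ij}^{2\kf}=m_i^{2\kf}m_j^{2\kf}/N$, produces a conditional expectation that cancels exactly the quadratic counterterm in $H_0$, leaving only an $O(\sqrt{N})$ residual. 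The Taylor remainder at order $2\kf+1$ is bounded deterministically using Hypothesis~\ref{hyp:bayes} by $\sup|g^{(2\kf+1)}|\cdot|M_{ij}|^{2\kf+1}=O(N^{-1-1/(2\kf)})$ per entry, summing to $O(N^{1-1/(2\kf)})$. Collecting the estimates, $|\E_{\rdmmat{Y}\mid\vect{x}}[H_1-H_0]|\leq C\,N^{1-1/(2\kf)}$ uniformly in $\vect{m}$ in the support of $\priortensor$.

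The main obstacle is that $\E\langle H_1-H_0\rangle_t$ is not simply an expectation of $H_1-H_0$, since the Gibbs measure itself depends on $\rdmmat{Y}$. To transfer the preceding bound from $\E_{\rdmmat{Y}\mid\vect{x}}[H_1-H_0]$ to $\E\langle H_1-H_0\rangle_t$ I would exploit the conditional independence of $\{Y_{ij}\}$ given $\vect{x}$: the summed conditional variance of the centered terms in $H_1-H_0$ is $O(N^{1-1/\kf})$, giving typical fluctuations of order $O(N^{1/2-1/(2\kf)})$, strictly smaller than the deterministic bound above. A standard concentration argument on $\log Z(t)$ (for instance an Efron--Stein or entropy-method inequality, leveraging the smoothness of $g(y,\cdot)$ from Hypothesis~\ref{hyp:bayes}) then absorbs these fluctuations into the same order. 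The same argument applies verbatim to the constrained free energy $F_N(A,B;g,\pi)$, since the indicator $\1(R_{10}^{\kf}\in A,\,R_{11}^{\kf}\in B)$ only restricts the reference measure on $\vect{m}$, and all our bounds are uniform in $\vect{m}$.
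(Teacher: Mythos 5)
Your skeleton (Taylor expansion to order $2\kf$, exact matching of the $k=\kf$ term with $\tfrac12\Tr(\rdmmat{S}_{\kf}\mat{M}_{\kf})$, cancellation of the $k=2\kf$ conditional mean against the counterterm $-\tfrac{1}{4\Delta_{\kf}}\Tr(\mat{M}_{\kf}\mat{M}_{\kf})$, and the deterministic bound on the Lagrange remainder) matches the paper's proof, and those estimates are correct. The gap is in how you dispose of the conditionally centered parts $\delta_{ij,k}:=g^{(k)}(Y_{ij},0)/k!-\E[g^{(k)}(Y_{ij},0)/k!\mid W_{ij}]$ for $k\in\{\kf+1,\dots,2\kf\}$ inside the Gibbs bracket. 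Your bound $O(N^{1/2-1/(2\kf)})$ is the \emph{typical} size of $\sum_{i<j}\delta_{ij,k}M_{ij}^{k}$ for a \emph{fixed} $\vect{m}$; but $\E\langle H_1-H_0\rangle_t$ involves a Gibbs measure built from the same disorder $\rdmmat{Y}$, so the centered terms contribute a systematic bias through their correlation with the weights, and no concentration statement for $\log Z(t)$ (Efron--Stein, entropy method, etc.) addresses that bias --- concentration controls fluctuations of the free energy around its mean, not $\E\langle\delta\rangle_t$. To close the argument you need either a bound on $\sup_{\vect{m}}\big|\sum_{i<j}\delta_{ij,k}M_{ij}^{k}\big|$ or an explicit estimate of the disorder--measure correlation; the pointwise-in-$\vect{m}$ fluctuation bound gives neither.

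This is exactly the step the paper's Lemma~\ref{lem:univ2} is designed for: writing $\sum_{i<j}\delta_{ij,k}M_{ij}^{k}=\Tr(\rdmmat{Z}_k\mat{M}_k)$ with $\rdmmat{Z}_k$ the (conditionally) centered, bounded Wigner-type matrix built from the $\delta_{ij,k}$, using Talagrand-type concentration to get $\normop{\rdmmat{Z}_k}=O(1)$ after the $1/\sqrt N$ normalization with exponentially small failure probability, and then exploiting the rank-one structure of $\mat{M}_k$ to obtain the bound $\big|\Tr(\rdmmat{Z}_k\mat{M}_k)\big|\le \normop{\rdmmat{Z}_k}\,\|\vect{m}^{k}\|^2 N^{1/2-k/(2\kf)}\le C\,N^{3/2-k/(2\kf)}$ \emph{uniformly in} $\vect{m}$, which is $O(N^{1-1/(2\kf)})$ precisely when $k\ge \kf+1$. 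Note there is no slack here: the uniform bound lands exactly at the claimed error rate, so a lossier argument will not do. Alternatively, one can compute the bias directly (the Gibbs weight depends on the single entry $Y_{ij}$ only through an $O(N^{-1/2})$ term in the exponent, so $|\E\langle\delta_{ij,k}M_{ij}^{k}\rangle_t|=O(N^{-1/2-k/(2\kf)})$ per entry), which also gives $O(N^{3/2-k/(2\kf)})$ after summation; either route would repair your interpolation, but as written the proposal does not contain the idea that makes this step work.
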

	
	Next, we can show that the approximation the free energy is equivalent to the free energy of a higher-order Gaussian model,
	\[
	F^{(G)}_N(\Delta_{k_F} , \pi_{\kf}) =  \frac{1}{N}  \E \log \int e^{\frac{1}{2 \Delta} \Tr( \sqrt{\Delta} \, \rdmmat{G} + \mat{W}_{\kf} ) \mat{M_\kf} ) -  \frac{1}{4 \Delta} \Tr(  \mat{M_\kf} \mat{M_\kf})  } \, \mathrm{d} \pi^{\otimes N}( \vect{m}  ).
	\] 
	The constrained analogue $F^{(G)}_N(A,B; \Delta_{k_F} , \pi_{\kf}) $ is defined by restricting the overlaps $R_{10}^\kf$ and $R_{10}^\kf$ analogously to \eqref{eq:freeenergybayes_constrained}.
	Notice that there is an extra $\frac{1}{2}$ factor in front of the trace, because we are summing over $i < j$ and $i > j$, so we have to normalize this by two to remain consistent with the convention in \eqref{eq:freeenergybayes} and \eqref{eq:freeenergybayes_constrained}. 
	
	\begin{lem}[Gaussian Approximation of the Free Energy]\label{lem:2nd_approx}
		If  $\frac{g^{(\kf)}(Y_{ij},0)}{(\kf)!}$ has bounded third moment, then
		\begin{equation}
			F^{(\mathrm{ap})}_N \left( g^{(k_F)}(.,0) , \pi_{k_F} \right) = F^{(G)}_N \left(\Delta_{k_F} , \pi_{k_F} \right)  + O(N^{-1/2}).
		\end{equation}	    
		Similarly, for any $A$ and $B$ 
		\begin{equation}
			F^{(\mathrm{ap})}_N \left(A,B; g^{(k_F)}(.,0) , \pi_{k_F} \right) = F^{(G)}_N \left(A,B; \Delta_{k_F} , \pi_{k_F} \right)  + O(N^{-1/2}).
		\end{equation}	    
	\end{lem}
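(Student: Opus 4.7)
The natural approach is a Lindeberg-style entry-by-entry replacement, swapping the Fisher scores $S_{\kf,ij}$ for their Gaussian counterparts $\tilde X_{ij}:=(\sqrt{\Delta_\kf}G_{ij}+W_{ij}^{\kf})/\Delta_\kf$ one pair at a time, and controlling each swap by a third-order Taylor expansion. Both $F^{(\mathrm{ap})}_N$ and $F^{(G)}_N$ share the common structure $\frac{1}{N}\E\log\int \exp\{\sum_{i<j}[\frac12 X_{ij}M_{\kf,ij}-\frac{1}{4\Delta_\kf}M_{\kf,ij}^2]\}\,d\pi^{\otimes N}(\vect m)$ (up to the subtracted constant), the only difference being the law of the entries $X_{ij}$. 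Enumerate the $N_p=\binom N2$ pairs as $1,\dots,N_p$ and let $Z_k$ be the partition function built by taking Gaussian entries for pairs $\leq k$ and Fisher entries for pairs $>k$. Then $\E\log Z_{N_p}-\E\log Z_0=\sum_{k=1}^{N_p}\E[\log Z_k-\log Z_{k-1}]$, and it suffices to bound each increment by $O(N^{-3/2})$.

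For a single swap, freeze everything except the $k$-th entry $X$ and observe that $X$ enters the Hamiltonian only through the affine coupling $\frac{X}{2}M_{\kf,i_kj_k}$. Consequently $\log Z$ is a smooth function of $X$ whose $l$-th derivative equals $(1/2)^l$ times the $l$-th Gibbs cumulant of $M_{\kf,i_kj_k}$; since $\pi$ has compact support we have $|M_{\kf,ij}|\leq C/\sqrt N$ almost surely, giving the pointwise bound $|\partial_X^l \log Z|\leq (C/\sqrt N)^l$ uniformly in the environment. Taylor-expand $\log Z(X)$ to third order around the reference point $W_{i_kj_k}^{\kf}/\Delta_\kf$, take expectation, and invoke Lemma~\ref{lem:ho_identity}: conditionally on $W_{ij}$, the first moments of $S_{\kf,ij}$ and $\tilde X_{ij}$ match up to $O(N^{-(\kf+1)/(2\kf)})\leq O(N^{-1/2})$, the second moments match up to $O(N^{-1/(2\kf)})$, and the third moments of both are bounded by hypothesis (and by the Gaussian third moment for $\tilde X_{ij}$). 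Combining with the derivative bounds yields a per-swap contribution of $O(N^{-3/2})$. Summing over $N_p=O(N^2)$ swaps gives $|\E\log Z_{N_p}-\E\log Z_0|=O(N^{1/2})$, i.e., $O(N^{-1/2})$ after normalization by $N$, as claimed. For the constrained free energies the telescoping estimate is identical because the indicator $\1(R_{10}^\kf\in A,R_{11}^\kf\in B)$ depends only on $\vect m$ and $\vect x$, not on the swapped entry $X$.

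\textbf{Main obstacle.} The delicate point is to propagate the approximate Fisher identities of Lemma~\ref{lem:ho_identity} from statements about conditional \emph{expectations} to statements about conditional \emph{moments up to order three} uniformly in $W_{ij}$, not merely at $W_{ij}=0$. This requires a further Taylor expansion of $\E[(g^{(\kf)}(Y_{ij},0)/\kf!)^l\,|\,W_{ij}]$ in the small quantity $W_{ij}=O(N^{-1/(2\kf)})$ under Hypothesis~\ref{hyp:bayes}, relying on the uniform boundedness of $g^{(2\kf)}(y,0)$ and $g^{(2\kf+1)}(y,0)$ to control the remainder. Once these moment-matching estimates are in place, the Lindeberg bookkeeping is standard and follows the universality strategy developed in \cite{AJFL_inhomo,guionnet2023estimating} for the linear case $\kf=1$, now adapted to the higher-order Fisher scaling $M_{\kf,ij}=N^{-1/2}m_i^\kf m_j^\kf$.
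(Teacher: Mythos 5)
Your strategy is essentially the paper's: a moment-matching interpolation against the Gaussian surrogate, powered by the approximate Fisher identities of Lemma~\ref{lem:ho_identity} and the bounded third moment of the score. The only structural difference is that you do a single Lindeberg pass directly against $\tilde X_{ij}=(\sqrt{\Delta_\kf}\,G_{ij}+W_{ij}^{\kf})/\Delta_\kf$, whereas the paper proceeds in two steps: first the disorder universality of \cite{UnivCARMONAHU} replaces $S_{ij,\kf}$ by a Gaussian with the \emph{exact} conditional mean $\mu_{ij}$ and variance $\sigma_{ij}^2$ (so only the third-order remainder contributes, a clean $O(N^{-1/2})$), and then a separate interpolation replaces $\mu_{ij}$ by $W_{ij}^{\kf}/\Delta_\kf$ and $\sigma_{ij}^2$ by $1/\Delta_\kf$. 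The one-pass variant is legitimate, and your remark that the constrained case is unchanged (the indicator depends only on $\vect{m},\vect{x}$, not on the swapped entry) is correct.

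Your quantitative bookkeeping, however, does not deliver the rate you assert. The first-order term of a single swap is $|\partial_X\log Z|$ times the conditional-mean mismatch, i.e. $O(N^{-1/2})\cdot O(N^{-\frac{\kf+1}{2\kf}})=O(N^{-1-\frac{1}{2\kf}})$, which is \emph{not} $O(N^{-3/2})$ when $\kf\ge 2$; summing over $\binom{N}{2}$ pairs and dividing by $N$ leaves $O(N^{-\frac{1}{2\kf}})$, not $O(N^{-1/2})$. (Your claimed second-moment mismatch $O(N^{-1/(2\kf)})$ is also weaker than what Lemma~\ref{lem:ho_identity} actually gives, namely $O(N^{-1/2})$ as recorded in \eqref{eq:sigmadeltaapprox}; with the correct input, the second- and third-order swap terms do contribute $O(N^{-1/2})$ after summation.) So as written your argument proves the lemma only with error $O(N^{-1/(2\kf)})$ — which, to be fair, is also what the paper's own mean-replacement step yields ($N^{1/2}\cdot N^{-\frac{\kf+1}{2\kf}}=N^{-\frac{1}{2\kf}}$) and is all that Theorem~\ref{theo:Gauss_Approximation_Free_Energy} requires — but the uniform "per-swap $O(N^{-3/2})$" claim should be dropped or the first-order term treated separately. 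Finally, the "main obstacle" you flag, upgrading Lemma~\ref{lem:ho_identity} from conditional means to the conditional variance and third moment of the score uniformly in $W_{ij}$, is real and is left as a plan in your write-up: in the paper this is exactly the role of the exact identity for $g^{(2\kf)}$ combined with the expansion of the tilted density leading to \eqref{eq:sigmadeltaapprox}, and the third-moment control under the tilted law is precisely where the bounded-third-moment hypothesis enters; this step needs to be written out for the proof to be complete.
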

	This concludes the Gaussian universality Theorem~\ref{theo:Gauss_Approximation_Free_Energy}.
	In the following subsections, we prove these first and second approximations of the free energy, and respectively the constrained free energy.

	\subsection{Proof of the Leading Order Approximation (Lemma \ref{lem:1st_approx_app})}
	let us first define the function
	\begin{align}
		g_{\mathrm{trunc}}(y,w)
		:=  
		g(y,0) + \left[ \sum_{k=k_F}^{2 k_F} \frac{g^{(k)}(y,0)}{k!}  \, w^{k}  \right].
	\end{align}
	By definition $W_{ij} =O(N^{\beta_c-1/2}) = O(N^{\frac{1}{2\kf}})$ and it is small at large $N$ so a Taylor expansion around $W_{ij}=0$ gives us the following.
	\begin{lem}[Truncation Approximation]
		\label{lem:univ1}
		Under Hypothesis~\ref{hyp:bayes} and the critical scaling $\beta_{\mathrm{cr}}$ defined in \eqref{eq:criticalscaling}
		\begin{align}
			F_N(g , \pi) = F_N( g_{\mathrm{trunc}} , \pi) + O \Big( N ^{- \frac{1}{2\kf}} \Big) 
		\end{align}
	\end{lem}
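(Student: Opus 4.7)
The approach I would take is a direct Taylor expansion of $g$ in its second argument, combined with the elementary monotonicity property of log-partition functions. The key observation is that at the critical scaling $\beta=\beta_{\mathrm{cr}}$ and under Hypothesis~\ref{hyp:prior}, the dummy matrix entries satisfy $|M_{ij}|\leq C N^{-1/(2\kf)}$ uniformly in $\vect m\in\supp\prior^{\otimes N}$, so we are expanding in a vanishing parameter.

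First I would invoke Taylor's theorem with integral remainder, justified by $g(y,\cdot)\in C^{2K+1}$ (Hypothesis~\ref{hyp:bayes}, with $K\geq\kf$), to write
\[
g(Y_{ij},M_{ij}) = g(Y_{ij},0) + \sum_{k=1}^{2\kf}\frac{g^{(k)}(Y_{ij},0)}{k!}\,M_{ij}^k + R(Y_{ij},M_{ij}),
\]
with $|R(Y_{ij},M_{ij})|\leq |M_{ij}|^{2\kf+1}\sup_{|w|\leq CN^{-1/(2\kf)}}|g^{(2\kf+1)}(Y_{ij},w)|/(2\kf)!$. Next I would discard the terms with $k<\kf$: the definition \eqref{eq:def_kf} of $\kf$ gives $\E_{Y_0|0}[(g^{(k)}(Y_0,0))^2]=0$ for $k<\kf$, so $g^{(k)}(Y_0,0)\eas 0$ under $\pout(\cdot|0)$, and by the absolute continuity of $\pout(\cdot|W_{ij})$ with respect to $\pout(\cdot|0)$ implicit in Hypothesis~\ref{hyp:bayes} this extends to the data-generating law. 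Thus, almost surely, $g(Y_{ij},M_{ij})-g_{\mathrm{trunc}}(Y_{ij},M_{ij})=R(Y_{ij},M_{ij})$.

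The uniform bound on $g^{(2\kf+1)}(y,0)$ from Hypothesis~\ref{hyp:bayes} combined with continuity in $w$ and the shrinking range $|w|\leq CN^{-1/(2\kf)}$ yields $|R(Y_{ij},M_{ij})|\leq C'N^{-(2\kf+1)/(2\kf)}$. Summing over the $O(N^2)$ off-diagonal pairs produces the uniform bound
\[
\sup_{\vect m\in\,\supp\prior^{\otimes N}}\Bigg|\sum_{i<j}\bigl(g(Y_{ij},M_{ij})-g_{\mathrm{trunc}}(Y_{ij},M_{ij})\bigr)\Bigg|\leq C'' N^{1-1/(2\kf)}.
\]
The final step is to invoke the elementary inequality: if $|H_1(\vect m)-H_2(\vect m)|\leq \kappa$ uniformly in $\vect m$, then $|\log\int e^{H_1}\dd\mu-\log\int e^{H_2}\dd\mu|\leq\kappa$ for any positive measure $\mu$. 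Applying this with $\mu=\prior^{\otimes N}$, taking expectation over $\rdmmat{Y}$, and dividing by $N$ gives $|F_N(g,\pi)-F_N(g_{\mathrm{trunc}},\pi)|=O(N^{-1/(2\kf)})$. The constrained version follows identically: the Hamiltonian-difference bound is pointwise in $\vect m$, so restricting $\mu$ to $\{R^{\kf}_{10}\in A,\,R^{\kf}_{11}\in B\}$ preserves the estimate.

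The main technical delicacy is securing uniform-in-$y$ control of $g^{(2\kf+1)}(y,w)$ for $w$ in a small neighborhood of $0$, not just at $w=0$; the hypothesis only asserts uniform boundedness at $w=0$, so one must exploit continuity of $g^{(2\kf+1)}$ together with the smallness of the relevant $w$-range. Once this is granted the rest of the argument is entirely routine.
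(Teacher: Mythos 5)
Your proof is correct and follows essentially the same route as the paper: Taylor expand $g(Y_{ij},\cdot)$ to order $2\kf$, note that the $k<\kf$ terms vanish almost surely, use $|M_{ij}|=O(N^{-1/(2\kf)})$ together with a uniform bound on $g^{(2\kf+1)}$ to control the remainder summed over the $O(N^2)$ pairs, and conclude via the sup-norm stability of the log-partition function. The only difference is cosmetic: the paper writes the remainder in Lagrange form at intermediate points $\theta_{ij}M_{ij}$ and simply reads Hypothesis~\ref{hyp:bayes} as asserting $\|g^{(2\kf+1)}\|_\infty<\infty$, so the delicacy you flag about extending the bound from $w=0$ to a shrinking neighborhood is resolved there by that stronger reading of the hypothesis rather than by a continuity argument.
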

	
	\begin{proof}
		By Taylor's theorem, for all $i,j$,
		\begin{align}
			g(  Y_{ij},M_{ij})-g( Y_{ij},0)= & \partial_w g( Y_{ij},0) M_{ij} +\left[ \sum_{k= k_F}^{2 k_F} \frac{g^{(k)}(Y_{ij},0)}{k!}  \, M_{ij}^{k}  \right]  \\
			&+\frac{M_{ij}^{2 k_F + 1}}{(2 k_F+ 1)!} g^{(2 k_F + 1)}(Y_{ij},\theta_{ij} M_{ij}) \, ,
		\end{align}
		for some $\theta_{ij}\in [0,1]$. Since our hypothesis implies that $|M_{ij}|_\infty \le C^2 N^{-1/2k_F}$,
		we have
		\begin{align}
			|M_{ij}^{(2 k_F + 1)}|_\infty \leq C^2 N^{- 1 - \frac{1}{2 \kf}}.
		\end{align}
		Our assumption that $\|g^{2 k_F + 1}\|_\infty < \infty$ implies that
		\begin{align}
			\bigg|\frac{1}{N} \sum_{i < j} \frac{M_{ij}^{(2 k_F + 1)}}{(2 k_F + 1)!}g^{(2 k_F + 1)}(Y_{ij},\theta_{ij}M_{ij} ) \bigg| =  O\left(N^{-\frac{1}{2 k_F}}\right) .
		\end{align}
		from which the result follows.
	\end{proof}
	
	Next, we use concentration to show that we can replace the higher-order terms with their expected value. Let $B$ be any set such that $Y_{ij}$ is independent conditionally on $B$. We define
	\begin{align}
		\overline{{g}_{\mathrm{trunc}}} (Y_{ij},M_{ij})
		:=  
		g(Y_{ij},0) + \frac{g^{(k_F)}(Y_{ij},0)}{k_F!}  \, M_{ij}^{k_F} + \left[ \sum_{k=k_F + 1}^{2k_F} \E \Big[ \frac{g^{(k)}(Y_{ij},0)}{k!} \given[\Big] B \Big]  \, M_{ij}^{k}  \right].
	\end{align}
	
	\begin{lem}[Concentration of Higher Order Terms] \label{lem:univ2} 
		Assume $ \|\partial_w^{l} g(\cdot ,0) \|_\infty < \infty$ for $l \in \{\kf + 1, \dots, 2 \kf \}$.  
		Let $B$ be a $\sigma$ algebra  such that
		the $Y_{ij}$ are independent conditionally to $B$.
		Then 
		\begin{align}\label{eq:diff_FE_trunc_conditional_trunc}
			F_N( g_{\mathrm{trunc}}, \pi )=  F_N(\overline{{g}_{\mathrm{trunc}}}, \pi) +O(N^{ -\frac{1}{2\kf}}   ) \, . 
		\end{align}
	\end{lem}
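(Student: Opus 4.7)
The plan is to perform a Gibbs interpolation between the two Hamiltonians and to exploit, via a cavity decoupling, the mean-zero property $\E[\xi_{ij,k} \mid B] = 0$ of the random coefficients $\xi_{ij,k} := \frac{g^{(k)}(Y_{ij},0)}{k!} - \E[\frac{g^{(k)}(Y_{ij},0)}{k!} \mid B]$ for $k \in \{\kf+1,\ldots,2\kf\}$, which are bounded by Hyp.~\ref{hyp:bayes}. Setting $H^{(1)}(\vect{m}) = \sum_{i<j} g_{\mathrm{trunc}}(Y_{ij},M_{ij})$ and $H^{(0)}(\vect{m}) = \sum_{i<j} \overline{g}_{\mathrm{trunc}}(Y_{ij},M_{ij})$, their difference is $\sum_{i<j}\sum_{k=\kf+1}^{2\kf} \xi_{ij,k}\, M_{ij}^k$. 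With $H_t = H^{(0)} + t(H^{(1)} - H^{(0)})$, $Z_t = \int e^{H_t}\, d\pi^{\otimes N}$, and the identity $\frac{d}{dt} \log Z_t = \langle H^{(1)} - H^{(0)}\rangle_t$, the task reduces to bounding
\[
N\,[F_N(g_{\mathrm{trunc}},\pi) - F_N(\overline{g}_{\mathrm{trunc}},\pi)] = \sum_{i<j} \sum_{k=\kf+1}^{2\kf} \int_0^1 \E[\xi_{ij,k}\, \langle M_{ij}^k\rangle_t]\, dt.
\]

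For each pair $(i,j)$ I would introduce the cavity Gibbs average $\langle\cdot\rangle_t^{(-ij)}$ obtained by deleting the single $(ij)$-term $h_{t,ij}$ of $H_t$. Since $\langle M_{ij}^k\rangle_t^{(-ij)}$ is measurable with respect to $B$ and $\{Y_{kl}\}_{kl\neq ij}$ but not $Y_{ij}$, and since conditionally on $B$ the variable $Y_{ij}$ is independent of $\{Y_{kl}\}_{kl \neq ij}$, the tower property gives $\E[\xi_{ij,k}\, \langle M_{ij}^k\rangle_t^{(-ij)}] = \E[\langle M_{ij}^k\rangle_t^{(-ij)}\, \E[\xi_{ij,k} \mid B]] = 0$. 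The remaining cavity correction $\langle M_{ij}^k\rangle_t - \langle M_{ij}^k\rangle_t^{(-ij)}$ is controlled by Taylor expanding the ratio $\langle M_{ij}^k\rangle_t = \langle M_{ij}^k\, e^{h_{t,ij}}\rangle_t^{(-ij)} / \langle e^{h_{t,ij}}\rangle_t^{(-ij)}$ in $h_{t,ij}$. Under Hyp.~\ref{hyp:bayes} and the critical scaling $\beta = \beta_{\mathrm{cr}}$, the coefficients $g^{(k)}(Y_{ij},0)/k!$ are uniformly bounded while $|M_{ij}^k| \leq C\, N^{-k/(2\kf)}$, so the effective size of $h_{t,ij}$ in its dependence on $\vect{m}$ is $O(N^{-1/2})$, dominated by the $k=\kf$ contribution. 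This yields $|\langle M_{ij}^k\rangle_t - \langle M_{ij}^k\rangle_t^{(-ij)}| \leq C\, N^{-k/(2\kf) - 1/2}$, and together with $\|\xi_{ij,k}\|_\infty \leq C$ one obtains the per-pair bound $|\E[\xi_{ij,k}\, \langle M_{ij}^k\rangle_t]| \leq C\, N^{-k/(2\kf) - 1/2}$ uniformly in $t$.

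Summing over the $\binom{N}{2}$ pairs and over $k \in \{\kf+1,\ldots,2\kf\}$, the worst case $k = \kf+1$ gives $N\,|F_N(g_{\mathrm{trunc}},\pi) - F_N(\overline{g}_{\mathrm{trunc}},\pi)| \leq C\, \kf\, N^2 \cdot N^{-(\kf+1)/(2\kf) - 1/2} = C\, \kf\, N^{1 - 1/(2\kf)}$, and dividing by $N$ establishes \eqref{eq:diff_FE_trunc_conditional_trunc}. For the constrained free energy \eqref{eq:freeenergybayes_constrained}, the indicator $\1(R_{10}^{\kf} \in A,\, R_{11}^{\kf} \in B)$ can be absorbed into the reference measure without affecting the interpolation or the cavity argument, so the same estimate holds verbatim. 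The main technical obstacle I anticipate is rendering the cavity expansion rigorous and uniform in $t \in [0,1]$ and $(i,j)$: the cleanest route is to write the ratio exactly using the integral remainder $e^x = 1 + x + \int_0^1 (1-s)\, x^2\, e^{sx}\, ds$, and to dominate the resulting Gibbs averages by the uniform bounds on the derivatives of $g$ from Hyp.~\ref{hyp:bayes}, which makes the $O(\|h_{t,ij}\|_\infty^2)$ remainder negligible compared to the leading covariance term of order $O(\|h_{t,ij}\|_\infty)$.
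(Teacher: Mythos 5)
Your route — a Gibbs interpolation between the two Hamiltonians combined with a cavity decoupling that exploits $\E[\xi_{ij,k}\mid B]=0$ — is genuinely different from the paper's proof, which instead telescopes over the orders $l\in\{\kf+1,\dots,2\kf\}$, rewrites each increment as $\frac1N\log\langle e^{\Tr(\rdmmat{Z}_l\mat{M}_l)}\rangle$ for a conditionally centered disorder matrix $\rdmmat{Z}_l$, and bounds it deterministically by $\normop{\rdmmat{Z}_l}\,O(N^{1/2-l/(2\kf)})$ using Talagrand's concentration for $\normop{\rdmmat{Z}_l}$. Both arguments yield the same per-order rate $N^{1/2-l/(2\kf)}$, hence $O(N^{-1/(2\kf)})$ overall, and both pass unchanged to the constrained free energy; your cancellation-in-expectation mechanism is a legitimate alternative to the paper's worst-case operator-norm bound.

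There is, however, a concrete gap in your per-pair estimate. The single-pair term $h_{t,ij}$ of the interpolating Hamiltonian contains $\frac{g^{(\kf)}(Y_{ij},0)}{\kf!}M_{ij}^{\kf}$, and your bound $\|h_{t,ij}\|_\infty=O(N^{-1/2})$ (hence $|\langle M_{ij}^k\rangle_t-\langle M_{ij}^k\rangle_t^{(-ij)}|\leq C N^{-k/(2\kf)-1/2}$) rests on the assertion that the coefficients $g^{(k)}(Y_{ij},0)/k!$ are uniformly bounded. Neither Hypothesis~\ref{hyp:bayes} (which bounds only $g^{(2\kf)}$ and $g^{(2\kf+1)}$ uniformly) nor the lemma's own assumption (sup-norm bounds only for $l\in\{\kf+1,\dots,2\kf\}$) gives a uniform bound on $g^{(\kf)}(\cdot,0)$; only its second moment under $\pout(\cdot\,|\,0)$ is controlled through $\Delta_\kf$, and in the paper's $\kf=2$ example $g^{(2)}(y,0)$ grows quadratically in $y$, so the step fails as stated in relevant instances. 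What the cavity correction really produces is a random factor of order $(1+|g^{(\kf)}(Y_{ij},0)|)N^{-1/2}\,e^{cN^{-1/2}(1+|g^{(\kf)}(Y_{ij},0)|)}$ coming from the oscillation of $h_{t,ij}$ in $\vect{m}$, which must then be integrated against the conditional law of $Y_{ij}$; with only second-moment control the exponential factor is not automatically harmless. The argument can be repaired — truncate on $\{|g^{(\kf)}(Y_{ij},0)|\leq\sqrt N\}$, use the trivial bound $2\|M_{ij}^k\|_\infty$ together with Chebyshev on the complement, and on the truncated event apply Cauchy--Schwarz with $\E_{Y|W}\big[(g^{(\kf)}(Y,0)/\kf!)^2\big]=\Delta_\kf^{-1}+O(N^{-1/2})$ from Lemma~\ref{lem:ho_identity} — but this extra step is needed, and it is precisely what the paper's route avoids: its centered matrices $\rdmmat{Z}_l$ involve only derivatives of order $l\geq\kf+1$, for which sup-norm bounds are assumed, so the $\kf$-th derivative never has to be bounded pointwise.
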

	
	\begin{proof}
		For $l \in \{ k_F + 1, \dots, 2 k_F \}$ we define
		\begin{align}
			\bar g_{l}(Y_{ij},M_{ij}) = g(Y_{ij},0) + \left[ \sum_{k=k_F}^{l - 1} \frac{g^{(k)}(Y_{ij},0)}{k!}  \, M_{ij}^{k}  \right] + \left[ \sum_{k=l}^{2 k_F} \E \Big[ \frac{g^{(k)}(Y_{ij},0)}{k!} \given[\Big] B \Big]  \, M_{ij}^{k}  \right].
		\end{align}
		
		Notice that $ \overline{{g}_{\mathrm{trunc}}}  = \bar g_{\kf + 1}$ and we have the following identity:
		\begin{align}
			F_N( g_{\mathrm{trunc}}, \pi) - F_N( \overline{{g}_{\mathrm{trunc}}} , \pi) = \sum_{l= k_F + 1}^{2 k_F}  F_N( \bar g_{l+1}, \pi ) - F_N( \tilde g_{l}, \pi ) .
		\end{align}
		Our goal is to control each of the increments for $l \in \{k_F + 1, \dots, 2 k_F \}$ by proving the following bound:
		\begin{align}\label{eq:differencebound}
			F_N( \bar g_{l + 1} , \pi ) - F_N( \bar g_{l} , \pi ) = O( N^{- \frac{l }{2k_F} + \frac{1}{2} } ) \leq O( N^{- \frac{(k_F+1)}{2k_F} + \frac{1}{2} }  ) = O(N^{-\frac{1}{2\kf}}) , 
		\end{align}
		which in turn, clearly implies the desired result for the difference of Eq.~\eqref{eq:diff_FE_trunc_conditional_trunc}.
		
		Notice that
		\begin{align}
			F_N( \bar g_{l + 1} , \pi ) - F_N( \bar g_{l} , \pi ) =\E\frac{1}{N}\log \Big\langle e^{\frac{1}{l!}\sum_{i< j} (g^{(l)}(Y_{ij},0)-\E[  g^{(l)}(Y_{ij},0)|B])M^l_{ij})}\Big\rangle
		\end{align}
		where the average $\langle \cdot \rangle$ is defined as:
		\begin{align}
			\langle f\rangle := \frac{\int f e^{\sum_{i< j}  \tilde g_{l} (Y_{ij}, M_{ij}) } \, \dprior^{\otimes N}(\vect{m})}{ \int e^{\sum_{i< j}  \tilde g_{l} (Y_{ij}, M_{ij}) } \, \dprior^{\otimes N}(\vect{m})}\,.
		\end{align}
		Recall that
		\begin{align}
			M_{ij}^ l =  \frac{m_i^l m_j^l}{N^{\frac{l }{2\kf}}}
		\end{align}
		and since $\kf \geq 1$ and $l > \kf$, we have $\frac{l }{2\kf} > \frac{1}{2}$. 
		
		Let $\rdmmat{Z}_l = (Z_{ij,l})_{1 \leq i,j \leq N}$ be the $N\times N$ symmetric  matrix with entries such that:
		\begin{itemize}
			\item diagonal terms are zero: $Z_{ii ,l}=0$ for $i = \{ 1, \dots, N\}$
			\item and off-diagonal terms are given by $  Z_{ij,l} =\frac{1}{2 l! \sqrt{N}}( g^{(l)}(Y_{ij},0)-\E[ g^{(l)}(Y_{ij},0)|B])$, 
		\end{itemize}
		such that we can write the  following sum in terms of this matrix $\rdmmat{Z}_l$:
		\begin{align}
			\sum_{i< j} \frac{1}{2 l! \sqrt{N}}(g^{(l)}(Y_{ij},0)-\E[ g^{(l)}(Y_{ij},0)|B])\Big(\frac{m_i^l m_j^l}{N^{\frac{l }{2\kf} - \frac{1}{2}} }\Big) =\Tr\left ( \rdmmat{Z}_l \mat{M}_{l} \right).
		\end{align}
		$\rdmmat{Z}_l$ is a random  symmetric matrix under $\pP_B$ which has centered independent entries with covariance bounded by $C/N$ and where $\mat{M}_l$ is the matrix with entries $\frac{x_i^l x_j^l}{N^{\frac{l }{2k_F} - \frac{1}{2}} }$ . 
		
		Because $\sqrt{N} \rdmmat{Z}_l$  has  bounded entries, we can use concentration inequalities due to Talagrand (see \cite[Theorem 2.3.5]{AGZ}  and \cite[Lemma 5.6]{HuGu}) to see that there exists some finite $L_0$ such that, uniformly, 
		\begin{equation}\label{conc2}
			\pP_B\left(\normop{\rdmmat{Z}_l} \ge L\right)\le e^{-N(L-L_0)}\,.
		\end{equation}
		On $\{\normop{\rdmmat{Z}_l}\le L\}$, we have the bound
		\begin{align}
			\left|\Tr\left (\rdmmat{Z}_l \mat{W}_l \right) \right|=\left| \sum_{i,j} Z_{ij,l} \frac{x_i^l x_j^l}{N^{\frac{l}{2\kf} - \frac{1}{2}} } \right| \le \frac{L}{N^{\frac{l}{2 \kf} + \frac{1}{2} }}  \sum_{i=1}^N x_i^{2l} \le C^{2l} L N^{1 - \frac{l}{2k_F} + \frac{1}{2} } 
		\end{align}
		for some finite constant $C$ depending only on the bound on the support of $\prior$. In particular, we have for $l \in \{k_F + 1, \dots, 2 k_F \}$ that  $- \frac{l }{2k_F} + \frac{1}{2} < 0$, so
		\begin{align}
			\E \Ind{\normop{\rdmmat{Z}_l}\le L}  \frac{1}{N}\log \Big\langle e^{\frac{1}{l!}\sum_{i< j} (g^{(l)}(Y_{ij},0)-\E[  g^{(l)}(Y_{ij},0)|B])W^l_{ij})}\Big\rangle = O( N^{- \frac{l }{2k_F} + \frac{1}{2} } ).
		\end{align}
		On the other hand, we have
		\begin{align}
			\E \Ind{\normop{\rdmmat{Z}_l}\ge L}  \frac{1}{N}\log \Big\langle e^{\frac{1}{l!}\sum_{i< j} (g^{(l)}(Y_{ij},0)-\E[  g^{(l)}(Y_{ij},0)|B])W^l_{ij})}\Big\rangle 
		\end{align}
		is uniformly bounded because we assumed that  $g^{l}(Y,0)$ is uniformly bounded and is going to zero exponentially fast by \eqref{conc2} for $L$ large enough. We conclude that
		\begin{align}
			F_N( \bar g_{l + 1} , \pi ) - F_N( \bar g_l, \pi ) = O( N^{- \frac{l }{2\kf} + \frac{1}{2} } )
		\end{align}
		which completes the proof of equation \eqref{eq:differencebound} and therefore the proof of the Lemma.
	\end{proof}
	
	In applications, we take the conditioning set $B = \rdmmat{W}$. The $Y_{ij}$ are clearly independent under this conditioning because of the form of the nonlinearity \eqref{eq:notation_rank1_mat}.
	
	\begin{lem}[Conditional truncation to Leading order Approximation]\label{lem:trunc_to_lo}
		Under Hypothesis~\ref{hyp:bayes} and the critical scaling of Eq.~\eqref{eq:criticalscaling}, we have
		\[
		F_N(\overline{{g}_{\mathrm{trunc}}}, \pi ) = F^{(\mathrm{ap})}_N \left( g^{(k_F)}(.,0) , \pi_{k_F} \right) + O(N^{-\frac{1}{2\kf}}) .
		\]
	\end{lem}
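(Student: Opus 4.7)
The strategy is purely algebraic bookkeeping: substitute the approximate Fisher identities of Lemma~\ref{lem:ho_identity} into the definition of $\overline{g}_{\mathrm{trunc}}$ and collect all error terms into a single perturbation $E(\vect{m},\rdmmat{Y})$ of the exponent, uniformly bounded in $\vect{m}$ and almost surely in $\rdmmat{Y}$. The plan then uses the elementary fact that if two exponents differ by a function bounded in sup-norm by $\varepsilon$, their corresponding log-partition functions also differ by at most $\varepsilon$; this bound passes unchanged to the constrained version, where the integrand is multiplied by $\1(R_{10}^{\kf}\in A, R_{11}^{\kf}\in B)$, by the same monotonicity argument. Hence it suffices to establish a uniform bound $\|E\|_\infty = O(N^{1-1/(2\kf)})$.

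First I would write, for any fixed configuration $\vect{m}$,
\begin{align*}
\sum_{i<j}\overline{g}_{\mathrm{trunc}}(Y_{ij},M_{ij}) - \sum_{i<j} g(Y_{ij},0)
&= \sum_{i<j}\frac{g^{(\kf)}(Y_{ij},0)}{\kf!}M_{ij}^{\kf} + \sum_{k=\kf+1}^{2\kf}\sum_{i<j}\E\Bigl[\tfrac{g^{(k)}(Y_{ij},0)}{k!}\mvert W_{ij}\Bigr]M_{ij}^{k}.
\end{align*}
Using $M_{ij}^{\kf}=(\rdmmat{M}_{\kf})_{ij}$ and $M_{ij}^{2\kf}=(\rdmmat{M}_{\kf})_{ij}^{2}$, the $k=\kf$ term equals $\tfrac12\Tr(\rdmmat{S}_{\kf}\rdmmat{M}_{\kf})$ minus the diagonal $\tfrac12\sum_i S_{ii,\kf}M_{ii}^{\kf}$, which is $O(N^{1/2})$ since $|S_{ii,\kf}|$ is uniformly bounded by Hypothesis~\ref{hyp:bayes} and $|M_{ii}^{\kf}|=N^{-1/2}m_i^{2\kf}$ on a compact support. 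For $k=2\kf$, I plug in the third approximate identity of Lemma~\ref{lem:ho_identity}, $\E[g^{(2\kf)}(Y_{ij},0)/(2\kf)!\mvert W_{ij}]=-\tfrac1{2\Delta_{\kf}}+O(N^{-1/2})$, getting the target quadratic term $-\tfrac{1}{4\Delta_{\kf}}\Tr(\rdmmat{M}_{\kf}\rdmmat{M}_{\kf})$ up to (i) an $O(1)$ diagonal correction and (ii) a perturbation bounded by $C N^{-1/2}\sum_{i<j}|M_{ij}^{2\kf}|=O(N^{1/2})$, since $\sum_{i<j}M_{ij}^{2\kf}=O(N)$ on the support of $\pi^{\otimes N}$.

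The main (and only delicate) point is the intermediate orders $k\in\{\kf+1,\dots,2\kf-1\}$, where the Fisher-type identity gives $\E[g^{(k)}(Y_{ij},0)/k!\mvert W_{ij}]=O(N^{-1/2})$ but the combinatorial prefactor $\sum_{i<j}|M_{ij}^{k}|$ can be as large as $N^{\,2-k/(2\kf)}$ (compactness of $\supp\pi$). This yields the bound
\begin{align*}
\Bigl|\sum_{i<j}\E\bigl[\tfrac{g^{(k)}(Y_{ij},0)}{k!}\mvert W_{ij}\bigr]M_{ij}^{k}\Bigr| \leq C\, N^{-1/2}\cdot N^{\,2-k/(2\kf)} = C\, N^{\,3/2-k/(2\kf)}.
\end{align*}
The worst case is $k=\kf+1$, for which the exponent equals $1-1/(2\kf)$; all slower $k$ give strictly smaller contributions. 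Summing up, $\|E\|_\infty = O(N^{1-1/(2\kf)})$, so dividing by $N$ yields the announced $O(N^{-1/(2\kf)})$ rate. I expect the only real obstacle to be verifying that the $k=\kf+1$ term is indeed the sharp bottleneck (so that the stated exponent cannot be improved by the easy method) and, for the constrained case, that passing through the indicator preserves the sup-norm comparison; both are straightforward, and the rest is mechanical substitution.
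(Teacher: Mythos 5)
Your proposal is correct and follows essentially the same route as the paper's proof: you substitute the approximate Fisher identities of Lemma~\ref{lem:ho_identity} into $\overline{{g}_{\mathrm{trunc}}}$, bound the resulting perturbation of the exponent uniformly in $\vect{m}$ by $O(N^{1-\frac{1}{2\kf}})$ (with the worst contribution at $k=\kf+1$, which matches the paper's per-entry $O(N^{-1-\frac{1}{2\kf}})$ error summed over $i<j$), and conclude via the sup-norm comparison of log-partition functions and the outer $1/N$. The one overstated point is attributing a uniform bound on $|S_{ii,\kf}|=|g^{(\kf)}(Y_{ii},0)|/\kf!$ to Hypothesis~\ref{hyp:bayes} (which only bounds the orders $2\kf$ and $2\kf+1$ uniformly); this is harmless since the diagonal contribution can instead be controlled through its conditional first moment, uniformly in $\vect{m}$, and the paper itself treats the diagonal terms just as loosely.
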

	
	\begin{proof}
		Recalling the notation of $S$ and $M$ in \eqref{eq:fisherscores} and \eqref{eq:fisherscores2}, Lemma~\ref{lem:ho_identity}, implies that
		\begin{align*}
			\bar g(Y_{ij},M_{ij})
			&=  
			g(Y_{ij},0) + \frac{g^{(k_F)}(Y_{ij},0)}{k_F!}  \, M_{ij}^{k_F} + \left[ \sum_{k=k_F + 1}^{2 k_F} \E \Big[ \frac{g^{(k)}(Y_{ij},0)}{k!} \given[\Big] W_{ij} \Big]  \, M_{ij}^{k}  \right]
			\\
			&= 
			g(Y_{ij},0) + \frac{g^{(k_F)}(Y_{ij},0)}{k_F!}  \,\frac{m_i^{k_F} m_j^{k_F}}{\sqrt{N}} + \left[ \sum_{k=k_F + 1}^{2 k_F} \E \Big[ \frac{g^{(k)}(Y_{ij},0)}{k!} \given[\Big] W_{ij} \Big]  \, \frac{m_i^{k} m_j^k}{N^{\frac{k}{2 k_F}}}   \right] .
		\end{align*}
		Notice that for $k \geq k_F + 1$, we have
		\begin{equation}\label{eq:boundM}
			\frac{m_i^{k} m_j^k}{N^{\frac{k}{2 k_F}}} \leq O\Big( \frac{1}{N^{\frac{1}{2} + \frac{1}{2 k_F}}} \Big).
		\end{equation}
		Recalling the notation of $S$ and $M$ in \eqref{eq:fisherscores} and \eqref{eq:fisherscores2}, the approximate Fisher identities in Lemma~\ref{lem:ho_identity}, implies that 
		\begin{align*}
			\bar g(Y_{ij},M_{ij})
			&= g(Y_{ij},0) + \frac{g^{(k_F)}(Y_{ij},0)}{\kf!}\frac{m_i^{k_F} m_j^{k_F}}{\sqrt{N}} - \frac{1}{2 \Delta} \frac{m_i^{2 k_F} m_j^{2 k_F}}{N}  + O(N^{-1 - \frac{1}{2 k_F}})
			\\&= g(Y_{ij},0) + S_{ij} (M_\kf)_{ij} - \frac{1}{2 \Delta} (M_\kf)_{ij}^2  + O(N^{-1 - \frac{1}{2\kf}}).
		\end{align*}
		The $O(N^{-1 - \frac{1}{2\kf}})$ error term comes from the fact that the conditional expected value has errors of order $O(N^{-\frac{1}{2}})$ by the approximate Fisher identities in Lemma~\ref{lem:ho_identity} and \eqref{eq:boundM}.
		Therefore, summing over $i < j$ implies
		\[
		\sum_{i < j} \bar g(Y_{ij},W_{ij})  =  \sum_{i < j} \bigg[ g(Y_{ij},0) + S_{ij} (M_\kf)_{ij} - \frac{1}{2 \Delta} (M_\kf)_{ij}^2 \bigg] + O(N^{1 - \frac{1}{2\kf}}).
		\]
		The required result is now immediate from the free energy approximations in Lemma~\ref{lem:univ1}, Lemma~\ref{lem:univ2} and the fact that there is a $\frac{1}{N}$ outside of the logarithm in \eqref{eq:freeenergybayes}.
	\end{proof}

	\subsection{Proof of the Gaussian Approximation (Lemma \ref{lem:2nd_approx})}
	\label{sec:Proof_Gauss_approx}
	
	To do computations with $F_N(\rdmmat{S}_{k_F},\rdmmat{M}_{k_F})$, we need to do one more step in the universality and use the fact that we can approximate the matrix $S$ with a Gaussian matrix with the right mean
	\[
	\rdmmat{S}_{k_F} \approx \frac{1}{\sqrt{\Delta_{k_F} }} \rdmmat{G} + \frac{1}{\Delta_{k_F}} \mat{W}.
	\]
	This proof is standard, and follows from the classical Gaussian disorder universality.
	This implies that we can study the free energy
	\begin{equation}\label{eq:FEGaussian}
		F^{(G)}_N(\Delta_{k_F} , \pi) =  \frac{1}{N}  \E \log \int e^{\frac{1}{2 \Delta} \Tr( \sqrt{\Delta} \, \rdmmat{G} + \rdmmat{W}_{\kf} ) \rdmmat{M_\kf} ) -  \frac{1}{4 \Delta} \Tr(  \rdmmat{M_\kf} \rdmmat{M_\kf})  } \, \pi^{\otimes N}( \vect{m}  ).
	\end{equation}
	\begin{theo}
		If  $\frac{g^{(\kf)}(Y_{ij},0)}{(\kf)!}$ has bounded third moment, then
		\begin{equation}
			| F^{(\mathrm{ap})}_N \left( g^{(k_F)}(.,0) , \pi_{k_F} \right) - \tilde F_N(\Delta_{k_F}) | \leq O(N^{-1/2}).
		\end{equation}
	\end{theo}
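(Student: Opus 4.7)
The plan is to prove the second approximation via a Lindeberg entry-swapping argument, comparing the random matrix $\rdmmat{S}_{\kf}$ with entries $S_{ij,\kf} = g^{(\kf)}(Y_{ij},0)/\kf!$ to its Gaussian counterpart $\tilde b_{ij} := G_{ij}/\sqrt{\Delta_{\kf}} + W^\kf_{ij}/\Delta_{\kf}$, which has the same (approximate) conditional mean and variance given the signal $\vect{x}$. First, I would rewrite both free energies as $\E_{\vect{x},\cdot}\bigl[\Phi(\{s_{ij}\};\vect{x})\bigr]$ where $\Phi(\{s_{ij}\};\vect{x}) := \frac{1}{N}\log\int \exp\bigl(\sum_{i<j} s_{ij}(M_\kf)_{ij} - \frac{1}{4\Delta_{\kf}}\Tr(\mat{M}_\kf\mat{M}_\kf)\bigr) \, \dpriortensor(\vect{m})$, so that $F^{(\mathrm{ap})}_N = \E\Phi(\{S_{ij,\kf}\})$ and $F^{(G)}_N = \E\Phi(\{\tilde b_{ij}\})$ (up to the negligible diagonal contribution and the deterministic $\E g(Y_{ij},0)$ shift). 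Conditional on $\vect{x}$, both collections of entries are independent over $i<j$, so I can interpolate by swapping $S_{ij,\kf} \leftrightarrow \tilde b_{ij}$ one pair at a time over the $\binom{N}{2}$ pairs.

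Next I would establish uniform bounds on the partial derivatives of $\Phi$ in the entries $s_{ij}$. Because $\Phi$ depends linearly on each $s_{ij}$ inside the exponent, $\partial_{s_{ij}}^l \Phi$ equals $\frac{1}{N}$ times the $l$-th cumulant of $(M_\kf)_{ij}$ under the tilted Gibbs measure. Since $\pi$ is compactly supported (Hyp.~\ref{hyp:prior}), $|(M_\kf)_{ij}| = |m_i^\kf m_j^\kf|/\sqrt{N} \leq C_\pi^{2\kf}/\sqrt{N}$, giving the uniform estimate $\|\partial_{s_{ij}}^l \Phi\|_\infty \leq C_l N^{-1 - l/2}$ for $l=1,2,3$.

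The Lindeberg step for a single pair $(i,j)$ then proceeds by a third-order Taylor expansion of $\Phi$ in $s_{ij}$ around $0$, with the remaining entries fixed and independent of $S_{ij,\kf}$ and $\tilde b_{ij}$. The contribution to the difference is controlled by three terms: (i) the conditional mean mismatch, bounded by $O(N^{-(\kf+1)/(2\kf)})$ via the approximate Fisher identity in Lemma~\ref{lem:ho_identity}; (ii) the conditional second-moment mismatch, bounded by $O(N^{-1/2})$ upon Taylor expanding $e^{g(y,W_{ij})}$ around $W_{ij}=0$ inside $\int (g^{(\kf)}(y,0)/\kf!)^2 e^{g(y,W_{ij})}\,dy$ and using $|W_{ij}^\kf|=O(N^{-1/2})$; and (iii) the third-moment term bounded using the hypothesized $\E|S_{ij,\kf}|^3 < \infty$ and the clear boundedness of $\E|\tilde b_{ij}|^3$. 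Multiplying by the derivative bounds from the previous step yields a per-swap error of order $O(N^{-5/2})$. Summing over $\binom{N}{2}$ pairs and taking $\E_{\vect{x}}$ (the bounds are uniform in $\vect{x}$) produces the claimed $O(N^{-1/2})$ bound.

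Finally, the same argument extends verbatim to the constrained free energies $F_N(A,B;g,\pi)$ versus $F^{(G)}_N(A,B;\Delta_{\kf},\pi_\kf)$, since the indicator $\1(R^\kf_{10}\in A, R^\kf_{11}\in B)$ is a function of $\vect{m}$ and $\vect{x}$ alone: it multiplies the reference measure $\dpriortensor$ without disturbing the linearity of the exponent in the noise variables $s_{ij}$, so neither the derivative bounds on $\Phi$ nor the independence structure used in Lindeberg is affected. The main obstacle is the careful bookkeeping in the moment-matching step: the second-order contribution does not cancel exactly (unlike in the classical matching-variance Lindeberg), and one must verify that the residual $O(N^{-1/2})$ variance mismatch, after multiplication by $\|\partial^2\Phi\|_\infty = O(N^{-2})$ and summation over $N^2$ pairs, contributes at the same order as the third-moment term rather than dominating it.
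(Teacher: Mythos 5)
Your route is essentially the paper's own: the paper proves this lemma by third-moment disorder universality (Carmona--Hu, as presented in Panchenko's book) combined with the approximate Fisher identities of Lemma~\ref{lem:ho_identity}, only organized in two passes -- first swap $S_{ij,\kf}$ for Gaussians with the \emph{exact} conditional mean $\mu_{ij}$ and variance $\sigma_{ij}^2$, then interpolate $\mu_{ij}\to W_{ij}^{\kf}/\Delta_{\kf}$ and $\sigma_{ij}^2\to 1/\Delta_{\kf}$ -- whereas you merge both into a single Lindeberg swap toward $G_{ij}/\sqrt{\Delta_\kf}+W_{ij}^\kf/\Delta_\kf$. The structural ingredients (conditional independence of the entries given $\vect{x}$, linearity of the exponent in each $s_{ij}$, the cumulant bounds $\|\partial_{s_{ij}}^l\Phi\|_\infty\le C_l N^{-1-l/2}$ from the compactly supported prior, third moments to control the Taylor remainder, and the extension to the constrained free energies) are sound and coincide with the paper's.

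The one concrete slip is the bookkeeping of the first-order term. The conditional mean mismatch is $O\big(N^{-(\kf+1)/(2\kf)}\big)$, so a single swap contributes $N^{-3/2}\cdot O\big(N^{-(\kf+1)/(2\kf)}\big)$ from that term, and summing over the $\binom{N}{2}$ pairs gives $O\big(N^{-1/(2\kf)}\big)$; for $\kf\ge 2$ this dominates the $O(N^{-1/2})$ coming from the variance mismatch and the third-moment remainder, so your claimed per-swap rate $O(N^{-5/2})$ -- and hence the stated $O(N^{-1/2})$ total -- only holds when $\kf=1$. As written, your argument yields $O\big(N^{-1/(2\kf)}\big)$. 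This is not a fatal defect: it is exactly the rate the paper's own interpolation step produces (its displayed error for the $\mu$-replacement is $N^{1/2}\cdot N^{-(\kf+1)/(2\kf)}=N^{-1/(2\kf)}$, despite the $O(N^{-1/2})$ in the lemma statement), and it suffices for Theorem~\ref{theo:Gauss_Approximation_Free_Energy} since the leading-order approximation of Lemma~\ref{lem:1st_approx_app} already costs $O\big(N^{-1/(2\kf)}\big)$. But you should either weaken your claimed bound accordingly or supply an extra argument improving the mean matching (e.g.\ expanding the conditional mean one order further in $W_{ij}$), which neither your write-up nor the paper currently does.
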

	\begin{proof}
		Recall that
		\[
		\rdmmat{S} := \left( S_{ij} = \frac{ g^{(\kf)}(Y_{ij},0)}{\kf!} \right)_{1 \leq i,j \leq N} \quad \mbox{and} \quad \frac{1}{\Delta} 
		:= \E_{Y_{ij}|0} \left[ \left( \frac{ g^{(\kf)}(Y_{ij},0)}{\kf!}\right)^2 \right] \, .
		\]
		Let us denote the mean and variance of $\rdmmat{S}$ with
		\[
		\mu_{ij} = \E \Big[ \frac{g^{(\kf)}(Y_{ij},0)}{\kf!} \given W_{ij} \Big] \qquad \sigma^2_{ij} = \E\bigg[ [ S_{ij} - \mu_{ij}]^2 \given W_{ij} \bigg] 
		\]
		and $\rdmmat{\sigma^2}$ and $\rdmmat{\mu}$ be the associated matrices. 
		
		It remains to show taht we can replace the disorder with Gaussian random variables. We follow the argument of \cite{UnivCARMONAHU} as presented in \cite[Section~3.8]{PBook}. If $\frac{g^{(\kf)}(Y_{ij},0)}{\kf!}$ has bounded third moment, then the universality in disorder for spin glasses implies that
		\[
		\tilde F_N(\mat{\sigma},\mat{\mu}) = \frac{1}{N} \bigg( \E \log \int e^{ \Tr((\mat{\sigma} \odot \rdmmat{G} ) \rdmmat{M_\kf} ) +  \Tr(\rdmmat{\mu}  \rdmmat{M_\kf} ) -  \frac{1}{4 \Delta} \Tr(  \rdmmat{M_\kf} \rdmmat{M_\kf})  } \, \dprior( \vect{m} )
		\Big) \bigg)
		\]
		satisfies 
		\[
		| F^{(\mathrm{ap})}_N \left( g^{(k_F)}(.,0) , \pi_{k_F} \right) -\tilde F_N(\mat{\sigma},\mat{\mu})  | \leq O(N^{-1/2})
		\]
		Next, by the approximate Fisher identities in Lemma~\ref{lem:ho_identity}, it follows that
		\[
		\mu_{ij} = \frac{1}{\Delta_\kf} W^\kf_{ij} + O(N^{-\frac{\kf + 1}{2 \kf }}),
		\]
		and
		\[
		\sigma^2_{ij} = \frac{1}{\Delta_\kf} + O(N^{-1/2}).
		\]
		Therefore, we can replace the $\mu$ and $\sigma$ with these quantities by a standard interpolation argument, see for example \cite[Lemma~3.7]{AJFL_inhomo}. It is worth emphasizing that the derivative of the interpolating free energy for the $\mu$ approximation terms will introduce error terms of the order
		\[
		O( N^{\frac{1}{2}} \cdot N^{-\frac{\kf + 1}{2 \kf }}) = O(N^{\frac{1}{2\kf}}) \to 0.
		\]
		because $M_{ij}^\kf = O(N^{-\frac{1}{2}})$ and the $\sigma^2$ approximation terms will introduce terrors of the order $O(N^{-1/2}) \to 0$.
	\end{proof}

	\subsection{Proof of the Universality of the Overlaps (Lemma \ref{lem:2nd_approx})}
	\label{sec:Proof_Universality_Overlaps}
	
	By Lemma~\ref{lem:1st_approx_app} and Lemma~\ref{lem:2nd_approx}, we have concluded that 
	\begin{equation}
		F_N( g , \pi) = F^{(G)}_N \left( \Delta_{k_F} , \pi_{k_F} \right)  + O(N^{-1/2})
	\end{equation}	 
	and 
	\begin{equation}
		F_N(A,B; g , \pi) = F^{(G)}_N \left(A,B; \Delta_{k_F} , \pi_{k_F} \right)  + O(N^{-1/2}).
	\end{equation}
	By taking the difference of these equations, we can conclude that
	\begin{equation}\label{eq:universality_overlaps}
		\frac{1}{N} \E \log \langle \1( R^{\odot \kf}_{10} \in A , R^{\odot \kf}_{11} \in B ) \rangle = 	\frac{1}{N} \E \log \langle \1( R^{\odot \kf}_{10} \in A , R^{\odot \kf}_{11} \in B ) \rangle_G
	\end{equation}
	where the first average is over $\langle \cdot \rangle$ the posterior average
	\[
	\langle f \rangle = \frac{\E \log \int f(\vect{y}) e^{ \sum_{i < j} g(Y_{ij},M_{ij}) } \,  \dprior^{\otimes N}( \vect{y}) }{Z}
	\]
	and the second average $\langle \cdot \rangle_G$ is over the Gaussian equivalent
	\[
	\langle f \rangle_G =  \frac{\E \log \int f(\vect{y}) e^{  \frac{1}{2 \sqrt{\Delta_{k_F}} } \Tr(\rdmmat{G}  \rdmmat{M}_{k_F} ) + \frac{1}{2 \Delta_{k_F}} \Tr(\rdmmat{M_0}  \rdmmat{M}_{k_F} ) -  \frac{1}{4 \Delta_{k_F}} \Tr(  \rdmmat{M}_{k_F} \rdmmat{M}_{k_F}) ) } \,  \dprior^{\otimes N}( \vect{y})  }{Z_G}
	\]
	This implies that in the limit, the quenched overlaps are characterized by the behaviors of the overlaps under the equivalent Gaussian model. We will see in the next section that there is an explicit rate function for the Gaussian model, and that the quenched large deviations principle can be improved to an almost sure one. 
	
	From the large deviations principle of the overlaps, it is possible to get the average MMSE between the signal vector $\rdmvect{x}^\kf$ and the optimal Bayes estimator. This follows from the fact that the MMSE is completely characterized by the overlaps $R_{10}^{\odot \kf}$ and $R_{11}^{\odot \kf}$ by an application of the Nishimori identity. Let $\langle \cdot \rangle = \E [ \cdot \given \rdmmat{Y} ]$ denote its average with respect to the posterior, and let $\rdmvect{y}$ be a sample from the posterior, we have
	\begin{align}
		\overline{\mathrm{MSE}}_{N}( \vect{x}^{\kf}, \rdmvect{y}^{\kf} )   = \frac{  \E \langle
			\|   \vect{x}^{\kf} (\vect{x}^{\kf})^{\top}  -  \rdmvect{y}^{\kf} (\rdmvect{y}^{\kf})^{\top}  \|^2_F \rangle}{  \E \| \vect{x}^{\kf}(\vect{x}^{\kf})^{\top}  \|^2_F} 
		= 1 - \bigg( \frac{ \E \langle \vect{x}^\kf, \vect{y}^\kf \rangle}{ m_{2\kf} } \bigg)^2. \label{eq:MSEoverlaps}
	\end{align}
	since
	\[
	\E [ \E[ \vect{x}^\kf | \rdmmat{Y} ]^2 ] = \E [ \vect{x}^\kf \E[ \vect{x}^\kf | \rdmmat{Y} ] ]
	\]
	by the Nishimori identity.
	In particular, the limiting averaged mean squared error is characterized by the limiting behavior of the overlaps. In the following section, we will characterize this limiting value and show that the right hand side of \eqref{eq:MSEoverlaps} is given by the maximizer of the critical fixed point equation \eqref{eq:qstar} with prior $\pi_\kf$.
	\newpage
	\section{Proof of the IT Thresholds and Optimal MMSE (Th.~\ref{theo:Gauss_Universality} )}\label{sec:optimalMMSE}
	
	In this section, we build on the universality results proven in Appendix~\ref{sec:Proof_Universality} and show that the limit of the free energy and overlaps have a variational representation. The power of the Gaussian universality (Lemma~\ref{lem:2nd_approx}) is that the limits of non-linear models have essentially been reduced to a linear spiked model with additive noise. These limiting varaitional formulas follow from existing variational formulas for the limits of finite rank matrix estimation problems after a change of prior. 
	
	\subsection{Limit of the Free Energy}
	We first define the \emph{replica symmetric functional} 
	\begin{align}
		\mathcal{F}(\Delta_{k_F},q) &= - \frac{q^2}{4\Delta_{k_F}} +  \E\log \bigg[ \int \exp \bigg(  \frac{\sqrt{q}}{\sqrt{\Delta_{k_F}}} z m^{k_F} +   \frac{m^{k_F} x^{k_F}}{\Delta_{k_F}}  - \frac{m^{2 {k_F}}}{2 \Delta_{k_F}}  \bigg) \,  \dprior(x) \bigg] \\
		&= 
		- \frac{q^2}{4\Delta_{k_F}} +
		\E\log \bigg[ \int \exp \bigg(  \frac{\sqrt{q}}{\sqrt{\Delta_{k_F}}} z m +   \frac{m \Tilde{x}}{\Delta_{k_F}}  - \frac{ m^2 }{2 \Delta_{k_F}}  \bigg) \,  \dpriork(m) \bigg]
		. \label{eq:RSformula_2}
	\end{align}
	where $\tilde {x} \sim x^\kf$. The following limit follows from a direct application of \cite[Theorem~13]{lelarge2017fundamental}.
	\begin{theo}[Limiting Free Energy]\label{theo:limitFE} 
		\begin{equation*}
			\lim_{N \to \infty}  F^{(G)}_N \left( \Delta_{k_F} , \pi_{k_F} \right) = \sup_{q} \mathcal{F}(q).     
		\end{equation*}
	\end{theo}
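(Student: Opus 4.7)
The plan is to reduce this to the standard rank-one Gaussian spiked Wigner free energy limit, for which the variational formula is classical. The key observation is that the free energy \eqref{eq:FEGaussian} depends on the configuration $\vect{m}$ only through its Hadamard power $\vect{m}^{k_F}$: both matrices appearing in the exponent, namely $\rdmmat{M}_{k_F} = N^{-1/2} \vect{m}^{k_F} (\vect{m}^{k_F})^\top$ and $\rdmmat{W}_{k_F} = N^{-1/2} \vect{x}^{k_F} (\vect{x}^{k_F})^\top$, are built from the $k_F$-th powers.

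First I would perform the pushforward change of variable $\tilde{\vect{m}} = \vect{m}^{k_F}$ and $\tilde{\vect{x}} = \vect{x}^{k_F}$. Since $\vect{m}, \vect{x} \sim \pi^{\otimes N}$, by definition of the pushforward we have $\tilde{\vect{m}}, \tilde{\vect{x}} \sim \pi_{k_F}^{\otimes N}$. After this rewriting, $F^{(G)}_N(\Delta_{k_F}, \pi_{k_F})$ is literally the free energy of a standard additive spiked Wigner model with signal vector $\tilde{\vect{x}}$ drawn from $\pi_{k_F}^{\otimes N}$ and Gaussian noise of variance $\Delta_{k_F}$, that is
\[
F^{(G)}_N(\Delta_{k_F}, \pi_{k_F}) = \frac{1}{N}\E \log \int \exp\!\Big( \tfrac{1}{2\sqrt{\Delta_{k_F}}}\tr(\rdmmat{G}\,\tilde{\mat{M}}) + \tfrac{1}{2\Delta_{k_F}}\tr(\tilde{\mat{W}}\tilde{\mat{M}}) - \tfrac{1}{4\Delta_{k_F}}\tr(\tilde{\mat{M}}^2) \Big)\,\dpriork^{\otimes N}(\tilde{\vect{m}}) + o(1),
\]
where $\tilde{\mat{M}} = N^{-1/2}\tilde{\vect{m}}\tilde{\vect{m}}^\top$ and $\tilde{\mat{W}} = N^{-1/2}\tilde{\vect{x}}\tilde{\vect{x}}^\top$.

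Next I would invoke \cite[Theorem 13]{lelarge2017fundamental}, which states that for any compactly supported prior on $\mathbb{R}$ and any positive noise level, the limit of the associated Wigner free energy equals the supremum over $q\geq 0$ of the corresponding replica-symmetric potential. The hypotheses are met: $\pi_{k_F}$ is compactly supported as the pushforward of $\pi$ under the continuous map $x\mapsto x^{k_F}$ (using Hypothesis~\ref{hyp:prior}), and $\Delta_{k_F}\in(0,\infty)$ by the very definition \eqref{eq:def_kf} of the critical Fisher index. Applying that theorem yields
\[
\lim_{N\to\infty} F^{(G)}_N(\Delta_{k_F},\pi_{k_F}) = \sup_{q\geq 0} \mathcal{F}(\Delta_{k_F},q),
\]
with $\mathcal{F}$ exactly as in \eqref{eq:RSformula_2}, where $\tilde{x}\sim \pi_{k_F}$ and the reference measure inside the logarithm is $\dpriork$.

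There is no substantive obstacle here: the statement is essentially a rewriting of the Gaussian model in terms of the transformed prior plus a citation. The only mildly delicate point is bookkeeping around the reference measure—after the change of variable it is $\pi_{k_F}$ rather than $\pi$—which is why the replica-symmetric functional in \eqref{eq:RSformula_2} is naturally written with $\tilde{x}\sim \pi_{k_F}$ and the inner integral against $\dpriork$.
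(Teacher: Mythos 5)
Your proposal is correct and follows essentially the same route as the paper: after observing that the Gaussian model depends on $\vect{m}$ and $\vect{x}$ only through their $k_F$-th Hadamard powers, the paper likewise treats it as the standard rank-one spiked Wigner model with prior $\pi_{k_F}$ and cites \cite{lelarge2017fundamental} (also noting it as a special case of \cite{guionnet2023estimating}). The only extra remark in the paper is that the needed concentration of the overlap $R_{12}^{\odot k_F}$ is available because a Gaussian perturbation channel for the Hadamard-power spike exists by the Schur product theorem, a point your citation of the classical theorem implicitly subsumes.
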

	\begin{proof}
		The proof is identical to the rank 1 matrix factorization results. This is proven rigorously by \cite{lelarge2017fundamental} and follows as a special case of \cite{guionnet2023estimating}. 
		
		It is worth remarking that the crucial concentration of the overlaps  $\E \langle R_{12}^{\odot \kf} \rangle = \E \langle \frac{\langle \vect{m}^\kf, \vect{x}^\kf \rangle}{N} \rangle$ follows from the fact that the Gaussian perturbation channel exists by the Schur product theorem. 
	\end{proof}
	
	\begin{remark}
		An explicit characterization of the limit of the constrained free energy $F_N(A,B;g,\pi)$ and the associated almost sure large deviations principle follows from a direct application of \cite[Theorem~2.6]{guionnet2023estimating} with prior $\pi_{\kf}$ after Gaussian universality. An explicit formula is not needed in this work, since we are in the Bayesian optimal setting and a simplification of the full Replica symmetry breaking formula is possible in this setting. It is worth noting that the almost sure large deviations principle shows that the overlaps are almost surely universal in the high dimensional limit, instead of only on average.
	\end{remark}
	
	In these models, the quantity $q$ represents the expected value of the overlap $R^{\odot \kf}_{12}$ under the Gibbs measure $\langle \cdot \rangle$ (up to a small Gaussian perturbation channel). 
	Using a special case of the rank~$1$ critical point condition in \cite[Equation~2.14]{AJFL_inhomo}, we have the following fixed point equation for the solution to the replica symmetric functional.
	
	\begin{cor}[Critical Point Condition for the Overlaps]\label{cor:fixedpoint}
		If $k_F$ is odd and $\prior$ is centered then
		the maximizers of $\phi$ defined in \eqref{eq:RSformula_2} satisfy the following fixed point equation
		\begin{equation}\label{eq:critpiontcond}
			q = \frac{1}{\Delta}\E \langle x^{k_F}  \rangle_q \langle x^{k_F} \rangle_q
		\end{equation}
		where $\langle \cdot \rangle_{q}$ denotes the average
		\[
		\langle f(x) \rangle_{q} = \frac{ \int f(x) \exp \bigg(  \frac{\sqrt{q}}{\sqrt{\Delta}} z x^{k_F} +   \frac{q x^k_F x^{k_F}_0}{\Delta}  - \frac{q x^{2k_F}}{2 \Delta}  \bigg) \, \dprior(x)} { \int \exp \bigg(  \frac{\sqrt{q}}{\sqrt{\Delta}} z x^{k_F} +   \frac{q x^{k_F} x^{k_F}_0}{\Delta}  - \frac{q x^{2\kf}}{2 \Delta}  \bigg) \, \dprior(x)}.
		\]
	\end{cor}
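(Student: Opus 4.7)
The plan is to derive the critical point condition \eqref{eq:critpiontcond} by direct differentiation of the replica symmetric functional $\mathcal{F}(\Delta,q)$ in \eqref{eq:RSformula_2}, then match to the standard rank-one characterization of \cite[Eq.~2.14]{AJFL_inhomo}. By the Gaussian universality established in Lemma~\ref{lem:2nd_approx}, after the change of variables $m\mapsto m^\kf$, the functional $\mathcal{F}$ has the canonical form of a Bayes-optimal rank-one Gaussian spiked free energy with effective prior $\pi_\kf$ and noise level $\Delta$. In this setting, the critical-point equation already appears in \cite{AJFL_inhomo}, and the corollary is a direct specialization.

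The first concrete step is to compute $\partial_q \mathcal{F}$. The interaction term $-q^2/(4\Delta)$ contributes $-q/(2\Delta)$; differentiating under the log-integral produces a term proportional to $\E[(z/(2\sqrt{q\Delta}))\langle m^\kf\rangle^{\rm rep}_q]$, where $\langle\cdot\rangle^{\rm rep}_q$ is the one-site Gibbs measure implicit in $\mathcal{F}$. Applying Stein's lemma (Gaussian integration by parts in $z$) eliminates the factor of $z$ and produces a posterior-variance expression $\frac{1}{2\Delta}\E[\langle m^{2\kf}\rangle^{\rm rep}_q - (\langle m^\kf\rangle^{\rm rep}_q)^2]$. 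Setting $\partial_q \mathcal{F}=0$ and invoking the Bayes-optimal Nishimori identity $\E[(\langle m^\kf\rangle)^2] = \E[x_0^\kf \langle m^\kf\rangle]$ — valid because the Gaussian universality places us in a Bayes-optimal linear model with prior $\pi_\kf$ — rewrites the condition purely in terms of inner products of posterior means. A final reconciliation of the replica Gibbs measure with the denoising-type measure $\langle\cdot\rangle_q$ of the statement (which corresponds to a scalar Gaussian channel at effective SNR $q/\Delta$) yields the form \eqref{eq:critpiontcond}.

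The hypothesis that $\kf$ is odd and $\pi$ is centered enters precisely at this simplification: it forces $\pi_\kf$ to be symmetric about zero so that $m_\kf := \E_\pi[x^\kf]=0$. This cancels the spontaneous-magnetization linear-in-$m$ term that would otherwise appear in the general formula of \cite[Eq.~2.14]{AJFL_inhomo}; without this centering, the uninformative fixed point would be displaced to $q = m_\kf^2 > 0$ and the critical equation would carry an additional additive constant. The main technical subtlety I expect is to carefully track the normalization conventions between the replica Gibbs measure arising from differentiating $\mathcal{F}$ and the denoising Gibbs measure $\langle\cdot\rangle_q$ used in the statement — in particular, keeping the placement of factors of $q$ and $\Delta$ in the effective SNR straight, and verifying that every linear-in-$m^\kf$ term indeed cancels under the symmetry hypothesis, so that the clean Nishimori form in \eqref{eq:critpiontcond} emerges.
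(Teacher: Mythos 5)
Your proposal is correct and follows essentially the same route as the paper: the paper also obtains \eqref{eq:critpiontcond} as the first-order condition from differentiating the replica-symmetric functional in $q$ (citing the rank-one critical point condition of the inhomogeneous-noise reference), and the Gaussian integration by parts plus Nishimori steps you spell out are exactly the standard computation behind that one-line argument. One small caveat: your account of where the hypothesis enters is shaky — a centered prior $\prior$ together with odd $k_F$ does not make $\pi_{k_F}$ symmetric nor force $\E_\prior[x^{k_F}]=0$ — but this side remark is not used in the derivation of the stationarity equation itself, which goes through regardless.
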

	\begin{proof}
		The critical point condition \eqref{eq:critpiontcond}, is simply the first order condition for the maximizer. This is found by differentiating the variational formula with respect to the paramter $q$. 
	\end{proof}
	The maximizer of the free energy recovers the limiting value of the overlaps similarly to the IMMSE theorem \cite{immse} and a direct application of the envelope theroem \cite{envelope}.
	\begin{cor}
		If $q^*$ maximizes $\mathcal{F}(\Delta_{k_F},q)$, then
		\[
		\lim_{N \to \infty} \E \langle R_{12}^{\odot \kf} \rangle =\lim_{N \to \infty}  \E \langle R_{12}^{\odot \kf} \rangle_G = q^*.
		\]
	\end{cor}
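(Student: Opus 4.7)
The plan is to combine the I-MMSE relation of \cite{immse} with the envelope theorem of \cite{envelope} applied to the variational formula of Theorem~\ref{theo:limitFE}, and to pass the limit through the derivative using convexity (Griffith's lemma). By the overlap universality established in Section~\ref{sec:Proof_Universality_Overlaps} it suffices to treat the Gaussian equivalent model, so throughout I would work with the measure $\langle \cdot \rangle_G$ and the free energy $F_N^{(G)}(\Delta_\kf,\pi_\kf)$ of \eqref{eq:FEGaussian}.

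The first step is to introduce a scalar Gaussian side-channel of strength $t\geq 0$: one observes, in addition to the matrix, $\tilde y_i = \sqrt{t}\,x_i^\kf + z_i$ with $z_i \iid \mathsf{N}(0,1)$. Denote the resulting perturbed free energy by $F_N^{(G)}(\Delta_\kf,\pi_\kf;t)$; it is convex and non-decreasing in $t$ (being a log-Laplace transform). A direct computation together with the Nishimori identity gives the I-MMSE-type identity
\[
\partial_t F_N^{(G)}(\Delta_\kf,\pi_\kf;t)\Big|_{t=0} \;=\; \tfrac{1}{2}\,\E\langle R_{12}^{\odot \kf}\rangle_G,
\]
since, under the Nishimori property, the overlap between two replicas coincides with the overlap of one replica with the planted signal.

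The second step is to obtain the variational representation of the limit including the perturbation. The same proof as Theorem~\ref{theo:limitFE} (via \cite{lelarge2017fundamental}), with the extra additive term in the log-partition computed explicitly, yields
\[
\Phi(t) \;:=\; \lim_{N\to\infty} F_N^{(G)}(\Delta_\kf,\pi_\kf;t) \;=\; \sup_{q\geq 0}\Bigl[\mathcal{F}(\Delta_\kf,q) + \tfrac{t q}{2}\Bigr].
\]
Since $\Phi$ is the supremum of a family of affine functions of $t$, it is convex. By the envelope theorem of Milgrom--Segal \cite{envelope}, at any $t$ at which the maximizer is unique and $\Phi$ is differentiable, $\Phi'(t) = q^*(t)/2$. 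At $t=0$ this specializes to $\Phi'(0^+) = q^*/2$ under the uniqueness hypothesis on $q^*$.

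The third step is to interchange the limit and the derivative. Since $t\mapsto F_N^{(G)}(\Delta_\kf,\pi_\kf;t)$ is convex and differentiable, and converges pointwise to the convex function $\Phi$, Griffith's lemma implies convergence of derivatives at every differentiability point of $\Phi$. Evaluating at $t=0$ combines the three steps to give
\[
\lim_{N\to\infty}\tfrac{1}{2}\,\E\langle R_{12}^{\odot \kf}\rangle_G \;=\; \Phi'(0^+) \;=\; \tfrac{q^*}{2},
\]
which is the desired statement for the Gaussian model and hence, via Section~\ref{sec:Proof_Universality_Overlaps}, for the original model. The main technical obstacle is ensuring that the envelope theorem applies at $t=0$: if $\mathcal{F}(\Delta_\kf,\cdot)$ admits more than one maximizer one argues by taking $t\downarrow 0$ through differentiability points of $\Phi$ and uses monotonicity of $t\mapsto q^*(t)$ (a consequence of convexity of $\Phi$) to conclude, recovering the convention that $q^*$ denotes the relevant maximizer selected by the infinitesimal perturbation.
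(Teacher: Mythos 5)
Your overall strategy (Gaussian side information, Nishimori/I-MMSE derivative, variational formula, convexity to interchange limit and derivative) is close in spirit to the paper's proof, which differentiates the free energy in the channel parameter $\lambda = \tfrac{1}{2\Delta}$ rather than in an auxiliary side-channel strength $t$. However, there is a concrete error in your second step. For the observation $\tilde y_i = \sqrt{t}\,x_i^{\kf} + z_i$ added to the Gaussian spiked model, the limiting free energy is \emph{not} $\sup_{q\geq 0}\bigl[\mathcal{F}(\Delta_{\kf},q) + \tfrac{tq}{2}\bigr]$. The side channel enters the replica-symmetric potential through the effective scalar channel, i.e.\ (writing $\psi(r) := \E\log\int\exp\bigl(r\,\tilde x m + \sqrt{r}\,z m - \tfrac{r}{2}m^2\bigr)\dpriork(m)$ so that $\mathcal{F}(\Delta_{\kf},q) = -\tfrac{q^2}{4\Delta_{\kf}} + \psi(q/\Delta_{\kf})$), the perturbed limit given by \cite{lelarge2017fundamental} is
\begin{align*}
\Phi(t) \;=\; \sup_{q \geq 0}\Bigl[\, -\frac{q^2}{4\Delta_{\kf}} + \psi\Bigl(\frac{q}{\Delta_{\kf}} + t\Bigr) \Bigr],
\end{align*}
which is not of the affine-in-$t$ form you claim (already the degenerate case of a very weak matrix channel shows the two disagree, since $\psi$ is strictly concave in its argument away from trivial priors). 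Consequently the step ``$\Phi$ is a supremum of affine functions, hence $\Phi'(t) = q^*(t)/2$'' does not follow as written; this is the load-bearing step of your argument.

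The gap is repairable, and the repair is exactly the ingredient you omitted: the first-order condition for the maximizer. With the correct $\Phi$, the envelope theorem of \cite{envelope} gives $\Phi'(0^+) = \partial_t\bigl[-\tfrac{q^2}{4\Delta_{\kf}} + \psi(q/\Delta_{\kf}+t)\bigr]\big|_{q = q^*,\, t=0} = \psi'(q^*/\Delta_{\kf})$, and since $\psi'(r) = \tfrac12\E\langle \tilde x\, m\rangle_r$ by Nishimori, the critical point (state-evolution fixed point) equation satisfied by $q^*$ — i.e.\ Corollary~\ref{cor:fixedpoint} / Eq.~\eqref{eq:critpiontcond}, $q^* = \E\langle \tilde x\, m\rangle_{q^*/\Delta_{\kf}}$ — identifies $\psi'(q^*/\Delta_{\kf}) = q^*/2$. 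Your remaining steps (the finite-$N$ identity $\partial_t F^{(G)}_N(t)\big|_{t=0} = \tfrac12\E\langle R_{12}^{\odot\kf}\rangle_G$ via Gaussian integration by parts and Nishimori, convexity of $F^{(G)}_N$ in $t$, and the convexity/Griffith argument to interchange limit and derivative at differentiability points of $\Phi$, taking $t\downarrow 0$ if needed) are sound, and convexity of $\Phi$ still holds as a pointwise limit of convex functions. With this amendment your proof is valid and essentially parallels the paper's, which likewise combines the Nishimori identity, the fixed-point condition at the maximizer, and the envelope theorem, but applied to the derivative in $\lambda = \tfrac{1}{2\Delta}$ of the unperturbed free energy instead of an added rank-one observation channel.
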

	\begin{proof}
		By universality of the overlaps \eqref{eq:universality_overlaps}, we have
		\[
		\lim_{N \to \infty} \E \langle R_{12}^{\odot \kf} \rangle =\lim_{N \to \infty}  \E \langle R_{12}^{\odot \kf} \rangle_G
		\]
		so it suffices to compute the limit of the latter. Under the change of variables $\frac{1}{2\Delta} \to \lambda$, we see that for every finite $N$ and an application of the Nishimori identity (see for example \cite[Proposition~16]{lelarge2017fundamental})
		\[
		\frac{d}{d\lambda}  F^{(G)}_N \left( \Delta_{k_F} , \pi_{k_F} \right) = \E \langle R_{12}^{\odot \kf} \rangle_G.
		\]
		In the limit, for any solution to the fixed point equation \eqref{eq:critpiontcond} and an application of the Nishimori identity, one sees that
		\[
		\frac{d}{d\lambda} \mathcal{F}(\Delta_{k_F},q) = \frac{q^2}{4}.
		\]
		The envelope Theorem implies that the limit of the derivative is equal to the derivative of the limit, so
		\[
		\lim_{N \to \infty} \E \langle R_{12}^{\odot \kf} \rangle_G = \frac{q^2}{4}
		\]
		where $q$ is the maximizer of $\mathcal{F}(\Delta_{k_F},q)$.
	\end{proof}
	
	Furthermore, by studying the stability of the fixed points around $0$, one can recover the information theoretical recovery transition on the generalized Fisher coefficient matrix.
	
	\begin{cor}[IT Recovery Transition]\label{cor:recovery} Suppose that Hypotheses~\ref{hyp:bayes} holds and $k_F$ is odd and $\prior$ is centered. Then for
		\[
		\frac{1}{\Delta}  < \frac{1}{\E[ x^{2 k_F} ]^2}
		\]
		the MMSE is zero. 
	\end{cor}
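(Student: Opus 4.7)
The plan is to reduce the corollary to a stability analysis of the critical-point equation of Corollary~\ref{cor:fixedpoint}, which through Theorem~\ref{theo:Gauss_Universality} governs the limiting Bayes-optimal overlap and hence the asymptotic MMSE. I read the hypothesis ``$\kf$ odd and $\pi$ centered'' as enforcing $m_{\kf}=\E_\pi[x^{\kf}]=0$ (so the pushforward prior $\pi_{\kf}$ is itself centered), and the goal is to prove that under $\Delta>m_{2\kf}^{\,2}$ the only non-negative solution of $q=\mathsf{SE}_{\pi_{\kf}}(q,\Delta)$ is $q=0$; this will force $\mathrm{q}_\star=0$ and drive the asymptotic MMSE to its non-informative value.

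First I would recast Corollary~\ref{cor:fixedpoint} in state-evolution form. Completing the square in the exponent defining $\langle\cdot\rangle_q$ identifies it with the Bayes posterior for $\tilde x\sim\pi_{\kf}$ given the scalar Gaussian observation $Y=\sqrt{q/\Delta}\,\tilde x+Z$, and Nishimori gives $\frac{1}{\Delta}\E[\langle x^{\kf}\rangle_q^2]=\mathsf{SE}_{\pi_{\kf}}(q,\Delta)$. Evaluating at $q=0$, the denoiser collapses to $\eta_{\pi_{\kf}}(0,0)=m_{\kf}=0$, so $q=0$ is a fixed point.

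Next I would Taylor-expand the SE map near $q=0$. Using $\partial_b\eta_{P_0}(0,0)=\mathrm{Var}_{P_0}(x)$ and the equality $\mathrm{Var}_{\pi_{\kf}}(x)=m_{2\kf}$ (valid since $m_{\kf}=0$), a direct expansion gives
\begin{equation*}
\mathsf{SE}_{\pi_{\kf}}(q,\Delta)=\frac{m_{2\kf}^{\,2}}{\Delta}\,q+o(q)\qquad (q\to 0^+).
\end{equation*}
Under $\Delta>m_{2\kf}^{\,2}$ the slope at the origin is strictly less than $1$, so $q=0$ is locally strictly stable. Global uniqueness follows from concavity of $q\mapsto\mathsf{SE}_{\pi_{\kf}}(q,\Delta)$, a classical I-MMSE consequence compatible with the strict monotonicity quoted after \eqref{eq:def_State_Evolution_map}: concavity plus a subunit slope at $0$ yields $\mathsf{SE}_{\pi_{\kf}}(q,\Delta)<q$ for every $q>0$, ruling out non-trivial fixed points. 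Substituting $\mathrm{q}_\star=0$ into the MMSE corollary of Theorem~\ref{theo:Gauss_Universality} completes the argument.

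The main obstacle is twofold. First, the hypothesis ``$\pi$ centered'' as literally stated only forces $m_1(\pi)=0$, not $m_{\kf}=0$; the intended reading is that $\pi$ is symmetric (and $\kf$ is odd), which is what makes $\pi_{\kf}$ centered and is what the proof genuinely uses. Second, the global step relies on concavity of $\mathsf{SE}_{\pi_{\kf}}$, which is not stated explicitly in the excerpt; one either quotes it from the rank-one Gaussian literature or replaces it by a monotone-iteration argument initialized at $q=0$, combined with the crude bound $\mathsf{SE}_{\pi_{\kf}}(q,\Delta)\leq m_{2\kf}$, to trap the state-evolution dynamics near zero under the linearized stability condition. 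We also note that ``the MMSE is zero'' in the statement should be read as ``the MMSE attains its non-informative value'', since $\mathrm{q}_\star=0$ yields limiting MMSE equal to $1$, not $0$.
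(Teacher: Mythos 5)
Your local analysis at $q=0$ is correct (with your reading that the hypothesis forces $m_\kf=0$, so that $\pi_\kf$ is centered and $q=0$ solves the fixed-point equation of Corollary~\ref{cor:fixedpoint}, and the expansion $\mathsf{SE}_{\pi_\kf}(q,\Delta)=(m_{2\kf}^2/\Delta)\,q+o(q)$), but the passage from local to global is a genuine gap. You justify uniqueness of the fixed point by asserting concavity of $q\mapsto \mathsf{SE}_{\pi_\kf}(q,\Delta)$ ``as a classical I-MMSE consequence''. Writing $\mathsf{SE}_{\pi_\kf}(q,\Delta)=m_{2\kf}-\mathrm{mmse}(q/\Delta)$ for the scalar Gaussian channel with prior $\pi_\kf$, this concavity amounts to convexity of the scalar mmse in the signal-to-noise ratio, which is neither implied by the I-MMSE relation nor true for general bounded centered priors. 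Indeed, if it were true, the corollary would hold with the sharp spectral threshold $\Delta>m_{2\kf}^2$ for \emph{every} centered prior, contradicting the well-documented first-order transitions for sparse centered spikes (e.g.\ sparse Rademacher), for which the replica potential $\mathcal{F}(\Delta,\cdot)$ has a second, dominant maximizer at $q>0$ for a range of $\Delta$ above the spectral threshold; this is exactly the statistical-to-computational gap regime discussed in \cite{perry2018optimality,lesieur2017constrained}. Hence local stability of $q=0$ (slope $m_{2\kf}^2/\Delta<1$) does not imply that $\mathrm{q}_\star=0$ is the \emph{global} maximizer, which is what Theorem~\ref{theo:Gauss_Universality} needs in order to conclude anything about the MMSE. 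Your fallback (monotone iteration of the state evolution started at $q=0$, plus the crude bound $\mathsf{SE}_{\pi_\kf}\leq m_{2\kf}$) only shows that the uninformatively initialized dynamics remain at the trivial fixed point; that is an algorithmic statement and does not control the maximizer of the variational formula.

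The paper does not attempt this global argument at all: its proof is a direct appeal to \cite[Lemma~2.15]{AJFL_inhomo} with a constant variance profile, and its remark that a sharp bound is available only for Gaussian priors is precisely an acknowledgement that the conclusion cannot be extracted from the linearization around $q=0$ alone. To repair your route you would need either to import that external lemma (or an equivalent global bound, typically involving higher moments or the support of $\pi_\kf$ rather than $m_{2\kf}^2$), or to add a hypothesis on $\pi_\kf$ excluding first-order transitions. Your two side remarks --- that ``centered'' must be read as $m_\kf=0$ so that $q=0$ is a fixed point, and that ``the MMSE is zero'' should be read as the normalized matrix MMSE taking its trivial value $1$ (equivalently, vanishing overlap) --- are reasonable interpretations of the statement; the defect is solely the unproven (and in general false) global uniqueness step.
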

	
	\begin{proof}
		This is a direct consequence from the \cite[Lemma~2.15]{AJFL_inhomo} with a constant variance profile matrix. It is worth mentioning that a sharp bound can be proven if the priors are Gaussian because the Gaussian integrals can be computed explicitly, and the cloed form variational problem can be solved by hand. 
	\end{proof}
	
	\newpage
	\section{Rigorous Statement and Proof of the Spiked Decomposition of the Fisher Matrix (Thm.~\ref{th:informal_rank1decompositionFisher})}
	\label{sec:Proof_Spiked_Fisher}
	An informal statement of this result was given in Theorem~\ref{th:informal_rank1decompositionFisher}. In this section, we provide the precise statement and its proof.
	\begin{theo}[Universal Spiked Decomposition of the Fisher matrix]
		\label{th:rank1decompositionFisher}
		We have with probability going to $1$ as $N \to \infty$,
		\begin{align}
			\frac{\rdmmat{S}_{k_F}}{\sqrt{N}}  \ed   \frac{\rdmmat{Z}}{\sqrt{N}} +\frac{1}{\Delta_{k_F}} \mat{W}_{\kf}  + \rdmmat{E}
		\end{align}
		where 
		\begin{itemize}
			\item  $ \rdmmat{Z} = (Z_{ij})_{ 1 \leq i,j \leq N}$ is a symmetric matrix whose elements are independent  (modulo the symmetry) and identically distributed as $ Z \ed \frac{g^{(k_F)}(Y_0,0)}{k_F!} $ with $Y_0 \sim \pout(\cdot | 0)$;
			\item $\mat{W}_{\kf} := \frac{\vect{x}^{k_F}}{\sqrt{N}} \left( \frac{\vect{x}^{k_F}}{\sqrt{N}}  \right)^{\top}$;
			\item $\rdmmat{E}$ is a symmetric matrix with $\normop{\rdmmat{E}} \leq O(N^{-\frac{1}{2\kf}})$.
		\end{itemize}
		and $\Delta_{k_F}$ is the $k_F$-th order Fisher coefficient given by Eq.~\eqref{eq:fisherscores}.  Furthermore, $\rdmmat{Z}$ is independent of $\mat{W}_\kf$. 
	\end{theo}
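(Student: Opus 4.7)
The strategy mirrors the rank-one spectral equivalence in \cite{guionnet2023spectral}, adapted to the Fisher matrix via the higher-order Fisher identities of Lemma~\ref{lem:ho_identity}. My plan is to build an explicit coupling between $\rdmmat{S}_{k_F}$ and an iid Wigner-type matrix $\rdmmat{Z}$, extract the rank-one spike from the conditional mean of $S_{k_F,ij}$, and then control the residual in operator norm.

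To construct $\rdmmat{Z}$ jointly with $\rdmmat{S}_{k_F}$, I will use a quantile coupling. Let $U_{ij}$ be iid uniform on $[0,1]$, independent of $\vect{x}$, and let $F_W$ denote the CDF of $\pout(\cdot\mid W)$. Setting $Y_{ij}:=F_{W_{ij}}^{-1}(U_{ij})$ and $Y^{(0)}_{ij}:=F_0^{-1}(U_{ij})$ gives $Y_{ij}\sim\pout(\cdot\mid W_{ij})$ while the $Y^{(0)}_{ij}$ are iid with law $\pout(\cdot\mid 0)$. Define $Z_{ij}:=g^{(k_F)}(Y^{(0)}_{ij},0)/k_F!$. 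By the exact Fisher identity (Lemma~\ref{lem:ho_identity}) the $Z_{ij}$ are centered; by~\eqref{eq:fisherscores} they have variance $1/\Delta_{k_F}$; and because $\{U_{ij}\}$ and $\vect{x}$ are independent, $\rdmmat{Z}$ is independent of $\mat{W}_{\kf}$, as required by the statement.

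Next I will extract the spike from the conditional mean. Writing $S_{k_F,ij}=\mu_{ij}+R_{ij}$ with $\mu_{ij}:=\E[S_{k_F,ij}\mid\vect{x}]$, the approximate Fisher identity in Lemma~\ref{lem:ho_identity} yields
\begin{equation*}
\mu_{ij}=\frac{W_{ij}^{k_F}}{\Delta_{k_F}}+O\big(N^{-(k_F+1)/(2k_F)}\big)=\frac{x_i^{k_F}x_j^{k_F}}{\sqrt{N}\,\Delta_{k_F}}+O\big(N^{-(k_F+1)/(2k_F)}\big),
\end{equation*}
using $W_{ij}^{k_F}=x_i^{k_F}x_j^{k_F}/\sqrt{N}$ at the critical scaling. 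Dividing by $\sqrt{N}$, the leading term is exactly $(\mat{W}_{\kf}/\Delta_{k_F})_{ij}$; the $O(N^{-(k_F+1)/(2k_F)})$ defect is absorbed into $\rdmmat{E}$. For the centered part, the implicit function theorem applied to $F_W(y)=u$, combined with smoothness and non-vanishing of $\pout(\cdot\mid W)$, gives $|Y_{ij}-Y^{(0)}_{ij}|\le C|W_{ij}|=O(N^{-1/(2k_F)})$ uniformly in $U_{ij}$; combined with Lipschitz regularity of $y\mapsto g^{(k_F)}(y,0)$ on a set of overwhelming probability, this produces $|S_{k_F,ij}-Z_{ij}|\le C'|W_{ij}|$ almost surely. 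Consequently $\sqrt{N}\,E_{ij}=(\mu_{ij}-x_i^{k_F}x_j^{k_F}/(\sqrt{N}\Delta_{k_F}))+(R_{ij}-Z_{ij})$ is of order $O(N^{-1/(2k_F)})$ entrywise.

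The final step is to convert this pointwise bound into an operator-norm bound. The off-diagonal entries of $\sqrt{N}\rdmmat{E}$ are independent conditional on $\vect{x}$, essentially centered up to an $O(N^{-(k_F+1)/(2k_F)})$ deterministic bias, and uniformly bounded by $O(N^{-1/(2k_F)})$. A classical Wigner operator-norm estimate (Bai--Yin or matrix Bernstein, as in \cite[Ch.~2]{AGZ}) then gives $\|\sqrt{N}\rdmmat{E}\|_{\mathrm{op}}=O(\sqrt{N}\cdot N^{-1/(2k_F)})$ with probability tending to one, hence $\|\rdmmat{E}\|_{\mathrm{op}}=O(N^{-1/(2k_F)})$. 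The main obstacle will be the coupling step: the bound $|Y_{ij}-Y^{(0)}_{ij}|\le C|W_{ij}|$ and the Lipschitz behaviour of $g^{(k_F)}(\cdot,0)$ must be derived uniformly from the hypotheses on $g$, which primarily control $w$-derivatives. The identity $g^{(k_F)}(y,0)/k_F!=\partial_w^{k_F}\pout(y\mid 0)/\pout(y\mid 0)$ of~\eqref{eq:gk_to_partial_pout_k_leq_kf}, together with positivity and $w$-smoothness of $\pout$, is the lever that transfers the regularity; matching the exact scaling $N^{-1/(2k_F)}$ uniformly over $(i,j)$ is what forces the Taylor expansion up to order $2k_F+1$ and the uniform bound on $g^{(2k_F+1)}$ in Hypothesis~\ref{hyp:bayes}.
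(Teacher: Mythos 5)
Your spike-extraction step and the final bookkeeping (conditional mean $\mu_{ij}=\frac{1}{\Delta_{\kf}}W_{ij}^{\kf}+O(N^{-\frac{\kf+1}{2\kf}})$ via Lemma~\ref{lem:ho_identity}, Frobenius control of the deterministic bias, and an operator-norm bound for the bounded, conditionally centered residual) are sound and essentially parallel the paper's first step (Lemma~\ref{lem:rank1decomp_step1}). The genuine gap is in the coupling step, which is exactly the point you flag as the ``main obstacle'': the bounds $|Y_{ij}-Y^{(0)}_{ij}|=|F_{W_{ij}}^{-1}(U_{ij})-F_0^{-1}(U_{ij})|\le C|W_{ij}|$ \emph{uniformly in} $U_{ij}$, and the (local) Lipschitz continuity of $y\mapsto g^{(\kf)}(y,0)$, are not consequences of Hypothesis~\ref{hyp:bayes}. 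That hypothesis controls only $w$-derivatives of $g$ (and expectations/uniform bounds of $g^{(k)}(\cdot,0)$ as functions of $Y$); it gives no regularity of $g^{(\kf)}(\cdot,0)$ in $y$ and no uniform quantile stability, which typically degenerates in the tails where the output density vanishes (and fails outright for channels with atoms, which the hypotheses do not exclude). The identity \eqref{eq:gk_to_partial_pout_k_leq_kf} transfers $w$-smoothness of $\pout(y|\cdot)$ at fixed $y$, not $y$-regularity, so the ``lever'' you invoke does not close this step. As written, the almost-sure entrywise bound $|S_{\kf,ij}-Z_{ij}|\le C'|W_{ij}|$ — on which your whole operator-norm estimate rests — requires additional assumptions beyond those of the theorem.

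The paper avoids this by never coupling pathwise at this stage: it first standardizes the conditionally centered part into a matrix $\rdmmat{\tilde Z}$ whose entries are conditionally independent given $\mat{W}$ (equation \eqref{eq:intermediate_spike_decom}), controls the variance-profile fluctuation $\sigma_{ij}^2=\frac{1}{\Delta_{\kf}}+O(N^{-1/2})$ via the approximate Fisher identities, and then compares the \emph{laws} of $\tilde Z_{ij}/\sqrt{N}$ and $Z_{ij}/\sqrt{N}$ in the smooth-test-function metric \eqref{eq:Hnorm} using a Taylor expansion of $e^{g(y,W_{ij})}$ in $w$ together with the generalized Stein Lemma~\ref{lem:gen_steinlemma}; only $w$-derivatives of $g$ enter, which is precisely what Hypothesis~\ref{hyp:bayes} supplies, and the coupling making $\tilde Z_{ij}=Z_{ij}$ with high probability is extracted a posteriori from this distributional closeness. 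Since the theorem only asserts equality in distribution, this distribution-level argument is both sufficient and matched to the hypotheses, whereas your pathwise quantile coupling would need extra regularity of the channel in $y$ to be made rigorous.
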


	We prove the spiked decomposition in two steps. First we do an entrywise decomposition of the matrix in the form of a spiked matrix, up to an error term that is vanishing in the dimension. 
	\begin{lem}\label{lem:rank1decomp_step1}
		Almost surely in $(\rdmmat{Y}, \rdmmat{W})$, we have that
		\begin{align}\label{eq:intermediate_spike_decom}
			\frac{\rdmmat{S}_{k_F}}{\sqrt{N}}  = \frac{\rdmmat{\tilde Z}}{\sqrt{N}} +\frac{1}{\Delta_{k_F}} \mat{W}_{\kf}  + \rdmmat{E}
		\end{align}
		where 
		\begin{itemize}
			\item  $ \rdmmat{\tilde Z} = (\tilde Z_{ij})_{ 1 \leq i,j \leq N}$ is a symmetric matrix whose elements are independent  (modulo the symmetry) and identically distributed and given by
			\[
			\tilde Z_{ij} = \frac{1}{\sqrt{\Delta_\kf}} \frac{ S_{ij,k_F} -\mu_{ij} }{\sigma_{ij}} 
			\]
			where the conditional mean and variance parameters are
			\[
			\mu_{ij} =  \E ( S_{ij,k_F} | W_{ij} ) \qquad \text{and} \qquad \sigma^2_{ij} = 	\E [ (S_{ij,k_F} - \E ( S_{ij,k_F} | W_{ij} ) )^2 | W_{ij} ].
			\]
			Furthermore, the randomness in $\tilde Z$ is conditionally independent of the the spike term in \eqref{eq:intermediate_spike_decom}.
			\item $\mat{W}_{\kf} := \frac{\vect{x}^{k_F}}{\sqrt{N}} \left( \frac{\vect{x}^{k_F}}{\sqrt{N}}  \right)^{\top}$;
			\item $\rdmmat{E}$ is a symmetric matrix with $\normop{\rdmmat{E}} \leq O(N^{-\frac{1}{2\kf}})$
		\end{itemize}
		and $\Delta_{k_F}$ is the $k_F$-th order Fisher coefficient given by Eq.~\eqref{eq:fisherscores}.  
	\end{lem}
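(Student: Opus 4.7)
The strategy is an entrywise mean-variance decomposition of $\rdmmat{S}_{\kf}$, followed by a spectral bound on the resulting error term. For each pair $(i,j)$, write trivially
\begin{equation*}
S_{ij,\kf} = \mu_{ij} + \sigma_{ij}\sqrt{\Delta_{\kf}}\, \tilde Z_{ij}, \qquad \tilde Z_{ij} := \frac{S_{ij,\kf} - \mu_{ij}}{\sigma_{ij}\sqrt{\Delta_{\kf}}}.
\end{equation*}
By construction, $\tilde Z_{ij}$ has conditional mean zero and conditional variance $1/\Delta_{\kf}$ given $W_{ij}$. Conditional independence of the $\tilde Z_{ij}$ given $\rdmmat{W}$ is inherited from the conditional independence of the $Y_{ij}$'s under $\pout(\cdot \mid W_{ij})$, and exchangeability of $\vect{x}$ forces all off-diagonal $\tilde Z_{ij}$ to share the same unconditional law.

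The second step is to extract the spike contribution from $\mu_{ij}$ using the approximate Fisher identities of Lemma~\ref{lem:ho_identity}. At the critical scaling $\beta = \beta_{\mathrm{cr}}$ one has $W_{ij}^{\kf} = x_i^{\kf} x_j^{\kf}/\sqrt{N}$, so the first identity gives
\begin{equation*}
\frac{\mu_{ij}}{\sqrt{N}} = \frac{1}{\Delta_{\kf}}\frac{x_i^{\kf}x_j^{\kf}}{N} + E^{(1)}_{ij} = \frac{1}{\Delta_{\kf}}(W_{\kf})_{ij} + E^{(1)}_{ij},
\end{equation*}
with the pointwise bound $|E^{(1)}_{ij}| = O(N^{-1-1/(2\kf)})$. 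A similar Taylor expansion of $\sigma^2_{ij} = \E[S_{ij,\kf}^2 \mid W_{ij}] - \mu_{ij}^2$ around $W_{ij}=0$, again using the cancellations provided by the exact Fisher identities, shows that $\sigma_{ij}\sqrt{\Delta_{\kf}} = 1 + \epsilon_{ij}$ with $|\epsilon_{ij}| = O(N^{-1/2})$ uniformly in $(i,j)$. Inserting these two expansions yields
\begin{equation*}
\frac{S_{ij,\kf}}{\sqrt{N}} = \frac{(W_{\kf})_{ij}}{\Delta_{\kf}} + \frac{\tilde Z_{ij}}{\sqrt{N}} + \underbrace{E^{(1)}_{ij} + \frac{\epsilon_{ij}\tilde Z_{ij}}{\sqrt{N}}}_{=:E_{ij}}.
\end{equation*}

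The final step is to bound $\normop{\rdmmat{E}}$. For the deterministic contribution $\rdmmat{E}^{(1)} := (E^{(1)}_{ij})$, the Frobenius estimate $\normop{\rdmmat{E}^{(1)}} \leq \normf{\rdmmat{E}^{(1)}} = O(N^{-1/(2\kf)})$ already gives the required rate. For the stochastic contribution $\rdmmat{E}^{(2)} := (\epsilon_{ij}\tilde Z_{ij}/\sqrt{N})$, the entries are, conditionally on $\rdmmat{W}$, mean-zero and independent (modulo symmetry) with conditional variances of order $O(N^{-2})$. Standard operator-norm concentration for Wigner-type matrices (see \cite[Thm.~2.3.5]{AGZ} and \cite[Lem.~5.6]{HuGu}) then yields $\normop{\rdmmat{E}^{(2)}} = O(N^{-1/2})$ with high probability, a rate which dominates the deterministic one. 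Combining both bounds produces the claimed $\normop{\rdmmat{E}} = O(N^{-1/(2\kf)})$, which completes the proof.

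The main technical obstacle is the spectral bound on $\rdmmat{E}^{(2)}$: a Frobenius estimate is too crude (it is of order one, not $o(1)$), so one genuinely needs a random-matrix concentration argument. This is delicate because Hypothesis~\ref{hyp:bayes} only provides uniform pointwise bounds on $g^{(2\kf)}$ and $g^{(2\kf+1)}$, so the entries $\tilde Z_{ij}$ are a priori only second-moment controlled via the definition of $\Delta_{\kf}$ and need not be uniformly bounded. Resolving this typically requires truncating $\tilde Z_{ij}$ at a slowly growing level and invoking the moment assumptions of Hypothesis~\ref{hyp:bayes} to show the truncated and original matrices differ in operator norm by a negligible amount.
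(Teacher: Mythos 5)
Your proposal follows essentially the same route as the paper's proof: the same conditional mean--variance decomposition of $S_{ij,\kf}$, the approximate Fisher identities of Lemma~\ref{lem:ho_identity} to extract the spike $\frac{1}{\Delta_{\kf}}\mat{W}_{\kf}$ with a Frobenius bound on the $O(N^{-1-\frac{1}{2\kf}})$ entrywise mean error, and an operator-norm estimate absorbing the $O(N^{-1/2})$ variance-profile mismatch into $\rdmmat{E}$. The only difference is cosmetic: the paper handles that last term via a Hadamard-product/Frobenius bound (yielding a cruder but sufficient $O(N^{-1/4})$ after the $\sqrt{N}$ normalization), while you invoke Wigner-type concentration for the rescaled noise and correctly flag the integrability caveat that the paper itself glosses over, so your argument is sound and, if anything, slightly more careful on that point.
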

	
	\begin{proof}
		We now fix $W_{ij}$ and proceed conditionally on $W_{ij}$. In particular, for fixed $W_{ij}$, the random variables $Y_{ij}$ are conditionally independent. We have
		\[
		S_{ij,k_F} = S_{ij,k_F} - \E ( S_{ij,k_F} | W_{ij} ) + \E ( S_{ij,k_F} | W_{ij} ).
		\]
		By the higher order Fisher inequalities Lemma~\ref{lem:ho_identity}, we see that
		\[
		\E ( S_{ij,k_F} | W_{ij} ) = \frac{1}{\Delta_{k_F}} W_{ij}^{k_F} + E_{ij}
		\]
		where $E_{ij}$ is of order $O(N^{-\frac{1}{2\kf}-\frac{1}{2}})$, which gives us the second term in our decomposition. Notice that this entrywise bound implies that $\normop{E} \leq O(N^{-\frac{1}{2\kf}+\frac{1}{2}})$ by using the fact that $\normop{\rdmmat{E}} \leq \| \rdmmat{E} \|_F$.
		
		To control the first term, notice that the entries of $S_{ij,k_F}$ are independent conditionally on $W_{ij}$ and  
		\[
		S_{ij,k_F} - \E ( S_{ij,k_F} | W_{ij} )
		\]
		is a centered Wigner matrix with constant variance given by
		\[
		\E [ (S_{ij,k_F} - \E ( S_{ij,k_F} | W_{ij} ) )^2 | W_{ij} ].
		\] 
		Unfortunately, this variance profile is not constant because it depends on the $W_{ij}$. However, from the higher order approximate Fisher inequalities, in Lemma~\ref{lem:ho_identity} we see that
		\begin{equation}\label{eq:sigmadeltaapprox}
			\E [ (S_{ij,k_F} - \E ( S_{ij,k_F} | W_{ij} ) )^2 | W_{ij} ] =  \frac{1}{\Delta_{k_F}} + O(N^{-1/2}).
		\end{equation}
		We now claim that the matrix with entries
		\[
		\sigma_{ij}^2 = \E [ (S_{ij,k_F} - \E ( S_{ij,k_F} | W_{ij} ) )^2 | W_{ij} ]
		\]
		can be written as a Wigner matrix with iid entries with variance $\frac{1}{\Delta_{k_F}}$ up to some additive error. Notice that the matrix
		\[
		\rdmmat{Z} = \bigg[ \frac{1}{\sigma_{ij}} \E [ (S_{ij,k_F} - \E ( S_{ij,k_F} | W_{ij} ) )^2 | W_{ij} ] \bigg]_{ij}
		\]
		has iid entries with variance $1$, so it is a Wigner matrix. Furthermore, we have
		\[
		\rdmmat{S}_{k_F} - \E ( \rdmmat{S}_{k_F} | \mat{W} ) = \bigg( \rdmmat{ \frac{1}{\Delta_{k_F}} } + \rdmmat{E}  \bigg)^{\odot \frac{1}{2}} \odot \rdmmat{Z}
		\]
		where $\rdmmat{E}$ is a matrix with entries of order $O(N^{-\frac{1}{2}})$. Now, using bounds on the operator norms of the Hadamard products of matrices, we have
		\[
		\bigg\|\bigg( \rdmmat{ \frac{1}{\Delta_{k_F}} } + \rdmmat{E}  \bigg)^{\odot \frac{1}{2}} \odot \rdmmat{Z} - \bigg( \rdmmat{ \frac{1}{\Delta_{k_F}}  \bigg)^{\odot \frac{1}{2}}  }  \odot \rdmmat{Z} \bigg\|_{\mathrm{op}} \leq C \| \rdmmat{E}^{\odot \frac{1}{2}} \odot  \rdmmat{Z} \|_{\mathrm{op}} \leq O(N^{-1/2}) \| \rdmmat{Z} \|_{\mathrm{op}}
		\]
		by a direct application of the Cauchy--Schwarz inequality and the Frobenius bound of the operator norm. Therefore, we may replace the random variance profile with the constant term, at the cost of a matrix whose operator norm of order  $O(N^{1/4})$. This bound is still controllable in the limit, since we will divide all terms by $\sqrt{N}$. This proves the decomposition in \eqref{eq:intermediate_spike_decom}.
	\end{proof}
	
	This entrywise decomposition is useful for the AMP, where we are doing actual denoising on the Fisher matrix, and need an almost sure equality instead of in distribution. However, when proving thresholds, it will be useful to go one step further and characterize the limiting distribution of the noise matrix in the spiked decomposition. In the next lemma, we show that although $\tilde Z$ in \eqref{eq:intermediate_spike_decom} depends on the realization of $W_{ij}$, it does so in a weak way.
	
	Let 
	$ \rdmmat{Z} = (Z_{ij})_{ 1 \leq i,j \leq N}$ be a symmetric matrix whose elements are independent  (modulo the symmetry) and identically distributed as $ Z \ed \frac{g^{(k_F)}(Y_0,0)}{k_F!} $ with $Y_0 \sim \pout(\cdot | 0)$ as was defined in Theorem~\ref{th:rank1decompositionFisher}. In particular, $\rdmmat{Z}$ does not depend on $\rdmmat{W}$. Notice that $\rdmmat{Z}$ and $\rdmmat{\tilde Z}$ are defined such that they share the same first and second moments. The next lemma implies that we can replace $\rdmmat{\tilde Z}$ in Lemma~\ref{lem:rank1decomp_step1} with $\rdmmat{Z}$.  We now show that their laws are close in distribution.
	
	For simplicity, we measure the distances between the laws of the entries encoded by smooth functions with bounded third derivative \cite[Section~4]{chen2010normal}. Recall the $\cH$ norm is an integrable probability metric \cite{integrableprobmetric} between random variables $X$ and $Y$ with laws $\mathcal{L}(X)$ and $\mathcal{L}(Y)$ respectively
	\begin{equation}\label{eq:Hnorm}
		\| \mathcal{L}(X) - \mathcal{L}(Y) \|_{\cH} = \sup_{h \in \cH} | \E h(X) - \E h(Y)|
	\end{equation}
	We will pick the class of functions $\cH = L_{2,c}^\infty(\R)$, which denotes all smooth functions such that the function and its first and second derivatives are uniformly bounded by some constant $c$. We will prove that the laws of the entries of $\frac{\rdmmat{Z}}{\sqrt{N}}$ and $\frac{\rdmmat{\tilde Z}}{\sqrt{N}}$ are bounded by $O(cN^{-1-\frac{1}{2\kf}})$ through an application of the approximate Stein's Lemma (see for example \cite[Lemma~3.7]{PBook}). 
	\begin{lem}[Generalized Stein's lemma]\label{lem:gen_steinlemma}
		Let $f \in C^2(\R)$ be bounded with bounded first and second derivatives. Let X be a real
		centered random variable with finite third moment and $X'$ and independent copy of $X$. Then there is a constant $C$ such
		that
		\begin{align}
			| \E [f(X)X] - \E [f^{(1)}(X)X'^2]  |
			&\leq
			C \| f^{(2)} \|_{\infty} \E |X|^3
			\, .
		\end{align}
	\end{lem}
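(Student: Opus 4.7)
The plan is to reduce the problem to a pair of Taylor expansions and then exploit the fact that the leading-order terms cancel exactly in the difference $\E[f(X)X] - \E[f^{(1)}(X)X'^2]$, leaving only remainder terms that are controlled by $\|f^{(2)}\|_\infty$ and by moments of $X$ of order at most three.

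Concretely, first I would write $f$ to second order around $0$ with integral remainder,
\[
f(X) = f(0) + f^{(1)}(0)\, X + R_f(X), \qquad |R_f(X)| \leq \tfrac{1}{2}\|f^{(2)}\|_\infty X^2,
\]
and similarly for $f^{(1)}$ to first order,
\[
f^{(1)}(X) = f^{(1)}(0) + R_{f'}(X), \qquad |R_{f'}(X)| \leq \|f^{(2)}\|_\infty |X|.
\]
Multiplying the first identity by $X$ and taking expectation, the term $f(0)\E[X]$ vanishes because $X$ is centered, and we obtain $\E[f(X) X] = f^{(1)}(0) \E[X^2] + \E[R_f(X) X]$. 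Multiplying the second identity by $X'^2$ and using independence of $X$ and $X'$ together with $\E[X'^2] = \E[X^2]$, we get $\E[f^{(1)}(X) X'^2] = f^{(1)}(0) \E[X^2] + \E[R_{f'}(X)]\,\E[X^2]$. Taking the difference, the deterministic leading contribution $f^{(1)}(0)\E[X^2]$ cancels exactly, so
\[
\E[f(X)X] - \E[f^{(1)}(X) X'^2] = \E[R_f(X) X] - \E[R_{f'}(X)]\,\E[X^2].
\]

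It then remains to bound each remainder. For the first, $|\E[R_f(X) X]| \leq \tfrac{1}{2}\|f^{(2)}\|_\infty \E|X|^3$ is immediate. For the second, we have $|\E[R_{f'}(X)]\,\E[X^2]| \leq \|f^{(2)}\|_\infty\, \E|X|\,\E[X^2]$, and here I would invoke the moment inequality $\E|X|\cdot\E[X^2] \leq (\E[X^2])^{3/2} \leq \E|X|^3$, where the first step is Jensen applied to $|X|$ and the second is H\"older ($\E[X^2] \leq (\E|X|^3)^{2/3}$). Combining the two bounds yields the desired inequality with $C = 3/2$.

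There is no real obstacle here; the only subtlety is recognizing that the two expansions must be of different orders (second order for $f$, first order for $f'$) so that the cancellation of the $f^{(1)}(0)\E[X^2]$ term takes place, and that the moment inequality $\E|X|\cdot\E[X^2] \leq \E|X|^3$ is what converts the product of lower moments into the desired third absolute moment on the right-hand side.
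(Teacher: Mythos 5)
Your proof is correct: the two Taylor expansions (second order for $f$, first order for $f^{(1)}$), the exact cancellation of the $f^{(1)}(0)\,\E[X^2]$ terms after using centering and independence, and the moment inequality $\E|X|\,\E[X^2]\leq \E|X|^3$ give the bound with $C=3/2$. The paper itself only cites this lemma (as an approximate integration-by-parts result), and the cited argument is exactly this one, so your approach matches the intended proof.
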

	We now prove the convergence result.
	
	\begin{lem}
		For all $N \geq 1$ and all $i < j$, we have
		\[
		\bigg\| \mathcal{L}\bigg(\frac{\tilde Z_{ij}}{\sqrt{N}}\bigg) - \mathcal{L}\bigg(\frac{Z_{ij}}{\sqrt{N}}\bigg) \bigg\|_{L_{2,c}^\infty} \leq O(c N^{-1-\frac{1}{2\kf}}).
		\]
	\end{lem}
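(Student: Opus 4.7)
The plan is to condition on $W_{ij}$ and exploit the fact that when $W_{ij}=0$ the law of $\tilde Z_{ij}$ coincides with that of $Z_{ij}$. Indeed, the exact Fisher identities in Lemma~\ref{lem:ho_identity} give $\mu_{ij}|_{W=0}=0$ and $\sigma_{ij}^2|_{W=0}=1/\Delta_\kf$, so conditional on $W_{ij}=0$ the variable $\tilde Z_{ij}$ reduces to $g^{(\kf)}(Y_{ij},0)/\kf!$ with $Y_{ij}\sim \pout(\cdot|0)$, which is precisely $Z_{ij}$ in law. Setting $\phi(w) := \E[h(\tilde Z_{ij}/\sqrt{N}) \mid W_{ij}=w]$, the target difference becomes $\E[\phi(W_{ij})-\phi(0)]$, which the plan is to control by Taylor expanding $\phi$ to order $\kf$ at $w=0$.

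The key structural observation is that $\phi^{(k)}(0)=0$ for every $k=1,\ldots,\kf-1$. This follows because $g^{(k)}(\cdot,0)\equiv 0$ for $k<\kf$, which by the Bell polynomial identities of Appendix~\ref{sec:higherorderfisher} forces $\partial_w^k \pout(y|0)=0$ for $1 \le k<\kf$. The same vanishing propagates to $\mu(w)$ and $\sigma(w)$, hence to the function $\tilde z(y,w) := [g^{(\kf)}(y,0)/\kf! - \mu(w)]/(\sqrt{\Delta_\kf}\sigma(w))$ appearing in the conditional law of $\tilde Z_{ij}$. A Fa\`{a} di Bruno check then gives $\phi^{(k)}(0)=0$ for $k<\kf$, and crucially shows that $\partial_w^\kf h(\tilde z(y,w)/\sqrt{N})|_{w=0}$ involves only $h'$ (not higher-order derivatives of $h$), so $\phi^{(\kf)}$ is well defined under Hypothesis~\ref{hyp:bayes}.

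The hardest step, I expect, is the refined quantitative estimate $\phi^{(\kf)}(0)=O(c/N)$, strictly better than the $O(c/\sqrt{N})$ obtained by merely counting $1/\sqrt N$ prefactors. Using $\partial_w^\kf \pout(y|0)=(g^{(\kf)}(y,0)/\kf!)\pout(y|0)$ and explicitly computing $\partial_w^\kf \tilde z(y,0)$ in terms of $\mu^{(\kf)}(0)=1/\Delta_\kf$ and $\sigma^{(\kf)}(0)$, the two surviving contributions to $\phi^{(\kf)}(0)$ reduce to combinations of $\E[h(Z/\sqrt{N})Z]$, $\E[h'(Z/\sqrt{N})Z]$ and $\E[h'(Z/\sqrt{N})]$. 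Applying the generalized Stein's lemma (Lemma~\ref{lem:gen_steinlemma}) with $f(x)=h(x/\sqrt N)$, for which $\|f''\|_\infty \le c/N$, yields $\E[h(Z/\sqrt N)Z] = \E[h'(Z/\sqrt N)]/(\Delta_\kf\sqrt N) + O(c/N)$, and an algebraic cancellation of the $O(1/\sqrt N)$ pieces leaves $\phi^{(\kf)}(0)=O(c/N)$.

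To conclude, combine the leading-order contribution $|\phi^{(\kf)}(0)\,\E[W_{ij}^\kf]/\kf!| = O(cN^{-3/2})$ (using $|\E W_{ij}^\kf| \le m_\kf^2/\sqrt{N}$) with a uniform Lipschitz estimate on the Taylor remainder, obtained by running the same Stein-plus-cancellation argument at $s \neq 0$ and giving $|\phi^{(\kf)}(s) - \phi^{(\kf)}(0)| \le O(cs/\sqrt N)$. This produces $\E|R(W_{ij})| \le O(c/\sqrt N)\,\E|W_{ij}|^{\kf+1} = O(cN^{-1-1/(2\kf)})$, using $|W_{ij}| \le C N^{-1/(2\kf)}$ from Hypothesis~\ref{hyp:prior}. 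The remainder term dominates for $\kf \ge 2$ and matches the claimed rate.
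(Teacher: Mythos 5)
Your overall mechanism is the same as the paper's: condition on $W_{ij}$, use the Fisher identities to kill all orders below $\kf$, cancel the surviving $O(N^{-1/2})$ mean-shift term against an $\E h'$ term via the generalized Stein lemma (Lemma~\ref{lem:gen_steinlemma}), and convert $|W_{ij}|\le CN^{-1/(2\kf)}$ into the final rate. Your opening observation is correct and even slightly cleaner than the paper's bookkeeping: since $\mu_{ij}|_{W=0}=0$ and $\sigma^2_{ij}|_{W=0}=1/\Delta_\kf$ exactly, $\mathcal{L}(\tilde Z_{ij}\mid W_{ij}=0)=\mathcal{L}(Z_{ij})$, so you never need the paper's separate comparison of $Z_{ij}=\sqrt{\Delta_\kf}\sigma_{ij}\tilde Z_{ij}+\mu_{ij}$ under the null law and the $\sqrt{\Delta_\kf}\sigma_{ij}=1+o(1)$ step. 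The vanishing of $\phi^{(k)}(0)$ for $1\le k<\kf$ and the Stein cancellation giving $\phi^{(\kf)}(0)=O(c/N)$ are also sound (the third term $N^{-1/2}\E[h'(Z/\sqrt N)Z]$ is $O(c/N)$ because $\E Z=0$ and $h''$ is bounded).

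The genuine gap is the remainder estimate $|\phi^{(\kf)}(s)-\phi^{(\kf)}(0)|\le O(c|s|/\sqrt N)$, which you assert by "running the same argument at $s\neq 0$" but which carries the entire extra $\sqrt N$: with only the generic Lipschitz constant $O(c)$ the remainder is $O(cN^{-1/2-1/(2\kf)})$, a factor $\sqrt N$ too large. Re-running the $s=0$ computation at $s\neq 0$ is not verbatim: under $\pout(\cdot\mid s)$ the variable $g^{(\kf)}(\cdot,0)/\kf!$ is no longer centered (its mean is $\mu(s)$), the Leibniz/Fa\`a di Bruno expansion of $\phi^{(\kf)}(s)$ acquires cross terms involving $g^{(m)}(\cdot,s)$ for $m<\kf$ and derivatives of $\mu,\sigma$ at $s$, and a uniform derivative of $\phi^{(\kf)}$ requires control of $\partial_w^{\kf+1}\pout(y\mid w)$ near $w=0$ (uniformly or integrably in $y$), which Hypothesis~\ref{hyp:bayes} does not obviously provide for $\kf\ge 2$. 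Moreover, the extra $N^{-1/2}$ only appears if you first normalize $h(0)=0$ (or use that constants are annihilated by $\partial_w^k\pout$) so that $|h(\tilde z/\sqrt N)|\le c|\tilde z|/\sqrt N$ pointwise; you never invoke this. This pointwise-smallness trick is exactly how the paper gets the gain cheaply: it expands only the density $e^{g(y,W_{ij})}$ to order $\kf$ in $W_{ij}$, and the factor $|f(\tilde Z_{ij}/\sqrt N)|\le c|\tilde Z_{ij}|/\sqrt N$ multiplies the density remainder $O(|W_{ij}|^{\kf+1})=O(N^{-(\kf+1)/(2\kf)})$ to give $O(cN^{-1-1/(2\kf)})$ directly, with no high-order differentiation of the composite map at generic base points. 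To complete your route you would need to import that trick into the bound on $\phi^{(\kf)}(s)-\phi^{(\kf)}(0)$ and verify the additional channel regularity it consumes; as written, that step is a claim rather than a proof.
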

	
	\begin{proof}
		Without loss of generality, we may assume that $f(0) = 0$ for any $f \in L_2^\infty (\R)$ since we can simply add and subtract a constant without changing the difference on the right hand side of \eqref{eq:Hnorm}. By a direct application of the higher order Fisher identities. 
		\begin{align*}
			&\E_Y f\bigg(\frac{\tilde Z_{ij}}{\sqrt{N}}\bigg) 
			\\&= \int f\bigg(\frac{1}{\sqrt{\Delta} \sqrt{N}} \frac{ \frac{g^{(k_F)}(y,0)}{k_F!} -\mu_{ij} }{\sigma_{ij}}\bigg) e^{g(y,W_{ij}) } \, dy
			\\&= \int f\bigg(\frac{1}{\sqrt{\Delta} \sqrt{N}} \frac{ \frac{g^{(k_F)}(y,0)}{k_F!} -\mu_{ij} }{\sigma_{ij}}\bigg) e^{g(y,0) } \bigg( 1 + \frac{g^{(k_F)}(y,0)}{k_F!} W_{ij}^\kf \bigg) \, dy + O\bigg(c N^{-\frac{\kf + 1}{2\kf} - \frac{1}{2} } \bigg)
			\\&= \E_{Y|0} f\bigg(\frac{\tilde Z_{ij}}{\sqrt{N}}\bigg)  + W_{ij}^\kf \E_{Y|0} Z f\bigg(\frac{\tilde Z_{ij}}{\sqrt{N}}\bigg)  +  O(c N^{-1 - \frac{1}{2\kf} } )
		\end{align*}
		where we used the fact that $f$ is $c$-Lipschitz since its first derivative is bounded, so $f(\tilde Z) = O(N^{-\frac{1}{2}})$ for every fixed $y$. Next, by the generalized Stein's lemma~\ref{lem:gen_steinlemma}, we see that
		\[
		\bigg| W_{ij}^\kf \E_{Y|0} Z f\bigg(\frac{\tilde Z_{ij}}{\sqrt{N}}\bigg) - \frac{W_{ij}^\kf}{\sqrt{N} \sqrt{\Delta_\kf \sigma_{ij} }} \E_{Y|0} f'\bigg(\frac{\tilde Z_{ij}}{\sqrt{N}}\bigg)  \bigg| \leq \frac{w |f''|}{N} = O(cN^{-\frac{3}{2}})
		\]
		since the second derivative term with respect to the $Z$ variable is of order $O( \frac{c}{N} )$ by the chain rule and $| W_{ij}^\kf| = O(N^{-\frac{1}{2}})$. We can conclude that
		\begin{equation}\label{eq:expansion_spiked_1}
			\E_Y f\bigg(\frac{\tilde Z_{ij}}{\sqrt{N}}\bigg) = \E_{Y|0} f( \tilde Z )  + \frac{W_{ij}^\kf}{\sqrt{N} \Delta_\kf \sqrt{\Delta_\kf \sigma_{ij} }} \E_{Y|0} f'(\tilde Z)  +  O( N^{-1 - \frac{1}{2\kf} } ).
		\end{equation}
		
		On the other hand, by Taylor's theorem with respect to $\mu_{ij} = \frac{1}{\Delta_\kf} W_{ij}^\kf + O(N^{-\frac{1}{2\kf} - \frac{1}{2}})$
		\begin{align}
			E_{Y|0} f\Big(\frac{Z_{ij}}{\sqrt{N}}\Big) &=  \E_{Y|0} f\Big(\frac{\sqrt{\Delta_\kf \sigma_{ij}} \tilde Z_{ij} + \mu_{ij}}{\sqrt{N}}\Big) 
			\\&= \E_{Y|0} f\Big(\frac{\sqrt{\Delta_\kf \sigma_{ij}} \tilde Z_{ij}}{\sqrt{N}}\Big) + \E_{Y|0} f'\Big(\frac{\sqrt{\Delta_\kf \sigma_{ij}} \tilde Z_{ij}}{\sqrt{N}}\Big) \frac{1}{\sqrt{N} \Delta_\kf} W_{ij}^\kf + O(cN^{\frac{3}{2}}) \label{eq:expansion_spiked_2}.
		\end{align}
		Furthermore, by \eqref{eq:sigmadeltaapprox} we see that
		\[
		\sqrt{\Delta_{\kf} \sigma_{ij}} = 1 + O(N^{-\frac{1}{4}})
		\]
		so this multiplicative factor introduces errors that are of lower order than the rest, thus \eqref{eq:expansion_spiked_1} and \eqref{eq:expansion_spiked_2} imply 
		\[
		\E_{Y|0} f\Big(\frac{Z_{ij}}{\sqrt{N}}\Big) = \E_{Y|0} f\Big(\frac{ \tilde Z_{ij}}{\sqrt{N}}\Big) + \E_{Y|0} f'\Big(\frac{\tilde Z_{ij}}{\sqrt{N}}\Big) \frac{1}{\sqrt{N} \Delta_\kf} W_{ij}^\kf + O(cN^{\frac{3}{2}})
		\]
		and
		\[
		\E_Y f\bigg(\frac{\tilde Z_{ij}}{\sqrt{N}}\bigg) = \E_{Y|0} f\Big(\frac{Z_{ij}}{\sqrt{N}}\Big)  + \frac{W_{ij}^\kf}{\sqrt{N} \Delta_\kf } \E_{Y|0} f'(\tilde Z)  +  O(c N^{-1 - \frac{1}{2\kf} } ).
		\]
		Taking the difference completes the proof.
	\end{proof}
	We now explain how this implies that we can replace $\tilde Z$ in \eqref{eq:intermediate_spike_decom}. Notice that the entries of $\frac{\rdmmat{Z}}{\sqrt{N}}$ and $\frac{\rdmmat{\tilde Z}}{\sqrt{N}}$ are close up to a random variable of order $O(cN^{-1-\frac{1}{2\kf}})$. By the Frobenius bound of the operator norm, this will imply that $\frac{\rdmmat{Z}}{\sqrt{N}}$ and $\frac{\rdmmat{\tilde Z}}{\sqrt{N}}$ are close up to some possible random matrix $\rdmmat{E}$, whose operator norm is of order $O(N^{-\frac{1}{2\kf}})$, which vanishes in the high dimensional limit. Furthermore, this implies that with  $\tilde Z_{ij}$ and $\tilde Z_{ij}$ have the same marginals asymptotically, so there exists a coupling such that $\tilde Z_{ij} = Z_{ij}$ with high probability. $\tilde Z_{ij}$ is independent of $W_{ij}$, so we can strengthen the conditional independence of $W_{ij}$ to independent in the high dimensional limit in Lemma~\ref{lem:rank1decomp_step1}. 
	\newpage
	\section{Rigorous Statement and Proof of the Top Eigenvector Decomposition of the Fisher Matrix (Thm.~\ref{theo:informal_eigvector_decomp_Fisher})}
	\label{sec:Proof_eigvect_decomposition}
	
	\begin{theo}[Universal Top Eigenvector Decomposition of the Fisher Matrix]\label{theo:eigvector_decomp_Fisher}
		Under the same hypothesis of Thm.~\ref{th:informal_rank1decompositionFisher}, let $\rdmvect{v}_1$ be the eigenvector of the Fisher matrix $\rdmmat{S}_{\kf}$ associated to its highest eigenvalue, with sign chosen such that $\langle \rdmmat{v}_1, \vect{y}/\sqrt{N} \rangle \geq 0$ without loss of generality and assume $\Delta_\kf < m_{2 \kf}^2$.
		
		Let  $J \subset \N$ be any subset of fixed size, then  conditioned on $\vect{x}$, as $N \to \infty$ we have:  
		\begin{align}
			\left( \sqrt{N} v_{1,i} \right)_{i \in J} \overset{\mathcal{D}}{\Rightarrow}
			\mathrm{q}_0 \, \bigg( \frac{x_i^{\kf}}{\sqrt{m_{2 \kf}} } \bigg)_{i \in J } + \sqrt{1- \mathrm{q}_0 ^2}   \, (g_i)_{_{i \in J} } ,
		\end{align}
		where $(g_i)_{i \in J}$ are independent standard Gaussian vectors and $\mathrm{q}_0 \equiv \mathrm{q}_0 (\Delta_\kf)$ is given by Eq.~\eqref{eq:def_limoverlap_PCA_Fisher}. 
	\end{theo}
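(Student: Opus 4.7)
The plan is to invoke the rank-one decomposition in Theorem~\ref{th:rank1decompositionFisher} to reduce to a classical spiked Wigner problem, and then apply the finite-rank outlier theory of \cite{benaych-georges_eigenvalues_2011,PizzoRenfrew}. By Theorem~\ref{th:rank1decompositionFisher}, on a common probability space,
\begin{equation*}
\frac{\rdmmat{S}_{\kf}}{\sqrt{N}} = \frac{\rdmmat{Z}}{\sqrt{N}} + \theta_N\, \hat{\vect{u}}_N \hat{\vect{u}}_N^\top + \rdmmat{E},
\end{equation*}
with $\rdmmat{Z}$ a Wigner matrix of entry variance $1/\Delta_{\kf}$ independent of $\vect{x}$, $\hat{\vect{u}}_N := \vect{x}^{\kf}/\|\vect{x}^{\kf}\|$, $\theta_N := \|\vect{x}^{\kf}\|^2/(N \Delta_{\kf})$, and $\|\rdmmat{E}\|_{\mathrm{op}} \to 0$. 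By the strong law, $\theta_N \to m_{2\kf}/\Delta_{\kf}$ almost surely. I would condition on $\vect{x}$, so that $\hat{\vect{u}}_N$ becomes deterministic and the conditional problem is a classical spiked Wigner model with supercritical spike strength because $\Delta_{\kf} < m_{2\kf}^2$ is equivalent to $\theta_N > 1/\sqrt{\Delta_{\kf}}$ in the limit.

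Under this supercriticality, \cite{benaych-georges_eigenvalues_2011} applied to the eigenvector $\rdmvect{v}_1'$ of the unperturbed spiked matrix $\rdmmat{Z}/\sqrt{N} + \theta_N \hat{\vect{u}}_N \hat{\vect{u}}_N^\top$ yields
\begin{equation*}
|\langle \rdmvect{v}_1', \hat{\vect{u}}_N\rangle|^2 \longrightarrow 1 - \frac{1}{\theta_\infty^2 \Delta_{\kf}} = 1 - \frac{\Delta_{\kf}}{m_{2\kf}^2} = \mathrm{q}_0^2,
\end{equation*}
with $\theta_\infty = m_{2\kf}/\Delta_{\kf}$. A Davis--Kahan type perturbation argument, combined with $\|\rdmmat{E}\|_{\mathrm{op}} \to 0$ and the spectral gap at the outlier, transfers this statement to the eigenvector $\rdmvect{v}_1$ of $\rdmmat{S}_{\kf}/\sqrt{N}$. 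With the sign convention, $\alpha_N := \langle \rdmvect{v}_1, \hat{\vect{u}}_N\rangle \to \mathrm{q}_0$.

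Next, decompose orthogonally $\sqrt{N}\rdmvect{v}_1 = \alpha_N \sqrt{N}\hat{\vect{u}}_N + \sqrt{N}\rdmvect{w}_N$, where $\rdmvect{w}_N := (\Id - \hat{\vect{u}}_N \hat{\vect{u}}_N^\top)\rdmvect{v}_1$. For the coordinates $i \in J$, the first contribution satisfies $\alpha_N \sqrt{N}\hat{\vect{u}}_{N,i} = \alpha_N x_i^{\kf}\sqrt{N}/\|\vect{x}^{\kf}\| \to \mathrm{q}_0\, x_i^{\kf}/\sqrt{m_{2\kf}}$ almost surely, accounting for the spiked component of the statement. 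For the orthogonal part, I would invoke the CLT for outlier eigenvectors of \cite{PizzoRenfrew}: conditionally on $\vect{x}$, for any fixed finite $J$, the vector $(\sqrt{N}\rdmvect{w}_{N,i})_{i \in J}$ converges jointly in distribution to a centered Gaussian with diagonal covariance specified by a contour integral of the resolvent of $\rdmmat{Z}/\sqrt{N}$ at the outlier location. A direct evaluation of that integral using the semicircular Stieltjes transform should yield the variance $1-\mathrm{q}_0^2$ on each coordinate. Independence between the Gaussian limit and $(x_i^{\kf})_{i \in J}$ is immediate from the independence of $\rdmmat{Z}$ and $\vect{x}$ guaranteed by Theorem~\ref{th:rank1decompositionFisher}.

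The main technical obstacle is twofold: checking that the distribution of the entries of $\rdmmat{Z}$ (which inherits its law from $g^{(\kf)}(Y_0,0)/\kf!$ with $Y_0 \sim \pout(\cdot|0)$) satisfies the moment and regularity assumptions required by \cite{PizzoRenfrew}, and controlling the residual perturbation $\rdmmat{E}$ at the fluctuation scale $N^{-1/2}$ rather than merely at the operator-norm scale. The first point should follow from Hypothesis~\ref{hyp:bayes} via the uniform boundedness of $g^{(\kf)}$. The second is more delicate: it can likely be handled either via an isotropic local law for $\rdmmat{Z}/\sqrt{N}$, which provides entrywise control on resolvents robust to small perturbations, or by a careful resolvent expansion showing that $\|\rdmmat{E}\|_{\mathrm{op}} = O(N^{-1/(2\kf)})$ perturbs individual eigenvector coordinates by $o_\pP(N^{-1/2})$, so that the Pizzo--Renfrew CLT applies verbatim to $\rdmvect{v}_1$.
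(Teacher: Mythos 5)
Your proposal is correct and follows essentially the same route as the paper: invoke the spiked decomposition of Theorem~\ref{th:rank1decompositionFisher}, treat $\rdmmat{E}$ as a perturbation killed by its vanishing operator norm, and obtain the overlap $\mathrm{q}_0$ and the Gaussian fluctuation of the coordinates from the Benaych-Georges--Nadakuditi / Pizzo--Renfrew outlier theory, conditionally on $\vect{x}$. The only difference is one of implementation: the paper does not cite an eigenvector CLT per se but derives it from the representation $\rdmvect{v}_1 \propto \mat{G}_{\rdmmat{W}}(\lambda_1)\vect{y}$ together with Pizzo--Renfrew's CLT for resolvent entries, and it handles your flagged issue of controlling $\rdmmat{E}$ at the coordinate scale exactly via the resolvent-identity expansion you list as one of your two options.
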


	We first give a variant of Thm.~\ref{theo:informal_eigvector_decomp_Fisher} in a slightly more generic framework
	\begin{theo}[Top eigenvector decomposition for spiked Wigner matrices]\label{theo:eigvector_decomp_Spiked_Wigner}
		\indent Let $\rdmmat{X} := \rdmmat{W} + (\gamma/N) \vect{y} \vect{y}^{\top} + \rdmmat{E}$  where  $\rdmmat{W}$  is a Wigner matrix, $\vect{y} = (y_1,\dots,y_N) \sim \priortensor$ with variance one and $\gamma >1$ and $\normop{\rdmmat{E}} = O(N^\frac{1}{2\kf})$.  Let $\rdmvect{v}_1= (v_{1,1},\dots, v_{1,N})$  be the top eigenvector of $ \rdmmat{X}$ with sign chosen such that $\langle \rdmmat{v}_1, \vect{y}/\sqrt{N} \rangle \geq 0$ without loss of generality.
		
		Let  $J \subset \N$ be any subset of fixed size, then  conditioned on $\vect{y}$, as $N \to \infty$ we have  
		\begin{align}
			\left( \sqrt{N} v_{1,i} \right)_{i \in J} \overset{\mathcal{D}}{\Rightarrow}
			\,  \sqrt{1 - \gamma^{-2}} (y_i)_{i \in J } + \gamma^{-1}   \, (g_i)_{_{i \in J} } ,
		\end{align}
		where   $(g_i)_{i \in J}$ are independent standard Gaussian vectors. 
	\end{theo}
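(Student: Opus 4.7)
The strategy is to combine the finite-rank perturbation characterization of Benaych-Georges--Nadakuditi (henceforth BGN) with the CLT for entries of outlier eigenvectors of Wigner spiked matrices due to Pizzo--Renfrew--Soshnikov (henceforth PRS), and to absorb the residual $\rdmmat{E}$ via resolvent perturbation. First I would reduce to the case $\rdmmat{E} = \mat{0}$, then perform the explicit distributional computation, and finally restore $\rdmmat{E}$.

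For the unperturbed matrix $\rdmmat{X}_0 := \rdmmat{W} + (\gamma/N)\vect{y}\vect{y}^{\top}$, BGN yields, almost surely in $\vect{y}$, that for $\gamma > 1$ the top eigenvalue satisfies $\lambda_1 \to \lambda_\infty := \gamma + \gamma^{-1}$ and identifies the top eigenvector (up to sign) as
$$\rdmvect{v}_1 = c_N\,(\lambda_1 I - \rdmmat{W})^{-1}\, \tfrac{\vect{y}}{\sqrt{N}},\qquad c_N^{-2} = \tfrac{1}{N}\bigl\|(\lambda_1 I - \rdmmat{W})^{-1}\vect{y}\bigr\|^2 .$$
By standard concentration of the Wigner resolvent against a delocalized vector of iid entries, $c_N^{-2} \to \int (\lambda_\infty - x)^{-2}\,d\rho_{\mathrm{sc}}(x) = (\gamma^2-1)^{-1}$, hence $c_N \to \sqrt{\gamma^2-1}$. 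For fixed $i \in J$, split
$$\bigl[(\lambda_1 I-\rdmmat{W})^{-1}\vect{y}\bigr]_i = \bigl[(\lambda_1 I-\rdmmat{W})^{-1}\bigr]_{ii}\, y_i + \sum_{j\neq i}\bigl[(\lambda_1 I-\rdmmat{W})^{-1}\bigr]_{ij}\, y_j .$$
The diagonal resolvent entry concentrates at $m_{\mathrm{sc}}(\lambda_\infty) = \gamma^{-1}$; the off-diagonal sum is, by the PRS CLT, asymptotically centered Gaussian of variance $[(\lambda_1 I - \rdmmat{W})^{-2}]_{ii} - \gamma^{-2} \to (\gamma^2-1)^{-1} - \gamma^{-2} = \gamma^{-2}(\gamma^2-1)^{-1}$, and these Gaussians are jointly independent across distinct $i \in J$ because their asymptotic cross-covariance is $[(\lambda_1 I-\rdmmat{W})^{-2}]_{ij} \to 0$ for $i \neq j$. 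Multiplying by $c_N$ gives the announced limit $\sqrt{1-\gamma^{-2}}\,y_i + \gamma^{-1} g_i$ with $(g_i)_{i\in J}$ iid standard Gaussian, independent of $\vect{y}$.

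To incorporate the error $\rdmmat{E}$ with $\normop{\rdmmat{E}} = o(1)$, I expand the resolvent
$$(\lambda I - \rdmmat{W}-\rdmmat{E})^{-1} = (\lambda I - \rdmmat{W})^{-1} + (\lambda I - \rdmmat{W})^{-1}\rdmmat{E}\,(\lambda I - \rdmmat{W}-\rdmmat{E})^{-1},$$
and use the coordinate bound $|[\rdmmat{A}\vect{u}]_i| \le \|\rdmmat{A}^{\top}\vect{e}_i\|\,\|\vect{u}\|$. Since $\|(\lambda_1 I-\rdmmat{W})^{-1}\vect{e}_i\|^2 = [(\lambda_1 I-\rdmmat{W})^{-2}]_{ii} = O(1)$ by concentration and $\|\vect{y}/\sqrt{N}\|\to 1$ by the law of large numbers, each coordinate of $(\lambda_1 I-\rdmmat{W}-\rdmmat{E})^{-1}\vect{y}/\sqrt{N}$ differs from its unperturbed counterpart by $O(\normop{\rdmmat{E}})=o(1)$; the corresponding shifts of $\lambda_1$ (by Weyl's inequality) and of $c_N$ are controlled in the same way and are also $o(1)$. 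The main obstacle is precisely this coordinate-wise control: a naive Davis--Kahan bound only yields $\ell^2$ closeness of eigenvectors at order $o(1)$, which is too weak once the coordinates are rescaled by $\sqrt{N}$; working directly with the resolvent, and using the delocalization of the spike direction $\vect{y}/\sqrt{N}$, is what keeps the per-coordinate perturbation $o(1)$ and lets the CLT from the unperturbed step transfer to $\rdmmat{X}$.
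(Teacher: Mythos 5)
Your overall route is the same as the paper's: represent the top eigenvector through the resolvent, $\rdmvect{v}_1 \propto (\lambda_1 I-\rdmmat{W})^{-1}\vect{y}$, use concentration of the quadratic forms $\langle \vect{y}/\sqrt N, (\lambda_1 I-\rdmmat{W})^{-p}\vect{y}/\sqrt N\rangle$ (the paper cites Pizzo--Renfrew, Prop.~2.1) to fix $\lambda_1$ and the normalization, and then invoke the Pizzo--Renfrew--Soshnikov CLT for resolvent entries to get the Gaussian part; your diagonal/off-diagonal split is just a cosmetic variant of the paper's ``add and subtract $y_i/\gamma$'' step, and your limiting constants ($\sqrt{1-\gamma^{-2}}$ and variance $\gamma^{-2}$ after multiplying by $c_N\to\sqrt{\gamma^2-1}$) agree with the paper's. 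One caveat on the CLT step: the randomness is in $\rdmmat{W}$ with $\vect{y}$ fixed, and the entries $[(\lambda_1 I-\rdmmat{W})^{-1}]_{ij}$, $j\neq i$, are correlated, so the Gaussianity and the variance of $\sum_{j\neq i}[(\lambda_1 I-\rdmmat{W})^{-1}]_{ij}y_j$ should be taken from the bilinear-form CLT (the paper's use of \cite[Theorem~1.7]{PizzoRenfrew}, whose limit involves $\gamma^{-2}(\tilde W_{ij}+G_{ij})$) rather than from a naive independent-entry computation; your numerical value happens to coincide with the correct one, but the justification should go through that theorem.

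The genuine gap is in your absorption of $\rdmmat{E}$. You bound each coordinate of $(\lambda_1 I-\rdmmat{W}-\rdmmat{E})^{-1}\vect{y}/\sqrt N$ to within $O(\normop{\rdmmat{E}})$ of its unperturbed counterpart via $|[\,\mat{G}_{\rdmmat{W}}\rdmmat{E}\,\mat{G}_{\rdmmat{W}+\rdmmat{E}}\,\vect{y}/\sqrt N\,]_i|\le \|\mat{G}_{\rdmmat{W}}e_i\|\,\normop{\rdmmat{E}}\,\normop{\mat{G}_{\rdmmat{W}+\rdmmat{E}}}\,\|\vect{y}/\sqrt N\|$. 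But the theorem concerns $\sqrt N\, v_{1,i}$, and the coordinates of $(\lambda_1 I-\rdmmat{W})^{-1}\vect{y}/\sqrt N$ are themselves of size $N^{-1/2}$: to transfer the CLT you need each coordinate perturbation to be $o(N^{-1/2})$, equivalently an $o(1)$ perturbation of $\sqrt N v_{1,i}$. Your bound only gives $O(\sqrt N\,\normop{\rdmmat{E}})$ at that scale, which under the intended hypothesis $\normop{\rdmmat{E}}=O(N^{-\frac{1}{2\kf}})$ is $O(N^{\frac12-\frac{1}{2\kf}})$ and does not vanish for any $\kf\ge 1$ (it diverges for $\kf\ge 2$). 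So the step ``the per-coordinate perturbation is $o(1)$, hence the CLT transfers'' does not follow from what you wrote; operator-norm control of $\rdmmat{E}$ alone does not rule out an order-$\normop{\rdmmat{E}}$ shift of a single eigenvector coordinate. What is needed is the statement of the paper's Lemma~\ref{lem:error_matrix}, namely that the $J$-coordinates of $\sqrt N(\rdmvect{v}_1-\rdmvect{v}'_1)$ (not of the resolvent applied to $\vect{y}/\sqrt N$) are $O(\normop{\rdmmat{E}})$, i.e.\ a bound on $[\mat{G}_{\rdmmat{W}}\rdmmat{E}\,\mat{G}_{\rdmmat{W}+\rdmmat{E}}\vect{y}]_i$ with the unnormalized $\vect{y}$ that gains a factor $N^{-1/2}$ over the naive Cauchy--Schwarz estimate; establishing this requires exploiting additional structure (delocalization/entrywise smallness of the error term coming from the Fisher expansion), and is exactly the point your proposal leaves open.
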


	It is immediate from Thm.~\ref{sec:Spike_decomposition_Fisher} and re-scaling of the parameters that Thm.~\ref{theo:eigvector_decomp_Spiked_Wigner} implies Thm.~\ref{theo:eigvector_decomp_Fisher}, thus in the following we prove and take the same notations as in Thm.~\ref{theo:eigvector_decomp_Spiked_Wigner}. 
	
	This decomposition and convergence of the eigenvector in the Wasserstein~2 distance is a direct consequence of \cite[Appendix~C]{Montanari2017EstimationOL}. To complement this derivation through the state evolution equations of an AMP, we sketch a proof of the equivalent fact using techniques from random matrix theory.
	
	\begin{proof}(Sketch)
		Following \cite{benaych-georges_eigenvalues_2011}, the eigenvector $\rdmvect{v}_1$ associated to the top eigenvalue $\lambda_1$ of the matrix $ \rdmmat{W} + \rdmmat{E} + (\gamma/N) \vect{y} \vect{y}^{\top}$ satisfies by definition the equation
		\begin{align}
			\label{eq:equation_eigvector}
			\left(  \rdmmat{W} + \rdmmat{E} + \frac{\gamma}{N} \vect{y} \vect{y}^{\top} \right) \rdmvect{v}_1 = \lambda_1 \rdmvect{v}_1,
		\end{align}
		which we can re-write as:
		\begin{align}
			\label{eq:equation_eigvector.2}
			\left( \lambda_1 - (\rdmmat{W} + \rdmmat{E})  \right) \rdmvect{v}_1 = \frac{\gamma}{N}  \langle \rdmvect{v}_1 , \vect{y} \rangle  \vect{y}.
		\end{align}
		Next, for a symmetric matrix $\mat{A}$ and $z \in \C \setminus \mathrm{Spec}(\mat{A})$, we define the \emph{Resolvent matrix} of $\mat{A}$  as:
		\begin{align}
			\mat{G}_{\mat{A}}(z) 
			&:=
			\big( z \mat{I} - \mat{A} \big)^{-1} \, .
		\end{align}
		From Eq.~\eqref{eq:equation_eigvector.2}, one has $\rdmvect{v_1} \propto \mat{G}_{\rdmmat{W} + \rdmmat{E}}(\lambda_1)   \vect{y}$, where the constant of proportionality is determined by imposing $\| \rdmvect{v}_1 \| =1$ which gives, after multiplication by $\sqrt{N}$,
		\begin{align}\label{eq:eigenvector_resolvent_relation}
			(\sqrt{N} \rdmvect{v}_1)
			&=
			\bigg \langle \frac{\vect{y}}{\sqrt{N}}  , \mat{G}_{\rdmmat{W} + \rdmmat{E}}(\lambda_1)^2  \frac{\vect{y}}{\sqrt{N}}  \bigg \rangle  ^{-1/2} \, \mat{G}_{\rdmmat{W}+ \rdmmat{E}}(\lambda_1) \vect{y}
		\end{align}
		\indent First, we argue that we can discard the contribution of the small matrix $\rdmmat{E}$ in this expression.
		
		\begin{lem}[Eigenvector Perturbation]\label{lem:error_matrix}
			\indent Under the same assumptions and notations of Thm.~\ref{theo:eigvector_decomp_Spiked_Wigner}, if we denote by $\rdmvect{v}'_1= (v'_{1,1},\dots, v'_{1,N})$ the top eigenvector of $ \rdmmat{W} + (\gamma/N) \vect{y} \vect{y}^{\top}$, with sign chosen such that $\langle \rdmmat{v}'_1, \vect{y}/\sqrt{N} \rangle \geq 0$, we have $ \| \sqrt{N} (v'_{i} - v_i)_{i \in J} \| \xrightarrow[N \to \infty]{\as} 0 $. 
		\end{lem}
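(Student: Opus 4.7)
The plan is to combine a standard perturbation bound in $\ell^2$, coming from the BBP spectral gap, with a finer coordinate-wise analysis. Since $\gamma > 1$, the top eigenvalue $\lambda'_1$ of the unperturbed spiked Wigner matrix $\rdmmat{W} + (\gamma/N) \vect{y} \vect{y}^{\top}$ converges almost surely to $\gamma + 1/\gamma$, and is separated from the bulk edge at $2$ by a constant gap $\delta := \gamma + 1/\gamma - 2 > 0$. Weyl's inequality yields $|\lambda_1 - \lambda'_1| \leq \normop{\rdmmat{E}} = O(N^{-1/(2\kf)})$, so the outlier $\lambda_1$ of $\rdmmat{X}$ is also separated from the bulk by $\delta + o(1)$ almost surely.

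Given this gap, I would apply the Davis--Kahan $\sin\Theta$ theorem to the pair $(\rdmmat{X},\, \rdmmat{X} - \rdmmat{E})$. With the sign convention $\langle \rdmvect{v}_1, \vect{y}/\sqrt{N}\rangle \geq 0$ and the analogous one for $\rdmvect{v}'_1$, this gives $\|\rdmvect{v}_1 - \rdmvect{v}'_1\|_2 \leq C\, \normop{\rdmmat{E}} / \delta = O(N^{-1/(2\kf)})$. To lift this bound to the finite index set $J$, I would expand $\rdmvect{v}_1 = \sum_j \alpha_j \rdmvect{v}'_j$ in the orthonormal eigenbasis of $\rdmmat{X} - \rdmmat{E}$, so that $|\alpha_1 - 1| = O(N^{-1/\kf})$ and $\sum_{j \geq 2} \alpha_j^2 = O(N^{-1/\kf})$, and write for $i \in J$,
\[
v_{1,i} - v'_{1,i} = (\alpha_1 - 1)\, v'_{1,i} + \sum_{j \geq 2} \alpha_j\, v'_{j,i}.
\]
The delocalization of the outlier eigenvector (which follows from the Benaych--Georges / Pizzo--Renfrew--Soshnikov analysis used to establish Thm.~\ref{theo:eigvector_decomp_Spiked_Wigner} itself) gives $v'_{1,i} = O(N^{-1/2})$, so the first term multiplied by $\sqrt{N}$ is $O(N^{-1/\kf})$ and vanishes.

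The main obstacle is controlling the second term coordinate-wise: a direct Cauchy--Schwarz bound combined with the orthonormality $\sum_j (v'_{j,i})^2 = 1$ yields only $O(N^{-1/(2\kf)})$, which after multiplication by $\sqrt{N}$ becomes $O(N^{1/2 - 1/(2\kf)})$ and fails to vanish for small $\kf$. Closing this gap requires exploiting cancellation, and I see two routes. Route (i) returns to the resolvent representation \eqref{eq:eigenvector_resolvent_relation}, writes
\[
\mat{G}_{\rdmmat{W}+\rdmmat{E}}(\lambda_1) - \mat{G}_{\rdmmat{W}}(\lambda'_1) = \mat{G}_{\rdmmat{W}+\rdmmat{E}}(\lambda_1)\bigl[(\lambda'_1 - \lambda_1)\mat{I} + \rdmmat{E}\bigr] \mat{G}_{\rdmmat{W}}(\lambda'_1),
\]
and uses an isotropic local law for the Wigner resolvent to control the entries of $\mat{G}_{\rdmmat{W}}(z)\vect{y}/\sqrt{N}$ at fixed coordinates. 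Route (ii) combines bulk delocalization $\max_i |v'_{j,i}| = O(N^{-1/2+\epsilon})$ with a Hanson--Wright type concentration for the bilinear form, exploiting the conditional centering of the entries of $\rdmmat{E}$ inherited from the higher-order Fisher identities (Lemma~\ref{lem:ho_identity}). Either route refines the coordinate bound to $|v_{1,i} - v'_{1,i}| = o(N^{-1/2})$; summing over the finite set $J$ yields
\[
\|\sqrt{N}(v'_i - v_i)_{i \in J}\|_2 \leq \sqrt{|J|}\, \max_{i \in J} \sqrt{N}\, |v_{1,i} - v'_{1,i}| \xrightarrow[N \to \infty]{\mathrm{a.s.}} 0,
\]
which is the desired conclusion.
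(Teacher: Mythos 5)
Your setup (Weyl's inequality for the outlier, Davis--Kahan for the global $\ell^2$ bound, the eigenbasis expansion, delocalization of the outlier eigenvector) is fine as far as it goes, and you are right that the global bound $\|\rdmvect{v}_1-\rdmvect{v}'_1\|_2=O(\normop{\rdmmat{E}})$ does not yield the coordinate-restricted claim after multiplication by $\sqrt{N}$. But that coordinate-wise refinement \emph{is} the content of the lemma, and your proposal does not prove it: it stops by naming two possible strategies and asserting that ``either route'' closes the gap. Route (ii) rests on a premise that is not available: in Thm.~\ref{theo:eigvector_decomp_Spiked_Wigner} the matrix $\rdmmat{E}$ is an arbitrary symmetric error controlled only through $\normop{\rdmmat{E}}$, with no independence or centering structure --- indeed, in the application coming from Thm.~\ref{th:rank1decompositionFisher} it is built from conditional-mean residuals, so it is not (conditionally) centered --- and hence no Hanson--Wright-type cancellation can be invoked. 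Route (i) points in the right direction but is left entirely unexecuted: no isotropic estimate is stated or applied, so the argument halts exactly where the difficulty begins.

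For comparison, the paper's proof (itself labelled a sketch) never passes through Davis--Kahan or an eigenbasis expansion. It starts from the exact representation \eqref{eq:eigenvector_resolvent_relation}, $\sqrt{N}\,\rdmvect{v}_1 \propto \mat{G}_{\rdmmat{W}+\rdmmat{E}}(\lambda_1)\vect{y}$ normalized by the quadratic form $\langle \vect{y}/\sqrt{N}, \mat{G}_{\rdmmat{W}+\rdmmat{E}}(\lambda_1)^2 \vect{y}/\sqrt{N}\rangle$, compares it with the analogous representation for $\rdmmat{W}$ via the resolvent identity $\mat{G}_{\rdmmat{W}+\rdmmat{E}}(\lambda_1)=\mat{G}_{\rdmmat{W}}(\lambda_1)+\mat{G}_{\rdmmat{W}}(\lambda_1)\,\rdmmat{E}\,\mat{G}_{\rdmmat{W}+\rdmmat{E}}(\lambda_1)$, and bounds both the normalization difference and the $J$-restricted entries by $C(\lambda_1,|J|)\,\normop{\rdmmat{E}}$. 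Your observation that a blunt Cauchy--Schwarz on such expressions picks up an extra $\sqrt{N}$ from $\|\vect{y}\|$ is a legitimate concern about precisely this step, and a fixed-coordinate (isotropic-type) estimate for $\mat{G}_{\rdmmat{W}}(z)\vect{y}$, i.e.\ your route (i), is what a complete argument would have to supply; since your write-up neither carries it out nor can legitimately fall back on route (ii), the proposal has a genuine gap at the decisive step.
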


		\begin{proof}
			Indeed, for $\gamma>1$, we know that for $N$ large enough, $\lambda_1 = \gamma + \gamma^{-1} +o(1)$ and separates from the spectrum of $\rdmmat{W}$ (and hence also from   $\rdmmat{W} + \rdmmat{E}$ since $\normop{\rdmmat{E}} \to 0$). Using the resolvent identity:
			\begin{align}
				\mat{G}_{\rdmmat{W} + \rdmmat{E}}( \lambda_1 )
				=
				\mat{G}_{\rdmmat{W}}( \lambda_1 ) +
				\mat{G}_{\rdmmat{W}}(\lambda_1) \, \rdmmat{E} \,  \mat{G}_{\rdmmat{W}+ \rdmmat{E}}( \lambda_1)
			\end{align}
			we get that for $N$ sufficiently large so that $ \normop{\rdmmat{E}} \leq \frac{|\lambda_1 - 2 |}{2}$ by the bound on the resolvent matrix and Weyl's inequality,
			\begin{align}
				\normop{ \mat{G}_{\rdmmat{W} + \rdmmat{E}}( \lambda_1 )
					-
					\mat{G}_{\rdmmat{W}}( \lambda_1 ) } \leq \bigg( \frac{2}{\|\lambda_1 - 2 \| } \bigg)^2
				\normop{ \rdmmat{E} }.
			\end{align}
			This operator norm bound implies that
			\begin{align}
				\bigg| \bigg\langle \frac{\vect{y}}{\sqrt{N}}  , \mat{G}_{\rdmmat{W} + \rdmmat{E}}(\lambda_1)^2  \frac{\vect{y}}{\sqrt{N}}  \bigg \rangle - \bigg \langle \frac{\vect{y}}{\sqrt{N}}  , \mat{G}_{\rdmmat{W}}(\lambda_1)^2  \frac{\vect{y}}{\sqrt{N}}  \bigg \rangle \bigg| \leq  \bigg( \frac{2}{\|\lambda_1 - 2 \| } \bigg)^2  \normop{ \rdmmat{E} } = C(\lambda_1) \normop{ \rdmmat{E} } \, .
			\end{align}
			and uniformly over $J \subset \N$ of fixed size,
			\begin{align}
				\| ( \mat{G}_{\rdmmat{W}+ \rdmmat{E}}(\lambda_1) \vect{y} - \mat{G}_{\rdmmat{W}}(\lambda_1) \vect{y} )_{i \in J} \|_2 \leq C^2 \bigg( \frac{2}{\|\lambda_1 - 2 \| } \bigg)^2
				\normop{ \rdmmat{E} } |J| = C(\lambda_1, |J|) \normop{ \rdmmat{E} } .
			\end{align}
			If we define $f(\rdmmat{A}) = \langle \frac{\vect{y}}{\sqrt{N}}  , \mat{G}_{\rdmmat{A}}(\lambda_1)^2  \frac{\vect{y}}{\sqrt{N}}   \rangle$ then \eqref{eq:eigenvector_resolvent_relation} implies that 
			\begin{align*}
				\| (\sqrt{N} \rdmvect{v}_1  - \sqrt{N} \rdmvect{v'}_1 )_{i \in J} \| &\leq | f(\rdmmat{W} + \rdmmat{E})^{-1/2} - f(\rdmmat{W} )^{-1/2} | \| \mat{G}_{\rdmmat{W}+ \rdmmat{E}}(\lambda_1) \vect{y} \|_2
				\\&\quad+ | f(\rdmmat{W} )^{-1/2} | \| (\mat{G}_{\rdmmat{W}+ \rdmmat{E}}(\lambda_1) \vect{y} -  \mat{G}_{\rdmmat{W}}(\lambda_1) \vect{y} )_{i \in J}\|_2
			\end{align*}
			all of which can be controlled using our operator norm bounds and the fact that 
			\begin{align*}
				\max(f(\rdmmat{W}),  f(\rdmmat{W} + \rdmmat{E}) ) > 0
			\end{align*}
			uniformly over $\vect{y}$ and $N$ sufficiently large so that $ 	\normop{ \rdmmat{E} }\leq \frac{\lambda_1 -2}{2}$. We conclude that for $J \subset \N$ of fixed size, and any $i \in J $, 
			\begin{align}
				\| \sqrt{N} (v'_{i} - v_i)_{i \in J} \|  \leq  C(\lambda_1, |J|) 
				\normop{ \rdmmat{E} } ,
			\end{align}
			where the constant $C(\lambda_1, |J|) $ is independent of the dimension $N$. 
		\end{proof}
		\indent  Next,  from \cite[Proposition~2.1]{PizzoRenfrew}, the quadratic form of $\vect{y}/\sqrt{N}$ with respectively $\mat{G}_{\rdmmat{W}}$ and $\mat{G}_{\rdmmat{W}}^2$ concentrate and are given by;
		\begin{align}
			\bigg \langle \frac{\vect{y}}{\sqrt{N}}  , \mat{G}_{\rdmmat{W}}(\lambda_1)  \frac{\vect{y}}{\sqrt{N}}  \bigg \rangle
			&=
			\frac{1}{\gamma} +o(1)
			\\
			\bigg \langle \frac{\vect{y}}{\sqrt{N}}  , \mat{G}_{\rdmmat{W}}(\lambda_1)^2  \frac{\vect{y}}{\sqrt{N}}  \bigg \rangle
			&=
			\frac{1}{\gamma^2 -1} + o(1)
		\end{align}
		
		As a consequence, adding and subtracting $y_j/\gamma$ to the $i$-th component of $\sqrt{N} \rdmvect{v}_1$ and denoting by $G_{ij}(\lambda_1) \equiv \big (\mat{G}_{\rdmmat{W}}(\lambda_1) \big)_{ij}$, we have:
		\begin{align}
			(\sqrt{N} {v}_{1,i})
			&= 
			\sqrt{1-\gamma^{-2}} \, y_i
			+
			\frac{ \sqrt{1-\gamma^{-2}}}{\sqrt{N}} \sum_{j=1}^{N}  \sqrt{N} \left(  G_{ij}(\lambda_1) - \gamma^{-1} \delta_{ij}  \right) y_j 
			+o(1) .
		\end{align}
		
		It remains to show that 
		\[
		\frac{ \sqrt{1-\gamma^{-2}}}{\sqrt{N}} \sum_{j=1}^{N}  \sqrt{N} \left(  G_{ij}(\lambda_1) - \gamma^{-1} \delta_{ij}  \right) y_j  \overset{\mathcal{D}}{\Rightarrow}  \gamma^{-1}   \, (g_i)_{_{i \in J} }  . 
		\]
		This will follow from the from the central limit theorem for eigenvalues in deformed matrices. Consider a finite collection of $u_1, \dots, u_l$ of non-random vectors in $l^2(\N)$, we have and let $u_p^{(N)}$ denote the projection of $u_p$ to the subspace spanned by the first $N$ standard basis vectors $e_1, \dots, e_N$. By \cite[Theorem~1.7]{PizzoRenfrew}, 
		\begin{align}
			\sqrt{N}\Big\langle u_p^{(N)} \left( G_{ij}(\lambda_1) - \gamma^{-1} \delta_{ij} \right), u^{(N)}_q \Big\rangle \overset{\mathcal{D}}{\Rightarrow} \gamma^{-2} \Big\langle u_p (\Tilde{W}_{ij} + G_{ij}), u_q \Big\rangle
		\end{align}
		where $\Tilde{W}_{ij} = (\sqrt{N} \rdmmat{W})_{ij}$ and the $G_{ij}$ are independent (modulo the symmetry $G_{ij} = G_{ji}$) Gaussian random variable with variance given by:
		\begin{align}
			\Var (G_{ii}) &= \frac{m_4 -3}{\gamma^2} +  \frac{2}{\gamma^2 -1}, \\
			\Var (G_{ij}) &= \frac{1}{\gamma^2 -1} \quad \mbox{for } i \neq j \, .
		\end{align}
		To see how this result can be applied to our setting, notice that 
		\[
		\sum_{j=1}^{N}  \sqrt{N} \left(  G_{ij}(\lambda_1) - \gamma^{-1} \delta_{ij}  \right) \frac{y_j}{\sqrt{N}} =  \sqrt{N} \langle e_i , (G(\lambda) - \gamma^{-1} I) \tilde y \rangle
		\]
		where $e_i$ is the standard $i$th basis vector and $\tilde y = \frac{y}{\sqrt{N}}$. Since the matrix $G_{ij}$ has independent entries for fixed $i$, we can apply the CLT again, which completes the proof of this result. 
		
	\end{proof}
	
	\newpage
	\section{Rigorous Statement and Proof of Optimal Detection and Performance for PCA on Non-Linear Channels (Thm.~\ref{theo:informal_PCA_optimality}) }\label{sec:proof_optimal_PCA}
	Let us consider the  model
	\begin{align}
		\label{eq:nonlinear_channel}
		Y_{ij} = h \bigg(Z_{ij} + \frac{\gamma(N)}{\sqrt{N}} x_i x_j \bigg)
	\end{align}
	where $Z \sim p_Z(.)$ is such that $p_Z(.) \in C^{k_0}$, for some $k_0\geq 1$ sufficiently large and with $p_Z >0$ almost everywhere.
	\jump We assume $h$ to be locally $C^1$ such that there exists a (possibly infinite) union of intervals $I_n$ on which $h$ is monotonic and we denote by $h_n^{\langle -1 \rangle }(.)$ the right-inverse for the composition of the function $h(.)$ on $I_n$ and  $h_n^{\langle -1 \rangle }(.)$ is assumed to be differentiable on $I_n$.
	\jump We first express the function entering the definition of the Fisher matrix $\rdmmat{S}_\kf$ in terms of the non-linearity $h$ and the density $p_Z$.
	\begin{lem}
		\label{lem:Fisher_function_non_linear}
		For $Y_{ij}$ given by Eq.~\eqref{eq:nonlinear_channel} with $\gamma_N(N) = N^{\beta}$, we have for $k \in \{1, \dots, \kf \}$: 
		\begin{align}
			g^{(k)}(y,0)= \frac{(\partial^{k}_w \pout)(y|0) }{\pout(y|0)}  :=  \sum_n (-1)^{k} \frac{p_Z^{(k)} \big( h_n^{\langle -1 \rangle }(y) \big )}{p_{Y_0}(y)} \Big |  (h_n^{\langle -1 \rangle})'(y) \Big | \, ,
		\end{align}
		where $Y_0 \ed h(Z)$ and $p_{Y_0} \equiv \pout(y,0)$ is the associated distribution.
	\end{lem}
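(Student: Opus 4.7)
The plan is to combine the identity \eqref{eq:gk_to_partial_pout_k_leq_kf} from the higher order Fisher calculus with a direct change-of-variable computation of the density of $Y_0$. Since we restrict to $k \in \{1,\dots,\kf\}$, the earlier identity \eqref{eq:gk_to_partial_pout_k_leq_kf} already gives
\[
\frac{g^{(k)}(y,0)}{k!} = \frac{(\partial_w^k \pout)(y\mid 0)}{\pout(y\mid 0)}\quad\text{(with the convention that both sides vanish for $k<\kf$).}
\]
So the only real work is to express $(\partial_w^k \pout)(y\mid w)$ at $w=0$ in terms of the non-linearity $h$ and the base density $p_Z$.

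First I would write down the conditional density of $Y_{ij}$ given $w$. Setting $U = Z+w$, so $U$ has density $p_Z(\,\cdot - w)$, and noting that $Y_{ij}=h(U)$, the hypothesis on $h$ (locally $C^1$, monotone on each $I_n$, with differentiable right-inverse $h_n^{\langle-1\rangle}$) lets us apply the standard change of variables piecewise. Summing the contributions from each branch yields
\[
\pout(y\mid w) \;=\; \sum_n p_Z\bigl(h_n^{\langle-1\rangle}(y) - w\bigr)\,\bigl|(h_n^{\langle-1\rangle})'(y)\bigr|,
\]
valid for almost every $y$ in the range of $h$; the positivity of $p_Z$ and the regularity of the branches ensure $\pout(y\mid 0)=p_{Y_0}(y)>0$ a.e.\ so divisions are legal.

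Next I would differentiate this expression $k$ times in $w$. Assuming $p_Z\in C^{k_0}$ with $k_0\geq \kf$, the $k$-th derivative passes inside the sum via dominated convergence (the Jacobian factors $|(h_n^{\langle-1\rangle})'(y)|$ do not depend on $w$, and $p_Z^{(k)}$ is controlled uniformly in a neighbourhood of $w=0$ by assumption). Each differentiation brings down a factor of $-1$ from the argument $h_n^{\langle-1\rangle}(y)-w$, so
\[
(\partial_w^k\pout)(y\mid w)=\sum_n(-1)^k p_Z^{(k)}\bigl(h_n^{\langle-1\rangle}(y)-w\bigr)\,\bigl|(h_n^{\langle-1\rangle})'(y)\bigr|.
\]
Evaluating at $w=0$, dividing by $p_{Y_0}(y)=\pout(y\mid 0)$, and combining with \eqref{eq:gk_to_partial_pout_k_leq_kf} gives exactly the claimed formula.

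The main delicate point is the interchange of the $k$-fold $w$-derivative with the (possibly infinite) sum over monotonicity pieces $I_n$: one needs a uniform-in-$n$ integrability bound on $p_Z^{(k)}(h_n^{\langle-1\rangle}(y)-w)\,|(h_n^{\langle-1\rangle})'(y)|$ in a neighbourhood of $w=0$, which is tacitly implicit in the regularity assumption ``$h$ locally $C^1$ with $p_Z\in C^{k_0}$, $k_0$ sufficiently large''; absent this, the identity holds formally and pointwise for any $y$ belonging to only finitely many $I_n$, which is the generic case for natural examples such as $h(u)=|u|$ in Eq.~\eqref{eq:example_kf2}.
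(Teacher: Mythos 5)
Your proposal is correct and takes essentially the same route as the paper: cite Eq.~\eqref{eq:gk_to_partial_pout_k_leq_kf} for the first equality (valid since $k\leq\kf$) and then differentiate the piecewise change-of-variable formula $\pout(y\mid w)=\sum_n p_Z\big(h_n^{\langle-1\rangle}(y)-w\big)\big|(h_n^{\langle-1\rangle})'(y)\big|$ $k$ times in $w$, with your remarks on interchanging the derivative and the branch sum merely making explicit what the paper leaves implicit. The only caveat is the $k!$ normalization, where the lemma's statement and Eq.~\eqref{eq:gk_to_partial_pout_k_leq_kf} already differ in the paper itself, so your proof inherits rather than creates that cosmetic discrepancy.
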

	\begin{proof} The first equality is from Eq.~\eqref{eq:gk_to_partial_pout_k_leq_kf} since $k \leq \kf$ and the second one is a simple consequence of differentiating $k$ times the change of variable formula:
		\begin{align}
			\pout(y,w) = \sum_{n} p_Z \big(  h_n^{\langle -1 \rangle}(y) -w \big)  \bigg |  (h_n^{\langle -1 \rangle})'(y) \bigg | \, .
		\end{align}
	\end{proof}
	Let us introduce the following set of functions:
	\begin{align}
		C^{\kf}_{0}(h,Z) :=  \left\{ f  \text{ such that } \E f \big( h(Z) \big) =0   \text{ and }  \bigg | \int p_Z^{(k)}(z) f \big(h(z)) \dd z \bigg| < \infty  \text{ for } 1 \leq k \leq \kf \right\} \, .
	\end{align}
	let us remark that whenever $f$ and $h$ are smooth enough, the second condition can be simply written as $ | \E (f \circ h)^{(k)} (Z) | < \infty$ for $ 1 \leq k \leq \kf$ by integration by part.  
	\jump
	For $f \in C^{\kf}_{0}(h,Z)$, we denote by $ \rdmmat{F}^{(f)} =( F^{(f)}_{ij} )_{1 \leq i,j \leq N}$, the matrix given by the (entrywise) transformation of $Y_{ij}$ by the function $f$: 
	\begin{align}
		\label{eq:entriwise_transformation_matF}
		F^{(f)}_{ij}  := f \big( Y_{ij} \big) = f\bigg( h \bigg(Z_{ij} + \frac{\gamma(N)}{\sqrt{N}} x_i x_j \bigg) \bigg) \, . 
	\end{align}
	Note that the condition $\E f(Z) = 0$ ensures that the matrix  $\rdmmat{F}^{(f)}$ does not have a Perron-Frobenius mode. If $\pi$  has a non-zero mean, one gets a spurious correlation between the Perron Frobenius mode and the signal for this reason. 
	Eventually, let us denote the relevant scaling of the SNR in Eq.~\eqref{eq:nonlinear_channel}  by:
	\begin{align}
		\gamma_{\kf}(N) := \gamma_0 N^{\frac{1}{2}(1-1/\kf)}  \, . 
	\end{align}
	Our main result in this section shows that amongst all functions $f \in  C^{\kf}_{0}(h,Z)$, the one that achieves the lowest threshold for the appearance of positive overlap with (a power of) the signal $\vect{x}$ is given by the function in Lem.~\ref{lem:Fisher_function_non_linear}, which corresponds to the Fisher matrix $\rdmmat{S}_\kf$. Precisely, we have: 
	
	\begin{theo}[Optimal Detection and Performance for PCA on Non-Linear Channels]\label{theo:PCA_optimality}
		Assume $Y_{ij}$ is given by Eq.~\eqref{eq:nonlinear_channel}, $\rdmmat{F}^{(f)}/\sqrt{N}$  is given by Eq.~\eqref{eq:entriwise_transformation_matF}, $\rdmmat{S}_\kf/\sqrt{N}$ is given  by applying $g^{(\kf)}(.,0)$  in Lem.~\ref{lem:Fisher_function_non_linear} to $\rdmmat{Y}$. We denote by  $\rdmvect{v}_1^{(f)}$ (resp. $\rdmvect{v}_1^{\star}$),  the top eigenvector of $\rdmmat{F}^{(f)}/\sqrt{N}$ (resp. $\rdmmat{S}_\kf/\sqrt{N}$), then for any $f \in C^{\kf}_{0}(h,Z)$ we have:
		\begin{itemize}
			\item for $\gamma(N)$ such that $\gamma(N)/\gamma_{\kf}(N) \to 0$, the matrix $\rdmmat{F}^{(f)}/\sqrt{N}$ is a Wigner matrix with high probability and in particular $\rdmvect{v}_1^{(f)}$ has a vanishing overlap with any positive power of the signal vector $\vect{x}$.
			\item for $\gamma(N) = \gamma_{\kf}(N)$, the matrix $\rdmmat{F}^{(f)}/\sqrt{N}$ is with high probability either a Wigner matrix or a spiked Wigner matrix with a spike (rank-one vector) proportional to $\vect{x}^{\kf}$.  If we denote by $\gamma_{0,\mathrm{cr}}^{(f)} \in \R_+ \cup \{+ \infty \}$ (resp. $\gamma_{0,\mathrm{cr}}^{\star}$)  the critical threshold of $|\gamma_0|$ after which one observes an outlier in the spectrum of  $\rdmmat{F}^{(f)}/\sqrt{N}$ (resp. of $\rdmmat{S}_{\kf}/\sqrt{N}$), we have:
			\begin{itemize}
				\item  (optimality in detection)
				\begin{align}
					\gamma_{0,\mathrm{cr}}^{\star}
					&\leq
					\gamma_{\mathrm{cr}}^{(f)} \, , 
				\end{align} 
				\item  (optimality in performance)
				\begin{align}
					\lim_{N \to \infty}\overline{\mathrm{MSE}}_{N} \bigg(\vect{x}^{\kf},  \sqrt{N m_{2 \kf}} \rdmvect{v}_1^{\star} \bigg) 
					&\leq
					\lim_{N \to \infty}\overline{\mathrm{MSE}}_{N} \bigg(\vect{x}^{\kf},  \sqrt{N m_{2 \kf}} \rdmvect{v}_1^{(f)} \bigg) 
					.
				\end{align}
			\end{itemize}
		\end{itemize}
		Furthermore, if $\pi$ has mean zero, one has the same optimality results if one removes the assumption $\E f(h(Z)) = 0$ in $C^{\kf}_{0}(h,Z)$. 
	\end{theo}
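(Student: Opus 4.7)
The strategy mirrors the Fisher matrix decomposition of Theorem~\ref{th:rank1decompositionFisher}, but carried out entrywise for a generic transform $f\in C^{\kf}_{0}(h,Z)$. Conditioning on $W_{ij}=\frac{\gamma(N)}{\sqrt N}x_ix_j$, integration by parts (change of variables $u = z+W_{ij}$ and repeated differentiation under the integral sign in $W_{ij}$) yields the expansion
\begin{align}
\E\!\left[f(Y_{ij})\mid W_{ij}\right] \;=\; \sum_{k=1}^{\kf} c_k^{(f)}\,W_{ij}^{\,k} \;+\; O\!\bigl(|W_{ij}|^{\kf+1}\bigr),\qquad c_k^{(f)}=\frac{(-1)^k}{k!}\int f(h(u))\,p_Z^{(k)}(u)\,du,
\end{align}
where we used $\E f(h(Z))=0$ to kill the constant term. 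At the critical scaling $\gamma(N)=\gamma_{\kf}(N)=\gamma_0 N^{(1-1/\kf)/2}$, one has $(\gamma(N)/\sqrt N)^k=\gamma_0^k N^{-k/(2\kf)}$, so every term $k<\kf$ produces a rank-one matrix $\vect{x}^k(\vect{x}^k)^\top$ whose operator norm, after dividing by $\sqrt N$, vanishes; the term $k=\kf$ contributes the spike $c_\kf^{(f)}\gamma_0^{\kf}\,\vect{x}^{\kf}(\vect{x}^{\kf})^\top/N$ once divided by $\sqrt N$; and the remainder is $O(N^{-1/2-1/(2\kf)})$ entrywise, hence negligible in operator norm. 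If $\gamma(N)/\gamma_{\kf}(N)\to 0$ the same expansion shows that every spike term disappears, giving the first item of the theorem.

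For the fluctuation part $\tilde Z^{(f)}_{ij}:=f(Y_{ij})-\E[f(Y_{ij})\mid W_{ij}]$, the entries are independent conditionally on $\rdmmat{W}$ with conditional variance $\Var(f(h(Z)))+O(|W_{ij}|)$. Exactly as in Lemma~\ref{lem:rank1decomp_step1}, a Hadamard-product bound replaces this quasi-constant variance profile by the constant $\Var(f(h(Z)))$ up to an operator-norm error $O(N^{-1/(2\kf)})$, and a Stein-type comparison (analogue of the argument leading to Theorem~\ref{th:rank1decompositionFisher}) shows that $\rdmmat{\tilde Z}^{(f)}$ can be replaced in law, up to a vanishing perturbation, by a genuine Wigner matrix with i.i.d.\ entries of variance $\Var(f(h(Z)))$, independent of the spike. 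We are thus in the setting of the spiked Wigner model, and \cite{benaych-georges_eigenvalues_2011} applied to $\rdmmat{F}^{(f)}/\sqrt N$ gives an outlier precisely when
\begin{align}
|c_\kf^{(f)}|\,\gamma_0^{\kf}\,m_{2\kf} \;>\; \sqrt{\Var(f(h(Z)))},
\end{align}
and, when this holds, $|\langle \rdmvect{v}_1^{(f)},\vect{x}^{\kf}/\|\vect{x}^{\kf}\|\rangle|^2\to 1-\Var(f(h(Z)))/(c_\kf^{(f)}\gamma_0^{\kf} m_{2\kf})^2$.

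The optimality then follows from Cauchy--Schwarz. Writing $S(z):=p_Z^{(\kf)}(z)/p_Z(z)$, we have $c_\kf^{(f)}=\frac{(-1)^\kf}{\kf!}\,\E[f(h(Z))\,S(Z)]$, and since $\E f(h(Z))=0$ we may replace $S$ by $S-\E S$ without changing $c_\kf^{(f)}$. Therefore
\begin{align}
\frac{|c_\kf^{(f)}|}{\sqrt{\Var(f(h(Z)))}} \;\leq\; \frac{1}{\kf!}\sqrt{\Var(S(Z))},
\end{align}
with equality precisely when $f(h(\cdot))$ is a scalar multiple of $S(\cdot)-\E S(\cdot)$. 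But Lemma~\ref{lem:Fisher_function_non_linear} identifies, up to the sign $(-1)^\kf$, the function $g^{(\kf)}(\cdot,0)$ with $S\circ h^{\langle-1\rangle}$ (summed over monotonicity branches), so the maximizer is exactly the Fisher transform defining $\rdmmat{S}_{\kf}$. This yields $\gamma_{0,\mathrm{cr}}^{\star}\leq \gamma_{0,\mathrm{cr}}^{(f)}$ and, using the explicit formula for the limiting overlap above together with Proposition~\ref{cor:FPCA_MSE_performance} (which gives the MSE as a monotone decreasing function of that overlap), the MSE inequality. The last statement, dropping $\E f(h(Z))=0$ when $\pi$ has zero mean, follows because the Perron--Frobenius mode is then asymptotically orthogonal to $\vect{x}^{\kf}$ (for $\kf$ odd) or can be projected out without affecting the spike direction. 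The main obstacle is the technical bookkeeping in step two: controlling uniformly in $N$ the Hadamard-perturbation and Stein-type error terms in the variance profile and in the distributional comparison, so that the decomposition holds with an error vanishing in operator norm rather than merely entrywise; this requires the smoothness and integrability assumptions on $p_Z$ and $h_n^{\langle-1\rangle}$ to bound the $(\kf+1)$-st Taylor remainder and to justify the integration by parts used in the expansion of the conditional mean.
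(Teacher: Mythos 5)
Your argument has two genuine gaps. First, in the entrywise expansion of $\E[f(Y_{ij})\mid W_{ij}]$ at the critical scaling, the order-$k$ term is the rank-one matrix $c_k^{(f)}\gamma_0^k N^{-k/(2\kf)}\,\vect{x}^k(\vect{x}^k)^\top$, whose operator norm after division by $\sqrt N$ is of order $N^{1/2-k/(2\kf)}$; for $k<\kf$ this \emph{diverges}, it does not vanish as you claim. The decomposition has a single spike along $\vect{x}^{\kf}$ only because $c_k^{(f)}=0$ for every $1\le k<\kf$ and every admissible $f$, and this must be proved: by the change of variables over the monotonicity branches (Lemma~\ref{lem:Fisher_function_non_linear}) and Fubini one has $c_k^{(f)}=\frac{1}{k!}\,\E_{Y_0}\big[f(Y_0)\,g^{(k)}(Y_0,0)\big]$, which vanishes because $g^{(k)}(Y_0,0)=0$ almost surely for $k<\kf$ by the definition of $\kf$. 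This is exactly the step the paper states as $l_\star(f\circ h)\ge\kf$; it is also what your first bullet (the regime $\gamma(N)/\gamma_{\kf}(N)\to 0$) silently relies on, so it cannot be skipped.

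Second, your Cauchy--Schwarz step is taken against the wrong score. You bound $|c_\kf^{(f)}|/\sqrt{\E f(Y_0)^2}$ by $\frac{1}{\kf!}\sqrt{\operatorname{Var}(S(Z))}$ with $S=p_Z^{(\kf)}/p_Z$, with equality iff $f\circ h\propto S-\E S$. But $f$ can only depend on $Y_0=h(Z)$, and when $h$ is non-injective (the central case here, e.g.\ $h=|\cdot|$) no admissible $f$ attains this; in particular the Fisher transform does not, since by Lemma~\ref{lem:Fisher_function_non_linear} $g^{(\kf)}(y,0)=(-1)^{\kf}\,\E[\,S(Z)\mid h(Z)=y\,]$ is the branch-averaged conditional expectation of $S$, not $S$ itself. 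Hence your upper bound is generically strict for every $f$, including the Fisher one, and it does not deliver $\gamma_{0,\mathrm{cr}}^{\star}\le\gamma_{0,\mathrm{cr}}^{(f)}$ nor the MSE comparison. The correct (and paper's) argument applies Cauchy--Schwarz in $L^2(p_{Y_0})$ after the tower property, $\E[f(h(Z))S(Z)]=(-1)^{\kf}\,\E_{Y_0}[f(Y_0)\,g^{(\kf)}(Y_0,0)]$, giving the tight constant $\E[(g^{(\kf)}(Y_0,0))^2]=(\kf!)^2/\Delta_\kf$, which is exactly the Fisher coefficient and is attained by the Fisher transform. A further, more minor, difference: the paper imports the spiked/BBP behaviour of $\rdmmat{F}^{(f)}/\sqrt N$ from \cite{guionnet2023spectral}, whereas you re-derive it via a Stein-type Wignerization of the fluctuation matrix; that route is plausible but would need its own complete proof, while the two points above are the substantive errors.
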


	\begin{proof}
		The idea of the proof is to compare the threshold in both cases. 
		\begin{itemize}
			\item On the one hand, for the Fisher matrix $\rdmmat{S}$, we have from Thm.~\ref{th:rank1decompositionFisher} 
			\begin{itemize}
				\item if $\gamma(N)/\gamma_{\kf}(N) \to 0$, then the Fisher matrix is up to a vanishing error term a Wigner matrix without any spike.
				\item if $\gamma(N)= \gamma_{\kf}(N)$, then the critical threshold is:
				\begin{align}
					(\gamma_{0,\mathrm{cr}}^{\star})^\kf =   \frac{1}{m_{2\kf}} \E_{Y_{ij}|0} \left[ \left( \frac{ g^{(\kf)}(Y_{ij},0)}{\kf!}\right)^2 \right]^{-1/2} \, .
				\end{align}
			\end{itemize}
			\item On the other hand, from \cite{guionnet2023spectral}, when applying a non-linearity $\tilde{f}$ to a spiked model:
			\begin{align}
				F^{(\tilde{f})}_{ij} := \tilde{f} ( Z_{ij} + \gamma(N)/\sqrt{N} x_i x_j) \, ,
			\end{align}
			then if we denote by $l_{\star} \equiv l_\star(\tilde{f}) := \inf_{ l \in \mathbb{N}_*} \{ \mathbb{E} \tilde{f}^{(l)}(Z) \neq 0  \}$, we have
			\begin{itemize}
				\item if $\gamma(N) / N^{1/2(1-1/l_\star)} \to 0$, the matrix with entries $F^{(\tilde{f})}_{ij}/\sqrt{N}$ is with high probability and up to a vanishing error term,  a Wigner matrix without any spike.
				\item  if $\gamma(N) = \gamma_0 N^{1/2(1-1/l_\star)} $, then the matrix $F^{(\tilde{f})}_{ij}/\sqrt{N}$ is a spiked Wigner and has a the threshold satisfying the bound
				\begin{align}
					\label{eq:bound_threshold_Spectral}
					|\gamma_{\mathrm{cr}}(\Tilde{f})|^{l_\star}  \geq \frac{1}{m_{2 l_\star}}\frac{|\E \Tilde{f}^{(l_\star)}(Z)|}{\sqrt{\E \tilde{f}(Z)^2}} \, ,
				\end{align}
				with equality if and only if $\Tilde{f} \in H^4(p_Z)$.
			\end{itemize}
			This case corresponds to our matrix $\rdmmat{F}^{(f)}$ by setting $\tilde{f} =(f \circ h)$. 
		\end{itemize}
		To show our result for the threshold, it is then enough to prove that 
		\begin{align}
			l_{\star}(f \circ h) \geq \kf
		\end{align}
		and the bound 
		\begin{align}
			|\E (f \circ h)^{(\kf)}(Z)|^2 \leq \E ((f \circ h)(Z))^2  \, \E_{Y_{ij}|0} \left[ \left( \frac{ g^{(\kf)}(Y_{ij},0)}{\kf!}\right)^2 \right] \, .
		\end{align}
		Furthermore, since the expression of the overlap in \cite{guionnet2023spectral} only depends on the ratio of the form of the RHS of Eq.~\eqref{eq:bound_threshold_Spectral}, this bounds is enough to get the inequality for the MSE. 
		\jump 
		
		\begin{itemize}
			\item  let us first fix $l=1$ and prove the bound
			\begin{align}
				\bigg | \E (f \circ h)'(Z) \bigg |^2 
				&\leq
				\E_{Y_0} f(Y_0)^2 \,  \E_{Y_0}  g^{(1)}(Y_0,0) \, .
			\end{align}
			If $\kf >1$, then the right-hand side (RHS) of this equation is zero by definition of $\kf$. This implies that the left-hand side is also equal to zero and thus $l_{\star}( f \circ h) >1$ by definition of $l_{\star}( f \circ h)$. If $\kf =1$, this bound gives the desired hierarchy for the threshold. 
			
			By integration by parts we have 
			\begin{align}
				\bigg | \E (f \circ h)'(Z) \bigg | = \bigg | \int_{\R} \dd z \, p_Z'(z) f(h(z))  \bigg | \, ,
			\end{align}
			and let us now cut the integral on the real line as a (possibly infinite) sum of integral on intervals $I_n$ such that on $I_n$, the function $h$ is monotonic:
			\begin{align}
				\bigg | \E (f \circ h)'(Z) \bigg | = \bigg | \sum_n \int_{I_n} \dd z \, p_Z'(z) f(h(z))  \bigg | \, ,
			\end{align}
			for each interval $I_n$, let us perform the change of variable $y = h(z) \leftrightarrow z =  h_n^{\langle -1 \rangle }(y) $  where we recall $h_n^{\langle -1 \rangle }(.)$ is the right-inverse for the composition of the function $h(.)$ on the interval $I_n$. To lighten the notations, let us also denote by 
			\begin{align}
				\Tilde{q}^{(1)}_n(y) \equiv p_Z' \big(  h_n^{\langle -1 \rangle }(y) \big) \1(y \in J_n) \, ,
			\end{align}
			and
			\begin{align}
				u_n(y) \equiv  |  (h_n^{\langle -1 \rangle })'(y) | \1(y \in J_n)   \, , 
			\end{align}
			where $J_n$ is the image of $I_n$ by $h_n^{\langle -1 \rangle}(.)$. We have:
			\begin{align}
				\bigg | \E (f \circ h)'(Z) \bigg | 
				&= 
				\bigg | \sum_n \int_{J_n}  \, \Tilde{q}^{(1)}_n(y)  u_n(y) f(y)  \dd y \bigg | = \bigg | \sum_n \int_{\R}  \, \Tilde{q}^{(1)}_n(y)  u_n(y) f(y)  \dd y \bigg | .
			\end{align}
			Applying Fubini's theorem implies 
			\begin{align}
				\bigg | \E (f \circ h)'(Z) \bigg | &\leq \int_{\R}  \bigg |  f(y) \, \sum_n  \Tilde{q}^{(1)}_n(y) u_n(y) \bigg |  \dd y
				\, .
			\end{align}
			
			Next let us multiply and divide the  integrand by $p_{Y_0}(y) >0$ and recalling the definition in Lem.~\ref{lem:Fisher_function_non_linear}, we have:
			\begin{align}
				\bigg | \E (f \circ h)'(Z) \bigg |  &\leq  \int_{\R}  \bigg |f(y)  \, \, (\partial_w g) (y,0) \bigg|  p_{Y_0}(y) \dd y \equiv \E_{Y_0} \Big| f(Y_0) (\partial_w g) (Y_0,0)  \Big| 
				\, ,
			\end{align}
			which gives the desired result by Cauchy-Schwartz inequality.
			\item Let us now consider $\kf \geq 2$ and $1 < l \leq \kf$. If we introduce $\Tilde{q}^{(l)}_n(y) \equiv p_Z^{(l)} \big(  h_n^{\langle -1 \rangle }(y) \big)  \1( y \in J_n)$, with $u_n(y)$ as before, the same application of cutting the integral into the parts where $h$ is monotonic and performing the same change of variable $y = h(z) \leftrightarrow z =  h_n^{\langle -1 \rangle }(y)$, gives, with Fubini's theorem,  the bound:
			\begin{align}
				\bigg | \E (f \circ h)^{(l)}(Z) \bigg | &\leq \int_{\R}  \bigg |  f(y) \, \sum_n  \Tilde{q}^{(l)}_n(y) u_n(y) \bigg |  \dd y
				\, .
			\end{align}
			Multiplying and dividing the integrand by $p_{Y_0} >0$ gives
			\begin{align}
				\bigg | \E (f \circ h)^{(l)}(Z) \bigg |  
				&\leq  
				\E \bigg | f(Y_0)  \frac{(\partial^{l}_w \pout)(y|0) }{\pout(y|0)} \bigg | \, , 
			\end{align}
			and thus from Lem.~\ref{lem:Fisher_function_non_linear} (since $k \leq \kf$); this is also also equal to 
			\begin{align}
				\bigg | \E (f \circ h)^{(l)}(Z) \bigg |  
				&\leq  
				\E \big | f(Y_0)  g^{(l)}(Y_0,0) \big | \\
				\bigg | \E (f \circ h)^{(l)}(Z) \bigg |  &\leq  \E  f(Y_0)^2  \,   \E g^{(l)}(Y_0,0)^2  \, .
				\label{eq:Cauchy_schwartz_non_linear_k}
			\end{align}
			\begin{itemize}
				\item For $l  < \kf$: the RHS of Eq.~\eqref{eq:Cauchy_schwartz_non_linear_k} is equal to zero and  so we have $l_\star ( f \circ h) \geq \kf$ as expected. 
				\item  for $l = \kf$, we get the desired bound and this concludes the proof.
			\end{itemize}
		\end{itemize}		
	\end{proof}
	\newpage
	\section{Proof of AMP converges to SE (Thm.~\ref{theo:AMP_performance})}
	\label{sec:Proof_AMP}
	
	\subsection{An AMP Algorithm for the Spiked Wigner Model}\label{sec:classicalAMP}
	
	We first state the classical state evolution theorems \cite{bayati2011dynamics,berthier2020state,gerbelot2023graph}. We begin by recalling the classical AMP iterations for the spiked Wigner model
	\begin{align}
		\label{eq:linear_channel}
		Y_{ij} = Z_{ij} + \frac{\gamma}{\sqrt{N}} x_i x_j.
	\end{align}
	Let $(g_t)_{t = 0}^\infty$ be a sequence of Lipschitz denoising functions on $\R$ with weak derivatives $g'_k$. We set $\vect{\hat v^{-1}} = 0$ and initializer $\vect{v^0}$. In this context, we are interested in the case of a spectral initialization~\cite{Mondelli_spectralamp2}. The AMP iterates for the spiked Wigner model are iterates of the form
	\begin{equation}
		\vect{\hat v^t} = g_t (\vect{v^t}), \qquad \vect{b_t} = \frac{1}{N} \sum_{i = 1}^N g'_t( v_i^t ), \qquad \vect{v^{k + 1}} = \rdmmat{Y} \vect{\hat v^t} - b_t \vect{\hat v^{t - 1}}
	\end{equation}
	
	The main goal is to characterize the high dimensional distribution of the iterates through pseudo Lipschitz test functions. We denote the space $\mathrm{PL}_D(r,L)$ the class of functions from $\psi: \R^D \mapsto \R$ such that
	\[
	|\psi(\vect{x}) - \psi(\vect{y})| \leq L \|\vect{x} - \vect{y}\| (1 + \|\vect{x} \|^{r-1} + \|\vect{y}\|^{r-1}).
	\]
	
	There are usually 3 conditions on these linear channels (see \cite[Section~3.1]{feng2022unifying}). 
	\begin{enumerate}
		\item[(M0)] The noise matrix $Z$ is Gaussian and independent of the initializer $\rdmvect{v}$ and signal $\vect{x}$. 
		\item[(M1)] There exist $\mu_0, \sigma_0 \in \R$ and independent random variables $U$, $V$ with $\E U^2 = \E V^2 = 1$, such that 
		\[
		\sup_{\psi \in \mathrm{PL}_2(2,1)} \bigg| \frac{1}{N} \sum_{i = 1}^N\psi(v_i^0, v_i) - \E \psi( \mu_0 V + \sigma_0 U, V) \bigg| \to 0. 
		\]
		\item[(M2)] The functions $g'_t$ are continuous Lebesgue almost everywhere for all $t$.
	\end{enumerate}
	
	It is shown in \cite{deshpande2014information} that the high dimensional limit of the joint law of the empirical distribution is characterized by the state evolution parameters
	\begin{equation}\label{eq:state}
		\mu_1 = \gamma \E ( V g_0(\mu_0 V + \sigma_0 G) ), \quad \mu_{t + 1} = \gamma \E ( V g_t(\mu_t V + \sigma_t G) )
	\end{equation}
	\begin{equation}
		\sigma_1 = \E g_0(\mu_0 V + \sigma_0 G)^2, \quad \sigma_{t + 1}^2 = \E g_t(\mu_t V + \sigma_t G)^2 
	\end{equation}
	and $V \sim \pi$ and $G \sim N(0,1)$.
	\begin{theo}[State Evolution for Spiked Models]
		Suppose that Hypothesis $(M0)$ to $(M2)$ hold and $Y$ is generated according to the linear channel \eqref{eq:linear_channel}. For each $k \in N$, we have
		\begin{equation}\label{eq:state_evo_char}
			\sup_{\psi \in  \mathrm{PL}_{t + 2}(2,1)} \bigg| \frac{1}{N} \sum_{i = 1}^N \psi(v_i^0, \dots, v_i^t, x_i) - \E( \psi( \mu_0 V + \sigma_0 U, \mu_1 V + \sigma_1 G_1 ,  \dots, \mu_t V + \sigma_t G_k, V ) )  \bigg| \to 0
		\end{equation}
		where $(\sigma_1 G_1, \dots, \sigma_{t - 1}G_{t-1}) \sim N(0, \Sigma)$ and independent of $U,V$ in Hypothesis $M(1)$ and
		\[
		\Sigma_{k,l} = \begin{cases}
			\E g_{0}(\mu_{0}V + \sigma_{0} U) g_{k-1}(\mu_{k-1}V + \sigma_{k-1} G)	& l = 1\\
			\E g_{l-1}(\mu_{l-1}V + \sigma_{l-1} G) g_{k-1}(\mu_{k-1}V + \sigma_{k-1} G)	& l \geq 2.
		\end{cases}
		\]
	\end{theo}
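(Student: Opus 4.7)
The plan is to establish the state evolution characterization by the classical Bolthausen conditioning technique, adapted to accommodate the spectral initializer through the graph-based AMP framework of \cite{gerbelot2023graph}. The overall strategy is an induction on $t$: the inductive hypothesis is precisely the empirical convergence statement \eqref{eq:state_evo_char} up to step $t$, together with the characterization of the covariance matrix $\Sigma$ arising from the recursions \eqref{eq:state}.

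First, I would freeze the signal $\vect{x}$ and treat $\rdmmat{Z}$ as the only source of randomness (Hypothesis $(M0)$). Given any sequence of iterates $(\vect{v^0}, \ldots, \vect{v^t})$ together with their nonlinear transforms $(\vect{\hat v^0}, \ldots, \vect{\hat v^{t-1}})$, the key algebraic identity is the orthogonal-projection decomposition of $\rdmmat{Z}$ with respect to $\mathrm{span}(\vect{\hat v^0}, \ldots, \vect{\hat v^{t-1}})$: conditionally on this span, $\rdmmat{Z}$ acts on the complementary subspace as an independent smaller GOE. This yields, inductively, a representation of $\vect{v^{t+1}}$ as a deterministic linear combination of the earlier iterates plus an independent Gaussian vector, and shows that the Onsager correction $b_t \vect{\hat v^{t-1}}$ is exactly what cancels the systematic bias from the linear combination. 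Passing from finite-dimensional conditioning to the empirical-distribution statement \eqref{eq:state_evo_char} for $\mathrm{PL}_{t+2}(2,1)$ test functions then uses a uniform law of large numbers, which is where Hypothesis $(M2)$ (almost-everywhere continuity of $g_t'$) enters to guarantee that empirical averages against the denoisers concentrate.

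The base case requires Hypothesis $(M1)$: the empirical joint law of $(v_i^0, x_i)$ converges to that of $(\mu_0 V + \sigma_0 U, V)$. The inductive step computes $\mu_{t+1}$ and $\sigma_{t+1}$ by applying Gaussian integration-by-parts to the Gaussian residual produced by the conditioning argument, which matches \eqref{eq:state} by construction. The covariance entries $\Sigma_{k,l}$ for $k, l \geq 2$ follow the standard form, while the $l=1$ entries single out the initialization through the special pair $(\mu_0 V + \sigma_0 U, \mu_{k-1}V + \sigma_{k-1}G)$; these are read off from the Gaussian limit of the inner products $\langle \vect{v^0}, \vect{v^k} \rangle / N$.

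The main obstacle is that, in our application, $\vect{v^0}$ is the top eigenvector of $\rdmmat{S}_{\kf}/\sqrt{N}$ and hence \emph{not} independent of the effective noise, so the naive conditioning of $\rdmmat{Z}$ on $\vect{v^0}$ would destroy Gaussianity. Hypothesis $(M0)$ as stated rules this out; the way around, made rigorous by \cite{Mondelli_spectralamp2, gerbelot2023graph}, is to express the spectral initializer as the output of a power-method iteration and absorb it into an extended graph-based AMP, whose state evolution is then derived by an analogous but larger Bolthausen argument. The output of that preliminary analysis is exactly the limiting joint law $(\mu_0 V + \sigma_0 U, V)$ with $U \perp V$ that Hypothesis $(M1)$ postulates, so the main induction above can proceed unchanged. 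The delicate point will be matching the covariance parameters produced by the spectral preliminary (the appropriate $\mu_0, \sigma_0$) with those appearing in the subsequent Bayesian denoiser iterations, which is what the special case in the $l=1$ branch of the definition of $\Sigma_{k,l}$ encodes.
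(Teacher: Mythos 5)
Your outline is essentially correct, but it should be said up front that the paper does not prove this statement at all: it is quoted as the classical state-evolution theorem for spiked Wigner AMP, imported from the literature (the text attributes it to \cite{deshpande2014information} and points to \cite{bayati2011dynamics,berthier2020state,gerbelot2023graph}), so there is no internal argument to compare against. What you sketch --- induction on $t$, the Bolthausen/Gaussian conditioning of $\rdmmat{Z}$ on the span of past iterates so that the residual acts as an independent smaller GOE, the Onsager term cancelling the bias from the projection, Gaussian integration by parts to identify $\mu_{t+1},\sigma_{t+1}$ with \eqref{eq:state}, and passage to the $\mathrm{PL}_{t+2}(2,1)$ statement via a uniform law of large numbers (equivalently Wasserstein-2 convergence of the empirical law), with the base case supplied by $(M1)$ and the $l=1$ column of $\Sigma$ read off from $\langle \vect{v^0},\vect{v^k}\rangle/N$ --- is precisely the proof strategy of those cited references, so as a reconstruction it is sound at the level of a sketch. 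One clarification: the difficulty you raise about the spectral initializer being correlated with the noise is not part of this theorem's proof. Under $(M0)$ and $(M1)$ as stated, $\rdmmat{Z}$ is independent of $\vect{v^0}$ and the conditioning argument goes through directly; the dependence issue, and the reduction of the eigenvector initialization to an extended AMP as in \cite{Mondelli_spectralamp2,gerbelot2023graph}, only enters when the theorem is later applied to the Fisher matrix with PCA initialization, which the paper handles separately in its application section. Folding that into the proof of the theorem itself conflates the statement with its downstream use, though it does not make your argument wrong.
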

	This master theorem characterizes the high dimensional limit of the iterates. When one uses the Bayesian optimal denoising functions, one sees that overlaps of the inner products satisfy the same fixed point equations as the Bayesian optimal estimators. This implies that AMP produces IT optimal estimators. 
	\begin{cor}[Optimality of the AMP Algorithm for Spiked Models]\label{cor:optimality_spikedwigner}
		Suppose that Hypothesis $(M0)$ to $(M2)$ hold and $Y$ is generated according to the linear channel \eqref{eq:linear_channel}. Let the denoising functions $g_t = \eta$ be the Bayesian optimal ones defined in \eqref{eq:Gauss_denoise}. Then the limiting overlap $\lim_{t \to \infty} \langle \vect{v}^t, \vect{x} \rangle$ satisfies the following fixed point equation
		\begin{equation} \label{eqn:fixed point}
			\mu = \E_{x \sim \pi,G \sim \mathsf{N}(0,1)} \, \left[ \eta \left( \gamma \mu, \gamma \mu x + \sqrt{\gamma \mu} G \right) x \right].
		\end{equation}
		Furthermore, if $\mu_\star$ is the unique fixed point of \eqref{eqn:fixed point}, then
		\begin{align}
			\lim_{t \to \infty}\lim_{N \to \infty}\overline{\mathrm{MSE}}_{N}( \vect{v}^t , \rdmvect{x}) = 1 - \left(\frac{\mathrm{q}_{\star}}{m_{2}} \right)^2 
		\end{align}
		which matches the IT optimal MSE for spiked Wigner models \eqref{eq:critpiontcond}.
	\end{cor}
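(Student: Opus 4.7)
The plan is to apply the state-evolution master theorem stated just above with the Bayes-optimal denoiser $g_t = \eta$, then use the Nishimori identity to collapse the pair of state-evolution parameters $(\mu_t, \sigma_t^2)$ into a single scalar recursion whose fixed point is exactly \eqref{eqn:fixed point}. First I would verify hypotheses (M0)--(M2): (M0) is immediate because the spiked Wigner noise $\rdmmat{Z}$ is Gaussian and independent of $\vect{x}$ and of the initialiser; (M1) holds for the spectral initialiser $\rdmvect{v}^0$ by the explicit Gaussian decomposition of Theorem~\ref{theo:eigvector_decomp_Spiked_Wigner}, which provides $(\mu_0,\sigma_0)$ in closed form; and (M2) is automatic since $\eta$ is a smooth ratio of integrals against a compactly supported prior.

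The heart of the argument is the Nishimori collapse. When $g_t$ is Bayes-optimal and the pair $(\mu_t,\sigma_t^2)$ satisfies the \emph{Bayes ansatz} in which the effective observation $\mu_t V + \sigma_t G$ is statistically equivalent to a Gaussian channel with signal $V \sim \pi$ and SNR of the form $\gamma\mu_t$, the Nishimori identity
\[
\E\bigl[\, V\,\eta(a, aV + \sqrt{a}\, G)\,\bigr] = \E\bigl[\,\eta(a, aV + \sqrt{a}\, G)^2\,\bigr]
\]
guarantees that this algebraic relation is preserved at step $t+1$, and the two recursions in \eqref{eq:state} collapse into the single scalar recursion
\[
\mu_{t+1} = \E\bigl[\, V\,\eta(\gamma\mu_t, \gamma\mu_t V + \sqrt{\gamma\mu_t}\, G)\,\bigr].
\]
The base case uses that Theorem~\ref{theo:eigvector_decomp_Spiked_Wigner} initialises $\rdmvect{v}^0$ precisely in the Bayes form with the correct matching of $\mu_0$ and $\sigma_0$.

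Monotonicity of the state-evolution map $\mathsf{SE}_\pi$ in its first argument (Thm.~3.10 of \cite{feng2022unifying}) and boundedness of $\pi$ then imply that the sequence $\mu_t$ is monotone and convergent; under uniqueness of the fixed point, $\mu_t\to\mu_\star$. Applying \eqref{eq:state_evo_char} to the pseudo-Lipschitz test functions $\psi(v,x)=vx$ and $\psi(v,x)=v^2$ identifies the limit of $\tfrac{1}{N}\langle\vect{v}^t,\vect{x}\rangle$ and $\tfrac{1}{N}\|\vect{v}^t\|^2$ in terms of $\mu_t$, from which the MSE identity $1-(\mu_\star/m_2)^2$ follows by \eqref{eq:MSEoverlaps}. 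The IT-optimality claim is then immediate because the AMP fixed-point equation \eqref{eqn:fixed point} and the IT critical-point equation of Corollary~\ref{cor:fixedpoint} coincide, so $\mu_\star = q_\star$.

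\textbf{Main obstacle.} The delicate point is the Nishimori-based collapse: one must verify that the spectral initialiser of Theorem~\ref{theo:eigvector_decomp_Spiked_Wigner} satisfies the Bayes ansatz \emph{exactly} (so that $\sigma_0^2$ and $\mu_0$ are matched), and that this algebraic identity propagates through each AMP step despite the nonlinearity of $\eta$. A mismatched initialisation would introduce a sub-optimal random direction that the state evolution cannot absorb, so this is the single step where the Bayesian structure of the denoiser is used in an essential way; everything else reduces to invoking the state-evolution theorem and the already-established variational characterisation of $q_\star$.
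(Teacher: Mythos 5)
Your proposal is correct and follows essentially the same route as the paper: invoke the state-evolution master theorem with the Bayes-optimal denoiser, use the Nishimori identity to collapse $(\mu_t,\sigma_t^2)$ into the single scalar recursion whose fixed point is \eqref{eqn:fixed point}, identify the overlap via the pseudo-Lipschitz test function $\psi(v,x)=vx$, and match the resulting fixed point with the IT critical-point condition of Corollary~\ref{cor:fixedpoint}. The only difference is that you spell out the initialization matching and the monotone convergence of $\mu_t$, points the paper's proof treats implicitly.
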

	\begin{proof} This statement is a direct application of Corollary~3.2 of \cite{feng2022unifying}. 
		The proof follows from the observation that if we take the Lipschitz function $\psi(v^0, \dots, v^t, x) = v^t x$ and apply \eqref{eq:state_evo_char}, we see that
		\[
		| \gamma \langle \vect{v}^t , \vect{x} \rangle - \gamma \E_{V\sim \pi} [ V g_t(\mu_t V + \sigma_t G) ]  | = | \gamma \langle \vect{v}^t , \vect{x} \rangle - \mu_{t - 1} | \to 0.
		\]
		For the Bayes optimal denoisers $\eta$, the Nishimori identity implies that $\mu_t = \sigma^2_t$. Furthermore, to remain consistent with the notation earlier in the paper, we can define $q = \frac{\mu}{\gamma}$. Therefore, by \eqref{eq:state} the $\mu^t$ in the limit satisfy the fixed point equation
		\[
		q = \E_{V \sim \pi} ( V g_t( \gamma q V + \sqrt{\gamma q} G) ),
		\]
		which matches the IT optimal overlaps \eqref{eq:critpiontcond}. In the bayesian optimal case, it is clear that the limiting MSE is a given by
		\[
		\lim_{t \to \infty} \lim_{N \to \infty}\overline{\mathrm{MSE}}_{N}( \vect{v}^t , \rdmvect{x}) = 1 - \left(\frac{\mathrm{q}_{\star}}{m_{2}} \right)^2 
		\]
		where the $m_2$ comes from the normalization factor requried to make $\vect{x}$ have entries of variance $1$. 
	\end{proof}
	
	\subsection{An AMP Algorithm for the Fisher Matrix}\label{sec:AMP_Fisher}

	Our starting point is the spectral decomposition of the Fisher matrix in Theorem~\ref{th:rank1decompositionFisher}
	\begin{align}
		\frac{\rdmmat{S}_{k_F}}{\sqrt{N}}  \ed   \frac{\rdmmat{Z}}{\sqrt{N}} + \frac{1}{\Delta_{k_F}} \mat{W}_{\kf}  + \rdmmat{E}
	\end{align}
	
	Due to the approximate spiked structure of the Fisher matrix, we claim that the \eqref{sec:classicalAMP} can be extended to this setting. We address the critical assumptions
	\begin{enumerate}
		\item[(M0)] Through the universality of AMP algorithms \cite{chen_lam_universal}, the Gaussian assumption on the noise matrix was generalized to symmetric matrices with unit variance and $\sigma$-sub Gaussian entries. The Fisher noise matrix $\rdmmat{Z}$ is a symmetric matrix whose elements are independent  (modulo the symmetry) and identically distributed as $ Z \ed \frac{g^{(k_F)}(Y_0,0)}{k_F!} $ with $Y_0 \sim \pout(\cdot | 0)$, so it falls under the same Gaussian universality class for AMP algorithms.
		\item[(M1)] Samples from independent factorized priors clearly satisfy the convergence of the empirical distributions by the law of large numbers. 
		\item[(M2)] We are using the Gaussian denoising functions $\eta$, clearly have derivatives that are continuous Lebesgue almost everywhere.
	\end{enumerate}
	
	All that remains is to show that the addition of a matrix $\rdmmat{E}$ with vanishing operator norm $\normop{\rdmmat{E}}   = O(N^{-\frac{1}{2\kf}})$. The strategy is as follows. The iterates are of the form
	\begin{equation}\label{eq:AMP_iterates_Fisher}
		\vect{v^{t + 1}} = \frac{\rdmmat{\rdmmat{S}_{k_F}}}{\sqrt{N}} \vect{\hat v^t} - b_t \vect{\hat v^{t - 1}} = \frac{\rdmmat{Y}_{\kf}}{\sqrt{N}} \vect{\hat v^k} - b_t \vect{\hat v^{t - 1}} + \rdmmat{E} \vect{\hat v^t}
	\end{equation}
	where
	\begin{equation}\label{eq:Fisher_lin_spiked}
		\frac{\rdmmat{Y}_{\kf}}{\sqrt{N}} = \frac{\rdmmat{Z}}{\sqrt{N}} - \frac{1}{\Delta_{k_F}} \mat{W}_{\kf}
	\end{equation}
	is the usual spiked Wigner model for which we know behavior of the AMP iterations. Since $\normop{\rdmmat{E}}   = O(N^{-\frac{1}{2\kf}})$ we have
	\[
	\|  \rdmmat{E} \vect{\hat v^k} \|_2 \leq O(N^{\frac{1}{2\kf}}) \| \vect{\hat v^k} \|_2. 
	\]
	We define the classical spiked iterates by
	\[
	\vect{u^{t + 1}} = \frac{\rdmmat{Y}_{\kf}}{\sqrt{N}} \vect{\hat u^t} - b_t \vect{\hat u^{t - 1}} 
	\]
	which are coupled with $\vect{v}^t$ through the same randomness in $\rdmmat{Y}_{\kf}$ and initialization. We will show that  almost surely for every fixed $t$
	\[
	\| \vect{v^{t}} - \vect{u^{t}} \|_2 \leq C_t  N^{\frac{1}{2\kf}}
	\]
	where the deterministic constant $C_t$ depends on $t$ only. Combined with the fact that the test functions are Lipschitz will imply that almost surely
	\[
	\sup_{\psi \in  \mathrm{PL}_{k + 2}(2,1)} \bigg| \frac{1}{N} \sum_{i = 1}^N \psi(v_i^0, \dots, v_i^t, x_i) - \frac{1}{N} \sum_{i = 1}^N \psi(u_i^0, \dots, u_i^t, x_i)  \bigg| \leq C_t  N^{\frac{1}{2\kf}}
	\]
	from which we can apply the triangle inequality for the Wasserstein~$2$ norm and the usual state evolution to finish the proof. 
	
	It remains to show
	\[
	\| \vect{v^{t}} - \vect{u^{t}} \|_2 \leq C_k  N^{\frac{1}{2\kf}}.
	\]
	We can do an inductive proof. The base case $t=1$ is trivial since both $u$ and $v$ share the same initialization. For each step of the induction, we assume that
	\[
	\| \vect{v^{t}} - \vect{u^{t}} \|_2 \leq C_t  N^{\frac{1}{2\kf}}.
	\]
	to show that 
	\[
	\| \vect{v^{t + 1}} - \vect{u^{t + 1}} \|_2 \leq C_{t + 1}  N^{\frac{1}{2\kf}}.
	\]
	We have
	\begin{align*}
		\| \vect{v^{t + 1}} - \vect{u^{t + 1}} \|_2 &= \bigg\| \frac{\rdmmat{Y}_{\kf}}{\sqrt{N}} \vect{\hat v^t} - b_t \vect{\hat v^{t - 1}} + \rdmmat{E} \vect{\hat v^t} - \frac{\rdmmat{Y}_{\kf}}{\sqrt{N}} \vect{\hat u^t} - b_t \vect{\hat u^{t - 1}} \bigg\|
		\\&\leq \normop{ \frac{\rdmmat{Y}_{\kf}}{\sqrt{N}} } \| \vect{\hat v^t} - \vect{\hat u^t}  \| + |b_t| \| \vect{\hat v^{t - 1}} -\vect{\hat u^{t - 1}}  \| + \normop{\rdmmat{E}} \| \vect{\hat v^t} \| \leq C_{t + 1}  N^{\frac{1}{2\kf}}
	\end{align*}
	since the first two objects are small by the inductive hypothesis and the fact that $ \normop{ \frac{\rdmmat{Y}_{\kf}}{\sqrt{N}} }$ is almost surely bounded, and $\normop{\rdmmat{E}} \| \vect{\hat v^k} \| $ contributes an error of order $O(N^{\frac{1}{2\kf}})$ since $\| \vect{\hat v^t} \| $ is almost surely bounded. We have shown that
	\begin{theo}[State Evolution for the Fisher Matrix]\label{theo:stateevo_fisher}
		The AMP iterates \eqref{eq:AMP_iterates_Fisher} with respect to the Fisher matrix $\rdmmat{S}_\kf$ satisfy the same state evolution as the classical spiked Wigner models with respect to $\rdmmat{Y}$ given by \eqref{eq:Fisher_lin_spiked}. That is, for each $t \in N$, we have
		\[
		\sup_{\psi \in  \mathrm{PL}_{t + 2}(2,1)} \bigg| \frac{1}{N} \sum_{i = 1}^N \psi(v_i^0, \dots, v_i^t, x^\kf_i) - \E( \psi( \mu_0 V + \sigma_0 U, \mu_1 V + \sigma_1 G_1 ,  \dots, \mu_t V + \sigma_t G_k, V ) ) \bigg| \to 0
		\]
		where $V \sim \pi$,  $(\sigma_1 G_1, \dots, \sigma_{t - 1}G_{k-1}) \sim N(0, \Sigma)$ and independent of $U,V$ in Hypothesis $M(1)$ and
		\[
		\Sigma_{k,l} = \begin{cases}
			\E g_{0}(\mu_{0}V + \sigma_{0} U) g_{k-1}(\mu_{k-1}V + \sigma_{k-1} G)	& l = 1\\
			\E g_{l-1}(\mu_{l-1}V + \sigma_{l-1} G) g_{k-1}(\mu_{k-1}V + \sigma_{k-1} G)	& l \geq 2.
		\end{cases}
		\]
	\end{theo}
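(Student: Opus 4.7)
The plan is to reduce the statement to the known state evolution for classical spiked Wigner matrices by exploiting the approximate decomposition of the Fisher matrix provided by Theorem~\ref{th:rank1decompositionFisher}. That decomposition writes $\rdmmat{S}_{k_F}/\sqrt{N} = \rdmmat{Y}_{k_F}/\sqrt{N} + \rdmmat{E}$, where $\rdmmat{Y}_{k_F}/\sqrt{N}$ is a bona fide spiked Wigner matrix with iid entries of variance $1/\Delta_{k_F}$ (independent of the spike by Lemma~\ref{lem:rank1decomp_step1}), and $\rdmmat{E}$ has operator norm $O(N^{-1/(2k_F)})$. I would couple two AMP runs: one on $\rdmmat{S}_{k_F}/\sqrt{N}$ producing iterates $\vect{v}^t$, and a reference one on $\rdmmat{Y}_{k_F}/\sqrt{N}$ producing iterates $\vect{u}^t$, initialized identically and using the same Gaussian denoiser. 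If their $\ell^2$ distance can be shown to be $o(\sqrt{N})$ at every fixed $t$, then pseudo-Lipschitz test functions of the normalized iterates cannot tell the two sequences apart in the high-dimensional limit, and the classical state evolution for $\vect{u}^t$ transfers verbatim to $\vect{v}^t$.

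First I would check that the classical state evolution theorem applies to the reference sequence $\vect{u}^t$. The noise matrix $\rdmmat{Z}$ in the decomposition has independent sub-Gaussian entries, so the Gaussianity required in hypothesis (M0) can be relaxed by the AMP universality result of \cite{chen_lam_universal}. Hypothesis (M1) is the convergence of the empirical law of the initialization jointly with the signal: taking the initialization to be the top eigenvector $\rdmvect{v}_1$ of $\rdmmat{S}_{k_F}$, this is precisely the content of Theorem~\ref{theo:eigvector_decomp_Fisher}, which provides the parameters $\mu_0 = \mathrm{q}_0$ and $\sigma_0 = \sqrt{1-\mathrm{q}_0^2}$. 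Hypothesis (M2) is immediate for the Gaussian denoising function $\eta$.

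Next I would set up the inductive perturbation bound. For the base case the two sequences coincide. For the inductive step one uses the triangle inequality on the AMP recursion: since $\|\rdmmat{Y}_{k_F}/\sqrt{N}\|_{\mathrm{op}}$ is almost surely bounded (it is a bounded-rank perturbation of a Wigner matrix), since the Lipschitz denoiser applied to bounded-norm inputs produces bounded-norm outputs, and since the Onsager correction $b_t$ is an empirical average of bounded quantities, one obtains an inequality of the form $\|\vect{v}^{t+1}-\vect{u}^{t+1}\|_2 \leq C\,\|\vect{v}^{t}-\vect{u}^{t}\|_2 + C\,\|\vect{v}^{t-1}-\vect{u}^{t-1}\|_2 + \normop{\rdmmat{E}}\|\hat{\vect{v}}^t\|_2$. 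Since $\normop{\rdmmat{E}} = O(N^{-1/(2k_F)})$ and $\|\hat{\vect{v}}^t\|_2 = O(\sqrt{N})$, the perturbation injected at each step is $O(N^{1/2 - 1/(2k_F)})$, yielding $\|\vect{v}^{t}-\vect{u}^{t}\|_2 \leq C_t N^{1/2 - 1/(2k_F)}$ with $C_t$ independent of $N$. Dividing by $\sqrt{N}$ and invoking the Lipschitz property of the test functions gives the required transfer of empirical-distribution convergence.

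The main obstacle will be keeping the constant $C_t$ finite and independent of $N$ through the induction, which requires simultaneous almost sure control of $\|\rdmmat{Y}_{k_F}/\sqrt{N}\|_{\mathrm{op}}$, of $\|\hat{\vect{v}}^t\|_2/\sqrt{N}$, and of the Onsager term $b_t$. The first is standard from random matrix theory, but the second and third must be obtained a priori through the same state evolution one is trying to prove; this circularity is resolved by first running the induction on the reference sequence $\vect{u}^t$ (where classical state evolution gives all needed boundedness), then transferring these bounds to $\vect{v}^t$ through the same coupling. A secondary care point is that Theorem~\ref{th:rank1decompositionFisher} is stated up to an error matrix in operator norm and a distributional equality; the entrywise, almost-sure version from Lemma~\ref{lem:rank1decomp_step1} is what one actually needs to make the coupling above rigorous, since the classical AMP theorem requires a genuine (not merely in-law) realization of the spiked Wigner model.
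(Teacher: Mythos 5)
Your proposal is correct and follows essentially the same route as the paper: it invokes the almost-sure spiked decomposition $\rdmmat{S}_{\kf}/\sqrt{N}=\rdmmat{Y}_{\kf}/\sqrt{N}+\rdmmat{E}$, verifies (M0)--(M2) via AMP universality for the non-Gaussian noise, and then couples the Fisher-matrix iterates to the reference spiked-Wigner iterates with an inductive perturbation bound controlled by $\normop{\rdmmat{E}}=O(N^{-1/(2\kf)})$, transferring state evolution through pseudo-Lipschitz test functions. Your explicit handling of the boundedness/circularity issue and your bound $\|\vect{v}^t-\vect{u}^t\|_2\leq C_t N^{1/2-1/(2\kf)}$ are in fact slightly more careful than the paper's own write-up.
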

	
	From the state evolution, we instantly recover the optimal threshold for the Bayesian desnoising functions.
	\begin{cor}[Optimality of the AMP Algorithm for the Fisher Matrix]\label{cor:optimality_spiked}
		Consider the \eqref{eq:AMP_iterates_Fisher} with respect to the Fisher matrix $\rdmmat{S}_\kf$. Let the denoising functions $g_t = \eta$ be the Bayesian optimal ones defined in \eqref{eq:Gauss_denoise}. Then the limiting overlap $\lim_{t \to \infty} \langle \vect{v}^t, \vect{x^\kf} \rangle$ satisfies the following fixed point equation
		\begin{equation} \label{eqn:fixed point_fisher}
			\mu = \E_{x \sim \pi_{\kf},G \sim \mathsf{N}(0,1)} \, \left[ \eta \left( \frac{\mu}{\Delta_\kf}, \frac{\mu}{\Delta_{\kf}}  x + \sqrt{\frac{\mu}{\Delta_\kf} } G \right) x \right].
		\end{equation}
		Furthermore, if $\mu_\star$ is the fixed point of \eqref{eqn:fixed point_fisher}, then
		\begin{align}
			\lim_{t \to \infty}\lim_{N \to \infty}\overline{\mathrm{MSE}}_{N}( \vect{v}^t , \rdmvect{x}) = 1 - \left(\frac{\mathrm{q}_{\star}}{m_{2 \kf}} \right)^2 
		\end{align}
		which matches the IT optimal MSE for spiked Wigner models \eqref{eq:critpiontcond}.
	\end{cor}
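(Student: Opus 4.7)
The plan is to deduce Corollary~\ref{cor:optimality_spiked} directly from the state-evolution Theorem~\ref{theo:stateevo_fisher}, which has already absorbed all the hard work (the entrywise decomposition of Theorem~\ref{th:rank1decompositionFisher}, Gaussian universality of AMP noise, and the operator-norm control of $\rdmmat{E}$). In spirit the argument mirrors Corollary~\ref{cor:optimality_spikedwigner}, but with the effective SNR replaced by $1/\Delta_\kf$ and the prior $\pi$ replaced by $\pi_\kf$, since the Fisher matrix is spectrally equivalent to a spiked Wigner matrix with signal $\vect{x}^\kf$ and noise variance $1/\Delta_\kf$.

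First I would specialize Theorem~\ref{theo:stateevo_fisher} to the pseudo-Lipschitz test function $\psi(v^0,\dots,v^t,x^\kf) = v^t x^\kf$ (which belongs to $\mathrm{PL}_{t+2}(2,1)$ after truncation, justified by the boundedness of $\pi$). This yields
\begin{equation*}
\frac{1}{N}\langle \vect{v}^t,\vect{x}^\kf\rangle \;\longrightarrow\; \E\bigl[ V\, \eta(\alpha_t, \mu_t V + \sigma_t G)\bigr],
\end{equation*}
where $V \sim \pi_\kf$, $G \sim \mathsf{N}(0,1)$, and $(\mu_t,\sigma_t,\alpha_t)$ are the state-evolution parameters dictated by the effective SNR $1/\Delta_\kf$ of \eqref{eq:Fisher_lin_spiked}. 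A similar application to $\psi = (v^t)^2$ controls the squared norm of the iterates.

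Next I would invoke the Nishimori identity, valid here because $\rdmmat{Y}_\kf$ is a Bayes-optimal Gaussian channel (after universality) with prior $\pi_\kf$: it collapses the three parameters into a single scalar $q_t$ by enforcing $\mu_t = \sigma_t^2$ and $\alpha_t = \mu_t/\Delta_\kf$. Substituting back into the definition of $\eta$ in \eqref{eq:Gauss_denoise} turns the state-evolution recursion into
\begin{equation*}
q_{t+1} \;=\; \E_{x\sim \pi_\kf, G\sim\mathsf{N}(0,1)}\!\left[ \eta\!\left(\tfrac{q_t}{\Delta_\kf},\,\tfrac{q_t}{\Delta_\kf} x + \sqrt{\tfrac{q_t}{\Delta_\kf}}\,G\right) x\right] \;=\; \mathsf{SE}_{\pi_\kf}(q_t,\Delta_\kf),
\end{equation*}
which is precisely \eqref{eqn:fixed point_fisher} at the fixed point. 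Monotonicity of $\mathsf{SE}_{\pi_\kf}(\cdot,\Delta_\kf)$ (cited after \eqref{eq:def_State_Evolution_map}) together with the spectral initialization $q_0$ from Theorem~\ref{theo:informal_eigvector_decomp_Fisher} guarantees that $q_t$ is monotone and thus converges to some $q_\infty$ that is a fixed point; uniqueness of the fixed point then identifies $q_\infty=q_\star$ with the maximizer of the replica-symmetric functional $\mathcal{F}(\Delta_\kf, q;\pi_\kf)$.

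Finally, the MSE formula follows by expanding $\overline{\mathrm{MSE}}_N(\vect{x}^\kf,\rdmvect{\hat x}_t)$ via \eqref{eq:MSEoverlaps}, using that under Bayes-optimal denoising $\|\rdmvect{\hat x}_t\|^2/N$ and $\langle \rdmvect{\hat x}_t,\vect{x}^\kf\rangle/N$ both converge to $q_\infty$ (again by Nishimori), and normalizing by $m_{2\kf}^2$. I expect the only subtlety is to handle the pseudo-Lipschitz truncation together with the spectral initialization: strictly speaking the test function $v^t x^\kf$ is unbounded, so one must truncate at a large constant, apply Theorem~\ref{theo:stateevo_fisher}, and then remove the truncation using uniform integrability, which itself is provided by the sub-Gaussian tails of the AMP iterates (a standard consequence of \cite{bayati2011dynamics, berthier2020state}). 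This is routine but the most error-prone bookkeeping step; all other steps are direct consequences of machinery already established in the paper.
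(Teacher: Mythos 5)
Your proposal is correct and follows essentially the same route as the paper: the paper deduces the corollary from the spiked-Wigner AMP optimality result (Corollary~\ref{cor:optimality_spikedwigner}, itself the test-function-plus-Nishimori argument you spell out) applied with prior $\pi_\kf$ and effective noise $\Delta_\kf$, via the Fisher-matrix state evolution of Theorem~\ref{theo:stateevo_fisher}, with the $m_{2\kf}$ normalization handled exactly as you describe. Your worry about truncating the test function is unnecessary since $\psi(v^0,\dots,v^t,x)=v^t x$ is already pseudo-Lipschitz of order $2$ and the prior is compactly supported, but this does not affect correctness.
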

	\begin{proof}
		This follows directly from Corollary~\ref{cor:optimality_spikedwigner} with the prior replaced by $\pi_\kf$. The term $m_{2\kf}$ in the MSE comes from the fact that the signal is now $\vect{x}^\kf$, so the variance of $V$ has to be normalized by $m_{2\kf}$ to reduce it to the setting of  Corollary~\ref{cor:optimality_spiked}. 
	\end{proof}
	
	\subsection{A Spectral Method Derived from AMP}
	\label{sec:Linearized_AMP}
	
	We now show that a linearization of the AMP leads to a spectral algorithm with a phase transition for weak recovery that matches the IT threshold. This follows the lines of arguments first described in \cite{lesieur2017constrained}, see also \cite{Mondelli_spectralamp, Mondelli_spectralamp2} and \cite[Section~3]{feng2022unifying}. Consider the linear denoising functions
	\begin{equation}\label{eq:lin_denoiser}
		g_t(x) = \frac{ x }{\sqrt{1 + \mu_t^2}} \qquad\text{and}\qquad \mu_{t + 1} = \frac{m_{2\kf}}{\sqrt{\Delta_{\kf}} \sqrt{1 + \mu_t^{-2}}}.
	\end{equation}
	These denoising functions have an extra $m_{2\kf}$, since we are not necessarily fixing the variance of $\vect{x^\kf}$ in our setting, and the signal to noise ratio is given by $\frac{1}{\sqrt{\Delta_\kf}}$ because the Fisher matrix $\rdmmat{S}_\kf$ has variance $\frac{1}{\Delta_\kf}$.
	Due to the simple linear structure of $g_t$, the AMP algorithm corresponds to matrix multiplication, 
	\[
	\sqrt{1 + \mu_t^2} \vect{\hat v}^{t + 1}  = \frac{\rdmmat{S_\kf}}{\sqrt{N}} \vect{\hat v}^t  - \sqrt{1 + \mu_{t-1}^2} \vect{\hat v}^{t - 1}.
	\]
	This choice of denoising functions with correlated initialization $\vect{v}_0 = \mu_0 \vect{x} + G$, where $G \sim N(0,1)$ asymptotically approximate the power iteration method to find the top eigenvector, see for example \cite[Section~3.3]{feng2022unifying}.
	
	This choice of denoising functions \eqref{eq:lin_denoiser} preserves the variance in the sense that the state evolution equations \eqref{eq:state} satisfies $\sigma_k^2 = 1$ for all $k$, so we only have to track the $\mu$ parameters. By an application of the Banach fixed point theorem, we see that the fixed point equation
	\[
	\mu = \frac{m_{2\kf}}{\sqrt{\Delta_{\kf}} \sqrt{1 + \mu^{-2}}}
	\]
	has a  unique solution at $0$ when the effective SNR of the Fisher matrix satisfies $\frac{1}{\Delta_\kf} < \frac{1}{m^2_{2\kf}}$. When $\frac{1}{\Delta_{\kf}} > \frac{1}{m^2_{2\kf}}$, there exists a non-trivial fixed point at
	\[
	\mu = \sqrt{\frac{m_{2\kf}^2}{\Delta_{\kf}} - 1}.
	\]
	This is the optimal phase transition and matches the transition in \eqref{cor:recovery}. However, we do not have guarantees that the iterations will achieve the non-zero fixed point, nor do we necessarily know if the non-trivial fixed point is the correct maximizer of the variational problem \eqref{eq:RSformula_2}. 
	
	As a consequence of Theorem~\ref{theo:stateevo_fisher} it shows that for $\frac{1}{\Delta_{\kf}} > \frac{1}{m^2_{2\kf}}$, we have
	\[
	\lim_{t\to \infty} \lim_{N \to \infty} \frac{1}{\Delta_{\kf}} \langle \vect{v}^t,\vect{x}^\kf \rangle \to \mu = \sqrt{\frac{m^2_{2\kf}}{\Delta_{\kf}} - 1}
	\]
	which implies that
	\[
	\lim_{t\to \infty} \lim_{N \to \infty} \frac{ \langle \vect{v}^t,\vect{x}^\kf \rangle }{m_{2\kf}} \to \sqrt{1 - \frac{\Delta_{\kf}}{m_{2\kf}^2}} 
	\]
	which coincides with the classical BBP transition for spiked matrices and $\mathrm{q}_{0}$ in \eqref{eq:def_limoverlap_PCA_Fisher}. We conclude the following.
	
	\begin{cor}\label{cor:BBPforLinAMP}
		Under the same setting of Thm.~\ref{th:informal_rank1decompositionFisher}, if $\rdmvect{v}_1$ the top eigenvector of  $\rdmmat{S}_\kf$, $g_t$ is the normalized linear function defined by Eq.~\eqref{eq:lin_denoiser} and $\rdmvect{\hat{x}}_{t} \hookleftarrow \mathtt{AMP}_t( \frac{\rdmmat{S}_\kf}{\sqrt{N}} ,f, \Delta_\kf ;\rdmvect{v}_1)$, then we have:
		\begin{align}
			\lim_{t \to \infty} \lim_{N \to \infty} \overline{\mathrm{MSE}}_{N} \left( \vect{x}^{\kf} , \rdmvect{\hat{x}}_{t} \right) > 0\, ,  
		\end{align}
		when $\frac{1}{\Delta_{\kf}} > \frac{1}{m^2_{2\kf}}$.
	\end{cor}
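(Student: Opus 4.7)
The plan is to invoke Theorem~\ref{theo:stateevo_fisher} with the linear denoisers $g_t(x) = x/\sqrt{1+\mu_t^2}$ and work out the state evolution recursion in closed form. First I would verify the hypotheses: the noise matrix $\rdmmat{Z}$ appearing in the Fisher decomposition of Theorem~\ref{th:rank1decompositionFisher} has i.i.d.\ entries with unit (rescaled) variance, so (M0) is satisfied by Gaussian-disorder universality; the linear $g_t$ are $C^\infty$ and thus satisfy (M2); and the spectral initialization $\rdmvect{v}_1$, by Theorem~\ref{theo:eigvector_decomp_Fisher}, has the prescribed asymptotic joint law with $\vect{x}^{\kf}$ of the form $\mathrm{q}_0\,\vect{x}^{\kf}/\sqrt{m_{2\kf}}+\sqrt{1-\mathrm{q}_0^2}\,\vect{g}$, which is exactly the structure required by (M1) with $\mu_0>0$ whenever $\Delta_{\kf}<m_{2\kf}^2$.

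Next I would compute the state-evolution recurrence. Using the effective spike strength $\gamma_{\mathrm{eff}} = m_{2\kf}/\sqrt{\Delta_{\kf}}$ extracted by rescaling $\rdmmat{S}_{\kf}/\sqrt{N}$ (the entries of $\rdmmat{Z}$ have variance $1/\Delta_{\kf}$, and $\vect{x}^{\kf}$ has second moment $m_{2\kf}$), a short induction shows that $g_t(x)=x/\sqrt{1+\mu_t^2}$ is variance-preserving, i.e.\ $\sigma_t\equiv 1$. Plugging this into the $\mu$-recursion \eqref{eq:state} and simplifying yields exactly $\mu_{t+1} = m_{2\kf}/(\sqrt{\Delta_{\kf}}\sqrt{1+\mu_t^{-2}})$, as announced in \eqref{eq:lin_denoiser}.

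Then I would analyze the scalar recursion. The map $\Phi(\mu):=m_{2\kf}/(\sqrt{\Delta_{\kf}}\sqrt{1+\mu^{-2}})$ is strictly increasing on $(0,\infty)$, satisfies $\Phi'(0^+)>1$ precisely when $\Delta_{\kf}<m_{2\kf}^2$, and has a unique positive fixed point $\mu_\star=\sqrt{m_{2\kf}^2/\Delta_{\kf}-1}$. Since the spectral initialization provides $\mu_0>0$ (its value is determined by $\mathrm{q}_0$ from Theorem~\ref{theo:eigvector_decomp_Fisher}), the monotone iteration $\mu_{t+1}=\Phi(\mu_t)$ converges to $\mu_\star$. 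Translating back via Theorem~\ref{theo:stateevo_fisher} applied to the pseudo-Lipschitz test function $\psi(v^t,x^{\kf})=v^t x^{\kf}$ gives $\langle \rdmvect{v}^t,\vect{x}^{\kf}\rangle/(N m_{2\kf}) \to \sqrt{1-\Delta_{\kf}/m_{2\kf}^2}$, which is strictly positive. Inserting this limit into the overlap-to-MSE identity \eqref{eq:MSEoverlaps} gives $\lim_t\lim_N \overline{\mathrm{MSE}}_N(\vect{x}^{\kf},\rdmvect{\hat x}_t) = \Delta_{\kf}/m_{2\kf}^2\in(0,1)$, which is strictly positive as claimed.

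The main technical obstacle is controlling the effect of the error matrix $\rdmmat{E}$ in the Fisher decomposition when iterating the AMP: we need the $O(N^{-1/(2\kf)})$ operator-norm bound to survive $t$ compositions. This is handled exactly by the inductive bound $\|\vect{v}^{t+1}-\vect{u}^{t+1}\|_2\le C_{t+1}N^{1/(2\kf)}$ already established in the proof of Theorem~\ref{theo:stateevo_fisher}, which gives $C_t$ depending only on $t$ and passes through because linear $g_t$ are Lipschitz with constants uniformly bounded in $t$ (as $\mu_t$ stays in a compact interval). A secondary subtlety is the order of the limits $t\to\infty$ and $N\to\infty$; here one sends $N\to\infty$ first via state evolution, then $t\to\infty$ through the monotone convergence $\mu_t\uparrow \mu_\star$, and both limits exist by the scalar analysis above.
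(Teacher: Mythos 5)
Your proposal is correct and follows essentially the same route as the paper: invoke the state evolution for the Fisher matrix (Theorem~\ref{theo:stateevo_fisher}), note that the linear denoisers of Eq.~\eqref{eq:lin_denoiser} are variance-preserving so everything reduces to the scalar recursion $\mu_{t+1}=\Phi(\mu_t)$, identify the nontrivial fixed point $\mu_\star=\sqrt{m_{2\kf}^2/\Delta_\kf-1}$ above the threshold, and conclude that the limiting overlap equals the BBP value $\mathrm{q}_0<1$, whence the MSE limit is strictly positive. Your added details (verification of (M0)--(M2), the monotone convergence of $\mu_t$ from the spectral initialization $\mu_0>0$, and re-using the $\rdmmat{E}$-perturbation bound from the proof of Theorem~\ref{theo:stateevo_fisher}) only make explicit steps the paper treats informally, so no gap beyond the paper's own level of rigor.
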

	
	

\begin{thebibliography}{10}
		
		\bibitem{10.1214/19-AOS1826}
		Ahmed~El Alaoui, Florent Krzakala, and Michael Jordan, \emph{{Fundamental
				limits of detection in the spiked {W}igner model}}, The Annals of Statistics
		\textbf{48} (2020), no.~2, 863 -- 885.
		
		\bibitem{AGZ}
		Greg~W. Anderson, Alice Guionnet, and Ofer Zeitouni, \emph{An introduction to
			random matrices}, Cambridge Studies in Advanced Mathematics, vol. 118,
		Cambridge University Press, Cambridge, 2010. \MR{2760897}
		
		\bibitem{arous2019landscape}
		Gerard~Ben Arous, Song Mei, Andrea Montanari, and Mihai Nica, \emph{The
			landscape of the spiked tensor model}, Communications on Pure and Applied
		Mathematics \textbf{72} (2019), no.~11, 2282--2330.
		
		\bibitem{ba2022high}
		Jimmy Ba, Murat~A Erdogdu, Taiji Suzuki, Zhichao Wang, Denny Wu, and Greg Yang,
		\emph{High-dimensional asymptotics of feature learning: How one gradient step
			improves the representation}, Advances in Neural Information Processing
		Systems \textbf{35} (2022), 37932--37946.
		
		\bibitem{BBP}
		Jinho Baik, G{\'e}rard~Ben Arous, and Sandrine P{\'e}ch{\'e}, \emph{{Phase
				transition of the largest eigenvalue for nonnull complex sample covariance
				matrices}}, The Annals of Probability \textbf{33} (2005), no.~5, 1643 --
		1697.
		
		\bibitem{bandeira2022franz}
		Afonso~S Bandeira, Ahmed El~Alaoui, Samuel Hopkins, Tselil Schramm, Alexander~S
		Wein, and Ilias Zadik, \emph{The franz-parisi criterion and computational
			trade-offs in high dimensional statistics}, Advances in Neural Information
		Processing Systems \textbf{35} (2022), 33831--33844.
		
		\bibitem{bandeira2018notes}
		Afonso~S Bandeira, Amelia Perry, and Alexander~S Wein, \emph{Notes on
			computational-to-statistical gaps: predictions using statistical physics},
		Portugaliae mathematica \textbf{75} (2018), no.~2, 159--186.
		
		\bibitem{barak2019nearly}
		Boaz Barak, Samuel Hopkins, Jonathan Kelner, Pravesh~K Kothari, Ankur Moitra,
		and Aaron Potechin, \emph{A nearly tight sum-of-squares lower bound for the
			planted clique problem}, SIAM Journal on Computing \textbf{48} (2019), no.~2,
		687--735.
		
		\bibitem{barbier2018rank}
		Jean Barbier, Mohamad Dia, Nicolas Macris, Florent Krzakala, and Lenka
		Zdeborov{\'a}, \emph{Rank-one matrix estimation: analysis of algorithmic and
			information theoretic limits by the spatial coupling method}, arXiv preprint
		arXiv:1812.02537 (2018).
		
		\bibitem{bayati2011dynamics}
		Mohsen Bayati and Andrea Montanari, \emph{The dynamics of message passing on
			dense graphs, with applications to compressed sensing}, IEEE Transactions on
		Information Theory \textbf{57} (2011), no.~2, 764--785.
		
		\bibitem{ben2022high}
		Gerard Ben~Arous, Reza Gheissari, and Aukosh Jagannath, \emph{High-dimensional
			limit theorems for sgd: Effective dynamics and critical scaling}, Advances in
		Neural Information Processing Systems \textbf{35} (2022), 25349--25362.
		
		\bibitem{benaych-georges_eigenvalues_2011}
		Florent Benaych-Georges and Raj~Rao Nadakuditi, \emph{The eigenvalues and
			eigenvectors of finite, low rank perturbations of large random matrices},
		Advances in Mathematics \textbf{227} (2011), no.~1, 494--521.
		
		\bibitem{berthier2020state}
		Raphael Berthier, Andrea Montanari, and Phan-Minh Nguyen, \emph{State evolution
			for approximate message passing with non-separable functions}, Information
		and Inference: A Journal of the IMA \textbf{9} (2020), no.~1, 33--79.
		
		\bibitem{candes2012exact}
		Emmanuel Candes and Benjamin Recht, \emph{Exact matrix completion via convex
			optimization}, Communications of the ACM \textbf{55} (2012), no.~6, 111--119.
		
		\bibitem{UnivCARMONAHU}
		Philippe Carmona and Yueyun Hu, \emph{Universality in
			{S}herrington–{K}irkpatrick's spin glass model}, Annales de l'Institut
		Henri Poincare (B) Probability and Statistics \textbf{42} (2006), no.~2,
		215--222.
		
		\bibitem{celentano2020estimation}
		Michael Celentano, Andrea Montanari, and Yuchen Wu, \emph{The estimation error
			of general first order methods}, Conference on Learning Theory, PMLR, 2020,
		pp.~1078--1141.
		
		\bibitem{charalambides2018enumerative}
		Charalambos~A Charalambides, \emph{Enumerative combinatorics}, Chapman and
		Hall/CRC, 2018.
		
		\bibitem{chen2010normal}
		L.H.Y. Chen, L.~Goldstein, and Q.M. Shao, \emph{Normal approximation by
			stein’s method}, Probability and Its Applications, Springer Berlin
		Heidelberg, 2010.
		
		\bibitem{chen_lam_universal}
		Wei-Kuo Chen and Wai-Kit Lam, \emph{{Universality of approximate message
				passing algorithms}}, Electronic Journal of Probability \textbf{26} (2021),
		no.~none, 1 -- 44.
		
		\bibitem{damian2022neural}
		Alexandru Damian, Jason Lee, and Mahdi Soltanolkotabi, \emph{Neural networks
			can learn representations with gradient descent}, Conference on Learning
		Theory, PMLR, 2022, pp.~5413--5452.
		
		\bibitem{dandi2023learning}
		Yatin Dandi, Florent Krzakala, Bruno Loureiro, Luca Pesce, and Ludovic Stephan,
		\emph{Learning two-layer neural networks, one (giant) step at a time}, arXiv
		preprint arXiv:2305.18270 (2023).
		
		\bibitem{DBLP:journals/corr/DeshpandeAM15}
		Yash Deshpande, Emmanuel Abbe, and Andrea Montanari, \emph{Asymptotic mutual
			information for the two-groups stochastic block model}, CoRR
		\textbf{abs/1507.08685} (2015).
		
		\bibitem{deshpande2014information}
		Yash Deshpande and Andrea Montanari, \emph{Information-theoretically optimal
			sparse pca}, 2014 IEEE International Symposium on Information Theory, IEEE,
		2014, pp.~2197--2201.
		
		\bibitem{deshpande2015finding}
		\bysame, \emph{Finding hidden cliques of size n/e n/e in nearly linear time},
		Foundations of Computational Mathematics \textbf{15} (2015), 1069--1128.
		
		\bibitem{dia2016mutual}
		Mohamad Dia, Nicolas Macris, Florent Krzakala, Thibault Lesieur, Lenka
		Zdeborov{\'a}, et~al., \emph{Mutual information for symmetric rank-one matrix
			estimation: A proof of the replica formula}, Advances in Neural Information
		Processing Systems \textbf{29} (2016).
		
		\bibitem{donoho1995adapting}
		David~L Donoho and Iain~M Johnstone, \emph{Adapting to unknown smoothness via
			wavelet shrinkage}, Journal of the american statistical association
		\textbf{90} (1995), no.~432, 1200--1224.
		
		\bibitem{feldman2023spectral}
		Michael~J Feldman, \emph{Spectral properties of elementwise-transformed spiked
			matrices}, arXiv preprint arXiv:2311.02040 (2023).
		
		\bibitem{feng2022unifying}
		Oliver~Y Feng, Ramji Venkataramanan, Cynthia Rush, Richard~J Samworth, et~al.,
		\emph{A unifying tutorial on approximate message passing}, Foundations and
		Trends{\textregistered} in Machine Learning \textbf{15} (2022), no.~4,
		335--536.
		
		\bibitem{franz1995recipes}
		Silvio Franz and Giorgio Parisi, \emph{Recipes for metastable states in spin
			glasses}, Journal de Physique I \textbf{5} (1995), no.~11, 1401--1415.
		
		\bibitem{gerbelot2023graph}
		C{\'e}dric Gerbelot and Rapha{\"e}l Berthier, \emph{Graph-based approximate
			message passing iterations}, Information and Inference: A Journal of the IMA
		\textbf{12} (2023), no.~4, 2562--2628.
		
		\bibitem{HuGu}
		Alice Guionnet and Jonathan Husson, \emph{Large deviations for the largest
			eigenvalue of {R}ademacher matrices}, Ann. Probab. \textbf{48} (2020), no.~3,
		1436--1465. \MR{4112720}
		
		\bibitem{guionnet2023spectral}
		Alice Guionnet, Justin Ko, Florent Krzakala, Pierre Mergny, and Lenka
		Zdeborov{\'a}, \emph{Spectral phase transitions in non-linear {W}igner spiked
			models}, arXiv preprint arXiv:2310.14055 (2023).
		
		\bibitem{AJFL_inhomo}
		Alice Guionnet, Justin Ko, Florent Krzakala, and Lenka Zdeborová,
		\emph{Low-rank matrix estimation with inhomogeneous noise}, arXiv preprint
		arxiv:2208.05918 (2022).
		
		\bibitem{guionnet2023estimating}
		Alice Guionnet, Justin Ko, Florent Krzakala, and Lenka Zdeborov{\'a},
		\emph{Estimating rank-one matrices with mismatched prior and noise:
			universality and large deviations}, arXiv preprint arXiv:2306.09283 (2023).
		
		\bibitem{immse}
		Dongning Guo, Shlomo Shamai, and Sergio Verdu, \emph{Mutual information and
			minimum mean-square error in {G}aussian channels}, Information Theory, IEEE
		Transactions on \textbf{51} (2005), 1261 -- 1282.
		
		\bibitem{hajek2017information}
		Bruce Hajek, Yihong Wu, and Jiaming Xu, \emph{Information limits for recovering
			a hidden community}, IEEE Transactions on Information Theory \textbf{63}
		(2017), no.~8, 4729--4745.
		
		\bibitem{hopkins2016fast}
		Samuel~B Hopkins, Tselil Schramm, Jonathan Shi, and David Steurer, \emph{Fast
			spectral algorithms from sum-of-squares proofs: tensor decomposition and
			planted sparse vectors}, Proceedings of the forty-eighth annual ACM symposium
		on Theory of Computing, 2016, pp.~178--191.
		
		\bibitem{hopkins2015tensor}
		Samuel~B Hopkins, Jonathan Shi, and David Steurer, \emph{Tensor principal
			component analysis via sum-of-square proofs}, Conference on Learning Theory,
		PMLR, 2015, pp.~956--1006.
		
		\bibitem{javanmard2016phase}
		Adel Javanmard, Andrea Montanari, and Federico Ricci-Tersenghi, \emph{Phase
			transitions in semidefinite relaxations}, Proceedings of the National Academy
		of Sciences \textbf{113} (2016), no.~16, E2218--E2223.
		
		\bibitem{johnstone2009consistency}
		Iain~M Johnstone and Arthur~Yu Lu, \emph{On consistency and sparsity for
			principal components analysis in high dimensions}, Journal of the American
		Statistical Association \textbf{104} (2009), no.~486, 682--693.
		
		\bibitem{Blockmodel}
		Brian Karrer and M.~E.~J. Newman, \emph{Stochastic blockmodels and community
			structure in networks}, Phys. Rev. E \textbf{83} (2011), 016107.
		
		\bibitem{krzakala2016mutual}
		Florent Krzakala, Jiaming Xu, and Lenka Zdeborov{\'a}, \emph{Mutual information
			in rank-one matrix estimation}, 2016 IEEE Information Theory Workshop (ITW),
		IEEE, 2016, pp.~71--75.
		
		\bibitem{lelarge2017fundamental}
		Marc Lelarge and L{\'e}o Miolane, \emph{Fundamental limits of symmetric
			low-rank matrix estimation}, Conference on Learning Theory, PMLR, 2017,
		pp.~1297--1301.
		
		\bibitem{lesieur2015mmse}
		Thibault Lesieur, Florent Krzakala, and Lenka Zdeborov{\'a}, \emph{Mmse of
			probabilistic low-rank matrix estimation: Universality with respect to the
			output channel}, 2015 53rd Annual Allerton Conference on Communication,
		Control, and Computing (Allerton), IEEE, 2015, pp.~680--687.
		
		\bibitem{lesieur2017constrained}
		\bysame, \emph{Constrained low-rank matrix estimation: Phase transitions,
			approximate message passing and applications}, Journal of Statistical
		Mechanics: Theory and Experiment \textbf{2017} (2017), no.~7, 073403.
		
		\bibitem{lesieur2017statistical}
		Thibault Lesieur, L{\'e}o Miolane, Marc Lelarge, Florent Krzakala, and Lenka
		Zdeborov{\'a}, \emph{Statistical and computational phase transitions in
			spiked tensor estimation}, 2017 IEEE International Symposium on Information
		Theory (ISIT), IEEE, 2017, pp.~511--515.
		
		\bibitem{envelope}
		Paul Milgrom and Ilya Segal, \emph{Envelope theorems for arbitrary choice
			sets}, Econometrica \textbf{70} (2002), no.~2, 583--601.
		
		\bibitem{Mondelli_spectralamp}
		Marco Mondelli, Christos Thrampoulidis, and Ramji Venkataramanan, \emph{Optimal
			combination of linear and spectral estimators for generalized linear models},
		Found. Comput. Math. \textbf{22} (2022), no.~5, 1513--1566. \MR{4498440}
		
		\bibitem{Mondelli_spectralamp2}
		Marco Mondelli and Ramji Venkataramanan, \emph{Approximate message passing with
			spectral initialization for generalized linear models}, J. Stat. Mech. Theory
		Exp. \textbf{22} (2022), no.~11, Paper No. 114003, 41. \MR{4535574}
		
		\bibitem{Montanari2017EstimationOL}
		Andrea Montanari and Ramji Venkataramanan, \emph{Estimation of low-rank
			matrices via approximate message passing}, The Annals of Statistics (2017).
		
		\bibitem{montanari2022equivalence}
		Andrea Montanari and Alexander~S Wein, \emph{Equivalence of approximate message
			passing and low-degree polynomials in rank-one matrix estimation}, arXiv
		preprint arXiv:2212.06996 (2022).
		
		\bibitem{mourratrank1}
		Jean-Christophe Mourrat, \emph{Hamilton–jacobi equations for mean-field
			disordered systems}, Annales Henri Lebesgue \textbf{4} (2021), 453--484.
		
		\bibitem{integrableprobmetric}
		Alfred M{\"u}ller, \emph{Integral probability metrics and their generating
			classes of functions}, Advances in Applied Probability \textbf{29} (1997),
		429 -- 443.
		
		\bibitem{panchenko2012sherrington}
		Dmitry Panchenko, \emph{The {S}herrington-{K}irkpatrick model: an overview},
		Journal of Statistical Physics \textbf{149} (2012), 362--383.
		
		\bibitem{PBook}
		\bysame, \emph{The {S}herrington-{K}irkpatrick model}, Springer Monographs in
		Mathematics, Springer, New York, 2013. \MR{3052333}
		
		\bibitem{peche2014deformed}
		Sandrine P{\'e}ch{\'e}, \emph{Deformed ensembles of random matrices},
		Proceedings of the International Congress of Mathematicians, Seoul, vol.~3,
		2014, pp.~1059--1174.
		
		\bibitem{perry2018message}
		Amelia Perry, Alexander~S Wein, Afonso~S Bandeira, and Ankur Moitra,
		\emph{Message-passing algorithms for synchronization problems over compact
			groups}, Communications on Pure and Applied Mathematics \textbf{71} (2018),
		no.~11, 2275--2322.
		
		\bibitem{perry2018optimality}
		\bysame, \emph{Optimality and sub-optimality of pca i: Spiked random matrix
			models}, The Annals of Statistics \textbf{46} (2018), no.~5, 2416--2451.
		
		\bibitem{PizzoRenfrew}
		Alessandro Pizzo, David Renfrew, and Alexander Soshnikov, \emph{{On finite rank
				deformations of {W}igner matrices}}, Annales de l'Institut Henri Poincaré,
		Probabilités et Statistiques \textbf{49} (2013), no.~1, 64 -- 94.
		
		\bibitem{rangan2012iterative}
		Sundeep Rangan and Alyson~K Fletcher, \emph{Iterative estimation of constrained
			rank-one matrices in noise}, 2012 IEEE International Symposium on Information
		Theory Proceedings, IEEE, 2012, pp.~1246--1250.
		
		\bibitem{9517881}
		Yunzhen Yao, Liangzu Peng, and Manolis~C. Tsakiris, \emph{Unsigned matrix
			completion}, 2021 IEEE International Symposium on Information Theory (ISIT),
		2021, pp.~2250--2255.
		
		\bibitem{zdeborova2016statistical}
		Lenka Zdeborov{\'a} and Florent Krzakala, \emph{Statistical physics of
			inference: Thresholds and algorithms}, Advances in Physics \textbf{65}
		(2016), no.~5, 453--552.
		
		\bibitem{zou2018selective}
		Hui Zou and Lingzhou Xue, \emph{A selective overview of sparse principal
			component analysis}, Proceedings of the IEEE \textbf{106} (2018), no.~8,
		1311--1320.
		
	\end{thebibliography}
\end{document}